\definecolor{Gray}{gray}{0.94}
\numberwithin{equation}{section}
\numberwithin{figure}{section}
\theoremstyle{plain}
\newtheorem{thm}{\protect\theoremname}
\theoremstyle{definition}
\newtheorem{defn}{\protect\definitionname}
\theoremstyle{plain}
\newtheorem{lemma}{\protect\lemmaname}
\newtheorem{corollary}{\protect\corollaryname}
\theoremstyle{remark}
\newtheorem*{rem*}{\protect\remarkname}
\theoremstyle{plain}
\providecommand{\definitionname}{Definition}
\providecommand{\lemmaname}{Lemma}
\providecommand{\remarkname}{Remark}
\providecommand{\theoremname}{Theorem}
\providecommand{\corollaryname}{Corollary}
\providecommand{\remarkname}{Remark}
\providecommand{\assumptionname}{Assumption}
\begin{document}
\global\long\def\norm#1{\left\Vert #1\right\Vert }%
\global\long\def\R{\mathbb{R}}%
\global\long\def\eps{\epsilon}%
 
\global\long\def\Rn{\mathbb{R}^{n}}%
\global\long\def\tr{\mathrm{Tr}}%
\global\long\def\diag{\mathrm{diag}}%
\global\long\def\Diag{\mathrm{Diag}}%
\global\long\def\C{\mathbb{C}}%
\global\long\def\conv{\mathrm{conv}}%
\global\long\def\var{\mathrm{var}}%
\global\long\def\kurt{\mathrm{Kurt}}%

\global\long\def\mr{\text{mr}}%
\global\long\def\EE{\mathbb{E}}%
\global\long\def\E{\mathop{\mathbb{E}}}%
\global\long\def\vol{\mathrm{vol}}%
\global\long\def\argmax{\mathrm{argmax}}%
\global\long\def\argmin{\mathrm{argmin}}%
\global\long\def\sign{\mathrm{sign}}%
\global\long\def\bd{\mathrm{bd}}%
\global\long\def\R{\mathbb{R}}%
\global\long\def\ham{\mathrm{Ham}}%
\global\long\def\e#1{ \exp\left(#1\right)}%
\global\long\def\Var{\mathrm{Var}}%
\global\long\def\dint{{\displaystyle \int}}%
\global\long\def\step{\delta}%
\global\long\def\Ric{\mathrm{Ric}}%
\global\long\def\P{\mathbb{P}}%
\global\long\def\len{\text{\text{len}}}%
\global\long\def\lspan{\mathrm{span}}%
\newcommand{\santosh}[1]{{{\bf \color{blue}{SANTOSH: #1\\}}}}
\newcommand{\Xinyuan}[1]{{{\bf \color{red}{XINYUAN: #1\\}}}}
\newcommand{\squeezeup}{\vspace{-2.5mm}}
\newcommand{\eg}{\emph{e.g.}}
\newcommand{\ie}{\emph{i.e.}}

\bibliographystyle{alpha}

\title{Contrastive Moments: Unsupervised Halfspace Learning \\in Polynomial Time}

\author{Xinyuan Cao \\ Georgia Tech\\\texttt{xcao78@gatech.edu}  \and Santosh S. Vempala \\ Georgia Tech\\\texttt{vempala@gatech.edu}}
\maketitle
\thispagestyle{empty}
\begin{abstract}
 We give a polynomial-time algorithm for learning high-dimensional halfspaces with margins in $d$-dimensional space to within desired TV distance when the ambient distribution is an unknown affine transformation of the $d$-fold product of an (unknown) symmetric one-dimensional logconcave distribution, and the halfspace is introduced by deleting at least an $\epsilon$ fraction of the data in one of the component distributions. Notably, our algorithm does not need labels and establishes the unique (and efficient) identifiability of the hidden halfspace under this distributional assumption.  The sample and time complexity of the algorithm are polynomial in the dimension and $1/\epsilon$. The algorithm uses only the first two moments of {\em suitable re-weightings} of the empirical distribution, which we call {\em contrastive moments}; its analysis uses classical facts about generalized Dirichlet polynomials and relies crucially on a new monotonicity property of the moment ratio of truncations of logconcave distributions. Such algorithms, based only on first and second moments were suggested in earlier work, but hitherto eluded rigorous guarantees.

Prior work addressed the special case when the underlying distribution is Gaussian via Non-Gaussian Component Analysis. We improve on this by providing polytime guarantees based on Total Variation (TV) distance, in place of existing moment-bound guarantees that can be super-polynomial. Our work is also the first to go beyond Gaussians in this setting. 
\end{abstract}

\newpage
\thispagestyle{empty}

\tableofcontents

\newpage
\setcounter{page}{1}

\section{Introduction}

Suppose points in $\R^d$ are labeled according to a linear threshold function (a halfspace). Learning a threshold function from labeled examples is the archetypal well-solved problem in learning theory, in both the PAC and mistake-bound models; its study has led to efficient algorithms, a range of powerful techniques and many interesting learning paradigms. While the sample complexity in general grows with the dimension, when the halfspace has a {\em margin}, the complexity can instead be bounded in terms of the reciprocal of the squared margin width \cite{platt1999large,smola2000advances,arriaga2006algorithmic,long2011algorithms}. The problem is also very interesting for special classes of distributions, e.g., when the underlying distribution is logconcave, agnostic learning is possible~\cite{kalai2008agnostically}, and active learning needs fewer samples compared to the general case \cite{balcan2006agnostic}. 

The main motivation for our work is learning a halfspace with a margin {\em with no labels}, i.e., unsupervised learning of halfspaces. This is, of course, impossible in general --- there could be multiple halfspaces with margins consistent with the data --- raising the question: 
{\em Can there be natural distributional assumptions that allow the unsupervised learning of halfspaces?}
For example, suppose data is drawn from a Gaussian in $\R^d$ with points in an unknown band removed, i.e., we assume there exists a unit vector $u\in\mathbb{R}^d$ and an interval $[a,b]$ so that the input distribution is the Gaussian restricted to the set $\{x\in\mathbb{R}^d |\langle u,x\rangle \le a$ or $\langle u, x\rangle \ge b \}$. Can the vector $u$ be efficiently learned? Such a distributional assumption ensures that the band normal to $u$ is essentially unique, leaving open the question of whether it can be efficiently learned.

Such models have been considered in the literature, notably for Non-Gaussian Component Analysis (NGCA)~\cite{blanchard2006search,tan2018polynomial},
learning relevant subspaces~\cite{blum1994relevant, vempala2011structure} and low-dimensional convex concepts~\cite{vempala2010learning}  where data comes from a product distribution with all components being Gaussian except for one (or a small number). It is assumed that the non-Gaussian component differs from Gaussian in some low moment and the goal is to identify this component. Another related model is Independent Component Analysis (ICA) where the input consists of samples from an affine transformation of a product distribution and the goal is to identify the transformation itself~\cite{comon1994independent,cardoso1998multidimensional, goyal2014fourier,jia2023beyond}. For this problem to be well-defined, it is important that at most one component of the product distribution is Gaussian. No such assumption is needed for NGCA or the more general problem we consider here.

Formally, we consider the following model and problem, illustrated in Fig.~\ref{fig:intro}.
\begin{defn}[Affine Product Distribution with $\eps$-Margin]\label{def_1}
Let $q$ be a symmetric one-dimensional isotropic logconcave density function. Let $Q$ be the $d$-fold product distribution obtained from $q$. Let $\hat{q}$ be the isotropized density obtained after restricting $q$ to $\R \backslash [a,b]$ where $q((-\infty,a])\geq \eps, q([a,b])\geq \eps $ and $q([b,\infty))\geq \eps$. Let $P$ be the product of one copy of $\hat{q}$ and $d-1$ copies of $q$. Let $\hat{P}$ be obtained by a full-rank affine transformation of $P$; we refer to $\hat{P}$ as an {\em Affine Product Distribution with $\eps$-Margin}. Let $u$ be the unit vector normal to the margin before transformation.
\end{defn}


\begin{figure}[htp]
\centering
\captionsetup[subfigure]{font=small}
\hspace{-0.1in}
\begin{subfigure}[t]{0.35\textwidth}
    \includegraphics[width=\linewidth]{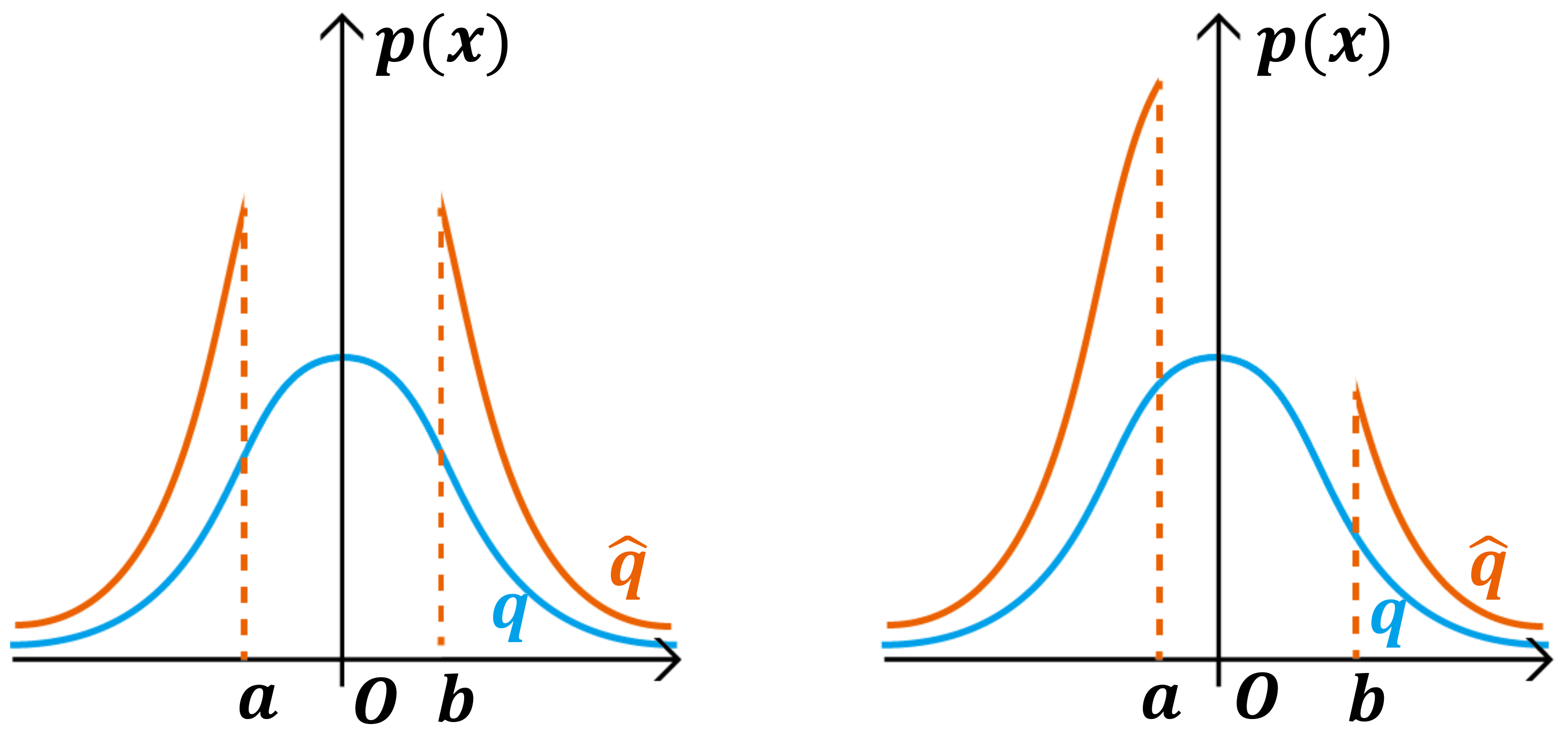}
    \caption{Illustration of the two margin cases - symmetric $[a,b]$ and asymmetric $[a,b]$.}
    \label{fig:two_cases}
\end{subfigure}
\hspace{0.05in}
\begin{subfigure}[t]{0.35\textwidth}
    \includegraphics[width=\linewidth]{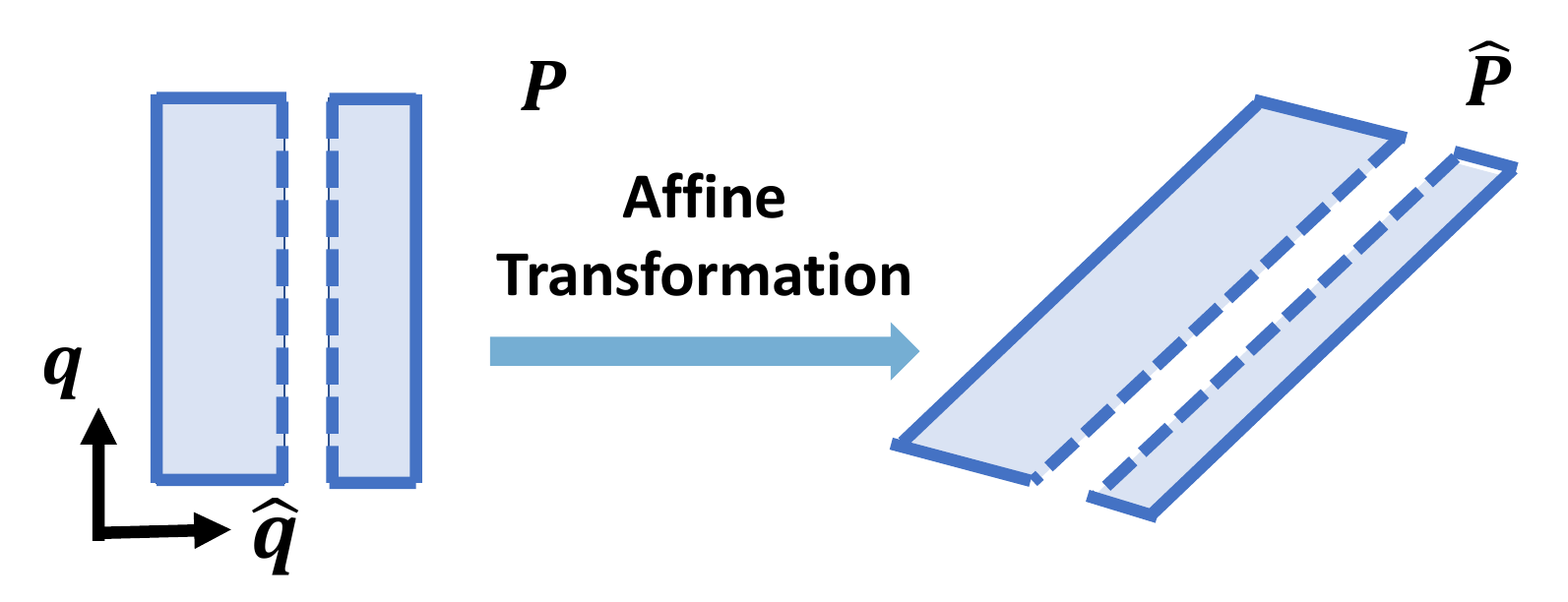}
    \caption{
    Def.~\ref{def_1} with uniform $q$. $\hat{P}$ is a full-rank affine transformation of the product distribution $P$.}
    \label{fig:dist_def}
\end{subfigure}
\hspace{0.05in}
\begin{subfigure}[t]{0.25\textwidth}
    \includegraphics[width=0.85\textwidth]{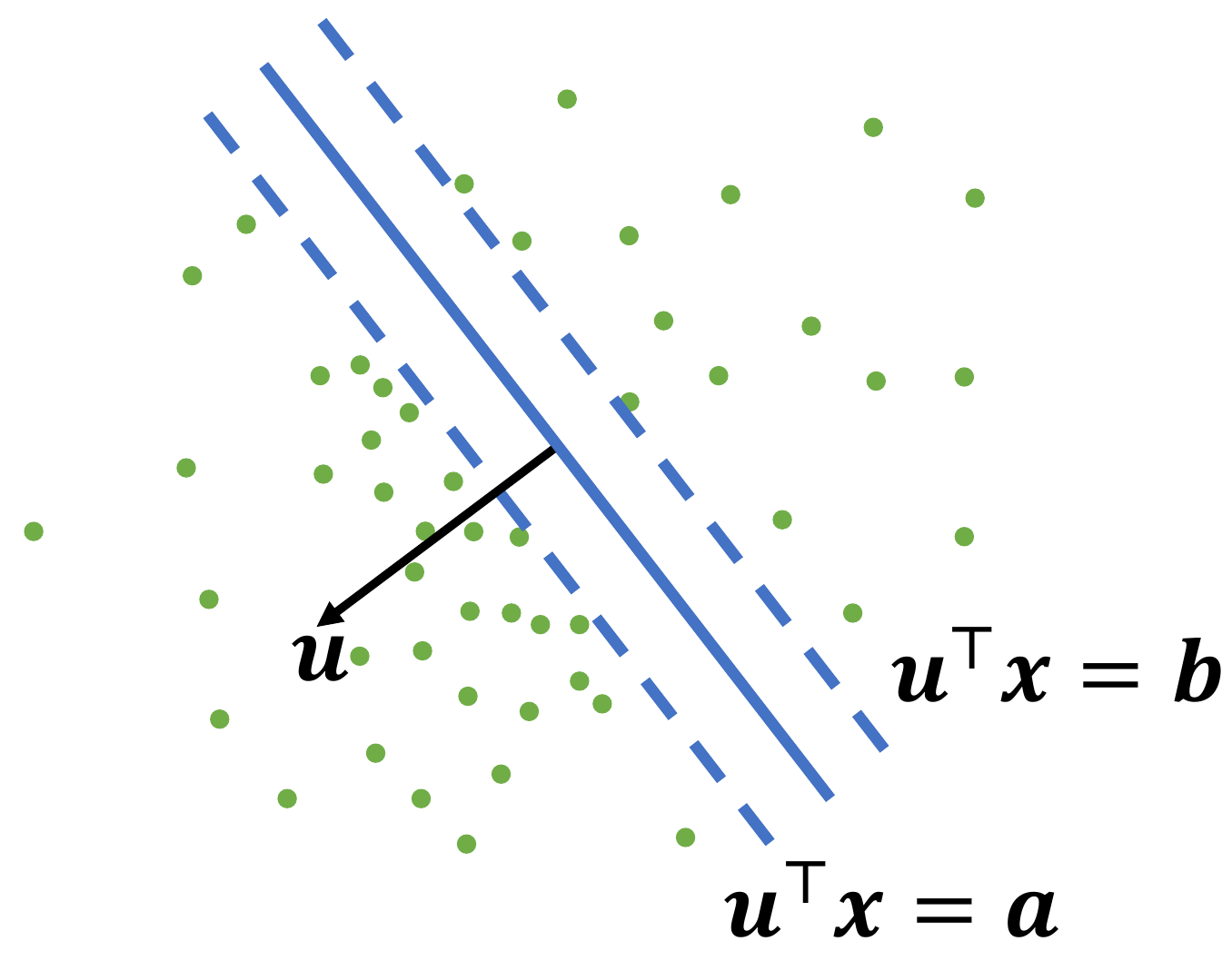}
    \caption{ Unlabeled data is drawn from $\hat{P}$. We aim to learn the normal vector $u$.}
    \label{fig:data_dist_example}
\end{subfigure} 
\caption{Affine Product Distribution with Margin.}
\label{fig:intro}
\end{figure}

With this model in hand, we have the following algorithmic problem.
\vspace{-0.1in}
\paragraph{Problem.} Given input parameters $\eps, \delta > 0$ and access to iid samples from $\hat{P}$, an affine product distribution with $\eps$-margin, the learning problem is to compute a unit vector $\tilde{u}$ that approximates $u$ to within $TV$ distance $\delta$. That is, the TV distance between the corresponding $\tilde{P}$ and $P$ is at most $\delta$, where $\tilde{P}$ is the distribution with margin normal to $\tilde{u}$. 

In this formulation of the problem with a TV distance guarantee, if each side of the halfspace receives a different label, then the probability that the output halfspace of the data disagrees with the true label (up to swapping the labels) is at most $\delta$.

A natural approach to identifying the halfspace is maximum margin clustering \cite{xu2004maximum}: find a partition of the data into two subsets s.t. the distance between the two subsets along some direction is maximized. Unfortunately, this optimization problem is NP-hard, even to approximate. 

There are at least two major difficulties we have to address. The first is the unknown affine transformation, which we cannot hope to completely identify in general. The second is that, even if we reversed the transformation, the halfspace normal is in an arbitrary direction in $\R^d$ and would be undetectable in almost all low-dimensional projections, i.e., we have a needle in a haystack problem. 

\subsection{Results and techniques}

We give an efficient algorithm for the unsupervised halfspace learning problem under any symmetric product logconcave distribution. It consists of the following three high-level steps.
\vspace{-0.1cm}
\begin{itemize}
    \item[(1)] Make the data isotropic (mean zero and covariance matrix identity).
    \item[(2)] Re-weight data and compute the re-weighted mean $\tilde{\mu}_i$ and the top eigenvector $v$ of the re-weighted covariance.
    \item[(3)] Project data along the vectors $\tilde{\mu}_i, v$, and output the vector with the largest margin.
\end{itemize}

Although the algorithm is simple and intuitive, its analysis has to overcome substantial challenges. Our main result is the following.
\begin{mdframed}
\begin{thm}[Main]\label{thm:main}
\vspace{-2.5mm}
There is an algorithm that can learn any affine product distribution with $\eps$-margin to within TV distance $\delta$ with time and sample complexity that are polynomial in $d,  1/\eps$ and $1/\delta$ with high probability. 
\end{thm}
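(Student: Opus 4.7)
The plan is to carry out the three-step algorithm and bound the error in each step, assuming the key technical input flagged in the abstract: a monotonicity property of moment ratios for truncations of symmetric logconcave densities. First, I would reduce to the isotropic setting. The population mean and covariance of $\hat{P}$ are determined, so centering and whitening with an empirical estimate computed from polynomially many samples converts the unknown affine map to an orthogonal map up to spectral error that shrinks polynomially in the sample size. After this reduction we may assume data is drawn (up to a vanishing perturbation) from the product distribution $P$ itself, in some rotated frame, and the target is the direction $u$ along which the component is the truncated density $\hat{q}$ rather than $q$.

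Next, I would introduce the contrastive re-weighting. Pick a radial weight $w(\|x\|^2)$ (e.g.\ $\exp(-\alpha\|x\|^2)$ for a suitable $\alpha$, or the indicator of an annulus), and consider the weighted mean $\tilde\mu=\mathbb{E}[w(\|x\|^2)\,x]/\mathbb{E}[w(\|x\|^2)]$ and the weighted covariance $\tilde\Sigma$. Because $w$ depends on $x$ only through $\|x\|^2$ and $P$ is a product distribution, each integral factors coordinate-wise; coordinates orthogonal to $u$ are distributed as the symmetric density $q$, while the $u$-coordinate is distributed as $\hat q$. In the asymmetric case, orthogonal coordinates contribute zero by parity while the $u$-coordinate does not, so $\tilde\mu$ is a nonzero multiple of $u$. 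In the symmetric case $\tilde\mu=0$, but the weighted variance along $u$ strictly differs from that along every orthogonal direction, so $u$ is a distinguished eigenvector of $\tilde\Sigma$ and the top eigenvector $v$ isolates it. Covering both cases is exactly why the algorithm tests both $\tilde\mu_i$ and $v$ and then picks the projection with the largest margin, the margin itself serving as a cheap and self-certifying tie-breaker between the two candidates.

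The main obstacle, and the conceptual crux, is quantifying the gap in Step 2: one must lower bound $|\tilde\mu\cdot u|$ in the asymmetric case, and lower bound $|\mathrm{Var}_w(\hat q)-\mathrm{Var}_w(q)|$ in the symmetric case, by $\mathrm{poly}(\epsilon)$ uniformly over symmetric isotropic logconcave $q$ and over truncation windows $[a,b]$ of the allowed shape. This is where the promised monotonicity of the moment ratio of truncations of $q$ enters: it gives a strictly signed, $\mathrm{poly}(\epsilon)$-scale discrepancy between the weighted second moment of $\hat q$ and $q$ (and of the weighted first moment in the asymmetric case), and the Dirichlet-polynomial machinery is what turns a single monotonicity statement into a uniform quantitative bound. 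Once this population gap is in hand, the remaining steps are standard: a matrix-Bernstein/Davis--Kahan argument shows that $\mathrm{poly}(d,1/\epsilon,1/\eta)$ samples suffice to locate $\tilde u$ with $\langle\tilde u,u\rangle\ge 1-\eta$; a routine one-dimensional smoothness estimate for logconcave densities converts $\eta$-closeness of directions into $O(\sqrt\eta)$-closeness in TV between $\tilde P$ and $P$; and choosing $\eta=\mathrm{poly}(\delta/d)$ closes the loop and yields the claimed polynomial time and sample complexity.
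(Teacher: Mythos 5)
Your proposal follows the paper's strategy essentially step for step: isotropize, factor the exponentially re-weighted moments coordinate-wise, use the re-weighted mean in the asymmetric case and the top eigenvector of the re-weighted uncentered covariance in the symmetric case, and pick among the candidates by max margin, with Davis--Kahan giving the sample complexity. However, there is a genuine mis-attribution of the two key tools which, if followed literally, would leave the asymmetric case unproved. The monotonicity of the moment ratio says nothing about the \emph{first} re-weighted moment; it is used only for the covariance (symmetric) case, where it shows $S'(0)=\mr(b)-\mr(0)<0$ with $S(\alpha)$ the difference of re-weighted second moments, hence $S(\alpha)>0$ for suitably small $\alpha<0$. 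The asymmetric (mean) case rests instead on Descartes' Rule of Signs for generalized Dirichlet polynomials applied to $F(\alpha)=\E_{\hat q}[e^{\alpha x^2}x]$: the coefficient function changes sign at most twice, and since $\alpha=0$ is always a root ($F(0)=0$), there is at most one nonzero root, so for any two distinct nonzero $\alpha_1,\alpha_2$ at least one gives $F\neq 0$. This is why the algorithm computes two re-weighted means with two different $\alpha$'s, a point your plan reproduces structurally but does not explain. The Dirichlet machinery does not ``turn the monotonicity statement into a quantitative bound''; the two tools are entirely disjoint.

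Two further points you gloss over that would need to be filled in. First, ``strictly differs'' is not enough in the covariance case: you need the re-weighted variance along $u$ to be strictly \emph{larger} than along the orthogonal directions (which follows from the sign of $\mr'$ together with $\alpha<0$); otherwise $u$ would be the bottom eigenvector. Second, both qualitative statements (one nonzero root; $\mr$ strictly decreasing) require substantial work to become $\mathrm{poly}(\eps)$ quantitative: one must lower-bound $F'(0)$ or $F''(0)$ (or $S'(0)$) by explicit powers of $\eps$, upper-bound higher derivatives of $F$ and $S$, and then Taylor-expand at two or three carefully chosen values of $\alpha$ on the scale $\eps^{O(1)}/d$; and since the removed band need not be exactly symmetric, the covariance argument must also be transferred from $a+b=0$ to $|a+b|<\eps^{5}$ by an explicit approximation of $\hat q$ by its symmetrization. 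Your plan correctly identifies where the difficulty lies but underestimates how much of the paper's technical effort goes into producing these quantitative gaps.
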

\end{mdframed}

To see the idea of the algorithm, we first consider the case when no affine transformation is applied. In this case, we can detect the direction $u$ by calculating the empirical mean and top eigenvector of the empirical uncentered covariance matrix. If the margin $[a,b]$ lies on one side of the origin, the mean along $u$ is nonzero while the mean in any other direction that is orthogonal to $u$ is zero. Thus the mean itself reveals the vector $u$. Otherwise, we can show that the second moment along $u$ is higher than along any other orthogonal direction. Thus, there is a positive gap between the top two eigenvalues of the uncentered covariance matrix and the top eigenvector is $u$.
In fact, the algorithm applies more generally, to the product distribution created from one-dimensional bounded isoperimetric distributions. A one-dimensional distribution $p$ is isoperimetric if there exists a constant $\psi>0$ such that for any $x\in\R$, $p(x)\geq \psi \min\{p([x,\infty)), p((-\infty, x])\}$.

\begin{mdframed}
\begin{restatable}[Isotropic Isoperimetric Distribution]{thm}{thm:isotropic}\label{thm_isotropic_intro}
\vspace{-2.5mm}
    There is an algorithm that can learn any isotropic isoperimetric bounded product distribution with $\eps$-margin to within TV distance $\delta$ with time and sample complexity that are polynomial in $d,1/\eps,1/\delta$ with high probability.
\end{restatable}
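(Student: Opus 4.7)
The plan is to analyze the three-step algorithm sketched in the introduction. Step~(1) (isotropization) is essentially free since the hypothesis already guarantees $P$ is isotropic, so empirical whitening contributes only lower-order error. Step~(3) (project along each candidate direction and return the one with the largest empirical margin) is a simple disambiguation step that selects the true $u$ from an $O(1)$-size shortlist once we have candidate directions close to it. The heart of the argument is step~(2).

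The key structural observation is that if we apply a \emph{separable} re-weighting $w(x) = \prod_{i=1}^{d} g(x_i)$, e.g.\ $g(t)=e^{-ct^2}$ for some scalar $c>0$, the product structure of $P$ is preserved: in each of the $d-1$ coordinates governed by the symmetric isoperimetric density $q$, the re-weighted mean vanishes by symmetry and the re-weighted second moment equals a fixed $\sigma_q^2(c)$. In the distinguished coordinate (direction $u$), both are determined by $\hat q$. Hence $\tilde\mu := \E[w(x)x]/\E[w(x)]$ is parallel to $u$, and $u$ is an eigenvector of the re-weighted uncentered covariance $\tilde\Sigma := \E[w(x)xx^\top]/\E[w(x)]$ with a distinguished eigenvalue $\sigma_{\hat q}^2(c)$ separated from the common eigenvalue $\sigma_q^2(c)$ in all orthogonal directions. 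Thus either $\tilde\mu$ or the extremal eigenvector of $\tilde\Sigma$ recovers $u$.

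The main obstacle is a quantitative \emph{contrast lemma}: there is a value of $c$ (found by searching a polynomial grid) for which either $|\tilde\mu\cdot u|$ or $|\sigma_{\hat q}^2(c) - \sigma_q^2(c)|$ is at least $\mathrm{poly}(\eps,\psi)$. I would split into two cases. If the gap $[a,b]$ is asymmetric about the mean of $q$, then $\hat q$ has a nonzero odd part after isotropization; the isoperimetric bound $q(x) \ge \psi\min\{q([x,\infty)),q((-\infty,x])\}$ combined with $q([a,b])\ge\eps$ translates this asymmetry into a polynomial lower bound on $|\tilde\mu\cdot u|$ by direct integration of $t\,e^{-ct^2}$ against $\hat q$. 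If $[a,b]$ is symmetric, then $\hat q$ is still even but its fourth (and higher) moments differ from those of $q$ after rescaling to unit variance; expanding $\sigma_{\hat q}^2(c) - \sigma_q^2(c) \approx c\bigl(\E_q[x^4] - \E_{\hat q}[x^4]\bigr)$ for small $c$ and bounding this fourth-moment gap from below using isoperimetry and boundedness yields the polynomial contrast. The delicate part is carefully tracking how the isotropizing shift and rescale interact with the weight.

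Given the contrast, the rest is standard. Boundedness of $q$ implies $\|x\|\le M\sqrt d$ almost surely, so Hoeffding/Bernstein bounds yield $\mathrm{poly}(d,1/\eps,1/\delta)$-sample estimates of $\tilde\mu$ and $\tilde\Sigma$, and Davis--Kahan converts the spectral gap in $\tilde\Sigma$ into a high-probability angle bound between $u$ and the extremal eigenvector of the empirical $\tilde\Sigma$. Step~(3) then isolates the correct direction among the shortlist, since only projection along $u$ yields a band of empirical measure $\ge\eps$. Finally, a standard computation converts an angle error $\theta$ into TV distance of order $\theta\cdot\mathrm{poly}(d)$ -- the two product distributions differ only on a strip of width $O(\theta\sqrt d)$ -- so setting $\theta=\mathrm{poly}(\delta/d)$ gives the desired bound.
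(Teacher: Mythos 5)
Your approach diverges substantially from the paper's, and two of the differences are genuine gaps rather than alternative routes.

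First, a factual error: $P$ in this theorem is \emph{not} isotropic. In the isotropic-isoperimetric definition, $\hat q$ is the raw restriction of $q_1$ to $\R\setminus[a,b]$, normalized but not re-centered or rescaled; so along $u$ the marginal generally has nonzero mean and variance $\ne 1$. This is precisely what the paper exploits: Algorithm~1 uses the \emph{unweighted} sample mean and the \emph{unweighted} uncentered covariance, with no whitening and no re-weighting at all. When $[a,b]$ lies to one side of the origin, Lemma~\ref{lemma_iso_mean} shows the raw mean along $u$ is bounded away from $0$; when $[a,b]$ straddles the origin, Lemma~\ref{lemma_iso_tool_cov} and Lemma~\ref{lemma_iso_spectral_gap} show $\E_{\hat q}[X^2]>1+C\eps^3$ while every orthogonal direction has uncentered second moment exactly $1$ by isotropy of the $q_i$. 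Whitening the data, as you propose, deliberately sets the covariance to the identity and the mean to zero, thereby \emph{erasing} the very signal the paper's algorithm reads off. You then need re-weighting to recover it --- which is the algorithm for the general affine case, not the warm-up.

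Second, and this is the real obstruction to your plan: the isotropic theorem allows $q_1,\dots,q_d$ to be \emph{different} symmetric isotropic isoperimetric densities (the general affine theorem requires the $d$-fold product of a single $q$). In the unweighted setting, isotropy forces $\E_{q_i}[X^2]=1$ for every $i\ge 2$, so all orthogonal eigenvalues are equal and only the $u$-direction can stand out. Once you apply a nontrivial radial weight $e^{-c\|x\|^2}$, the re-weighted second moment $\E_{q_i}[X^2 e^{-cX^2}]/\E_{q_i}[e^{-cX^2}]$ depends on the higher moments of $q_i$ and will generically differ across coordinates, so there is no ``common eigenvalue in all orthogonal directions'' as your argument requires, and the extremal eigenvector of $\tilde\Sigma$ need not be $u$. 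A separate issue is that your quantitative contrast lemma is sketched under logconcavity-style intuitions (Taylor-expanding $\sigma^2_{\hat q}(c)-\sigma^2_q(c)$ in $c$, bounding a fourth-moment gap); the paper's machinery for such bounds (Descartes' rule, the moment-ratio monotonicity lemma) relies on logconcavity, which is \emph{not} assumed here, so these steps would need a genuinely new argument rather than a transplant.

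In short: for this theorem the paper's algorithm and analysis are deliberately simpler than what you propose --- plain mean, plain uncentered covariance, exploit the fact that the undeleted coordinates are already isotropic. Your plan imports the affine-case machinery where it is both unnecessary and, because of the heterogeneous $q_i$'s and the lack of logconcavity, actually breaks.
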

\end{mdframed}

In the general case, when an unknown affine transformation is applied, the algorithm
first computes the empirical mean and covariance of the sample and makes the empirical distribution isotropic. Then we will consider two cases as illustrated in Figure~\ref{fig:two_cases}.
If the unknown band is {\em not centered} around the mean along $u$, we can expect the empirical mean to differ from the mean of the underlying product distribution without the margin. Consequently, if we knew the latter, we can use the difference to estimate $u$. However, in general, we do not have this information. Instead, we demonstrate that there exists a re-weighting of the sample so that re-weighted empirical mean compared to the unweighted empirical mean is a good estimate of $u$. In other words, with appropriate re-weighting, the mean shifts along the normal direction to the unknown band. On the other hand, if the band is centered along $u$, the mean shift will be zero. In this scenario, we will show that the maximum eigenvector of a re-weighted uncentered covariance matrix is nearly parallel to $u$!

Our algorithm only uses first and second order moments, can be implemented efficiently, and is in fact practical (see Section~\ref{section:experiments}). The main challenges are (1) proving the existence of band-revealing re-weightings and (b) showing that a polynomial-sized sample (and polynomial time) suffice. 

To prove the main theorem, we will show that either the re-weighted mean
induces a contrastive gap (Lemma~\ref{lemma_contrastive_mean_qual}), or the eigenvalues of the re-weighted uncentered covariance matrix induce a contrastive gap (Lemma~\ref{lemma_contrastive_covariance_qual}). 
In the subsequent two lemmas, we adopt the notation from Definition~\ref{def_1}. Here, $P$ represents a product distribution with $\eps$-margin defined by the interval $[a,b]$ (before transformation). We use $\|\cdot\|$ to denote the $l_2$ norm of a vector.

\begin{restatable}[Contrastive Mean]{lemma}{contrastivemeanqual}
\label{lemma_contrastive_mean_qual}
     If $|a+b| >0$, then for any two distinct nonzero $\alpha_1,\alpha_2 \in \R$, at least one of the corresponding re-weighted means is nonzero, i.e.,
    \[
    \max\left(\left\lvert\E\limits_{x\sim P} e^{\alpha_1 \|x\|^2}u^\top x\right\rvert,  \left\lvert\E\limits_{x\sim P} e^{\alpha_2 \|x\|^2}u^\top x\right\rvert
    \right)> 0.
    \]
\end{restatable}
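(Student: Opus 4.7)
The plan is to reduce the claim to bounding the number of real zeros of a Laplace-type transform of a certain signed function derived from $\hat{q}$, and then invoke the classical variation-diminishing property of the exponential kernel. Since $\hat{q}$ is isotropic, the re-weighted mean automatically vanishes at $\alpha=0$; it therefore suffices to show this transform has at most two real zeros in total, which leaves at most one nonzero zero.

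Because $P$ is the product of $\hat{q}$ along $u$ and $d-1$ copies of $q$ orthogonal to $u$, the weight $e^{\alpha\|x\|^{2}}$ factorizes, and
\[
\E_{x\sim P}\bigl[e^{\alpha\|x\|^{2}}\,u^{\top}x\bigr]=\E_{y\sim\hat{q}}\bigl[e^{\alpha y^{2}}\,y\bigr]\cdot\prod_{i=2}^{d}\E_{x_{i}\sim q}\bigl[e^{\alpha x_{i}^{2}}\bigr].
\]
The product on the right is a strictly positive scalar on the range of $\alpha$ where the integrals converge, so the zeros of the re-weighted mean coincide with those of $f(\alpha):=\E_{y\sim\hat{q}}[e^{\alpha y^{2}}y]$. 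Splitting the integral at $y=0$ and substituting $t=y^{2}$ yields
\[
f(\alpha)=\tfrac{1}{2}\int_{0}^{\infty}e^{\alpha t}\,G(t)\,dt,\qquad G(t):=\hat{q}(\sqrt{t})-\hat{q}(-\sqrt{t}),
\]
and by classical variation-diminishing results (equivalently, the continuous analogue of Descartes' rule for generalized Dirichlet polynomials), the number of real zeros of $f$ is bounded by the number of sign changes of $G$ on $(0,\infty)$. Since $t\mapsto\sqrt{t}$ is monotone, this equals the number of sign changes of $H(s):=\hat{q}(s)-\hat{q}(-s)$ on $s>0$.

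The core combinatorial step is to show $H$ has at most two sign changes. Writing $\hat{q}(y)=\sigma\tilde{q}(\sigma y+\mu)$ with $\tilde{q}\propto q\cdot\mathbf{1}_{[a,b]^{c}}$ of mean $\mu$ and variance $\sigma^{2}$, a short computation using the symmetry and unimodality of $q$ shows that $\mu\neq 0$ whenever $|a+b|>0$ (the relation $\mu\cdot\Pr(X\notin[a,b])=-\int_{a}^{b}xq(x)\,dx$ together with symmetric log-concavity of $q$ forces the right-hand side to be nonzero). WLOG $\mu>0$ (otherwise reflect $q$). For $s>0$ set $x_{+}=\mu+\sigma s$ and $x_{-}=\mu-\sigma s$; then $x_{+}+x_{-}=2\mu>0$ gives $|x_{+}|>|x_{-}|$, and by symmetry and log-concavity of $q$ this yields $q(x_{+})\leq q(x_{-})$. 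Since
\[
H(s)\;\propto\;q(x_{+})\,\mathbf{1}_{x_{+}\notin[a,b]}-q(x_{-})\,\mathbf{1}_{x_{-}\notin[a,b]},
\]
$H$ is strictly positive only in the regime where $x_{-}\in[a,b]$ and $x_{+}\notin[a,b]$, and $H\leq 0$ in every other regime (``both outside'' is $\leq 0$ by $q(x_{+})\leq q(x_{-})$; ``$x_{+}$ in, $x_{-}$ out'' is $\leq 0$ trivially; ``both inside'' is $0$). As $s$ traverses $(0,\infty)$, the indicators switch at most at the four critical points where $x_{\pm}$ cross $a$ or $b$, and a brief case split on whether $\mu>b$ or $\mu\in(a,b]$ verifies that the sign pattern of $H$ takes the form $(-,+,-)$ or $(0,+,-)$, giving at most two sign changes.

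The main obstacle is the case analysis for the sign pattern of $H$: the ordering of the critical points $(b-\mu)/\sigma,(\mu-b)/\sigma,(\mu-a)/\sigma,(a-\mu)/\sigma$ depends on the relative positions of $\mu,a,b$, and each regime must be checked to rule out extra sign changes. The key ingredient capping the count at two is the inequality $q(x_{+})\leq q(x_{-})$ from log-concavity and symmetry, which forces the ``both outside'' regime to contribute the same sign as the ``$x_{+}$ in, $x_{-}$ out'' regime rather than a third alternation.
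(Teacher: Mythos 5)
Your plan is correct and follows essentially the same route as the paper: factor the re-weighted mean into a one-dimensional function $F(\alpha)=\E_{y\sim\hat{q}}[e^{\alpha y^2}y]$ times a positive scalar, recast it as a Laplace/Dirichlet transform, bound the number of real zeros by the number of sign changes of the integrand via the integral form of Descartes' rule of signs, and then use a case analysis on the position of the truncated distribution's mean relative to the band together with the inequality $q(x_+)\le q(x_-)$ (symmetry plus unimodality) to cap the sign changes at two, of which $\alpha=0$ accounts for one. Your two cases ($\mu>b$ vs.\ $\mu\le b$) correspond, after the reflection $x\mapsto -x$, exactly to the paper's split on $a'>0$ vs.\ $a'\le 0$, so the argument is the same modulo a change of convention.
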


\begin{restatable}[Contrastive Covariance]{lemma}{contrastivecovariancequal}
\label{lemma_contrastive_covariance_qual}
    If $a+b=0$, then there exists an $\alpha<0$, such that (1) there is a positive gap between the top two eigenvalues of the re-weighted uncentered covariance matrix  $\tilde{\Sigma}=\E_{x\sim P}e^{\alpha \|x\|^2}(xx^\top)$. That is, $\lambda_1(\tilde{\Sigma}) > \lambda_2(\tilde{\Sigma})$. (2) The top eigenvector of $\tilde{\Sigma}$ is $u$.
\end{restatable}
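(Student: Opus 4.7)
The plan is the following. Without loss of generality, I take $u = e_1$ and write $x = (x_1, \dots, x_d)$ with $x_1 \sim \hat{q}$ and $x_j \sim q$ for $j \ge 2$. Because $P$ is a product and $e^{\alpha\|x\|^2} = \prod_j e^{\alpha x_j^2}$, every entry of $\tilde{\Sigma}$ factors as a product of one-dimensional integrals. The hypothesis $a+b=0$ makes $\hat{q}$ symmetric about $0$, so (together with the symmetry of $q$) each one-dimensional integral $\int e^{\alpha y^2} y \,d\hat{q}(y)$ and $\int e^{\alpha y^2} y \,dq(y)$ vanishes; therefore $\tilde{\Sigma}$ is diagonal. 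Exchangeability of coordinates $2,\dots,d$ then forces $\tilde{\Sigma}_{22} = \cdots = \tilde{\Sigma}_{dd}$. Calling this common value $\lambda_\perp$ and $\tilde{\Sigma}_{11} = \lambda_u$, claims (1) and (2) both reduce to the scalar inequality $\lambda_u > \lambda_\perp$; after cancelling the common factors of $\E_{y\sim q}[e^{\alpha y^2}]$, this inequality becomes
\[
\frac{\E_{y\sim\hat{q}}[e^{\alpha y^2} y^2]}{\E_{y\sim\hat{q}}[e^{\alpha y^2}]} \;>\; \frac{\E_{y\sim q}[e^{\alpha y^2} y^2]}{\E_{y\sim q}[e^{\alpha y^2}]}.
\]

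To produce an $\alpha<0$ for which this holds, I would take $\alpha \to -\infty$ and use Laplace-type asymptotics. The tilt $e^{\alpha y^2}$ sharpens into a narrow Gaussian bump at $y=0$, so in each case the tilted measure concentrates on those support points closest to the origin. On the right, $q$ is a symmetric isotropic logconcave density with $q(0)>0$, and Laplace's method at the interior maximum $y=0$ yields a tilted second moment of order $\Theta(1/|\alpha|) \to 0$. On the left, $\hat{q}$ is supported on $\R\setminus[-b',b']$ for some $b'>0$ obtained by isotropizing the restriction of $q$ to the complement of $[a,b]$. The infimum of $y^2$ on this support is $(b')^2 > 0$, attained at the two boundary points $\pm b'$ where $\hat{q}$ has strictly positive one-sided density. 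A boundary Laplace estimate, using the expansion $y^2 = (b')^2 + 2b'(y-b') + O((y-b')^2)$, shows that the left-hand ratio tends to $(b')^2$. The strict separation of these two limits produces the desired $\alpha$.

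Finally, since $\tilde{\Sigma}$ is diagonal and its unique strictly largest entry (for all sufficiently negative $\alpha$) is the $(1,1)$-entry, both (1) and (2) follow: $\lambda_1(\tilde{\Sigma}) - \lambda_2(\tilde{\Sigma}) = \lambda_u - \lambda_\perp > 0$ and the top eigenvector is $e_1 = u$. The main technical obstacle I expect is the boundary Laplace step on the left: $\hat{q}$ has a jump discontinuity at $\pm b'$, so the relevant asymptotic is a one-sided integral rather than a standard saddle-point expansion, and I will need a quantitative lower bound on the one-sided density $\hat{q}(b'^+)$ (inherited from the logconcavity of $q$ together with the mass bounds $q([-\infty,a]),q([a,b]),q([b,\infty))\ge\eps$) to ensure the limit $(b')^2$ is genuinely bounded away from zero.
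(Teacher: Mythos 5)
Your proof is correct but takes a genuinely different route from the paper. Both arguments begin the same way: by symmetry and the product structure, $\tilde{\Sigma}$ is diagonal, the entries $2,\dots,d$ are equal, and the claim reduces to the scalar inequality
\[
\frac{\E_{y\sim \hat q}[e^{\alpha y^2}y^2]}{\E_{y\sim\hat q}[e^{\alpha y^2}]} \;>\; \frac{\E_{y\sim q}[e^{\alpha y^2}y^2]}{\E_{y\sim q}[e^{\alpha y^2}]}.
\]
From there they diverge. The paper works near $\alpha=0$: defining $S(\alpha)$ as the difference of the two cross-normalized tilted second moments, it observes $S(0)=0$, computes $S'(0)$, and recognizes $S'(0)$ as the difference of moment ratios $\mr(b)-\mr(0)$, which is shown to be strictly negative via a new monotonicity property of the moment ratio for truncations of logconcave densities (Lemma~\ref{lemma_variance_ratio}); a Taylor expansion then gives $S(\alpha)>0$ for small $\alpha<0$. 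You work at the opposite end, sending $\alpha\to-\infty$, and your argument is strictly more elementary: it needs neither the moment-ratio lemma nor any logconcavity beyond the fact that $q(0)$ is bounded below.

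In fact, you can simplify your own argument further. You flag the one-sided Laplace estimate on the $\hat q$ side as the main technical obstacle, but you do not need it: since $\hat q$ is supported on $\{|y|\geq b'\}$, the ratio on the left is trivially at least $(b')^2$ for every $\alpha$. Only the $q$ side requires an asymptotic, and $\frac{\E_q[e^{\alpha y^2}y^2]}{\E_q[e^{\alpha y^2}]}\to 0$ follows easily from $q(0)\geq 1/8$ and $\E_q[y^2]=1$: split the numerator at any $\eta>0$, bound the near part by $\eta^2 \E_q[e^{\alpha y^2}]$ and the far part by $e^{\alpha\eta^2}$, and bound the denominator below by $e^{\alpha\eta^2/4}\int_{-\eta/2}^{\eta/2}q$. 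This removes the obstacle entirely and needs no information about $\hat q(b'^+)$.

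What the paper's heavier machinery buys is the quantitative version (Lemma~\ref{lemma_reweighted_cov}): the algorithm must use $\alpha_3 = -\Theta(\eps^2)$, a value near zero, because a sharply negative $\alpha$ concentrates the re-weighting on too few samples for polynomial-time estimation. Your $\alpha\to-\infty$ argument establishes existence but gives no handle on the size of the spectral gap in the small-$|\alpha|$ regime, which is exactly what the moment-ratio monotonicity controls. For the qualitative lemma as stated, your proof stands.
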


The proof of Lemma~\ref{lemma_contrastive_mean_qual} uses Descartes' Rule of signs applied to a suitable potential function. 
To prove Lemma~\ref{lemma_contrastive_covariance_qual}, we develop a new monotonicity property of the moment ratio (defined as the ratio of the variance of $X^2$ and the squared mean of $X^2$) for truncations of logconcave distributions. 
The moment ratio is essentially the square of the coefficient of variation of $X^2$. 
An insight from the monotonicity of the moment ratio is that for logconcave distributions with positive support, when the distribution is restricted to an interval away from the origin, it needs a smaller sample size to estimate its second moment accurately. We state the lemma as follows.

\begin{restatable}[Monotonicity of Moment Ratio]{lemma}{varianceratio}\label{lemma_variance_ratio}
    Let $q$ be a logconcave distribution in one dimension with nonnegative support. For any $t \ge  0$, let $q_t$ be the distribution obtained by restricting $q$ to $[t,\infty)$.
Then the moment ratio of $q_t$, defined as $\frac{\var_{q_t} (X^2)}{(\E_{q_t} X^2)^2}$, is strictly decreasing with $t$.
\end{restatable}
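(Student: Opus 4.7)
My plan is to differentiate $\log f(t)$ where $f(t) := M_4(t)\,M_0(t)/M_2(t)^2$ with $M_k(t) := \int_t^\infty x^k q(x)\,dx$, and reduce the sign of the derivative to a polynomial inequality in $t$ whose coefficients are nonnegative by classical moment bounds for logconcave densities on $[0,\infty)$. Since the moment ratio of the lemma equals $f(t)-1$, strict monotonicity of $f$ suffices.

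Using $M_k'(t) = -t^k q(t)$, a short computation yields $(\log f)'(t) = -q(t)\,[t^4/M_4 + 1/M_0 - 2t^2/M_2]$, so it suffices to show $F(t) := t^4 M_0 M_2 + M_2 M_4 - 2 t^2 M_0 M_4 > 0$ on the support of $q$. Translating to the origin by setting $\tilde q(y) := q(y+t)$, which remains logconcave on $[0,\infty)$, and letting $w_k$ denote the $k$-th moment of the corresponding normalized probability density, the binomial expansion $M_k(t) = M_0(t)\sum_{j=0}^k \binom{k}{j} t^{k-j} w_j$ substituted into $F$ and divided by $M_0(t)^2$ produces
\[
P(t) = -4(w_2 - 2 w_1^2)\, t^4 + 4(4 w_1 w_2 - w_3)\, t^3 + (6 w_2^2 + 8 w_1 w_3 - w_4)\, t^2 + (2 w_1 w_4 + 4 w_2 w_3)\, t + w_2 w_4.
\]

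Each coefficient of $P(t)$ I would then sign using the classical moment comparison for logconcave densities on $[0,\infty)$: $\E[X^k] \leq k!\,(\E X)^k$ (Lov\'asz--Simonovits). Applied to $\tilde q$ with $k=2$, this gives $w_2 \leq 2 w_1^2$, signing the $t^4$ coefficient. For the $t^3$ and $t^2$ coefficients, I would apply the same bound to the tail-biased density $\bar P(y) \propto \int_y^\infty \tilde q$, which is also logconcave on $[0,\infty)$ (survival functions of logconcave densities are logconcave); integration by parts gives $\E_{\bar P}[Y^j] = w_{j+1}/((j+1) w_1)$, so the cases $k = 2, 3$ yield $w_3 \leq \tfrac{3}{2} w_2^2/w_1$ and $w_4 \leq 3 w_2^3/w_1^2$. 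Combining with $w_2 \leq 2 w_1^2$ then gives $w_3 \leq 3 w_1 w_2 \leq 4 w_1 w_2$ and $w_4 \leq 12 w_1^2 w_2$, while the elementary $w_2 \geq w_1^2$ and $w_3 \geq w_1 w_2$ (positive correlation of $W$ and $W^2$) yield $6 w_2^2 + 8 w_1 w_3 \geq 14 w_1^2 w_2$. Hence every coefficient of $P(t)$ is nonnegative and $P(t) \geq w_2 w_4 > 0$, so $(\log f)'(t) < 0$ on the support and $f$ is strictly decreasing.

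The heart of the proof is the algebraic reduction producing a polynomial with manifestly-signed coefficients; the moment bounds themselves follow uniformly from applying the Lov\'asz--Simonovits inequality to $\tilde q$ and to the auxiliary tail-biased density $\bar P$. The main obstacle will be checking that the signs of the three leading coefficients of $P(t)$ fall out of precisely these classical bounds with no extra slack lost along the chain.
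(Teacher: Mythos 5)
Your proof is correct and takes a genuinely different route from the paper's. Both proofs first compute $\mathrm{mr}'(t)$ and reduce the problem to showing $F(t) := t^4 M_0 M_2 + M_2 M_4 - 2t^2 M_0 M_4 > 0$. After that point the two diverge. The paper fits an exponential $h(x)=\beta e^{-\gamma x}$ matching $M_0(t)$ and $M_2(t)$, uses a two-point-crossing argument to get $N_4(t)\ge M_4(t)$, and then finishes with a case split on the sign of $M_2(t)-2t^2$ combined with a direct computation for the exponential. Your proof instead shifts the truncation point to the origin, binomially expands $F/M_0^2$ into an explicit quartic $P(t)$ in $t$ whose coefficients are linear combinations of the centered moments $w_1,\dots,w_4$, and then signs every coefficient. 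What you buy with this route is a clean, coefficient-by-coefficient certificate of positivity (in particular $P(t)\ge w_2w_4>0$ with no case analysis), and the mechanism becomes transparent: the $t^4$ coefficient is controlled by the factorial moment bound for the shifted density, while the $t^3$ and $t^2$ coefficients require the same bound applied to the survival-biased density $\bar{P}$, which is a nice extra use of logconcavity that the paper's argument never makes explicit. I verified the coefficient expansion, the identity $\E_{\bar P}[Y^j]=w_{j+1}/((j+1)w_1)$, and the chain of bounds $w_3\le \tfrac{3}{2}w_2^2/w_1\le 3w_1w_2$ and $w_4\le 3w_2^3/w_1^2\le 12w_1^2w_2$, together with the elementary $w_2\ge w_1^2$ and $w_3\ge w_1w_2$; they all check out. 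One small nit: the sharp one-dimensional inequality $\E[X^k]\le k!\,(\E X)^k$ for logconcave densities on $[0,\infty)$ is not the same as the paper's cited Theorem 5.22 of Lovász–Simonovits, which gives only the dimension-free $(2k)^k(\E|X|)^k$; the sharp version is a classical exponential-comparison fact (provable by the same two-crossing argument the paper uses), so you should state it separately or prove it, rather than attribute it to that theorem.
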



To obtain polynomial guarantees, we will need quantitative estimates of the inequalities in the above two lemmas. Establishing such quantitative bounds is the bulk of the technical contribution of this paper. While our focus is on proving {\em polynomial} bounds, whose existence a priori is far from clear, we did not optimize the polynomial bounds themselves; our experimental results suggest that in fact the dependence on both $d$ and $1/\eps$ might be linear!

\subsection{Related Work}

Efficient algorithms for supervised halfspace learning~\cite{rosenblatt1958perceptron,minsky69perceptrons}, combined with the kernel trick~\cite{cristianini2000introduction,hearst1998support}, serve as the foundation of much of learning theory. Halfspaces {\em with margin} are also well-studied, due to their motivation from the brain, attribute-efficient learning~\cite{valiant1998projection, blum1990learning}, random projection based learning~\cite{arriaga2006algorithmic}, and turn out to have sample complexity that grows inverse polynomially with the margin, independent of the ambient dimension. When examples are drawn from a unit Euclidean ball in $\R^d$, and the halfspace has margin $\gamma$, then the sample complexity grows as $O(1/\gamma^2)$ regardless of the dimension. This leads to the question of whether labels are even necessary, or the halfspace can be identified from unlabeled samples efficiently --- the focus of the present paper.  

The model of unsupervised learning we study is similar to other classical models in the literature, notably Independent Component Analysis where input data consists of iid samples from an unknown affine transformation of a product distribution. There, the goal is to recover the affine transformation under minimal assumptions. Known polynomial-time algorithms rely on directional moments, and the assumption that component distributions differ from a Gaussian in some small moment. A related relevant problem, Non-Gaussian Component Analysis (NGCA), aims to extract a hidden non-Gaussian direction in a high-dimensional distribution. Here too, the main idea is the fact that non-Gaussian component must have some finite moment different from that of a Gaussian. While finite moment difference implies a TV distance lower bound, to get $\eps$-TV distance, one might need to use $k$'th moments for $k=\Omega(\log(1/\eps))$ even for logconcave densities. As the dependence on the moment number is exponential (even for the sample complexity), this approach does not yield polytime algorithms in terms of TV distance, the natural notion for classification.

The idea of applying Principal component analysis (PCA) to re-weighted samples was used in ~\cite{brubaker2008isotropic} to unravel a mixture of well-separated Gaussians. For a mixture of two general Gaussians that are mean separated, after making the mixture isotropic, it was shown that either the mean or top eigenvector of the covariance of a re-weighted sample reveals the vector of the mean differences. This high-level approach was used for solving general ICA by estimating re-weighted higher moments (tensors)~\cite{goyal2014fourier}. Higher moment re-weightings were also used by~\cite{vempala2011structure} to give an algorithm for factoring a distribution and learning ``subspace juntas", functions of an unknown low-dimensional subspace, and by~\cite{tan2018polynomial} to give a more efficient algorithm for the special case of NGCA. The question of whether expensive higher moment algorithms could be replaced by re-weighted second moment is natural and one variant was specifically suggested by~\cite{tan2018polynomial} for NGCA. Our work validates this intuition with rigorous polynomial-time algorithms.

\section{Warm-up: Isotropic Isoperimetric Distribution with $\eps$-Margin}
\label{section:isotropic_case}

As a warm-up, we consider the isotropic product distributions 
with $\eps$-margin. Notably, without applying an unknown transformation on data, we can extend the logconcave distributions to isoperimetric distributions. In this section,
we will demonstrate how to retrieve the normal vector $u$ by calculating the empirical mean and top eigenvector of the empirical uncentered covariance matrix. This technique is similar to Principal Component Analysis (PCA), but instead of computing covariance matrix, we use the uncentered covariance matrix.

\begin{defn}
    A distribution $p$ with support $\R$ is $\psi$-isoperimetric if there exists $\psi>0$ such that for any $x\in \R$, we have $p(x) \geq \psi \min  \{p([x,\infty)), p((-\infty, x])\}$.
\end{defn}

\begin{defn}[Isotropic Isoperimetric Distribution with $\epsilon$-Margin]
    Let $q_1, \ldots, q_d$ be symmetric one-dimensional isotropic $\psi$-isoperimetric density functions bounded by $\tau$. Let $Q=q_1\otimes\cdots \otimes q_d$. Let $\hat{q}$ be the 
    density obtained after restricting $q_1$ to $\R\backslash[a,b]$ where $q_1((-\infty,a])\geq \eps,q_1([a,b])\geq \eps$ and $q_1([b,\infty))\geq \eps$. Let $P$ be an arbitrary rotation of $\hat{q}\otimes q_2\otimes \cdots \otimes q_d$. We refer $P$ as an \textit{Isotropic Isoperimetric Distribution with $\epsilon$-Margin}. Let $u$ be the unit vector normal to the margin.
\end{defn}

\paragraph{Problem.} Given input parameters $\eps,\delta>0$ and access to iid samples from $P$, an isotropic isoperimetric distribution with $\epsilon$-margin, the learning problem is to compute a unit vector $\tilde{u}$ that approximates $u$ to within TV distance $\delta$. That is, the TV distance between the corresponding $\tilde{P}$ and $P$ is at most $\delta$, where $\tilde{P}$ is the distribution with margin normal to $\tilde{u}$.

\subsection{Algorithm}
Given data drawn from $P$, we compute the sample mean and the top eigenvector of the uncentered covariance matrix. Then we
compare the max margin along these two candidate normal vectors. This gives an efficient algorithm for the problem with no re-weighting. 
We state the algorithm formally in Algorithm~\ref{algo_isotropic}.
\begin{algorithm}[htp]
\caption{Unsupervised Halfspace Learning from Isotropic Isoperimetric Data}
\label{algo_isotropic}
\KwIn{Unlabeled data  $x^{(1)},\cdots,x^{(N)} \in \R^d$. $\epsilon,\delta>0$.}

\begin{itemize}
    \item Compute the sample mean and uncentered covariance matrix:
    \[
    \hat{\mu}=\frac{1}{N}\sum_{j=1}^N x^{(j)},\quad \hat{\Sigma} = \frac{1}{N}\sum_{j=1}^N x^{(j)}{x^{(j)}}^\top
    \]
    \item Compute $\hat{\Sigma}$'s top eigenvector $v$.
    \item Calculate the max margin (i.e., maximum gap) of the one-dimensional projections of the data along the vectors $\hat{\mu}, v$. Let $\hat{u}$ be the vector among these two with a larger margin.
\end{itemize}
\Return{the vector $\hat{u}$.}
\end{algorithm}

\subsection{Analysis}

We demonstrate that Algorithm~\ref{algo_isotropic} operates within polynomial time and sample complexity. The details regarding sample complexity are presented in Theorem~\ref{thm_isotropic} (formal statement of Theorem~\ref{thm_isotropic_intro}). The time complexity is justified by the algorithm's process: it calculates the sample mean and the top eigenvector of the sample covariance matrix, both of which require polynomial time.
\begin{thm}[Sample Complexity for Isotropic Isoperimetric Distribution]
\label{thm_isotropic}
    Algorithm~\ref{algo_isotropic} with $N=\tilde{O}(d^2\eps^{-6}\delta^{-2}\xi^{-1})$ samples learns the target isotropic isoperimetric distribution with $\eps$-margin to within TV distance $\delta$ with probability $1-\xi$.
\end{thm}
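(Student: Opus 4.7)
The plan is to analyze Algorithm~\ref{algo_isotropic} by splitting into two cases based on whether the hidden band $[a,b]$ is symmetric about the origin along $u$, namely whether $a+b=0$. In the asymmetric case $a+b\ne 0$, the population mean $\mu:=\E_{x\sim P}[x]$ is a nonzero scalar multiple of $u$: all coordinates orthogonal to $u$ are symmetric (by the symmetry of $q_2,\ldots,q_d$) and hence contribute zero mean, while the $u$-coordinate has mean proportional to the shift induced by excising $[a,b]$ from $q_1$. Using the $\psi$-isoperimetric property, which forces sufficient mass near the boundary of $[a,b]$, together with the density upper bound $\tau$, I would show $|\mu^\top u|\ge \Omega(\mathrm{poly}(\eps,\psi,1/\tau))$. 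In the symmetric case $a+b=0$, we have $\mu=0$, but excising an $\eps$-band around the origin strictly \emph{increases} $\E[(u^\top X)^2]$ above the unit value attained in every orthogonal direction. Quantitatively, this yields a spectral gap $\lambda_1(\Sigma)-\lambda_2(\Sigma)\ge \Omega(\mathrm{poly}(\eps,\psi,1/\tau))$ in the uncentered covariance $\Sigma:=\E_{x\sim P}[xx^\top]$, with $u$ as the unique top eigenvector.

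Given these population gaps, the next step is concentration: using vector and matrix Bernstein inequalities (whose hypotheses are satisfied because $\tau$-boundedness and isoperimetry yield subexponential tails for the marginals), $N=\tilde{O}(d^2\eps^{-6}\delta^{-2}\xi^{-1})$ samples suffice to make $\|\hat\mu-\mu\|\le \eta$ and $\|\hat\Sigma-\Sigma\|_{\mathrm{op}}\le \eta$ with probability at least $1-\xi$, where $\eta$ is polynomially smaller in $\eps$ and $\delta$ than the population gaps above. A direct triangle argument bounds $\sin\angle(\hat\mu,u)\le \|\hat\mu-\mu\|/|\mu^\top u|$ in the asymmetric case, and the Davis--Kahan theorem gives $\sin\angle(v,u)\le \|\hat\Sigma-\Sigma\|_{\mathrm{op}}/(\lambda_1(\Sigma)-\lambda_2(\Sigma))$ in the symmetric case. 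Hence the ``right'' candidate vector among $\hat\mu$ and $v$ makes angle $O(\mathrm{poly}(1/\eps)\cdot\eta)$ with $u$.

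To close the proof, I must verify two things: that the max-margin step reliably picks the candidate close to $u$, and that closeness in angle implies closeness in TV distance. For the former, along any unit vector $w$ at small angle $\theta$ to $u$, the projected empirical data exhibits an essentially empty band of width close to $(b-a)$, which is $\Omega(\eps/\tau)$ by the $\eps$-margin assumption (and shrinks by at most an $O(\theta)$ factor under projection); along a unit vector far from $u$ the projection has no band and the maximum empirical spacing of $N$ samples from a density bounded below on bounded intervals is only $\tilde O(1/N)$ with high probability. Thus the max-margin comparison is reliable. For the latter, an angle $\theta$ between $\hat u$ and $u$ converts to $\mathrm{TV}(\tilde P,P)=O(\theta\cdot\tau\cdot\mathrm{poly}(d))$ by a rotation/change-of-variables argument that exploits the density bound $\tau$; setting $\eta$ polynomially small in $\delta$ closes the loop and yields the stated sample complexity.

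The principal obstacle is the quantitative population-level analysis, namely lower-bounding $|\mu^\top u|$ in the asymmetric case and $\lambda_1(\Sigma)-\lambda_2(\Sigma)$ in the symmetric case by a $\mathrm{poly}(\eps)$ amount, uniformly over the location and width of $[a,b]$ subject only to the three-part $\eps$-margin condition. Both bounds require a careful case split on where $[a,b]$ sits relative to the bulk of $q_1$: when $[a,b]$ is far from the origin, the displaced mass is small but leverages a large first/second-moment weight; when $[a,b]$ straddles (or sits near) the origin, the weight is small but the displaced mass is large. Balancing these regimes using only the $\psi$-isoperimetric and $\tau$-bounded density assumptions is the delicate step, and it must be sharp enough that the concentration and Davis--Kahan perturbation bounds combine to give the claimed $\eps^{-6}$ dependence in $N$.
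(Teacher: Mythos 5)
Your overall plan (establish a mean gap or a spectral gap, then use concentration and Davis--Kahan, then argue the max-margin step picks the right vector) matches the paper's structure, but your \emph{case split is the wrong one} and this creates a genuine gap. You split on whether the excised band is exactly symmetric ($a+b=0$) versus not, claiming that in the asymmetric case $|\mu^\top u|\ge \Omega(\mathrm{poly}(\eps,\psi,1/\tau))$ uniformly. That uniform lower bound is false: if the band straddles the origin but $a+b$ is nonzero and tiny (say $a=-b+\eta$ for $\eta$ arbitrarily small), the mean shift $|\mu^\top u|$ degrades continuously to $0$ as $\eta\to 0$, so no $\eps$-polynomial lower bound holds over the full asymmetric regime. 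Your Chebyshev/Bernstein step would then require an unbounded number of samples to separate the empirical mean from noise, and the argument breaks.

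The paper sidesteps this by splitting on the position of $[a,b]$ relative to the origin, not on exact symmetry: after the WLOG reduction $|b|>|a|$, Case 1 is $0\le a<b$ (band entirely on one side), where the mean gap is genuinely $\Omega(\psi\eps^3/\tau^2)$ (Lemma~\ref{lemma_iso_mean}); Case 2 is $a\le 0<b$ (band straddles the origin), where the second-moment inflation $\E_{\hat q}x^2>1+C\eps^3$ and hence a spectral gap $\lambda_1-\lambda_2>C\eps^3\lambda_1$ (Lemma~\ref{lemma_iso_spectral_gap}) holds regardless of whether $a+b$ is zero, small, or large. In other words, the spectral-gap argument covers the entire ``near-symmetric but not exactly symmetric'' regime that your decomposition leaves exposed. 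To repair your argument you would either have to adopt the paper's origin-based split, or introduce a third regime (as the paper does in the general affine case with the threshold $|a+b|\gtrless\eps^5$) and show the covariance gap survives small asymmetry — which is substantially more work than what your sketch contains. The remaining ingredients (vector concentration for $\hat\mu$, operator-norm concentration for $\hat\Sigma$ via the Srivastava--Vershynin bound which your subexponential-tails observation justifies, Davis--Kahan, and the max-margin selection) are in line with the paper and fine modulo this fix.
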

The analytical approach is straightforward. Given that the component distributions are isotropic, the empirical mean will reveal the band if the removed band $[a,b]$ stays on one side of the origin. Otherwise, when $[a,b]$ spans across the origin,
the variance along the component with the deleted band will increase. Consequently, this component emerges as the top principal component. 
Intriguing, this property is ``opposite" to
the method used to identify low-dimensional convex concepts in \cite{vempala2010learning}. The latter relies on the Brascamp-Lieb inequality, where the variance of a restricted Gaussian is less than that of the original Gaussian.

To prove Theorem~\ref{thm_isotropic}, we aim to quantify either the mean gap or the spectral gap (gap between the top two eigenvalues) of the uncentered covariance matrix.
Specifically, Lemma~\ref{lemma_iso_mean} indicates that when $0\leq a<b$, the mean along the direction $u$ significantly deviates from zero. Meanwhile, Lemma~\ref{lemma_iso_spectral_gap} demonstrates that when $a\leq 0<b$, there's a gap between the first and second eigenvalues of the uncentered covariance matrix. Subsequently, we employ Lemma \ref{lemma_cov_estimate_2eps} \cite{srivastava2013covariance} to determine the sample complexity, and utilize the Davis-Kahan Theorem  \cite{davis1970rotation} (Lemma~\ref{lemma_sin_theta}) to leverage the eigenvalue gap in identifying the pertinent vector $u$. We leave the proof of the lemmas in Section \ref{section:proof_iso}.

For any $x\in\R^d$, we denote $x_i$ as its $i$-th coordinate. We use $\|x\|$ to denote its $l_2$ norm. For a matrix $A\in\R^{m\times n}$, we denote its operator norm as $\|A\|_{\text{op}}$. We denote the standard basis of $\R^d$ by $\{e_1,\cdots,e_d\}$, and assume wlog that $e_1=u$ is the (unknown) normal vector to the band. Denote $\Sigma=\E_{x\sim P}xx^\top$ as the uncentered covariance matrix of $P$, with eigenvalues $\lambda_1\geq \lambda_2 \geq \cdots\geq \lambda_d$.

\begin{restatable}[Mean Gap]{lemma}{isomean}
\label{lemma_iso_mean}
    For $0\leq a<b$ and $b-a \geq c\eps$ for constant $c>0$, we have
    \[
    \E\limits_{x\sim \hat{q}} x <-\frac{\psi c^2\eps^3}{2}, \mathop{\var}\limits_{x\sim \hat{q}}x \leq \frac{1}{2\eps}.
    \]
\end{restatable}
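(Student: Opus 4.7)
The plan is to compute the mean and variance of $\hat q$ directly from the definition, using symmetry of $q$ around the origin plus the $\psi$-isoperimetric inequality to get a uniform pointwise lower bound on $q$ inside the deleted interval $[a,b]$. The proof is elementary once the right identity is in place, so this is mostly a bookkeeping step; the only mildly subtle part is choosing the lower bound on $\int_a^b x\, q(x)\,dx$ carefully enough to survive both extreme regimes ($a=0$ and $a$ close to $b$).

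\textbf{Mean.} Since $\hat q$ is $q$ restricted to $\R\setminus[a,b]$ and renormalized, and $q$ is symmetric,
\[
\E_{x\sim\hat q}\,x \;=\; \frac{1}{1-q([a,b])}\int_{\R\setminus[a,b]} x\,q(x)\,dx \;=\; -\,\frac{1}{1-q([a,b])}\int_{a}^{b} x\,q(x)\,dx,
\]
using $\int_{\R}x\,q(x)\,dx=0$. The whole integrand is nonnegative because $a\ge 0$, so the mean is strictly negative. I would now lower bound the numerator. By the $\psi$-isoperimetric property, for every $x\in[a,b]$ we have $q((-\infty,x])\ge q((-\infty,a])\ge\eps$ and $q([x,\infty))\ge q([b,\infty))\ge\eps$, hence $q(x)\ge\psi\eps$. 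Therefore
\[
\int_a^b x\,q(x)\,dx \;\ge\; \psi\eps\int_a^b x\,dx \;=\; \frac{\psi\eps}{2}(b^2-a^2) \;=\; \frac{\psi\eps}{2}(b+a)(b-a) \;\ge\; \frac{\psi\eps}{2}(b-a)^2 \;\ge\; \frac{\psi c^2\eps^3}{2},
\]
where the penultimate step uses $a\ge 0$ so $b+a\ge b-a$, and the last step uses the hypothesis $b-a\ge c\eps$. Since $q([a,b])\ge\eps>0$, the denominator $1-q([a,b])$ is strictly less than $1$, which makes the inequality $\E_{\hat q} x<-\psi c^2\eps^3/2$ strict.

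\textbf{Variance.} I bound the variance by the second moment and then use that restricting to $\R\setminus[a,b]$ can only decrease $\int x^2 q$:
\[
\Var_{\hat q}\,x \;\le\; \E_{\hat q}\,x^2 \;=\; \frac{1}{1-q([a,b])}\int_{\R\setminus[a,b]} x^2\,q(x)\,dx \;\le\; \frac{\int_{\R} x^2 q(x)\,dx}{1-q([a,b])} \;=\; \frac{1}{1-q([a,b])},
\]
since $q$ is isotropic. Finally, $1-q([a,b])=q((-\infty,a])+q([b,\infty))\ge 2\eps$ by the three mass assumptions in the definition, giving $\Var_{\hat q} x\le 1/(2\eps)$.

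The only step that requires any thought is keeping the factor $(b-a)^2$ rather than the naive $ab$ or $ab$-type bounds when lower-bounding $\int_a^b x\, q(x)\, dx$; this is what makes the argument uniform across the regimes where $a=0$ and where the interval sits far to the right of the origin.
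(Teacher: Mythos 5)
Your proof is correct and follows essentially the same route as the paper: use the $\psi$-isoperimetric property (together with the $\eps$-mass conditions) to get the pointwise bound $q(x)\ge\psi\eps$ on $[a,b]$, convert the restricted mean to $-\int_a^b x\,q(x)\,dx$ divided by the retained mass, lower bound $\int_a^b x\,dx$ via $(b+a)(b-a)\ge(b-a)^2\ge c^2\eps^2$, and bound the variance by the second moment with denominator $\ge 2\eps$. (Minor note: the paper's own displayed inequality for the mean drops the factor $\psi$, which appears to be a typo; your version retains it, matching the lemma statement.)
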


\begin{restatable}[Spectral Gap of Covariance]{lemma}{isospectralgap}
\label{lemma_iso_spectral_gap}
 If $a\leq 0<b$ and $b>c\eps$ for constant $c>0$, then the first and second eigenvalues of the uncentered covariance matrix $\Sigma$ have the following gap
    \[
    \lambda_1-\lambda_2 >  C\eps^3 \lambda_1 \text{  for constant }C>0.
    \]
    Furthermore, the top eigenvector corresponds to $u$.
\end{restatable}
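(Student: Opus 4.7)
The plan is to exploit the product structure of $P$ to diagonalize $\Sigma$ and reduce the claim to a one-dimensional moment computation on $\hat q$.

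Since $P$ is an orthogonal rotation $R$ of the product $\hat q \otimes q_2 \otimes \cdots \otimes q_d$ and each $q_i$ for $i \ge 2$ is isotropic (mean $0$, variance $1$), the uncentered covariance factorizes as $\Sigma = R\,\diag(\E_{\hat q}[X^2],\, 1,\, \ldots,\, 1)\, R^\top$. Therefore $\lambda_2 = 1$ and $\lambda_1 = \E_{\hat q}[X^2]$ with top eigenvector $R e_1 = u$, provided $\E_{\hat q}[X^2] > 1$. Moreover $Z := 1 - \int_a^b q_1(x)\,dx \ge 2\eps$ (from $q_1((-\infty,a]) \ge \eps$ and $q_1([b,\infty)) \ge \eps$), so $\lambda_1 \le 1/Z \le 1/(2\eps)$. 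Hence it suffices to prove the absolute gap $\E_{\hat q}[X^2] - 1 \ge c_1 \eps^2$: combined with $\lambda_1 \le 1/(2\eps)$, this gives $\lambda_1 - \lambda_2 \ge c_1 \eps^2 \ge (2c_1)\eps^3 \lambda_1$, matching the claim.

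For the absolute gap, I would write
\[
\E_{\hat q}[X^2] - 1 \;=\; \frac{\int_a^b (1-x^2)\, q_1(x)\, dx}{Z},
\]
and use the symmetry of $q_1$ to decompose the numerator as $g(b) + g(|a|)$, where
\[
g(t)\;:=\;\int_0^t (1-x^2)\, q_1(x)\, dx \;=\; \int_t^\infty (x^2-1)\, q_1(x)\, dx;
\]
the second identity uses isotropy ($\int_0^\infty (1-x^2) q_1\, dx = 0$). The function $g$ is nonnegative on $[0,\infty)$ with $g(0)=g(\infty)=0$ and unique maximum at $t=1$. The key quantitative input is that, by $\psi$-isoperimetry combined with the band conditions, $q_1(x) \ge \psi\eps$ for every $x \in [a,b]$: for $x \in [0,b]$ one has $q_1([x,\infty)) \ge q_1([b,\infty)) \ge \eps$ and $q_1((-\infty,x]) \ge 1/2$ by symmetry, so $\min \ge \eps$; the case $x \in [a,0]$ is analogous. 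Using this I would bound $g(b) \ge \Omega(\eps^2)$ via two complementary estimates. For $b \in [c\eps, 1]$,
\[
g(b) \;\ge\; \psi\eps \int_0^b (1-x^2)\, dx \;\ge\; \tfrac{2\psi}{3}\, \eps\, b \;=\; \Omega(\eps^2),
\]
while for $b \ge 1$,
\[
g(b) \;\ge\; (b^2 - 1)\, q_1([b,\infty)) \;\ge\; (b^2 - 1)\, \eps.
\]

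The main obstacle is the transition region $b \in (1,\, 1 + O(\sqrt{\eps}))$, where the tail bound $(b^2-1)\eps$ is small and the crude integral estimate does not directly apply. Here I would bridge by writing $g(b) = g(1) - \int_1^b (x^2-1)\, q_1(x)\, dx$, using $g(1) \ge 2\psi\eps/3$ from the small-$b$ estimate and the density upper bound $q_1 \le \tau$ to control the correction by $\tau \int_1^b (x^2-1)\, dx = O(\tau(b-1)^2)$. Piecing the three estimates together shows $g(b) \ge \Omega(\eps^2)$ uniformly for $b > c\eps$; since $g(|a|) \ge 0$, this completes the reduction and simultaneously identifies the top eigenvector of $\Sigma$ as $u$.
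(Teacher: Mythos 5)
Your proof is correct and follows the same overall reduction as the paper: diagonalize the uncentered covariance via the product structure to get $\Sigma = R\,\diag(\E_{\hat q}[X^2],1,\dots,1)R^\top$, then reduce to a one-dimensional second-moment gap by writing $\E_{\hat q}[X^2]-1$ in terms of the auxiliary function $g(t)=\int_t^\infty(x^2-1)q_1(x)\,dx$, which is nonnegative on $[0,\infty)$ by isotropy and unimodal with peak at $t=1$. The paper's Lemma~\ref{lemma_iso_tool_cov} proves exactly $\E_{\hat q}[X^2]>1+\Omega(\eps^{\cdot})$ using this same $g$. Where you deviate is in the case analysis for $g(b)$: the paper introduces auxiliary thresholds $M$ (the $\eps$-quantile point) and $M'$ (the $\eps/2$-quantile point beyond $M$) and uses $g(b)\ge\min(g(c\eps),g(M'))$ together with the crude tail bound $g(M')\ge (M'^2-1)\eps/2$, whereas you handle the delicate regime $b$ slightly above $1$ by Taylor-expanding around $g(1)$ and using the density upper bound $q_1\le\tau$ to control $g(1)-g(b)\le\tau\cdot O((b-1)^2)$. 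Your treatment of this transition region is arguably cleaner and gives an $\Omega(\eps^2)$ absolute gap directly; the paper's route through $M,M'$ relies on defining $M'$ so that the shell $[M,M']$ has mass $\eps/2$ and width at least $\eps/(2\tau)$, which buys a comparable bound. Both are valid, and both then convert the absolute gap to the stated relative gap $\lambda_1-\lambda_2\ge C\eps^3\lambda_1$ using $\lambda_1\le 1/(2\eps)$, exactly as you do.

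One small precision point: the relative gap $\lambda_1 - \lambda_2 \geq C\eps^3\lambda_1$ could also be obtained more directly without the crude $\lambda_1 \leq 1/(2\eps)$ bound, since $\lambda_1 - \lambda_2 \geq C\eps^3\lambda_1$ is equivalent to $\lambda_1 \geq 1/(1-C\eps^3)$, which already follows (for small $\eps$ and appropriate $C$) from $\lambda_1 \geq 1 + c_1\eps^2$. This is a matter of taste, not correctness.
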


The following theorem enables us to bound the sample complexity to estimate the covariance matrix.
 \begin{lemma}[Covariance Estimation \cite{srivastava2013covariance}]\label{lemma_cov_estimate_2eps}
    Consider independent isotropic random vectors $X_i$ in $\R^d$ s.t. for some $C,\eta>0$, for every orthogonal projection $P$ in $\R^d$,
    \[
    \P (\|PX_i\| > t)\leq Ct^{-1-\eta}\text{ for }t > C \text{rank}(P).
    \]
    Let $\eps \in(0,1)$. Then with the sample size $N =O ( d\eps^{-2-2/\eta} )$, we have
    \[
    \E\|\Sigma - \hat{\Sigma}\|_{\text{op}} \leq \eps \|\Sigma\|_{\text{op}}.
    \]
\end{lemma}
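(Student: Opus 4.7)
The plan is to combine truncation with matrix concentration, exploiting the fact that the tail bound holds uniformly over all orthogonal projections $P$ and not just the identity. Fix a truncation radius $T$, write $\bar X_i = X_i \mathbf{1}\{\|X_i\|\le T\}$ and $\bar\Sigma = \E[\bar X_i \bar X_i^\top]$, and decompose
$$\Sigma - \hat\Sigma = (\Sigma - \bar\Sigma) + \Bigl(\bar\Sigma - \tfrac{1}{N}\sum_i \bar X_i \bar X_i^\top\Bigr) + \tfrac{1}{N}\sum_i \bigl(\bar X_i \bar X_i^\top - X_i X_i^\top\bigr).$$
The third (empirical truncation) term has expected operator norm dominated by the first, so the real work goes into the deterministic bias and the fluctuation piece.

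For the bias, I would write $\|\Sigma - \bar\Sigma\|_{\text{op}} = \sup_{\|u\|=1}\E[(u^\top X)^2 \mathbf{1}\{\|X\|>T\}]$. Taking $P=uu^\top$ in the tail hypothesis yields a rank-one polynomial tail for $u^\top X$, and a layer-cake integral (combined with the rank-$d$ version of the hypothesis to bound $\{\|X\|>T\}$) gives a bound of order $T^{1-\eta}$. The uniformity of the hypothesis across $P$ is what makes this direction-independent. For the fluctuation, I would apply matrix Bernstein---or more carefully Rudelson's non-commutative Khintchine bound---to the centered, bounded summands $\bar X_i \bar X_i^\top - \bar\Sigma$, each of operator norm at most $T^2$ and matrix variance $\lesssim T^2 \|\bar\Sigma\|_{\text{op}} \lesssim T^2$. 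This yields a fluctuation of order $T\sqrt{d/N} + T^2 d/N$ up to logarithms. Balancing $T^{1-\eta}$ against $T\sqrt{d/N}$ by choosing $T$ as a suitable power of $N/d$, and then solving for $N$ to make the total error at most $\eps$, gives $N = O(d\eps^{-2-2/\eta})$, matching the stated rate.

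The main obstacle is the heavy-tailed regime $\eta \le 1$, in which $\|X\|^2$ may not even have finite mean and a naive matrix Bernstein overstates the effective variance. There, one has to replace the crude single-scale truncation by a dyadic decomposition and use Rudelson's symmetrization to handle the resulting sum in operator norm; moreover, getting the \emph{linear} (rather than quadratic) dependence on $d$ forces one to exploit the tail hypothesis simultaneously across all projections $P$, via a supremum-of-empirical-process argument (e.g.\ generic chaining or Rudelson's iteration), instead of a union bound over directions. This use of the strong, projection-uniform form of the hypothesis to bypass the $d^2$ price of a net is the conceptual heart of the Srivastava--Vershynin argument and where the bulk of the technical effort lies.
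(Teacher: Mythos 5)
The paper does not prove this lemma --- it is cited wholesale from Srivastava and Vershynin \cite{srivastava2013covariance} and used as a black box in the sample-complexity bounds. So there is no ``paper's own proof'' to compare against; what you have produced is a sketch of the original Srivastava--Vershynin argument, and it should be judged as such.

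At a high level your sketch does capture the right ideas: truncation plus a bias/fluctuation split, using the projection-uniform tail hypothesis (rather than merely $P = I$) to get the linear-in-$d$ rate, and the observation that a one-shot matrix Bernstein with a single truncation level cannot give the stated exponent in the heavy-tailed regime, so that Rudelson-style symmetrization and an iterative/dyadic scheme are needed. Those are indeed the structural points of the Srivastava--Vershynin proof, and you correctly identify the use of the hypothesis simultaneously over all projections $P$ as the mechanism that bypasses the $d^2$ cost of a naive net.

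Two caveats worth flagging. First, your one-scale balancing does not actually reproduce the claimed rate: with bias of order $T^{1-\eta}$ (note $\eta>1$ is forced here, otherwise $\E(u^\top X)^2$ would not even be finite) and fluctuation of order $T\sqrt{d/N}$, equating the two gives $N = d\,\eps^{-2\eta/(\eta-1)}$, which does not match $d\,\eps^{-2-2/\eta}$. You wave this away by invoking the dyadic decomposition, which is fair, but it means the concrete computation in your middle paragraph is only a heuristic and not a proof of the stated exponent; the actual exponent arises from the multi-scale iteration, not from this balance. Second, your bias calculation silently uses only the rank-one case $P=uu^\top$; the statement's threshold $t>C\,\mathrm{rank}(P)$ (versus $t>C\sqrt{\mathrm{rank}(P)}$ in the original paper) and the exponent $t^{-1-\eta}$ (versus $t^{-2-2\eta}$) suggest the paper has renormalized the hypothesis, and it is worth checking that the quoted $N = O(d\,\eps^{-2-2/\eta})$ is in fact what the original Theorem 1.1 yields under this reparametrization --- this is a matter of translating constants and exponents, not of proof strategy, but it is exactly the kind of place where a factor of two in an exponent gets lost.
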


The following classical theorem allows us to use the eigenvalue gap to identify the relevant vector.
\begin{lemma}[Davis-Kahan \cite{davis1970rotation}]
\label{lemma_sin_theta}
Let $S$ and $T$ be symmetric matrices with the same dimensions. For a fixed $i$, assume that the largest eigenvalue of $S$ is well separated from the second largest eigenvalue of $S$, \ie, $\exists \delta > 0$  s.t. $\lambda_1(S) - \lambda_2(S) > \delta$. Then for the top eigenvectors of $S$ and $T$, denoted as $v_1(S)$ and $v_1(T)$,  we have
\[
\sin\theta(v_1(S),v_1(T))\leq \frac{2\|S-T\|_{\text{op}}}{\delta}.
\]

\end{lemma}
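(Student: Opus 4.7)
The plan is to carry out the classical decomposition argument underlying the $\sin\theta$ theorem. Let $\{v_i(S)\}$ be an orthonormal eigenbasis of $S$ with eigenvalues $\lambda_1(S) \ge \lambda_2(S) \ge \cdots$, and expand $\tilde v := v_1(T) = \sum_i c_i\, v_i(S)$. Since $\|\tilde v\| = 1$, we have $\sin^2\theta(v_1(S), \tilde v) = 1 - c_1^2 = \sum_{i \ge 2} c_i^2$, so bounding the sine squared reduces to bounding the off-top mass of $\tilde v$ in the eigenbasis of $S$.

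The key algebraic identity I would exploit is $(S - \lambda_1(T)\,I)\tilde v = S\tilde v - T\tilde v = (S - T)\tilde v$, which immediately yields $\|(S - \lambda_1(T) I)\tilde v\| \le \|S - T\|_{\text{op}}$. On the other hand, expanding the left side in the eigenbasis of $S$ gives
\[
\|(S - \lambda_1(T) I)\tilde v\|^2 = \sum_i c_i^2\bigl(\lambda_i(S) - \lambda_1(T)\bigr)^2 \;\ge\; \Bigl(\min_{i \ge 2}\bigl|\lambda_i(S) - \lambda_1(T)\bigr|\Bigr)^2 \sum_{i \ge 2} c_i^2.
\]
So the task reduces to lower bounding $|\lambda_i(S) - \lambda_1(T)|$ for every $i \ge 2$, which is where the eigengap of $S$ enters through Weyl's inequality: $|\lambda_1(T) - \lambda_1(S)| \le \|S-T\|_{\text{op}}$, and hence $\lambda_1(T) - \lambda_i(S) \ge \lambda_1(S) - \lambda_2(S) - \|S - T\|_{\text{op}} > \delta - \|S - T\|_{\text{op}}$ for $i \ge 2$.

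To close out, I split on the size of $\|S-T\|_{\text{op}}$ relative to $\delta$. If $\|S - T\|_{\text{op}} \ge \delta/2$, the target bound $\sin\theta \le 2\|S-T\|_{\text{op}}/\delta$ is implied by the trivial inequality $\sin\theta \le 1$, so I may assume $\|S - T\|_{\text{op}} < \delta/2$. Under this assumption, $|\lambda_i(S) - \lambda_1(T)| > \delta/2$ for all $i \ge 2$, and combining with the upper bound gives $(\delta/2)^2 \sin^2\theta \le \|S-T\|_{\text{op}}^2$, which rearranges to the stated inequality. The only real subtlety is this case split: the argument requires $\lambda_1(T)$ to remain separated from $\lambda_2(S), \lambda_3(S), \ldots$, which is only guaranteed when the perturbation is a constant fraction of the gap; no other obstacle arises, as the remainder of the proof is purely linear-algebraic.
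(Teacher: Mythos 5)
Your proof is correct, but note that the paper does not prove this lemma at all: it states it with a citation to Davis and Kahan's 1970 paper and uses it as a black box. So there is no paper proof to compare against.

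That said, your self-contained argument is the standard and cleanest modern derivation of the two-sided $\sin\theta$ bound for the top eigenvector, and every step checks out. The identity $(S-\lambda_1(T)I)\tilde v = (S-T)\tilde v$ is exactly right (since $T\tilde v = \lambda_1(T)\tilde v$), the eigenbasis expansion correctly isolates $\sum_{i\ge 2}c_i^2 = \sin^2\theta$, Weyl's inequality controls the drift $|\lambda_1(S)-\lambda_1(T)|\le\|S-T\|_{\text{op}}$, and the case split on $\|S-T\|_{\text{op}}\gtrless\delta/2$ is precisely what is needed to turn the gap $\delta - \|S-T\|_{\text{op}}$ into the clean factor $\delta/2$ in the denominator (and to make the conclusion vacuously true via $\sin\theta\le 1$ when the perturbation is large). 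One stylistic remark: the phrase ``For a fixed $i$'' in the lemma statement is a vestige of the general $i$-th-eigenvector version of the Davis--Kahan theorem and plays no role here; you correctly specialized to $i=1$.
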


\vspace{0.1in}
Now we are ready to prove Theorem~\ref{thm_isotropic}. 
\vspace{-0.1in}
\begin{proof}[Proof of Theorem~\ref{thm_isotropic}]
    We can proceed with the assumption that $|b|>|a|$. If this condition is not met, we can redefine our interval by setting $a'=-b$ and $b'=-a$. The proof can then be applied considering the distribution is restricted to $\{x\in \R^d: u^\top x \leq a' \text{ or }u^\top x \geq b'\} $. 
    We will prove the theorem by considering two cases: $0\leq a<b$ and $a\leq 0<b$ . 
    
    We first consider the case when $0\leq a<b$. Given that $q_1$ is bounded by $\tau$, it follows that $b-a > \eps/\tau$. By Lemma~\ref{lemma_iso_mean}, we know
    \[
    \E\limits_{x\sim P}x_1 <- \frac{\psi\eps^3}{2\tau^2}, \mathop{\var}\limits_{x\sim P}x_1 \leq  \frac{1}{2\eps},
    \]
    while for $i\geq 2$, we have
    \[
    \E\limits_{x\sim P}x_i=0, \mathop{\var}\limits_{x\sim P}x_i =1.
    \]
    Given data $x^{(1)},\cdots,x^{(N)}$, let $\hat{\mu} = \frac{1}{N}\sum_{j=1}^N x^{(j)}$ be the sample mean. Then by Chebyshev's Inequality,
    \[
    \mathbb{P}(\hat{\mu}_1 >- \frac{\psi\eps^3}{4\tau^2}) \leq \frac{8\tau^4}{N\psi^2\eps^7},\quad
    \mathbb{P} (\hat{\mu}_i <-\frac{\psi\eps^3\delta}{4\tau^2\sqrt{d}}  )\leq \frac{16\tau^4 d}{N\psi^2\eps^6\delta^2},2\leq i\leq d
    \]
    Let $0<\xi<1$.
    So we know with sample size $N_1 =\frac{16\tau^4 d^2}{\eps^7\delta^2\psi^2\xi} $,
    \[
     \mathbb{P}(\hat{\mu}_1 >- \frac{\psi\eps^3}{4\tau^2}) \leq \frac{\psi^2\delta^2\xi}{2d^2}< \frac{\xi}{d},
     \quad
      \mathbb{P} (\hat{\mu}_i <-\frac{\psi\eps^3\delta}{4\tau^2\sqrt{d}}  )\leq \frac{\eps \xi}{d}< \frac{\xi}{d}
    \]
    Then we have
    \begin{align*}
        \mathbb{P}(\sin\theta(\hat{\mu},e_1)\leq \delta)
        =& \mathbb{P}(\frac{\hat{\mu}_1^2}{\sum_{i=1}^d \hat{\mu}_i^2} \geq 1-\delta^2)\\
        \geq &\mathbb{P}(\hat{\mu}_1 < - \frac{\psi\eps^3}{4\tau^2}, 
        \hat{\mu}_i >- \frac{\psi\eps^3\delta}{4\tau^2\sqrt{d}} ,2\leq i\leq d)\\
        \geq &1-\xi
    \end{align*}
    Secondly, we consider the case where $a\leq 0<b$. Given that $b-a > \eps/\tau$ and $|b|>|a|$, it results in $b > \eps/(2\tau)$. By Lemma~\ref{lemma_iso_spectral_gap}, the top two eigenvalues of $\Sigma$, denoted as $\lambda_1$ and $\lambda_2$ satisfies
    \[
    \lambda_1-\lambda_2 \geq C\eps^3\lambda_1 \quad \text{for some constant }C>0
    \]
    By Lemma~\ref{lemma_cov_estimate_2eps}, with sample size $N_2=\tilde{O}(d\eps_1^{-2})$, with probability at least $1-\xi$,
    \[
    \|\Sigma - \hat{\Sigma}\|_{\text{op}} \leq \eps_1 \|\Sigma\|_{\text{op}} 
    \]
    By Lemma~\ref{lemma_sin_theta}, we know for the top eigenvector $v$ of $\hat{\Sigma}$ satisfies
    \[
    \sin\theta(e_1,v) \leq \frac{2 \|\Sigma - \hat{\Sigma}\|_{\text{op}}}{C\eps^3 \lambda_1} \leq \frac{2\eps_1 \lambda_1}{C\eps^3\lambda_1}
    = \frac{2\eps_1}{C\eps^3}
    \]
    Choose $\eps_1 = C\eps^3\delta/2$, and we will get $\sin\theta(e_1,v) \leq \delta$. The sample size we need is $N_2 = \tilde{O}(d\eps^{-6}\delta^{-2})$. So with sample size $N = \max(N_1,N_2) = \tilde{O}(d^2\eps^{-6}\delta^{-2}\xi^{-1})$, Algorithm~\ref{algo_isotropic} can recover $e_1$ within TV distance $\delta$ with probability $1-\xi$.
    
\end{proof}

\section{General Case: Affine Product Distribution with $\eps$-Margin}
In this section, we examine the general setting where data is drawn from $\hat{P}$, an affine product distribution with $\eps$-margin, as described in Definition~\ref{def_1}. We employ a strategy analogous to the one used in the warm-up scenario: utilizing the first moment to address cases where the band is asymmetric to the origin and the second moment for cases where the band is symmetric. However, given that $\hat{P}$ results from the application of an unknown affine transformation to $P$, the first and second moments of $\hat{P}$ remain unknown, even in the direction orthogonal to $u$. Our approach, therefore, is first to make the data isotropic. Following that, we deploy re-weighted first and second moments to detect $u$. Theorem~\ref{thm:main} provides a formal demonstration of the efficiency of our proposed algorithm.

\subsection{Algorithm}
Our algorithm first makes the data to be isotropic using the sample mean and sample covariance. Then we apply the weight $w(y,\alpha)=e^{\alpha\|y\|^2}$ to each isotropized sample point $y$, and compute the re-weighted mean and the top eigenvector of the re-weighted covariance matrix. Then for each candidate normal vector, we project the data to it, and scan to find the maximum gap. The algorithm outputs the vector with the maximal gap among all candidate vectors.
We give the formal description in Algorithm~\ref{algo_general}.

\begin{algorithm}[htp]
\caption{Unsupervised Halfspace Learning with Contrastive Moments}
\label{algo_general}
\KwIn{Unlabeled data  $S = \{x^{(1)},\cdots,x^{(N)}\} \subset \R^d$. $\epsilon,\delta > 0$.}
\begin{itemize}
    \item (Isotropize) Compute the sample mean and covariance: 
    \[
    \hat{\mu}=\frac{1}{N}\sum_{j=1}^N x^{(j)}, \qquad \hat{\Sigma} = \frac{1}{N}\sum_{j=1}^N (x^{(j)}-\hat{\mu})(x^{(j)}-\hat{\mu})^\top.
    \]
    Make the data isotropic: $y^{(j)} = \hat{\Sigma}^{-1/2} (x^{(j)}-\hat{\mu})$.
    \item (Re-weighted Moments) Set $\alpha_1=-c_1\eps^{82}/d,\alpha_2=-c_2\eps^{42}/d$ and $\alpha_3=-c_3\eps^{2}$.
    Let $w(y,\alpha) = e^{\alpha \|y\|^2}$.   Compute the re-weighted sample means $\tilde{\mu}_{\alpha_1}, \tilde{\mu}_{\alpha_2}$ using $\alpha_1, \alpha_2$ and the re-weighted sample covariance using $\alpha_3$ as follows:
    \[
    \tilde{\mu}_{\alpha_i}=  \frac{1}{N}\sum_{j=1}^N w(y^{(j)},\alpha_i)y^{(j)}, \mbox{ for } i\in\{1,2\} \mbox{ and }
    \tilde{\Sigma} = \frac{1}{N}\sum_{j=1}^N w(y^{(j)},\alpha_3)y^{(j)}{y^{(j)}}^\top
    \]
    Compute the top eigenvector $v$ of $\tilde{\Sigma}$.
    \item (Max Margin) Calculate the max margin (i.e., maximum gap) of the one-dimensional projections of the data along the vectors $\tilde{\mu}_{\alpha_1},\tilde{\mu}_{\alpha_2},v$, and let $\hat{u}$ be the vector among these with the largest margin.
\end{itemize}
\Return{$\hat{u}$.}
\end{algorithm}

\subsection{Analysis}
In our algorithm, we consider two cases depending on whether the removed band $[a,b]$ is origin-symmetric. If it is asymmetric, we will show that one of the re-weighted means with two $\alpha$s gives us the correct direction by showing that the re-weighted mean along $u$ has a gap from zero while the re-weighted mean along all other orthogonal directions is zero. We state the positive gap quantitatively in Lemma~\ref{lemma_reweighted_mean}. Otherwise, if the band is symmetric, we will show a positive gap between the top two eigenvalues of the re-weighted covariance matrix, and the top eigenvector corresponds to $u$. We quantify the gap between the top two eigenvalues in Lemma~\ref{lemma_reweighted_cov}.
In the algorithm, since we know neither the underlying distribution mean nor the location of the removed band, we have to compute both re-weighted means and re-weighted covariance, and then get the correct direction among all three candidate vectors by calculating the margin and finding the one with the largest margin. In the end, we utilize Lemma \ref{lemma_cov_estimate_2eps} \cite{srivastava2013covariance} to determine the sample complexity, and apply Lemma~\ref{lemma_sin_theta} (Davis-Kahan \cite{davis1970rotation}) to leverage the eigenvalue gap in identifying the pertinent vector. We state the two quantitative lemmas below and provide their proofs in Section \ref{section:proof}.

\begin{restatable}[Quantitative Gap of Contrastive Mean]{lemma}{reweightedmean}
\label{lemma_reweighted_mean}
    Suppose that $|a+b|\geq \eps^{5}$. Then, for  $\alpha_1=-c_1\eps^{82}/d,\alpha_2=-c_2\eps^{42}/d$, the re-weighted mean of $P$, denoted as $\mu_{\alpha_1}$ and $\mu_{\alpha_2}$,  satisfies
    \[
    \max \left(\left\lvert u^\top \mu_{\alpha_1}\right\rvert,\left\lvert u^\top\mu_{\alpha_2}\right\rvert \right) >\frac{C\eps^{159}}{d^2} \text{ for some constant }C>0,
    \]
    \[
    \forall v\bot u,\quad v^\top\mu_{\alpha_1} = v^\top \mu_{\alpha_2}=0.
    \]
\end{restatable}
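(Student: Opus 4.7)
I will use the product form $P = \hat q \otimes q^{\otimes(d-1)}$ with $\hat q$ along $u$, so that the weight $e^{\alpha \|x\|^2}$ also factorizes across coordinates. For any $v \perp u$, expand $v^\top \mu_\alpha$ as a sum over coordinates $i \ge 2$; each summand carries a factor $\E_{x_i \sim q}[x_i e^{\alpha x_i^2}]$ which vanishes because $q$ is symmetric and $x e^{\alpha x^2}$ is odd. This immediately gives the second part of the conclusion. Along $u$, the same factorization yields
$$u^\top \mu_\alpha \;=\; g(\alpha)\, f(\alpha)^{d-1}, \qquad g(\alpha) := \E_{x\sim \hat q}\bigl[x e^{\alpha x^2}\bigr], \qquad f(\alpha) := \E_{x \sim q}\bigl[e^{\alpha x^2}\bigr].$$

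\textbf{Step 2: $f(\alpha)^{d-1}$ is benign; rewrite $g$ as a Laplace integral.} For the chosen scales $|\alpha_i| \le c_2\eps^{42}/d$, Taylor-expanding $f$ using isotropy of $q$ and standard logconcave moment bounds gives $f(\alpha) = 1 + \alpha + O(\alpha^2)$, so $f(\alpha)^{d-1}$ is bounded away from $0$ by an absolute constant. The heart of the lemma is therefore a quantitative lower bound on $\max(|g(\alpha_1)|, |g(\alpha_2)|)$. The change of variables $y = x^2$ applied to each half-line rewrites
$$g(\alpha) \;=\; \tfrac12 \int_0^\infty \phi(y)\, e^{\alpha y}\, dy, \qquad \phi(y) := \hat q(\sqrt y) - \hat q(-\sqrt y),$$
so $g$ is (up to a reflection in $\alpha$) the Laplace transform of $\phi$. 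Since $\hat q$ is logconcave on each of the two connected pieces of its support, $\phi$ has only a bounded absolute number of sign changes on $(0,\infty)$. The classical variation-diminishing property of the Laplace transform --- equivalently, the generalized Descartes rule for Dirichlet polynomials $\sum_i c_i e^{\lambda_i \alpha}$ applied to a discretization of $\phi$ --- then bounds the number of real zeros of $g$ by the same absolute constant $k$.

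\textbf{Step 3: quantitative skewness plus two-scale argument.} Since $g(0) = \E_{\hat q}[x] = 0$ by isotropization, $g$ has at most $k-1$ nontrivial real zeros. Taylor expansion at $0$ gives $g(\alpha) = \alpha\,\E_{\hat q}[x^3] + O(\alpha^2)$, so the behaviour of $g$ near $0$ is governed by the skewness of $\hat q$. The main quantitative input is a lower bound $|\E_{\hat q}[x^3]| \gtrsim \eps^C$ for an explicit $C$: the hypothesis $|a+b| \ge \eps^5$ and a bounded logconcave density give $\bigl|\int_a^b x q(x)\,dx\bigr| \gtrsim |a+b|\cdot q([a,b]) \gtrsim \eps^6$, which after normalization (restriction mass $\ge 2\eps$) and the isotropic rescaling (polynomial in $\eps$ by logconcave tail and anticoncentration estimates) transfers into a polynomial skewness bound. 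The two scales $|\alpha_1|/|\alpha_2| \sim \eps^{40}$ are then chosen so that the at most $k-1$ nontrivial zeros of $g$ cannot simultaneously lie close to both: if they cluster near $\alpha_2$, then $\alpha_1$ is in the linear regime and $|g(\alpha_1)| \approx |g'(0)\alpha_1| \gtrsim \eps^{C+82}/d$, and symmetrically in the other case $|g(\alpha_2)| \gtrsim \eps^{C+42}/d$. Combined with the constant lower bound on $f(\alpha)^{d-1}$, this yields the claimed $\eps^{159}/d^2$ up to the explicit constant.

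\textbf{Main obstacle.} The principal difficulty is the two-scale step: turning the Descartes/Dirichlet zero-count into a quantitative lower bound at two \emph{prescribed} scales rather than at generic well-chosen points. Making this rigorous requires a Lipschitz-type control on $g$ (via moment bounds on $\hat q$) to ensure that ``few zeros of $g$'' actually forbids a simultaneous near-zero at both $\alpha_1$ and $\alpha_2$, together with careful bookkeeping of every polynomial-in-$\eps$ loss introduced by the truncation, the isotropization, the Taylor remainders, and the factor $f(\alpha)^{d-1}$. The explicit (unoptimized) exponent $159$ is the arithmetic output of this chain.
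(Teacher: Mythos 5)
Your Steps~1 and~2 are essentially correct and align with the paper: the product structure, the vanishing of $v^\top\mu_\alpha$ for $v\perp u$ by oddness, writing $u^\top\mu_\alpha = g(\alpha) f(\alpha)^{d-1}$, lower bounding $f(\alpha)^{d-1}$ by a constant, and using a Descartes-type / Laplace variation-diminishing argument to bound the number of zeros of $g$. The paper's Theorem~\ref{thm_descartes_integral} is exactly the integral-form Descartes rule you invoke, and the paper likewise reduces to the sign pattern of $r(x)=\hat q(x)-\hat q(-x)$ (your $\phi$).

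\textbf{The gap is in Step~3.} You assert the ``main quantitative input is a lower bound $|\E_{\hat q}[x^3]| \gtrsim \eps^C$,'' and your two-scale argument hinges on $g'(0)=\E_{\hat q}[x^3]$ being polynomially bounded away from zero so that $\alpha_1$ sits in a usable linear regime. That bound is not established, and it is not actually implied by $|a+b|\ge\eps^5$. After the isotropization $\hat q(x) = \sigma_1 q(x\sigma_1+\mu_1)/(1-\int_a^b q)$, the third moment $\E_{\hat q}[x^3]$ is a signed combination of $\E_{\tilde q}[x^k]$ for $k\le 3$ and $\mu_1$; the centering and rescaling can and do make it cancel even when $\mu_1$ is polynomially bounded away from zero. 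Your heuristic ``$|\int_a^b x\,q(x)\,dx|\gtrsim |a+b|\cdot q([a,b])$'' gives a lower bound on $|\mu_1|$ (which the paper also uses, in Lemma~\ref{lemma_cm_mu1_lowerbound}), but that does not transfer to a lower bound on the skewness of $\hat q$. The paper's proof confronts this head-on: Lemma~\ref{lemma_cm_a_pos} only establishes a disjunction --- either $F'(0)$ is substantially negative or $F''(0)$ is substantially positive --- and the proof of the lemma has an explicit Case~2 with $-\frac{C_2\eps^{36.5}}{\ln^4(1/\eps)} \le F'(0) \le \frac{c_s\eps^{77}}{d}$, where $F'(0)$ can vanish and the lower bound comes entirely from the second derivative via a Taylor expansion to third order. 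So a single-derivative, fixed-skewness two-scale argument cannot close the proof; you need the second-order fallback, which in the paper is supplied by the quantity $H(0)=\int_0^\infty x(x^2-a'^2)(x^2-b'^2)r(x)\,dx$ (Lemmas~\ref{lemma_cm_F_2derivative_H} and~\ref{lemma_cm_a_pos}) and the derivative bounds in Lemma~\ref{lemma_mean_upper_derivatives}. Until you either (a) prove the uniform skewness lower bound (which the paper's case analysis strongly suggests is false), or (b) add a second-order Taylor argument and prove $F'(0)$ and $F''(0)$ cannot simultaneously be small, the argument does not go through.
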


\begin{restatable}[Quantitative Spectral Gap of Contrastive Covariance]
{lemma}{reweightedcov}\label{lemma_reweighted_cov}
    Suppose that $|a+b|<\eps^5$. Choose $\alpha_3 = -c_3\eps^2$ for some constant $c_3>0$. 
    Then, for an absolute constant $C$, the top two eigenvalues $\lambda_1 \ge \lambda_2$ of the corresponding re-weighted covariance of $P$ satisfy
    \[
    \lambda_1 - \lambda_2 \geq C\eps^3 \lambda_1.
    \]
    Moreover, its top eigenvector corresponds to $u$.
\end{restatable}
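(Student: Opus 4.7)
The plan is to exploit the product structure of $P$ in two stages. In the idealized case $a+b=0$ the band is symmetric about the origin, every coordinate marginal of $P$ is even, and the weight $e^{\alpha\|x\|^2}=\prod_i e^{\alpha x_i^2}$ factorizes; hence $\tilde\Sigma=\E_{x\sim P}[e^{\alpha\|x\|^2}xx^\top]$ is diagonal in the product basis with $u=e_1$. A direct product-measure computation gives, for every $i\ge 2$,
\[
\frac{\tilde\Sigma_{11}}{\tilde\Sigma_{ii}}=\frac{\E_{\hat q_\alpha}[x^2]}{\E_{q_\alpha}[x^2]},
\]
where $q_\alpha,\hat q_\alpha$ denote the $e^{\alpha x^2}$-tilts of $q,\hat q$. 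The top eigenvector is therefore automatically $e_1=u$, and the spectral-gap claim reduces to a single one-dimensional ratio. For the perturbation $|a+b|<\eps^5$, the off-diagonals $\tilde\Sigma_{1j}$ still vanish because the $x_j$-integral against the symmetric $q$ is zero, and the only new contribution is an $O(\eps^5)$ shift in $\tilde\Sigma_{11}$ from the mildly asymmetric $\hat q$; this is dominated by the $\Omega(\eps^3)$ diagonal gap established below, and Weyl/Davis--Kahan then preserve both the gap and the alignment of the top eigenvector with $u$.

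Set $F(\alpha)=\E_{q_\alpha}[x^2]$ and $G(\alpha)=\E_{\hat q_\alpha}[x^2]$; both equal $1$ at $\alpha=0$ by isotropy. Differentiation under the integral yields $F'(\alpha)=\var_{q_\alpha}(x^2)$ and $G'(\alpha)=\var_{\hat q_\alpha}(x^2)$, so
\[
F(\alpha)-G(\alpha)=\int_\alpha^0\bigl(\var_{q_s}(x^2)-\var_{\hat q_s}(x^2)\bigr)\,ds.
\]
For $\alpha=-c_3\eps^2$ it therefore suffices to show the integrand is $\Omega(\eps)$ throughout $s\in[-c_3\eps^2,0]$. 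Since $F(\alpha)=1+O(\eps^2)$ on this interval, dividing gives $G(\alpha)/F(\alpha)\ge 1+\Omega(\eps^3)$, hence $\lambda_1-\lambda_2\ge C\eps^3\lambda_1$ as required.

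The main obstacle is the quantitative integrand bound $\var_q(x^2)-\var_{\hat q}(x^2)\ge c\eps$. Using the symmetry, I would pass to the one-sided restrictions $q^+,\hat q^+$ on $[0,\infty)$; note $\hat q^+$ is a rescaled truncation of $q^+$ to $[b,\infty)$, where $b\ge c\eps$ is forced by the margin mass $q([-b,b])\ge\eps$ together with the pointwise bound on the logconcave density $q$. Lemma~\ref{lemma_variance_ratio} applied to $q^+$ says the moment ratio $\var(X^2)/(\E X^2)^2$ strictly decreases under this truncation, and scale invariance transports the decrease to $\hat q^+$. To upgrade this qualitative monotonicity to a $\Omega(b)=\Omega(\eps)$ drop, I would take a first-order expansion in $b$: the derivative of $\E_{q^+|_{[b,\infty)}}[x^2]$ and $\E_{q^+|_{[b,\infty)}}[x^4]$ at $b=0$ is explicit and proportional to $q(0)=\Omega(1)$ (an isotropic logconcave density is bounded below at the origin), which yields a negative, order-$b$ contribution to the moment-ratio derivative; logconcavity supplies a uniform second-order remainder bound for $b=O(1)$. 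Multiplying by the $\Theta(1)$ denominator $(\E X^2)^2$ converts this into the variance gap. Extending from $s=0$ to all $s\in[-c_3\eps^2,0]$ is routine, since the tilt $e^{sx^2}$ is uniformly $1+O(\eps^2)$ on the $O(1)$-bulk of $q$, shifting each variance by only $O(\eps^2)\ll\eps$, so the $\Omega(\eps)$ integrand bound survives and integrates to the desired $\Omega(\eps^3)$ ratio gap.
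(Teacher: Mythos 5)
Your proposal correctly identifies the paper's plan: use the product structure to diagonalize the re-weighted covariance, reduce the spectral gap to the one-dimensional ratio $G(\alpha)/F(\alpha) = \E_{\hat q_\alpha}[x^2]/\E_{q_\alpha}[x^2]$, drive the gap with moment-ratio monotonicity, and expand in $\alpha$. The integral-of-derivative formulation $G(\alpha)-F(\alpha)=\int_\alpha^0\bigl(\var_{q_s}(x^2)-\var_{\hat q_s}(x^2)\bigr)\,ds$ (note: your equation has the sign flipped — you wrote $F-G$ on the left — but the subsequent conclusion $G/F\ge 1+\Omega(\eps^3)$ is what you need, so this is a typo) is in fact a slightly cleaner version of the paper's second-order Taylor expansion of $S(\alpha)$ in Lemma~\ref{lemma_cov_sym_S_gap}: differencing the tilted variances naturally keeps the relevant second derivative $O(1)$, whereas the paper's bound on $S''$ is only $O(1/\eps)$. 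However, two steps that the paper devotes substantial machinery to are not actually justified in your sketch.

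First, the quantitative gap $\mr(0)-\mr(b)=\Omega(\eps)$. You propose a second-order Taylor expansion of $\mr$ in $b$ with a "uniform remainder from logconcavity," but $\mr''$ involves $q'$, which need not be bounded for an isotropic logconcave $q$ (the uniform density, for instance, has a jump at its support boundary). The paper avoids differentiability altogether: Lemma~\ref{lemma_g_gap_small_eps} lower-bounds $\mr(0)-\mr(t)$ by comparing $q$ to extremal (uniform) densities via Lemmas~\ref{lemma_uniform_small_secondmoment} and \ref{lemma_uniform_0t_secondmoment}, using only the density bound $q\le1$ and the monotonicity of Lemma~\ref{lemma_variance_ratio}. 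A Taylor-free version of your idea would instead write $\mr(0)-\mr(b)=-\int_0^b \mr'(t)\,dt$ (the formula for $\mr'$ involves $q$ but not $q'$) and show $\mr'(t)$ stays bounded away from zero for bounded $t$, combined with monotonicity for the tail; but as written the "uniform second-order remainder" claim is a real gap. Second, the near-symmetric perturbation $0<|a+b|<\eps^5$: you assert the only change is an "$O(\eps^5)$ shift in $\tilde\Sigma_{11}$," but after isotropizing $\tilde q$ the relevant perturbation enters through the mean shift $\mu_1=O(\eps^4\log(1/\eps))$, the rescaling $\sigma_1/\sigma_2$, and a change in support, and the normalizing mass $\int_b^\infty q \ge \eps$ introduces a $1/\eps$ into the moment formulas; the paper's Lemmas~\ref{lemma_almost_sym_mu_range}--\ref{lemma_almost_sym_approximation_reweighted_Zero_moment} work precisely to show the net perturbation of the re-weighted moments is $O(\eps^4)$, not $\eps^5$. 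The conclusion survives (since $\eps^4 < \eps^3$), but this is a nontrivial multi-lemma estimate, not a one-line observation, and your proposal does not indicate how to control those three effects together with the exponential reweighting.
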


Armed with two quantitative lemmas and the Davis-Kahan Theorem, we are now prepared to prove the main theorem.
\begin{proof}[Proof of Theorem~\ref{thm:main}]
    Given data drawn from $\hat{P}$, we first compute the sample mean and covariance and make the data to be isotropic, where we denote the isotropic data as $y^{(1)},\cdots,y^{(N)}$. Each $y^{(j)}$ is drawn iid from distribution $P$ up to rotation. We assume wlog that $y^{(j)}\sim P$, and $u=e_1$ is the target direction. 

    Firstly we consider the case when $|a+b|\geq \eps^{5}$. Denote $\alpha^*=\argmax_{\alpha} \{|{(\mu_{\alpha_1})}_1|, |{(\mu_{\alpha_2})}_1|\}$, and  $\mu_{\alpha} = \mu_{\alpha^*}$. By Lemma~\ref{lemma_reweighted_mean}, $|(\mu_\alpha)_1|\geq C_1\eps^{159}/d^2$. Since for any negative $\alpha$, for any $1\leq i\leq d$,
    \[
        \var (\mu_\alpha)_i \leq \E\limits_{y\sim P} e^{2\alpha\|y\|^2}y^2 \leq \E\limits_{y\sim P} y^2=1
    \]
    By Chebyshev's Inequality, the re-weighted sample mean $\tilde{\mu}=\frac{1}{N}\sum_{j=1}^N e^{\alpha^* \|y^{(j)}\|^2}y^{(j)}$ satisfies
    \[
    \mathbb{P}(|\tilde{\mu}_1| \leq \frac{C_1\eps^{159}}{2d^2}) \leq \frac{4d^4}{NC_1^2\eps^{318}}
    ,\quad 
    \mathbb{P}(|\tilde{\mu}_i| \geq \frac{C_1\eps^{159}\delta}{2d^2\sqrt{d}})\leq \frac{4d^5}{NC_1^2\eps^{318}\delta^2},
    2\leq i\leq d.
    \]
    Let the sample size $N_1 = \frac{4d^6}{C_1^2\eps^{318}\delta^2 \xi}=O(Cd^6\eps^{-318}\delta^{-2}\xi^{-1})$, and we have
    \[
    \mathbb{P}(|\tilde{\mu}_1| >  \frac{C_1\eps^{159}}{2d^2}) > 1 - \frac{\xi}{d},\quad
    \mathbb{P}(|\tilde{\mu}_i| < \frac{C_1\eps^{159}\delta}{2d^2\sqrt{d}}) > 1-\frac{\xi}{d},2\leq i\leq d.
    \]
    So we have 
    \begin{align*}
        \mathbb{P}(\sin\theta (\tilde{\mu},e_1)\leq \delta)
        = &\mathbb{P} (\frac{\tilde{\mu}_1^2}{\sum_{i=1}^d \tilde{\mu}_i^2} \geq 1-\delta^2)\\
        \geq & \mathbb{P} (|\tilde{\mu}_1| \geq\frac{C_1\eps^{159}}{2d^2}, |\tilde{\mu}_i| \leq \frac{C_1\eps^{159}\delta}{2d^2\sqrt{d}},2\leq i\leq d )\\
        \geq & 1-\xi
    \end{align*}
    This indicates that with probability $1-\xi$, the re-weighted mean can output the vector $\tilde{\mu}$ that is within angle $\delta$ to the vector $e_1$.

    Secondly, for the case when $a$ and $b$ are near-symmetric. Denote $\Sigma$ as the re-weighted covariance matrix with eigenvalues $\lambda_i$ and $\tilde{\Sigma}$ as the empirical re-weighted covariance matrix with eigenvector $v$.
    By Lemma~\ref{lemma_reweighted_cov}, $\lambda_1-\lambda_2 >C_2\eps^3 \lambda_1$. By Lemma~\ref{lemma_cov_estimate_2eps}, with sample size $N_2 = \tilde{O}(d\eps^{-6}\delta^{-2})$, with probability $1-\xi$,
    \[
    \|\Sigma-\tilde{\Sigma}\|_{\text{op}}\leq C_2\eps^{3}\delta\|\Sigma\|_{\text{op}} /2
    \]
    By Lemma~\ref{lemma_sin_theta},
    \[
    \sin\theta(e_1,v)\leq \frac{2\|\Sigma-\tilde{\Sigma}\|_{\text{op}} }{C_2\eps^3 \lambda_1}\leq 
    \frac{ C_2\eps^{3}\delta \lambda_1}{C_2\eps^3 \lambda_1}=\delta
    \]
    So given $N = \max(N_1,N_2)=\text{poly}(d,\eps^{-1},\delta^{-1})$, the algorithm learns the distribution $P$ w.h.p.

\end{proof}

\section{Proofs}
\label{section:proof}

\subsection{Preliminaries}

\subsubsection{Logconcave Distributions}

\begin{lemma}[Lemma 5.4, \cite{lovasz2007geometry}]\label{lemma_logconcave_mean}
Let $X$ be a random point drawn from a one-dimensional logconcave distribution. Then
    \[
    \mathbb{P}(X \geq \mathbb{E} X) \geq \frac{1}{e}.
    \]
\end{lemma}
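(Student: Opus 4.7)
My plan is to exploit logconcavity of the survival function $g(t) = \mathbb{P}(X \geq t)$ via a tangent-line bound at $t=0$, and to turn the mean-zero constraint into a comparison of two integrals both controlled by this single tangent.

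First I would translate so that $\mathbb{E} X = 0$; the claim becomes $p := \mathbb{P}(X \geq 0) \geq 1/e$. By Pr\'ekopa's theorem the logconcavity of $f$ implies that $g$ is logconcave on $\mathbb{R}$; write $\phi = \log g$, a concave non-increasing function with $\phi(0) = \log p$. Set $\alpha = -\phi'(0^+) = f(0)/p$. (The support of $f$ is an interval; if $0$ were outside its interior then either $p\in\{0,1\}$ or a one-sided trivial argument handles the case, so I will assume $f(0)>0$ and hence $\alpha>0$.) Concavity of $\phi$ gives the global tangent upper bound
\[
g(t) \;\le\; p\, e^{-\alpha t} \qquad \text{for all } t\in\mathbb{R}.
\]

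Next I would use the identity $\int_0^\infty g(t)\,dt = \int_{-\infty}^0 \bigl(1-g(t)\bigr)\,dt$, which is just a restatement of $\mathbb{E} X = 0$. The left-hand side is controlled from above directly: $\int_0^\infty g \le \int_0^\infty pe^{-\alpha t}\,dt = p/\alpha$. For the right-hand side, the tangent bound $g(t)\le pe^{-\alpha t}$ is informative only where $pe^{-\alpha t}\le 1$, i.e.\ on $[(\log p)/\alpha,\,0]$; there it yields $1-g(t)\ge 1 - pe^{-\alpha t}$, and a direct computation gives
\[
\int_{-\infty}^0 \bigl(1-g(t)\bigr)\,dt \;\ge\; \int_{(\log p)/\alpha}^{0}\bigl(1 - pe^{-\alpha t}\bigr)\,dt \;=\; \frac{p - 1 - \log p}{\alpha}.
\]
Equating the two integrals and cancelling the positive $\alpha$ yields $p \ge p - 1 - \log p$, i.e.\ $\log p \ge -1$, which is the desired $p \ge 1/e$.

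The main subtlety I expect is the asymmetric use of this single tangent: the same inequality $g \le pe^{-\alpha t}$ serves as a direct upper bound on $g$ on the positive side but must be turned into a lower bound on $1-g$ on the negative side, nontrivial only over a bounded sub-interval whose length determines whether the argument closes. A reassuring sanity check is that both intermediate inequalities are tight \emph{simultaneously} for the shifted exponential $f(x) = \lambda e^{-\lambda(x+1/\lambda)}$ on $[-1/\lambda,\infty)$, which has mean $0$ and $p = 1/e$ exactly, confirming that the argument is sharp and produces the optimal constant.
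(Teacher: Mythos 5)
Your proof is correct. The paper does not give its own proof of this lemma; it simply cites it as Lemma~5.4 of Lov\'asz and Vempala, so there is no in-paper argument to compare against. Your tangent-line argument on the log-survival function is clean and closes properly: after centering, the concavity of $\phi=\log g$ gives $g(t)\le pe^{-\alpha t}$ globally with $\alpha=f(0)/p>0$ (the degenerate case $f(0)=0$ is excluded since the mean of a non-degenerate logconcave density lies strictly inside its support, and the point-mass case is trivial). The identity $\int_0^\infty g=\int_{-\infty}^0(1-g)$ correctly encodes $\mathbb{E}X=0$, the upper bound $p/\alpha$ on the left integral and the lower bound $(p-1-\log p)/\alpha$ on the right integral both check out, and cancelling $\alpha$ gives $\log p\ge -1$. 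The observation that the shifted exponential saturates all the inequalities simultaneously correctly identifies the extremal case, confirming the constant $1/e$ is sharp.
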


\begin{lemma}[Lemma 5.5,\cite{lovasz2007geometry}]\label{lemma_logconcave_density_upper_bound}
    Let $p:\R\rightarrow\R_+$ be an isotropic logconcave density function. Then we have 
    \begin{enumerate}
        \item[(a)] For all $x$, $g(x)\leq 1$.
        \item[(b)] $g(0) \geq 1/8$.
    \end{enumerate}
\end{lemma}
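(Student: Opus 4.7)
The plan is to combine the concavity of $\log p$ with the isotropy constraints $\int p = 1$, $\int x\, p(x)\, dx = 0$, $\int x^2 p(x)\, dx = 1$, and the unimodality of 1D logconcave densities (a consequence of Pr\'ekopa--Leindler). Both parts reduce to bounding $p$ against the one-sided exponential envelope at its mode and then analyzing a small constrained optimization problem in the resulting slope parameters.

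For part (a), let $M = \sup_x p(x)$, attained at a mode $x_0$ (existence of the sup follows from unimodality together with integrability, which forces $p(x) \to 0$ at $\pm \infty$). For $x \ge x_0$, write $p(x) = M e^{-\phi(x-x_0)}$ with $\phi$ convex, non-decreasing, $\phi(0)=0$, so $\phi(t) \ge s_+\, t$ for the right derivative $s_+ = \phi'_+(0)$; this gives the exponential upper bound $p(x) \le M e^{-s_+(x-x_0)}$ on $[x_0,\infty)$, with a parallel bound of slope $s_-$ on $(-\infty, x_0]$. Matching $\int p = 1$, $\int x p = 0$, and $\int x^2 p = 1$ against these envelopes yields three inequalities in $(M, s_+, s_-, x_0)$. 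Pushing the extremal case shows that the envelope degenerates to a half-line exponential (either $s_+ \to \infty$ or $s_- \to \infty$), whose normalized isotropic version is the shifted exponential $p(x) = e^{-(x+1)} \mathbf{1}_{x \ge -1}$, for which $M = 1$; every other feasible choice of parameters gives a strictly smaller $M$.

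For part (b), I would apply Lemma~\ref{lemma_logconcave_mean} to both $X$ and $-X$ (the latter is still one-dimensional logconcave) to get $\mathbb{P}(X \ge 0) \ge 1/e$ and $\mathbb{P}(X \le 0) \ge 1/e$. Hence the mode $x_0$ of $p$ cannot be far from $0$, and $p(0)$ is within a bounded factor of $M$. Explicitly, if $x_0 \ge 0$ then $\int_{-\infty}^0 p \ge 1/e$; on that side, logconcavity with mode $x_0$ gives $p(0) \ge p(x_0) e^{-s_-\, x_0}$, while mass and variance on the left tail bound $s_-$ and $x_0$ from above. The worst-case extremum once again reduces to a shifted exponential, and evaluating $p(0)$ there produces the claimed constant $1/8$ after explicit arithmetic using part (a).

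The main obstacle is not the qualitative estimates $\|p\|_\infty \le C$ and $p(0) \ge c$, which fall out of the envelope argument almost immediately; it is extracting the sharp numerical constants $1$ and $1/8$. This requires carefully identifying the extremal densities (one-sided shifted exponentials) for both the upper and lower bounds, and verifying that every other feasible $(M, s_+, s_-, x_0)$ is dominated by them under the joint constraints of normalization, zero mean, and unit variance.
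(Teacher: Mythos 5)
This lemma is imported verbatim from Lov\'asz--Vempala \cite{lovasz2007geometry}; the paper itself gives no proof, so there is no internal argument to compare against. Your from-scratch attempt, however, has a directionality error that prevents both parts from closing.

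For part~(a), the envelope you build is an \emph{upper} bound, $p(x)\le M e^{-s_\pm|x-x_0|}$. Plugging this into the constraints gives inequalities in the wrong direction: $1=\int p \le M\bigl(\tfrac{1}{s_+}+\tfrac{1}{s_-}\bigr)$ and similarly $1=\int x^2 p \le (\text{envelope second moment})$, i.e.\ \emph{lower} bounds on $M$ in terms of $s_\pm$. No amount of ``pushing the extremal case'' within such a system can yield the desired upper bound $M\le 1$; indeed, the isotropic uniform density on $[-\sqrt3,\sqrt3]$ has $s_+=s_-=0$, making your constraints vacuous, even though $M=\tfrac{1}{2\sqrt3}<1$. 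To bound $M$ from above you need information in the other direction --- e.g.\ a \emph{lower} envelope on $p$ between the mode and a level set such as $\{p\ge M/e\}$, which shows that a large peak forces a wide base, hence too much mass or too large a variance. The same sign slip appears in part~(b): the tangent-line (upper) envelope gives $p(0)\le M e^{-s_- x_0}$, not $p(0)\ge M e^{-s_- x_0}$ as you assert, so the intended bridge from $M$ to $p(0)$ collapses.

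Part~(b) does, however, admit a clean self-contained fix that does not route through part~(a) at all. By Lemma~\ref{lemma_logconcave_mean} applied to $X$ and $-X$, $\P(X\le 0)\ge 1/e$ and $\P(X\ge 0)\ge 1/e$. If the mode $x_0\ge 0$, then $p$ is nondecreasing on $(-\infty,0]$, so $p(x)\le p(0)$ there; together with $\int_{-\infty}^0 p\ge 1/e$ and $\int_{-\infty}^0 x^2p\le 1$, the density that minimizes the second moment under a cap $p(0)$ and fixed mass $1/e$ is uniform on $[-1/(e\,p(0)),0]$, giving
\[
1\;\ge\;\int_{-\infty}^0 x^2 p(x)\,dx\;\ge\;\frac{1}{3e^3\,p(0)^2},
\]
hence $p(0)\ge \tfrac{1}{\sqrt{3}\,e^{3/2}}>1/8$; the case $x_0<0$ is symmetric. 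I would rebuild part~(a) along the lower-envelope line sketched above rather than trying to salvage the upper-envelope system.
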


\begin{lemma}[Lemma 5.6,\cite{lovasz2007geometry}]\label{lemma_logconcave_tail_px}
    Let $X$ be a random point drawn from a logconcave density function $p:\R\rightarrow \R_+$. Then for every $c\geq 0$,
    \[
    \mathbb{P}(p(X)\leq c)\leq \frac{c}{\max_x p(x)}
    \]
\end{lemma}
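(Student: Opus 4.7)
The plan is to exploit the unimodality of one-dimensional logconcave densities together with a multiplicative tail inequality anchored at the mode. Set $M := \sup_x p(x)$, and assume $c \in (0, M)$ (for $c \geq M$ the bound is trivial), with $M$ attained at some $x^{*}$. Since $\log p$ is concave, $p$ is unimodal, so $\{p \leq c\}$ decomposes as $(-\infty, a] \cup [b, \infty)$ with $p(a) = p(b) = c$ and $a \leq x^{*} \leq b$; hence
\[
\mathbb{P}(p(X) \leq c) = \mathbb{P}(X \leq a) + \mathbb{P}(X \geq b).
\]

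The heart of the argument, and the step I expect to be the main work, is the following log-submultiplicative bound: for all $s, t \geq 0$,
\[
M \cdot p(x^{*} + s + t) \;\leq\; p(x^{*} + s)\, p(x^{*} + t).
\]
This follows from concavity of $\log p$ applied to the four ordered points $x^{*} \leq x^{*} + \min(s,t) \leq x^{*} + \max(s,t) \leq x^{*} + s + t$, which satisfy the balanced relation that the sum of the outermost pair equals the sum of the innermost pair; for any concave function $f$ such a quadruple satisfies $f(\text{outer-sum}) \leq f(\text{inner-sum})$, obtained by writing each inner point as a convex combination of the two outer points and adding the two resulting inequalities. Exponentiating gives the claim.

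Integrating this pointwise bound over $t \geq 0$ with $s = b - x^{*}$ fixed yields
\[
\int_b^{\infty} p(x)\, dx \;\leq\; \frac{p(b)}{M}\int_{x^{*}}^{\infty} p(x)\, dx \;=\; \frac{c}{M}\,\mathbb{P}(X \geq x^{*}),
\]
and applying the same argument to the reflection $x \mapsto p(2x^{*} - x)$ (also log-concave) gives $\int_{-\infty}^{a} p(x)\,dx \leq (c/M)\,\mathbb{P}(X \leq x^{*})$. Summing the two tails and using $\mathbb{P}(X \leq x^{*}) + \mathbb{P}(X \geq x^{*}) = 1$ gives $\mathbb{P}(p(X) \leq c) \leq c/M$. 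The only subtlety is whether $M$ is attained at an interior point; for the isotropic logconcave densities relevant here this is guaranteed by Lemma~\ref{lemma_logconcave_density_upper_bound}, and otherwise a routine limiting argument along a sequence $x_n$ with $p(x_n) \to M$ suffices.
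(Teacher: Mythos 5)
The paper does not prove this lemma; it is cited directly from Lov\'asz and Vempala (Lemma~5.6 of \cite{lovasz2007geometry}), so there is no internal proof to compare against. Your argument is correct: the log-submultiplicative inequality $M\,p(x^*+s+t)\le p(x^*+s)\,p(x^*+t)$ follows from concavity of $\log p$ on a four-point configuration with matching inner and outer sums, integrating it over $t\ge 0$ gives $M\int_b^\infty p \le c\,\mathbb{P}(X\ge x^*)$, the reflected version handles the left tail, and summing yields the bound; the boundary cases (mode at an endpoint of the support, jump discontinuities at the edge of a bounded support) contribute zero tail mass and are correctly dismissed. For reference, the standard Lov\'asz--Vempala route is organized slightly differently: one uses concavity of $\log p$ on the three points $x^*$, $b$, $t$ to get the pointwise bound $p(t)\le M\,(c/M)^{(t-x^*)/(b-x^*)}$ for $t\ge b$ and the reverse bound $p(t)\ge M\,(c/M)^{(t-x^*)/(b-x^*)}$ for $t\in[x^*,b]$, then integrates both against the same explicit exponential to compare $\int_b^\infty p$ with $\int_{x^*}^b p$, obtaining the ratio $c/(M-c)$ and hence $c/M$ after normalizing. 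Your version avoids introducing the auxiliary exponential and integrates the two-parameter concavity inequality directly, which is a bit cleaner; both rest on the same concavity fact.

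Two small points worth tightening if you write this up. First, you should state explicitly that when the set $\{p\le c\}$ meets a jump at the boundary of the support (e.g.\ uniform densities), that side contributes zero probability, so the crossing points $a,b$ with $p(a)=p(b)=c$ need not exist but the bound is unaffected. Second, the identity $\mathbb{P}(X\le x^*)+\mathbb{P}(X\ge x^*)=1$ uses that $X$ has a density (so $\mathbb{P}(X=x^*)=0$); this is worth a word.
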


\begin{lemma}[Lemma 5.7, \cite{lovasz2007geometry}]\label{lemma_logconcave_tail}
    Let $X$ be a random variable drawn from a logconcave distribution in $\R$. Assume that $\EE X^2\leq 1 $. Then for any $t\geq 1$, we have
    \[
    \mathbb{P}(X\geq t)\leq e^{1-t}
    \]
\end{lemma}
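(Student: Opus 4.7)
The plan is to use Prékopa's theorem in one direction: since the density $p$ is logconcave, the survival function $F(t) := \mathbb{P}(X \ge t)$ is also logconcave in $t$, so $g(t) := -\log F(t)$ is convex, non-decreasing, and non-negative on $\R$. This structural constraint will be combined with the moment budget $\int_0^\infty 2 s F(s) \, ds = \EE X_+^2 \le \EE X^2 \le 1$, which follows from $\EE X^2 \le 1$ by integration by parts.

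I would proceed by contradiction. Suppose $F(t_0) > e^{1 - t_0}$ at some $t_0 \ge 1$, i.e.\ $g(t_0) < t_0 - 1$. Convexity of $g$ together with $g(0) \ge 0$ gives the secant upper bound
\[
g(s) \;\le\; g(0)\Bigl(1 - \tfrac{s}{t_0}\Bigr) + g(t_0)\tfrac{s}{t_0} \qquad \text{for } s \in [0, t_0],
\]
which exponentiates to the pointwise lower bound $F(s) \ge F(0)^{1 - s/t_0} F(t_0)^{s/t_0}$. Plugging this into the budget constraint and computing the resulting Gamma-type integral yields, after simplification with $\alpha := g(t_0) - g(0) \ge 0$, an inequality of the form
\[
1 \;\ge\; 2 t_0^2 \, F(0) \cdot \frac{1 - (1+\alpha)e^{-\alpha}}{\alpha^2}.
\]
An elementary analysis of the right-hand side shows that under the standing hypothesis $\alpha < t_0 - 1$, the right-hand side exceeds $1$ for every $t_0 > 1$ when $F(0)$ is close to $1$; and whenever $F(0) < 1$ one uses $F(0) \geq F(t_0) > e^{1 - t_0}$ to re-insert a quantitative lower bound on the prefactor. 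The boundary case $t_0 = 1$ is trivial since $F(1) \leq 1 = e^{1-1}$.

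The main obstacle is the regime where $F(0)$ is only marginally above $e^{1-t_0}$ (so the distribution places most of its mass on the negative half-line). There the secant bound on $[0,t_0]$ degrades, and the integral from $[0, t_0]$ alone may fall short of the budget. The resolution is to also exploit the tail $\int_{t_0}^\infty 2s F(s)\, ds$: convexity of $g$ forces $g_+'(t_0) \le \alpha/t_0 < 1$, so $F$ continues to be at least $F(t_0)\,e^{-(s-t_0)\alpha/t_0}$ on a window past $t_0$, whose contribution can be quantified and added back in. Balancing the two integral contributions and verifying that their sum strictly exceeds $1$ uniformly in $t_0 \ge 1$ is the delicate bookkeeping that extracts the sharp constant $1$ in the exponent; with only the single-interval estimate or with a naive Markov-plus-convexity argument, one gets merely an exponential tail with a suboptimal rate (at best $2/e$), which is why the integral moment identity, rather than a pointwise Markov bound, is the essential ingredient.
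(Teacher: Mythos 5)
The paper does not prove this lemma itself: it cites it directly from \cite{lovasz2007geometry}, so there is no in-paper proof to compare against. Evaluating your argument on its own terms, there is a genuine gap, and it is concentrated exactly in the step you flag as ``the delicate bookkeeping.''

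The sign of the convexity inequality past $t_0$ is backwards. Writing $g = -\log F$ and $\alpha = g(t_0)-g(0)$, convexity of $g$ gives that difference quotients are nondecreasing, so
\[
g_+'(t_0) \;\ge\; \frac{g(t_0)-g(0)}{t_0} = \frac{\alpha}{t_0},
\qquad\text{and}\qquad
g(s) \;\ge\; g(t_0) + \frac{\alpha}{t_0}(s-t_0) \quad\text{for } s > t_0 .
\]
Exponentiating, this is an \emph{upper} bound $F(s) \le F(t_0)\,e^{-(s-t_0)\alpha/t_0}$ on the tail, not the lower bound $F(s) \ge F(t_0)\,e^{-(s-t_0)\alpha/t_0}$ your plan needs. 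Indeed, with only logconcavity of $F$ and a bound on $\int_0^\infty 2sF(s)\,ds$, there is \emph{no} useful lower bound on $\int_{t_0}^\infty 2sF(s)\,ds$: the support can end arbitrarily soon after $t_0$, making that contribution as small as you like. So the tail term you intend to ``add back in'' cannot be quantified as claimed.

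Without that tail term, the single secant estimate on $[0,t_0]$ does not close the argument for all $t_0 \ge 1$. Under the contradiction hypothesis $g(t_0) < t_0-1$, writing $\beta = g(0) \ge 0$, the secant bound gives
\[
1 \;\ge\; 2\,e^{-\beta}\,t_0^2\,\frac{1-(1+\alpha)e^{-\alpha}}{\alpha^2},
\qquad \alpha < t_0 - 1 - \beta .
\]
Substituting $u := t_0-1-\beta$ (so $\alpha < u$) and using monotonicity, the right-hand side is bounded below by $2\,e^{1-t_0}\,t_0^2\cdot \frac{e^{u}-1-u}{u^2}$, and since $\frac{e^{u}-1-u}{u^2} \to \tfrac12$ as $u\to 0^+$, the best uniform lower bound you can extract is $e^{1-t_0}\,t_0^2$. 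This falls below $1$ once $t_0$ exceeds roughly $3.5$, so no contradiction is obtained in the regime $\alpha \to 0$, $F(0)\to e^{1-t_0}$. That regime corresponds to a distribution with essentially no mass in $(0,t_0)$ but positive mass both on the negatives and beyond $t_0$ --- a configuration ruled out by logconcavity of the \emph{density} (whose support must be an interval), but your argument uses only logconcavity of the survival function together with $\mathbb{E}X_+^2 \le 1$, and that weaker hypothesis is not enough information to push the argument through. In short: the structural idea (secant bound plus a moment budget) is reasonable, but the rescue step relies on a false convexity inequality, and some additional input from full density-logconcavity (or the second-moment contribution from the negative half-line) is needed to rule out the problematic regime.
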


\begin{corollary}\label{cor_logconcave_tail}
    Let $X$ be a random point drawn from an isotropic symmetric logconcave density function $p:\R\rightarrow \R_+$. Then we have for $t\geq 0$, we have
    \[
    \mathbb{P}(X\geq t)\leq 8p(t)
    \]
\end{corollary}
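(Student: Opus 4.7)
The plan is to chain together two facts already stated in the preliminaries: Lemma~\ref{lemma_logconcave_density_upper_bound}(b), which gives $p(0)\ge 1/8$, and Lemma~\ref{lemma_logconcave_tail_px}, which bounds the probability that the density value $p(X)$ falls below any threshold $c$ by $c/\max_x p(x)$. The bridge between these facts and the tail probability $\P(X\ge t)$ will be a simple level-set comparison driven by the monotonicity of $p$ on the positive half-line.

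First I would record that, since $p$ is symmetric and logconcave, $p$ is non-increasing on $[0,\infty)$ and attains its maximum at $0$; hence $\max_x p(x) = p(0)$. The verification is the standard argument: if $0\le s<r$ with $p(s)<p(r)$, write $s$ as a convex combination of $-r$ and $r$ and use $p(-r)=p(r)$ together with logconcavity to get $p(s)\ge p(r)$, a contradiction.

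Next, the key inclusion is
\[
\{x \in \R : x \ge t\}\;\subseteq\;\{x\in\R : p(x) \le p(t)\},
\]
which is immediate from the monotonicity on $[0,\infty)$. Taking measures under $p$ and applying Lemma~\ref{lemma_logconcave_tail_px} with $c=p(t)$ yields
\[
\P(X\ge t)\;\le\;\P\!\bigl(p(X)\le p(t)\bigr)\;\le\;\frac{p(t)}{\max_x p(x)}\;=\;\frac{p(t)}{p(0)}.
\]
Finally, substituting the lower bound $p(0)\ge 1/8$ from Lemma~\ref{lemma_logconcave_density_upper_bound}(b) gives $\P(X\ge t)\le 8p(t)$, as desired.

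There is no real obstacle here; the only subtle point is justifying that $\max_x p(x)=p(0)$, which is why I would spell out the one-line symmetry-plus-logconcavity argument for monotonicity on the positive axis before invoking the two cited lemmas. I would also briefly note that the argument does not require $t\ge 1$ (unlike Lemma~\ref{lemma_logconcave_tail}) because Lemma~\ref{lemma_logconcave_tail_px} is valid for all thresholds, which is precisely why this corollary is useful as a pointwise tail-to-density comparison across all $t\ge 0$.
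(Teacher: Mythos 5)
Your proof is correct and takes essentially the same route as the paper: monotonicity of $p$ on $[0,\infty)$ via symmetry and logconcavity, then Lemma~\ref{lemma_logconcave_tail_px} with $c=p(t)$, then Lemma~\ref{lemma_logconcave_density_upper_bound}(b) for $p(0)\ge 1/8$. The only cosmetic difference is that you argue $\max_x p(x)=p(0)$ exactly, whereas the paper only needs and uses $\max_x p(x)\ge p(0)\ge 1/8$.
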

\begin{proof}
    Since $p(x)$ is symmetric, we know $p(x)$ is monotonically decreasing for $x\geq 0$. Then we apply  Lemma~\ref{lemma_logconcave_tail_px} with $c=p(x)$, and get
    \[
    \mathbb{P}(x\geq t)\leq
    \mathbb{P}(p(X) \leq p(t))\leq \frac{p(t)}{\max_x p(x)}
    \]
    On other hand, by Lemma~\ref{lemma_logconcave_density_upper_bound}, we have $\max_x p(x) \geq p(0) \geq 1/8$. So we have
    \[
    \mathbb{P}(X\geq t) \leq 8p(t).
    \]

\end{proof}

\begin{lemma}[Theorem 5.22, \cite{lovasz2007geometry}]\label{lemma_logconcave_moments}
    For a random point $X$ drawn from a logconcave distribution in $\R^d$, then
    \[
    \EE |X|^k \leq (2k)^k (\mathbb{E}(|X|))^k
    \]
\end{lemma}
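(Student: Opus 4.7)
The plan is a three-step argument: (i) derive an exponential tail bound for $|X|$ from the logconcavity of $X$; (ii) integrate this tail against the weight $t^{k-1}$ via the layer-cake formula to bound $\EE|X|^k$; (iii) relate the characteristic scale of the tail to $\EE|X|$ via Markov's inequality. The key observation is that the sublevel sets $\{|x|\le t\}$ form a one-parameter family of symmetric convex bodies all proportional to the Euclidean ball, so Borell-type concentration applies uniformly in the dimension $d$ and the resulting bound carries no explicit $d$-dependence.

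For step (i), an application of Prekopa--Leindler with $F=p\cdot\mathbf{1}_{\{|x|\le s\}}$, $G=p\cdot\mathbf{1}_{\{|x|\le t\}}$, $H=p\cdot\mathbf{1}_{\{|x|\le(1-\lambda)s+\lambda t\}}$ shows that the CDF $F_Y(t)=\P(|X|\le t)$ is logconcave on $[0,\infty)$. Equivalently, one may appeal directly to Borell's lemma: fixing a quantile $M$ with $\P(|X|\le M)=\theta$ for some $\theta>1/2$, one obtains
\begin{equation*}
\P(|X|>tM)\le\theta\!\left(\tfrac{1-\theta}{\theta}\right)^{(t+1)/2}\le C_0 e^{-c_0 t}\qquad\text{for }t\ge 1,
\end{equation*}
with absolute constants $c_0,C_0>0$ depending only on $\theta$. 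Markov's inequality at the same quantile level gives $M\le(1-\theta)^{-1}\EE|X|$, so $M$ is comparable to $\EE|X|$ up to an absolute constant.

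For steps (ii)--(iii), the layer-cake identity gives
\begin{equation*}
\EE|X|^k=k\int_0^\infty t^{k-1}\P(|X|>t)\,dt\le M^k+kC_0\int_M^\infty t^{k-1}e^{-c_0(t/M-1)}\,dt,
\end{equation*}
and the substitution $s=c_0(t/M-1)$ collapses the remaining integral to a Gamma function, yielding $\EE|X|^k\le C_1^k\,k!\,M^k$ for an absolute $C_1$. Using $k!\le k^k$ together with $M\le(1-\theta)^{-1}\EE|X|$ produces $\EE|X|^k\le(C_2 k)^k(\EE|X|)^k$ for an absolute $C_2$; tuning $\theta$ (and invoking the sharp form of Borell's estimate) pushes $C_2$ to at most $2$, matching the target.

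The main obstacle is tracking constants precisely enough to end with $C_2\le 2$ rather than merely $C_2=O(1)$: the natural scaling $(k!)^{1/k}\sim k/e$ forces at least an $e$ in the constant, and the Markov factor $(1-\theta)^{-1}$ and Borell decay rate $c_0(\theta)$ must be balanced against each other by the choice of $\theta$. The substantial gap between the sharp rate $(k/e)^k$ and the stated target $(2k)^k$ makes this bookkeeping feasible. A cleaner alternative is to prove, via Prekopa--Leindler applied along rays, that the density of $|X|$ factors as $t^{d-1}$ times a one-dimensional logconcave function, reducing to the sharp Berwald inequality $(\EE Y^k)^{1/k}\le\Gamma(k+1)^{1/k}\EE Y\le k\cdot\EE Y$ for nonnegative one-dimensional logconcave $Y$; this would give the bound with $2k$ replaced by $k$, but demands careful handling of the polar Jacobian and of the logconcavity of spherical integrals of logconcave functions, which is the delicate part of that approach.
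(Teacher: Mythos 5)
The paper does not actually prove this lemma; it cites Lov\'asz--Vempala directly, so there is no in-paper argument to compare against. Taking your proposal on its own terms, the high-level route (Borell tail $\to$ layer-cake $\to$ Markov) is a sensible way to get a bound of the form $\EE|X|^k \leq (Ck)^k(\EE|X|)^k$ for an absolute constant $C$, but the claim that tuning $\theta$ pushes $C$ down to $2$ does not check out. With $\P(|X|\leq M)=\theta$, Borell gives $\P(|X|>tM)\leq(1-\theta)\rho^{(t-1)/2}$ with $\rho=(1-\theta)/\theta$, Markov gives $M\leq\EE|X|/(1-\theta)$, and the layer-cake integral produces (after $k$-th roots and Stirling) an asymptotic constant in front of $k$ equal to $\frac{2}{e\,\beta(1-\theta)}$ with $\beta=\log\frac{\theta}{1-\theta}$. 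One computes directly that $\max_{\theta\in(1/2,1)}(1-\theta)\log\frac{\theta}{1-\theta}\approx 0.278$ (attained near $\theta\approx 0.79$), which is strictly below $1/e\approx 0.368$; the constant comes out to roughly $2.6$, not $2$. The slack you point to between $(k/e)^k$ and $(2k)^k$ is exactly the factor $e$ in the denominator and is already spent by Stirling, leaving nothing to absorb the Markov--Borell loss. So this route certifies a weaker inequality than the one stated (albeit one that would suffice for every use the present paper makes of this lemma, which is only up to constants).

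Your ``cleaner alternative'' contains a genuine error: the spherical average $\psi(t)=\int_{S^{d-1}}p(t\theta)\,d\theta$ is \emph{not} logconcave in general, so the density of $|X|$ does not factor as $t^{d-1}$ times a one-dimensional logconcave function, and Berwald's inequality cannot be invoked this way. Pr\'ekopa--Leindler along rays gives $p(r\theta)\geq p(s\theta)^{\lambda}p(t\theta)^{1-\lambda}$ pointwise in $\theta$, but passing to $\psi(r)\geq\psi(s)^{\lambda}\psi(t)^{1-\lambda}$ would require a reverse H\"older inequality for the integral over $\theta$, which is false. Concretely, let $p$ be the uniform density on $[-L,L]\times[-1,1]\subset\R^2$ with $L$ large; then for $1\leq t\leq L$ one has $\psi(t)=\arcsin(1/t)/L$, and $\psi(2)^2=(\pi/6)^2/L^2\approx 0.274/L^2$ while $\psi(1)\psi(3)=(\pi/2)\arcsin(1/3)/L^2\approx 0.534/L^2$, so logconcavity fails at the midpoint $t=2$. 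What Pr\'ekopa--Leindler does give, as in your step (i), is logconcavity of the CDF $t\mapsto\P(|X|\leq t)$ (the sets $\{|x|\leq t\}$ are dilates of a fixed symmetric convex body), but that is strictly weaker than density logconcavity of $\psi$ and does not reduce the problem to the one-dimensional Berwald bound.
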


\begin{lemma}\label{lemma_logconcave_decayfast}
Let $X$ be a random point drawn from an isotropic symmetric logconcave density function $p:\R\rightarrow \R_+$. Then for any $t\geq 3$, we have
    \[
    p(t)\leq p(0) \cdot 2^{-t/3}
    \]
\end{lemma}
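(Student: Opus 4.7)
The plan is to first establish the benchmark estimate $p(3)\le p(0)/2$, and then to bootstrap it to all $t\ge 3$ in one line using logconcavity. The threshold $t=3$ is chosen because it is exactly where the isotropy constraint $\E X^{2}=1$ begins to bite against the density lower bound $p(0)\ge 1/8$ supplied by Lemma~\ref{lemma_logconcave_density_upper_bound}(b).

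First, symmetry of $p$ combined with logconcavity forces $p$ to be non-increasing on $[0,\infty)$. I will prove $p(3)\le p(0)/2$ by contradiction. If instead $p(3)>p(0)/2$, then by monotonicity $p(s)>p(0)/2$ for all $s\in[0,3]$, so
\[
\int_{0}^{3} s^{2}\,p(s)\,ds \;>\; \frac{p(0)}{2}\int_{0}^{3} s^{2}\,ds \;=\; \frac{9\,p(0)}{2} \;\ge\; \frac{9}{16}.
\]
But symmetry together with isotropy give $\int_{0}^{\infty} s^{2}\,p(s)\,ds=\tfrac{1}{2}\,\E X^{2}=\tfrac{1}{2}<\tfrac{9}{16}$, which is the desired contradiction.

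For $t\ge 3$, concavity of $\log p$ makes chord slopes non-increasing, so the slope of $\log p$ on $[3,t]$ is at most the slope on $[0,3]$, which is at most $-(\log 2)/3$ by the benchmark. Integrating,
\[
\log p(t) \;\le\; \log p(3) - \frac{(t-3)\log 2}{3} \;\le\; \log p(0) - \frac{t\log 2}{3},
\]
equivalently $p(t)\le p(0)\cdot 2^{-t/3}$, as required. The only nontrivial step is the benchmark itself; with a smaller pivot the numerical gap between $9p(0)/2$ and $1/2$ would close (e.g.\ at $t=2$ one only obtains $4p(0)/2\ge 1/4$, which is below $1/2$ and yields nothing), so $t=3$ is essentially the sharpest choice this simple argument allows.
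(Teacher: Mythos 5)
Your proof is correct and follows essentially the same route as the paper: establish the benchmark $p(3)\le p(0)/2$ by contradiction with isotropy and the lower bound $p(0)\ge 1/8$, then bootstrap to all $t\ge 3$ via logconcavity (the paper states this as the interpolation inequality $p(3)\ge p(0)^{1-3/t}p(t)^{3/t}$, which is algebraically the same as your chord-slope argument). In fact your benchmark step is the more careful one: the paper's displayed chain begins $\EE X^2\ge\int_0^3 x^2p(x)\,dx\ge\cdots\ge 9/16>1/2$, which does not by itself contradict $\EE X^2=1$; your version correctly invokes symmetry to bound $\int_0^3 x^2p(x)\,dx$ by $\int_0^\infty x^2p(x)\,dx=\tfrac12\EE X^2=\tfrac12$, which is what actually yields the contradiction.
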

\begin{proof}
    First we claim that $p(3) < p(0)/2$. Otherwise,
    \[
    \EE X^2 \geq \int_0^3 x^2p(x)\,dx \geq  \frac{p(0)}{2}\frac{3^3}{3} \geq  \frac{9}{16}> \frac{1}{2}
    \]
    This leads to the contradiction. Then for any $t \geq 3$, from the logconcavity of $p$,
    \[
    p(3) \geq p(0)^{1-\frac{3}{t}} p(t)^{\frac{3}{t}}
    \]
    This implies that
    \[
    p(t) \leq p(0) \cdot 2^{-t/3}
    \]
\end{proof}

\subsubsection{Descartes' Rule of Signs}
\label{sec:appendix_descartes}

Descartes' Rule of Signs is a well-known principle in algebra that offers a way to estimate the maximum number of real roots for any polynomial. This classical theorem can be stated as follows:

\begin{thm}[Descartes' Rule of Signs]\label{thm_descartes}
    For the generalized Dirichlet polynomial 
    \[
    F(x) = \sum_{j=1}^n a_j e^{p_j x},p_1\geq p_2\geq \cdots \ge p_n,
    \]
    the number of roots of $F(x)=0$ is at most the number of sign changes in the series $\{a_1,a_2,\cdots,a_n\}$.
\end{thm}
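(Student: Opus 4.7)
The plan is to prove this by induction on $n$, combining Rolle's theorem with a first-order linear differential operator that simultaneously strips one exponential term and decreases the sign-change count by exactly one. For a constant $c$, define
\[L_c F(x) \;=\; F'(x) + c\, F(x) \;=\; \sum_{j=1}^n a_j(p_j + c)\, e^{p_j x}.\]
Two facts make $L_c$ useful: (i) the identity $L_c F(x) = e^{-cx}\bigl(e^{cx}F(x)\bigr)'$, which lets Rolle's theorem relate the number of real zeros $Z(F)$ to $Z(L_c F)$; and (ii) choosing $c = -p_m$ for a selected index $m$ makes the $m$-th coefficient vanish, so $L_{-p_m}F$ is a generalized Dirichlet polynomial with only $n-1$ terms and the same remaining exponents $\{p_j\}_{j\ne m}$.

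The technical heart is a combinatorial lemma about $V(\cdot)$, the sign-change count of the coefficient sequence. Writing $s_k = \sign(a_k)$, and assuming without loss of generality that all $a_k \ne 0$ and $p_1 > p_2 > \cdots > p_n$, suppose $j$ is any index where the signs change, i.e.\ $s_j \ne s_{j+1}$, and take $m = j+1$. The new coefficients $a_k(p_k - p_{j+1})$ have sign $s_k$ for $k < j+1$ (since $p_k > p_{j+1}$) and sign $-s_k$ for $k > j+1$ (since $p_k < p_{j+1}$); the $(j+1)$-th coefficient is $0$. A direct case analysis at the ``seam'' --- checking whether $s_{j+2}$ equals $s_j$ or $-s_j$ --- shows that deleting position $j+1$ and flipping every later sign produces a sequence whose number of sign changes is exactly one less than $V(F)$: the sign change originally sitting between $s_j$ and $s_{j+1}$ is consumed by the deletion, and any sign change originally sitting between $s_{j+1}$ and $s_{j+2}$ is absorbed by the global flip of the tail. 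Thus $V(L_{-p_{j+1}} F) = V(F) - 1$ whenever $j$ is a sign-change index.

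Given the lemma, the induction is short. The base case $n = 1$ is $F = a_1 e^{p_1 x}$, which has no real zeros and whose coefficient sequence has no sign changes. For the step, if $V(F) = 0$ then all $a_j$ share a sign, so $F$ is strictly one-signed on $\R$ and $Z(F) = 0$. If $V(F) \ge 1$, pick a sign-change index $j$, set $G(x) = e^{-p_{j+1}x}F(x)$ (so $Z(G) = Z(F)$), and note $L_{-p_{j+1}} F(x) = e^{p_{j+1}x} G'(x)$, so $Z(L_{-p_{j+1}} F) = Z(G')$. Rolle gives $Z(G) \le Z(G') + 1$, and the inductive hypothesis applied to the $(n-1)$-term polynomial $L_{-p_{j+1}} F$ gives $Z(L_{-p_{j+1}} F) \le V(L_{-p_{j+1}} F) = V(F) - 1$. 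Chaining yields $Z(F) \le V(F)$.

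The main obstacle is really the sign-count lemma above: the naive choice $c = -p_n$, which kills the last term, yields $V(L_{-p_n} F) = V(s_1, \ldots, s_{n-1})$, which equals $V(F) - 1$ only when $s_{n-1} \ne s_n$ and is otherwise equal to $V(F)$, costing an extra $+1$ from Rolle and breaking the induction by exactly one. Targeting the deletion at an index where a sign change already lives, as above, is what aligns the deletion-and-sign-flip so that the reduction is \emph{always} exactly one; after that, the rest is a routine Rolle-plus-induction calculation.
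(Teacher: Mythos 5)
The paper does not actually prove Theorem~\ref{thm_descartes}: it is cited there as a classical fact, and the only proof given is of the integral variant (Theorem~\ref{thm_descartes_integral}). Your proof of the discrete form is correct, and it is precisely the discrete analogue of the paper's proof of the integral form. In both arguments, one conjugates by an exponential factor that does not change the zero set (your $e^{cx}F$ versus the paper's $F_0(\alpha)=e^{-\alpha x_0^2}F(\alpha)$), so that differentiating annihilates one term while multiplying the remaining coefficients by a sign-flipping factor; one then applies Rolle via $Z(f)\le Z(f')+1$ (the paper's Lemma~\ref{lemma:zeros_derivative}, which you invoke informally but which is exactly what you need once multiplicities are counted) and inducts. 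Your seam analysis --- deleting position $j+1$ consumes the sign change between $s_j$ and $s_{j+1}$ while the global flip of the tail $(s_{j+2},\dots,s_n)$ absorbs any change that sat between $s_{j+1}$ and $s_{j+2}$, for a net reduction of exactly one --- is the discretization of the paper's observation that multiplying $a(x)$ by $(x^2-x_0^2)$ at a sign-change point $x_0$ removes exactly one sign change of $a$. Your observation that targeting a sign-change index is essential (rather than always killing the last term) is exactly what the paper encodes by picking $x_0$ to be a sign-change point of $a(x)$. The only standard cleanup worth making explicit before the induction, which you flag as WLOG, is that equal exponents should be merged and zero coefficients deleted (this can only decrease the sign-change count) and that $F$ is tacitly assumed not identically zero, since otherwise the statement is vacuous.
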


In this section, we state and prove a variant of
Descartes Rule of Signs in the integral form, which we apply directly to prove Lemma~\ref{lemma_contrastive_mean_qual}. To begin with, we say a function $f$ has a root of order $k$ at point $x$ if
\[
f(x)=f'(x)=\cdots=f^{(k-1)}(x)=0 \text{ and }f^{(k)}(x)\neq 0.
\]
We denote $Z(f)$ as the number of roots of $f$, counted with their orders. Then we can show that the number of roots of $f$ is upper bounded by one plus the number of roots of $f'$ in Lemma~\ref{lemma:zeros_derivative}. We use Rolle's Theorem in the proof of the lemma.

\begin{thm}[Rolle's Theorem]
    Suppose that a function $f$ is differentiable at all points of interval $[a,b]$ and $f(a)=f(b)$. Then there is at least one point $x_0\in (a,b)$ such that $f'(x_0)=0$.
\end{thm}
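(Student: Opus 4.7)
The plan is to reduce Rolle's Theorem to Fermat's theorem on interior extrema, with the Extreme Value Theorem (EVT) doing the heavy lifting in between. First I would dispose of the trivial case: if $f$ is constant on $[a,b]$, then $f'(x_0)=0$ for every interior point $x_0\in(a,b)$ and there is nothing to prove. So assume $f$ is non-constant.

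Since $f$ is assumed differentiable at every point of $[a,b]$, it is in particular continuous on the compact interval $[a,b]$. By EVT, $f$ attains a maximum value $M$ and a minimum value $m$ on $[a,b]$, and because $f$ is non-constant we have $m<M$. The key observation is that the condition $f(a)=f(b)$ forces at least one of these two extreme values to be attained at an interior point: otherwise both $M$ and $m$ would be attained only among $\{a,b\}$, and since $f(a)=f(b)$ this common value would equal both $m$ and $M$, contradicting $m<M$. Pick such an interior point $x_0\in(a,b)$ where $f$ attains its extreme value.

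It remains to show $f'(x_0)=0$, which is Fermat's theorem. Assume $x_0$ is an interior maximum (the minimum case is symmetric, or follows by applying the argument to $-f$). For all sufficiently small $h>0$, both $x_0+h$ and $x_0-h$ lie in $[a,b]$, and by maximality $f(x_0\pm h)\le f(x_0)$. Hence
\[
\frac{f(x_0+h)-f(x_0)}{h}\le 0
\quad\text{and}\quad
\frac{f(x_0)-f(x_0-h)}{h}\ge 0.
\]
Letting $h\to 0^+$ and using that $f$ is differentiable at $x_0$, the one-sided limits both equal $f'(x_0)$, which forces $f'(x_0)\le 0$ and $f'(x_0)\ge 0$ simultaneously, so $f'(x_0)=0$.

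The only step that is genuinely nontrivial is the appeal to EVT, which relies on compactness of $[a,b]$; the rest is bookkeeping about signs of difference quotients. Since this is a classical textbook result being quoted for later use in establishing $Z(f)\le Z(f')+1$, I would keep the proof short and not elaborate beyond the three steps above.
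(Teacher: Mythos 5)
Your proof is correct and is the standard textbook argument (reduce to Fermat's interior-extremum theorem via the Extreme Value Theorem). The paper itself does not prove Rolle's Theorem --- it is stated without proof as a classical fact, cited only to justify Lemma~\ref{lemma:zeros_derivative} ($Z(f)\le Z(f')+1$) --- so there is no alternate argument in the paper to compare against; your proposal simply supplies the missing standard proof, and it is sound.
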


\begin{lemma}
\label{lemma:zeros_derivative}
    $Z(f)\leq Z(f')+1$.
\end{lemma}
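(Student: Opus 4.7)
The plan is to apply Rolle's Theorem between consecutive roots of $f$ while separately accounting for the contribution of multiple roots. Let the distinct real roots of $f$ be $x_1 < x_2 < \cdots < x_k$ with orders $m_1, m_2, \ldots, m_k$, so that by definition $Z(f) = \sum_{i=1}^{k} m_i$. The goal is to exhibit at least $Z(f) - 1$ roots of $f'$ (counted with order).

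First I would use Rolle's Theorem on each interval $(x_i, x_{i+1})$ for $i = 1, \ldots, k-1$. Since $f(x_i) = f(x_{i+1}) = 0$ and $f$ is differentiable, this produces at least one point $y_i \in (x_i, x_{i+1})$ with $f'(y_i) = 0$. These $k-1$ points are pairwise distinct because the intervals are disjoint, contributing at least $k-1$ to $Z(f')$.

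Next I would account for the contribution from multiplicities at the $x_i$ themselves. If $x_i$ is a root of $f$ of order $m_i$, then $f(x_i) = f'(x_i) = \cdots = f^{(m_i - 1)}(x_i) = 0$, so $x_i$ is a root of $f'$ of order at least $m_i - 1$. Crucially, each $x_i$ lies strictly outside every interval $(x_j, x_{j+1})$, so the Rolle points $y_j$ are distinct from the $x_i$'s, and no double counting occurs. Summing both contributions yields
\[
Z(f') \;\geq\; (k-1) + \sum_{i=1}^{k} (m_i - 1) \;=\; \Bigl(\sum_{i=1}^{k} m_i\Bigr) - 1 \;=\; Z(f) - 1,
\]
which rearranges to the claim.

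There is no real obstacle here beyond the bookkeeping to ensure the Rolle roots are disjoint from the roots of $f$; this is automatic because Rolle's Theorem places each $y_i$ in the open interval strictly between two consecutive zeros of $f$. One minor point to note is that the bound holds vacuously when $Z(f) = 0$ (nothing to prove) and trivially when $Z(f) = 1$ (we only need $Z(f') \geq 0$), so the argument above is only genuinely used when $k \geq 2$ or when some $m_i \geq 2$.
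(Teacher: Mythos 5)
Your proof is correct and follows essentially the same approach as the paper's: apply Rolle's Theorem between consecutive distinct roots to get $k-1$ new zeros of $f'$, add the $m_i - 1$ contribution from each multiple root, and sum. Your version is slightly more explicit about why the Rolle points and the multiple roots are disjoint, which the paper leaves implicit, but the argument is the same.
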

\begin{proof}
    Let $f$ as a root of order $k_r$ as $x_r,1\leq r\leq n$. Then $f'$ has a root of order $k-1$ at $x_r$. These add up to 
    \[
    \sum_{r=1}^n(k_r-1) = Z(f)-n
    \]
    By Rolle's Theorem, $f'(x)$ also has at least $n-1$ roots in the gaps between the points $x_r$. Together, these two facts give
    \[
    Z(f') \geq Z(f)-n+n-1=  Z(f)-1. 
    \]
\end{proof}

\begin{thm}[Descartes' Rule of Signs in the Integral Form] \label{thm_descartes_integral}
    Let $F(\alpha)=\int_0^\infty a(x)e^{\alpha x^2}\,dx$. Then the number of roots of $F(\alpha)=0$ is at most the number of sign changes in $a(x)$ for $x \geq 0$.
\end{thm}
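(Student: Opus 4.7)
The plan is to prove the result by induction on the number $n$ of sign changes of $a(x)$ on $(0,\infty)$, using a ``shift and differentiate'' technique tailored to reduce the sign-change count by one at each step, and then invoking Lemma~\ref{lemma:zeros_derivative} to pass from bounds on zeros of the derivative back to bounds on zeros of the original function.

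For the base case $n = 0$, the function $a(x)$ has constant sign on $(0,\infty)$, so the integrand $a(x)e^{\alpha x^2}$ has constant sign for every $\alpha$ at which $F$ is defined; hence $F(\alpha) \ne 0$ (excluding the trivial case $a \equiv 0$), which gives $Z(F) = 0$. For the inductive step, suppose the claim holds for functions with fewer than $n$ sign changes, and let $a$ have sign changes at $0 < x_1 < x_2 < \cdots < x_n$. The key observation is to introduce
\[
H(\alpha) \;=\; e^{-\alpha x_1^2}\, F(\alpha) \;=\; \int_0^\infty a(x)\, e^{\alpha(x^2 - x_1^2)}\, dx.
\]
Since $e^{-\alpha x_1^2} > 0$, the functions $H$ and $F$ share the same roots, so $Z(F) = Z(H)$. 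Differentiating under the integral sign,
\[
H'(\alpha) \;=\; \int_0^\infty (x^2 - x_1^2)\, a(x)\, e^{\alpha(x^2 - x_1^2)}\, dx \;=\; e^{-\alpha x_1^2}\int_0^\infty \tilde a(x)\, e^{\alpha x^2}\, dx,
\]
where $\tilde a(x) := (x^2 - x_1^2) a(x)$. For $x \ge 0$, the factor $(x^2 - x_1^2)$ changes sign exactly once, at $x_1$; combined with the sign change of $a$ at $x_1$, the two cancel, while the remaining sign changes of $a$ at $x_2, \dots, x_n$ are preserved. Thus $\tilde a$ has exactly $n-1$ sign changes on $(0,\infty)$.

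Applying the inductive hypothesis to the integral $\int_0^\infty \tilde a(x) e^{\alpha x^2}\,dx$ yields at most $n-1$ roots, and since multiplication by the nowhere-vanishing $e^{-\alpha x_1^2}$ preserves the zero set, we conclude $Z(H') \le n - 1$. Lemma~\ref{lemma:zeros_derivative} then gives $Z(F) = Z(H) \le Z(H') + 1 \le n$, completing the induction. The main piece of bookkeeping, and the only place requiring care, is verifying that the new integrand $\tilde a$ indeed has exactly $n-1$ sign changes; this rests entirely on the identity that $x^2 - x_1^2$ has a single sign change at $x_1$ on the positive half-line and not two, which is why the argument is clean. Convergence is not an issue since we are counting zeros within the (possibly restricted) domain of $\alpha$ where the original integral $F$ is defined, and multiplication by the quadratic factor $x^2 - x_1^2$ does not enlarge the tail growth of the integrand.
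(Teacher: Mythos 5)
Your proof is correct and follows essentially the same route as the paper's: induction on the number of sign changes, multiplying $F$ by $e^{-\alpha x_1^2}$ to obtain a function with the same roots, differentiating to cancel one sign change of $a$ via the factor $(x^2 - x_1^2)$, and closing the induction with Lemma~\ref{lemma:zeros_derivative}. You are marginally more careful than the paper in two places worth keeping: you explicitly verify that $(x^2 - x_1^2)$ contributes exactly one sign change on $[0,\infty)$ (the reason the $e^{\alpha x^2}$ kernel works at all), and you explicitly factor $H'(\alpha) = e^{-\alpha x_1^2}\int_0^\infty \tilde a(x)e^{\alpha x^2}\,dx$ so that the inductive hypothesis, stated for integrals of the form $\int a(x)e^{\alpha x^2}\,dx$, actually applies to $H'$ after passing to a function with the same zero set.
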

\begin{proof}
    We prove the theorem with induction on the number of sign changes of $a(x)$. For the base case when $a(x)=0$, we assume wlog that $a(x) \geq 0$. Then $F(\alpha)>0$ and thus $F(\alpha)$ has no root. Now we assume that the theorem holds for $a(x)=t$ and we will show the $a(x)=t+1$ case. 

    Let one of the sign changes of $a(x)$ occurs at $x_0$. Define
    \[
    F_0(\alpha):=\int_0^\infty a(x) e^{\alpha (x^2-x_0^2)}\,dx,
    \]
    which has the same roots as $F(\alpha)$. By taking derivative, we get
    \[
    F_0'(\alpha) = \int_0^\infty a(x)(x^2-x_0^2) e^{\alpha (x^2-x_0^2)}\,dx.
    \]
    Let $b(x) = a(x)(x^2-x_0^2)$ be the new sequence. Then $b(x)$ has one less sign changes than $a(x)$. By induction hypothesis, the number of roots of $F_0'(\alpha)$ is upper bounded by the number of sign changes of $b(x)$. By Lemma~\ref{lemma:zeros_derivative}, the number of roots of $F_0$ is upper bounded by the number of sign changes of $a(x)$, thus leading to the induction step.
    
\end{proof}

\subsection{Isotropic Isoperimetric Distribution with $\eps$-Margin}
\label{section:proof_iso}

\isomean*
\begin{proof}
    Since $q_1$ is $\psi-$isoperimetric, $\forall t\in [a,b]$,
    $q_1(t) \geq \psi \int_t^\infty q_1(x)\,dx \geq \psi \eps$. Then we have
    \[
    \int_a^b xq_1(x)\,dx > \psi \eps \int_a^b x\,dx > \frac{c^2\psi\eps^3}{2}
    \]
    By the definition of expectation, we have
    \[
    |\E\limits_{x\sim \hat{q}} x |
    = \frac{|\int_{\R\backslash [a,b]} x q_1(x)\,dx|}{\int_{\R\backslash [a,b]}  q_1(x)\,dx}
    = \frac{|\int_{a}^b x q_1(x)\,dx|}{\int_{\R\backslash [a,b]}  q_1(x)\,dx}
    > \frac{c^2\eps^3}{2}
    \]
    On the other hand, we can calculate the variance as follows.
    \begin{align*}
        \mathop{\var}\limits_{x\sim \hat{q}}x 
        \leq \mathop{\mathbb{E}}\limits_{x\sim \hat{q}} x^2
        = \frac{\int_{\R\backslash [a,b]} x^2 q_1(x)\,dx}
        {\int_{-\infty}^a q_1(x)\,dx + \int_b^\infty q_1(x)\,dx}
        \leq \frac{\int_\R x^2q_1(x)\,dx}{2\eps} = \frac{1}{2\eps}
    \end{align*}
\end{proof}

\begin{lemma}[Second Moment]\label{lemma_iso_tool_cov}
    For $a,b$ satisfying $a\leq 0<b,b>c\eps$ for constant $c>0$, we have
    \[
    \E\limits_{x\sim \hat{q}}x^2 > 1+C\eps^2 \text{  for constant }C>0.
    \]
\end{lemma}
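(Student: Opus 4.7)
The plan is to expand
\[
\E_{x\sim\hat{q}} x^2 - 1 \;=\; \frac{M - V}{1 - M}, \qquad M := \int_a^b q_1(x)\,dx, \quad V := \int_a^b x^2 q_1(x)\,dx,
\]
and, since $1-M\le 1$, reduce the lemma to lower bounding the numerator $N := \int_a^b (1-x^2)q_1(x)\,dx$ by $C\eps^2$ for a constant $C=C(\psi,\tau,c)>0$. Using the symmetry $q_1(-x)=q_1(x)$, I would fold $[a,0]$ onto $[0,|a|]$ to write $N = f(b)+f(|a|)$ with $f(s) := \int_0^s (1-x^2)q_1(x)\,dx$. Because $f(0)=f(\infty)=0$ (the latter being the isotropy identity $\int_0^\infty q_1 = \int_0^\infty x^2 q_1 = \tfrac12$) and $f'(s) = (1-s^2)q_1(s)$, the function $f$ is nonnegative on $[0,\infty)$, increasing on $(0,1)$ and decreasing on $(1,\infty)$. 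Hence $N \ge f(b)$, and it suffices to lower bound $f(b)$ alone.

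I would then split the range of $b$ into three regimes and use a different tool in each. For \textbf{small} $b\in[c\eps,1/2]$, the $\psi$-isoperimetric property (combined with $q_1((-\infty,x])\ge \tfrac12$ for $x\ge 0$, which selects the right tail as the ``$\min$'' side) gives $q_1(x)\ge \psi\,q_1([x,\infty))\ge \psi\,q_1([b,\infty))\ge \psi\eps$ throughout $[0,b]$, while $1-x^2\ge 3/4$ on $[0,1/2]$; together these yield $f(b)\ge \tfrac{3\psi c}{4}\eps^2$. For \textbf{large} $b\ge 1+\delta_0$ (with $\delta_0$ chosen below), I would use the dual representation $f(b) = \int_b^\infty (x^2-1)q_1(x)\,dx$, immediate from $\int_0^\infty(1-x^2)q_1=0$, together with $q_1([b,\infty))\ge\eps$ to obtain $f(b)\ge (b^2-1)\eps\ge 2\delta_0\eps$.

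The hard case, and the main obstacle, is the \textbf{transition regime} $b\in[1/2,1+\delta_0]$, where the small-$b$ argument breaks because $1-x^2$ collapses and the large-$b$ tail bound breaks because $b^2-1$ collapses. The plan here is to prove a uniform constant lower bound $f(1)\ge c_1(\psi)>0$ and then invoke continuity. Lower bounding $f(1)$ is the most delicate step: iterating the isoperimetric inequality on $[0,1/2]$---combining $\int_0^{1/2}q_1 + q_1([1/2,\infty)) = \tfrac12$ with $q_1(x)\ge \psi\,q_1([1/2,\infty))$ for all $x\in[0,1/2]$---yields a self-bounding inequality that forces $\int_0^{1/2}q_1\ge \psi/(4+2\psi)\ge \psi/6$, whence $f(1)\ge \tfrac34\cdot\tfrac{\psi}{6} = \psi/8$. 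The density bound $q_1\le\tau$ then gives $|f(b)-f(1)| = O(\tau(b-1)^2)$, so taking $\delta_0 \asymp \sqrt{\psi/\tau}$ keeps $f(b)\ge f(1)/2\ge \psi/16$ throughout the transition window. Combining all three regimes yields $f(b)\ge C\eps^2$ for a constant $C$ depending only on $\psi,\tau,c$, which completes the proof.
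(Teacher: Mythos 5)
Your proposal is correct and uses the same auxiliary function as the paper: your $f(s)=\int_0^s(1-x^2)q_1\,dx$ is exactly the paper's $g(s)=\int_s^\infty(t^2-1)q_1\,dt$, via the isotropy identity $\int_0^\infty(1-x^2)q_1=0$. The reduction to lower-bounding $f(b)$, the use of unimodality of $f$, and the isoperimetric bound $q_1(x)\ge\psi\varepsilon$ on the small-$b$ regime all match the paper's reasoning. Where you genuinely diverge is the handling of the transition regime: the paper introduces the $\varepsilon$-tail point $M$ (with $\int_M^\infty q_1=\varepsilon$), splits into $M\le1$, $M\ge1+\varepsilon$, and $1<M<1+\varepsilon$, and in the last sub-case introduces a second point $M'$ to rescue the degeneracy of $(M^2-1)$. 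You instead fix the window $b\in[1/2,1+\delta_0]$, establish the $\varepsilon$-independent constant bound $f(1/2)\ge\frac{3}{4}\cdot\frac{\psi}{4+2\psi}$ by the self-bounding isoperimetric inequality $m\ge\psi(1/2-m)/2$ for $m=\int_0^{1/2}q_1$, and then propagate this across $[1,1+\delta_0]$ via the Lipschitz estimate $|f(b)-f(1)|\lesssim\tau(b-1)^2$. This is arguably cleaner: the case boundaries are universal constants rather than $\varepsilon$-dependent quantities, the argument uses isoperimetry pointwise rather than requiring $q_1$ to be monotone on $[0,c\varepsilon]$, and it sidesteps the implicit density assumption the paper leans on when asserting $M'-M>\varepsilon/2$.

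One small correction: the simplification $\frac{\psi}{4+2\psi}\ge\frac{\psi}{6}$ requires $\psi\le1$, which is not guaranteed for a general $\psi$-isoperimetric distribution (the isotropic Laplace is $\sqrt{2}$-isoperimetric, for instance). Keep the constant $\frac{\psi}{4+2\psi}$, which is all you need; the resulting $C$ then depends only on $\psi$, $\tau$, and $c$, as claimed.
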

\begin{proof}
    By definition of $\hat{q}$, we know its density on the support $x\in\R\backslash [a,b]$ is
    \[
    \hat{q}(x) = \frac{q_1(x)}{\int_{-\infty}^a q_1(x)\,dx + \int_b^\infty q_1(x)\,dx}
    \]
    Then we calculate its second moment as follows.
    \begin{align*}
        \E\limits_{x\sim \hat{q}}x^2 = \frac{\int_{-a}^\infty x^2q_1(x)\,dx + \int_{b}^\infty x^2q_1(x)\,dx}
        {\int_{-a}^{\infty} q_1(x)\,dx + \int_b^\infty q_1(x)\,dx}
    \end{align*}
    Define $g(x):=\int_x^\infty (t^2-1)q_1(t)\,dt,x\geq 0$. Its derivative is 
    $
    g'(x) = (1-x^2)q_1(x)$. So we know $g(x)$ is monotonically increasing when $x\in[0,1]$, and decreasing when $x\geq 1$. Since $P_1$ is symmetric and isotropic, we know $\int_0^\infty q_1(x)\,dx = \int_0^\infty x^2q_1(x)\,dx = 1/2$. So we have $g(0) =0$. This derives that $g(x)\geq 0,\forall x\geq 0$. In other words, $\int_{-a}^\infty x^2 q_1(x)\,dx \geq \int_{-a}^\infty q_1(x)\,dx$.
    
    For any $x\in[c\eps,M]$, we have $g(x) \geq \min (g(c\eps),g(M))$. Here we let $M>0$ such that $\int_M^\infty q_1(x)\,dx = \eps$. 
    Then we can lower bound $g(c\eps)$ as follows.
    \[
    g(c\eps)=g(0)+\int_0^{c\eps} g'(x)\,dx = \int_0^{c\eps} (1-x^2)q_1(x)\,dx > c\eps(1-c^2\eps^2)q_1(c\eps) > \psi c\eps^2 (1-c^2\eps^2)
    \]
    
    If $M\leq 1$, we know $|b|<M \leq 1$. Then we have $g(b) \geq g(c\eps).$ 
    
    If $M\geq1+\eps$, we can lower bound $g(M)$ as
    \begin{align*}
        g(M) > (M^2-1)\int_M^\infty q_1(x)\,dx > ((1+\eps)^2-1)\eps = \eps^3 + 2\eps^2
    \end{align*}
    Similarly we will get $g(b) > \min (\psi c\eps^2 (1-c^2\eps^2), \eps^3+2\eps^2)$.
    Finally if $1<M<1+\eps$, there exists $M'>0$ such that $\int_M^{M'}q_1(x)\,dx = \eps/2$. Here $M'-M >\eps/2$. Then we have
    \[
    g(M') > (M'^2-1)\int_{M'}^\infty q_1(x)\,dx > ((1+\eps /2)^2-1)\eps/2 = \eps^3/8 + \eps^2/2
    \]
    In this case, we have $g(b) > \min (g(\eps),g(M')) > \min(\psi c\eps^2 (1-c^2\eps^2),\eps^3/8 + \eps^2/2)$.
    Therefore, we can lower bound the second moment of $\hat{q}$ as follows.
    \begin{align*}
        \E\limits_{x\sim \hat{q}}x^2
        >&  \frac{\int_{-a}^\infty q_1(x)\,dx + \int_b^{\infty}q_1(x)\,dx + g(b)}{\int_{-a}^\infty q_1(x)\,dx + \int_b^{\infty}q_1(x)\,dx}\\
        >& 1 + \frac{\min (\psi c\eps^2 (1-c^2\eps^2), \eps^3/8 + \eps^2/2)}{\int_{-a}^\infty q_1(x)\,dx + \int_b^{\infty}q_1(x)\,dx}\\
    >& 1 +  \min (\psi c\eps^2 (1-c^2\eps^2), \eps^3/8 + \eps^2/2)\\
    > & 1 + C\eps^3 \text{   where }C=\min(\psi c/2,1/8)
    \end{align*}

\end{proof}

\isospectralgap*
\begin{proof}
    We assume wlog that $e_1=u$. That is the marginal distribution of $P$ in $e_1$ is $\hat{q}$ while for $2\leq i\leq d$, the marginal distribution in $e_i$ is $q_i$. Since $q_i$ is isotropic, for any $2\leq i\leq d$, 
    $\E\limits_{x\sim P}x_i^2 = 1$. By Lemma~\ref{lemma_iso_tool_cov}, we have $\E\limits_{x\sim P}x_1^2 >1+C'\eps^3$ for constant $C'>0$. Let $g(v):= \E\limits_{x\sim P} \frac{v^\top xx^\top v}{v^\top v}$. Then we have $g(e_1)>1+C'\eps^3$, while $g(e_i)=1,\forall 2\leq i\leq d$. Then for any unit vector $v = \sum_{i=1}^d \beta_i e_i$ 
    satisfying $\sum_{i=1}^d \beta_i^2=1$, we have
    \begin{align*}
        g(v) = \sum_{i=1}^d \beta_i^2 g(e_i) \leq g(e_1)
    \end{align*}
    Then we know the top eigenvalue of $\Sigma$ is $\lambda_1 > 1+C\eps^3$.
    Furthermore, the top eigenvector corresponds to $e_1$.
    Similarly, the second eigenvalue of $\Sigma$ is
    $\lambda_2 = \max_{v:v\bot e_1}g(v) = g(e_i)=1,2\leq i\leq d$. This implies that $\lambda_1 - \lambda_2 > C\eps^3\lambda_1$ for constant $C>0$.

\end{proof}

\subsection{Affine Product Distribution with $\eps$-Margin}

In this section, we prove the lemmas in the general setting. We prove the two qualitative lemmas (Lemma~\ref{lemma_contrastive_mean_qual} and Lemma~\ref{lemma_contrastive_covariance_qual}) in Section \ref{section:proof_qualitative}, and then prove the quantitative lemmas (Lemma~\ref{lemma_reweighted_mean} and Lemma~\ref{lemma_reweighted_cov}) in the remaining section. For the quantitative part, we first consider the asymmetric case where $|a+b|\geq \eps^{5}$. In this case, contrastive mean leads to recovering $u$, as elaborated in Section \ref{sec:proof_mean_asym}. Secondly, we consider the symmetric case characterized by $a+b=0$, addressed in Section \ref{sec:proof_convariance_sym}. We show that we can recover $u$ by calculating the top eigenvector of the re-weighted covariance matrix. Finally we extend this result to near-symmetric case where $|a+b|<\eps^{5}$ in Section \ref{sec:proof_covariance_almost_sym}.

Recall that we are given data $x^{(1)},\cdots,x^{(N)}$ drawn from the affine product distribution with $\eps$-margin $\hat{P}$. Algorithm~\ref{algo_general} first makes the data isotropic. Denote $y^{(1)},\cdots,y^{(N)}$ as the corresponding isotropicized data. Then each $y^{(j)}$ is an independent and identically distributed variable drawn from $P=\hat{q}\otimes q\otimes\cdots q$. Since we compute the re-weighted moments on $y^{(j)}$ in the algorithm, we analyze the moments of $P$ directly.

Recall in Definition~\ref{def_1} that $q$ is the symmetric one-dimensional isotropic logconcave density function, and $\tilde{q}$ is the density obtained by restricting $q$ to $\R\backslash [a,b]$ for some unknown $a<b$. Denote $\mu_1,\sigma_1^2$ as the mean and variance of $\tilde{q}$. $\hat{q}$ is the density obtained after making $\tilde{q}$ isotropic, with support $\R\backslash[a',b']$, where  $a' = \frac{a-\mu_1}{\sigma_1}, b' = \frac{b-\mu_1}{\sigma
_1}$. The density $\hat{q}$ on its support is
\[
\hat{q}(x) = \frac{\sigma_1 q(x\sigma_1+\mu_1)}{\int_{-\infty}^a q(x)\,dx + \int_b^\infty q(x)\,dx} 
\]
We denote the standard basis of $\R^d$ by $\{e_1,\cdots,e_d\}$, and assume wlog that $e_1=u$ is the (unknown) normal vector to the band.
We write $x_i:=\langle x,e_i\rangle$ as $x$'s $i$-th coordinate.
We assume in our proof that $|b|>|a|$. If this condition is not met, we can redefine our interval by setting $a'=-b$ and $b'=-a$. The proof can then be applied considering the distribution is restricted to $\{x\in \R^d: u^\top x \leq a' \text{ or }u^\top x \geq b'\} $. For a vector $x\in\R^d$, we use $\|x\|$ to denote its $l_2$ norm. For a matrix $A\in\R^{m\times n}$, we denote its operator norm as $\|A\|_{\text{op}}$.


\subsubsection{Proofs of Qualitative Bounds}
\label{section:proof_qualitative}
We present proofs of two qualitative lemmas: the contrastive mean (Lemma~\ref{lemma_contrastive_mean_qual}) and the contrastive covariance (Lemma~\ref{lemma_contrastive_covariance_qual}). Their quantitative counterparts can be found in Section~\ref{sec:proof_mean_asym} and Section~\ref{sec:proof_covariance_almost_sym}. 
To establish the contrastive mean, we invoke Descartes' Rule of Signs. For the proof concerning contrastive covariance, we introduce a novel monotonicity property on the moment ratio,
as described in Lemma~\ref{lemma_variance_ratio}. We include the proof within this section.


    \begin{figure}[htp]
        \centering
        \includegraphics[height=2in]{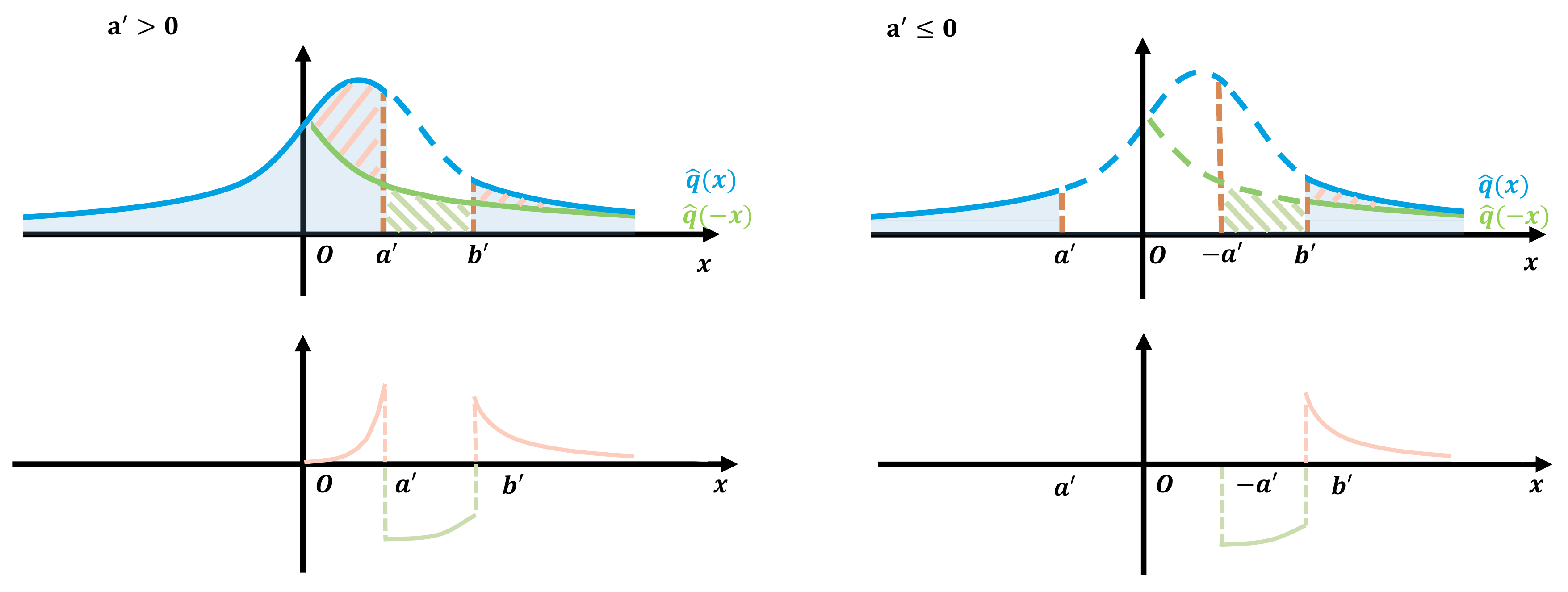}
        \caption{Coefficients of $F(\alpha)$ ahead of $e^{\alpha x^2} $ term in Lemma~\ref{lemma_contrastive_mean_qual}'s proof. By combining $e^{\alpha x^2}$ terms, we flip $\hat{q}(x)$ horizontally.
        For $a'>0$, the coefficient is negative when $x\in(a',b')$ and non-negative outside the interval. For $a'\leq 0$, it is negative when $x\in(-a',b') $ and positive when $x>b'$.}
        \label{fig_lemma_first_moment.}
    \end{figure}

\paragraph{Contrastive Mean.}
We can write the contrastive mean as a linear combination of exponential functions of $\alpha$. By Descartes' rule of signs, the number of zeros of this function is at most two. Since $\alpha=0$ is one root and corresponds to mean zero, there is at most one nonzero root. And thus we have that for any two distinct nonzero $\alpha$'s, at least one of them achieves nonzero contrastive mean.

\contrastivemeanqual*
\begin{proof}
     $|b|>|a|$ implies that $\mu_1<0$.
    For any $x\geq 0$, we have 
    \[
    \hat{q}(x) = \frac{\sigma_1 q(x\sigma_1+\mu_1)}{1-\int_a^b q(x)\,dx}
    \geq
    \frac{\sigma_1 q(-x\sigma_1+\mu_1)}{1-\int_a^b q(x)\,dx} = 
    \hat{q}(-x).
    \]    
    Since $P$ is a product distribution, we have
    \[
    \E\limits_{x\sim P}e^{\alpha \|x\|^2}x_1 = \E\limits_{x_1\sim \hat{q}}e^{\alpha x_1^2}x_1 \cdot \prod_{i=2}^d \E\limits_{x_i\sim q} e^{\alpha x_i^2}
    \]
    We denote
    \begin{align}
        F(\alpha)=\E\limits_{x\sim \hat{q}}e^{\alpha x_1^2 }x_1
    \end{align}
    By calculation, we have
    \begin{align*}
        F(\alpha) = \int_{\R\backslash [a',b']}e^{\alpha x^2}x\hat{q}(x)\,dx
    \end{align*}
    Then we rearrange $F(\alpha)$ by combining $e^{\alpha x^2}$ as in Figure~\ref{fig_lemma_first_moment.}.
    
    If $a'\leq 0$, we rewrite $F(\alpha)$ as
    \[
    F(\alpha) = -\int_{-a'}^{b'}x\hat{q}(-x) e^{\alpha x^2}\,dx +
    \int_{b'}^\infty x(\hat{q}(x) - \hat{q}(-x))e^{\alpha x^2}\,dx
    \]
    We treat $F(\alpha)$ as the integral of $a(x)e^{\alpha x^2}$ for $x \geq -a'$.
    Since $\hat{q}(x)-\hat{q}(-x) > 0$ for $x>b'$, we have $a(x)>0$ for $x\in(-a',b')$ and $a(x)<0$ for $x>b'$. In other words, for increasing $x$, the sign of $a(x)$ only changes once. By Theorem~\ref{thm_descartes_integral}, $F(\alpha)=0$ has at most one root.
    
    If $a' > 0$, 
    we arrange $F(\alpha)$ in the same way and get
    \[
    F(\alpha) = \int_0^{a'}x(\hat{q}(x) - \hat{q}(-x))e^{\alpha x^2}\,dx
    -\int_{a'}^{b'}x\hat{q}(-x) e^{\alpha x^2}\,dx +
    \int_{b'}^\infty x(\hat{q}(x) - \hat{q}(-x))e^{\alpha x^2}\,dx
    \]
    Similarly, we treat $F(\alpha)$ as the integral of $a(x)e^{\alpha x^2}$ for $x \geq 0$. For increasing $x$, the sign of $a(x)$ changes twice. By Descartes' rule of signs, $F(\alpha)=0$ has at most two roots.
    In addition, we know $F(\alpha) = \E\limits_{P} x_1=0$ by definition of $P$. So $\alpha=0$ is one root of $F(\alpha)=0$. So there is at most one nonzero root of $F(\alpha)=0$. In other words, for any two distinct nonzero $\alpha_1,\alpha_2$, at least one of $F(\alpha_1),F(\alpha_2)$ is nonzero. This implies that
    \[
    \max\left(\left\lvert\E\limits_{x\sim P} e^{\alpha_1 \|x\|^2}x\right\lvert,  \left\lvert\E\limits_{x\sim P} e^{\alpha_2 \|x\|^2}x\right\rvert
    \right)> 0.
    \]
\end{proof}

\paragraph{Moment Ratio.} 
To prove Lemma~\ref{lemma_contrastive_covariance_qual}, we develop a new monotonicity property of the moment ratio of logconcave distributions. Moment ratio is specifically defined as the ratio of the fourth moment to the square of the second moment of truncated versions of the distribution. This measurement essentially reflects the uncentered kurtosis of the distribution.
The formal definition is detailed in Definition \ref{def:variance_ratio}. 

\begin{defn}[One-side $t$-restriction distribution]
    Let $q$ be a distribution in one dimension with nonnegative support.
     For any $t \geq 0$, define $q_t$ as the one-side $t$-restriction distribution on $q$ obtained by restricting $q$ to $[t,\infty)$.
\end{defn}

\begin{defn}[Moment Ratio] \label{def:variance_ratio}
    Let $q$ be a distribution in one dimension with nonnegative support. For any $t\geq 0$,
    define $q$'s moment ratio as a function of $t$, given by 
    \[
    \mr_q(t):= \frac{\var_{q_t}(X^2)}{(\E_{q_t}X^2)^2},
    \quad \text{ where }q_t \text{ is the one-side }t \text{-restriction distribution on }q.
    \]
\end{defn}


 We will prove the monotonicity of the moment ratio (Lemma~\ref{lemma_variance_ratio}) by reducing general logconcave distributions to exponential distributions. The monotonicity of the moment ratio for exponential distribution is detailed in Lemma~\ref{lemma_exp_inf_h_pos}. 


\begin{lemma}[Monotonicity of Moment Ratio of Exponential Distribution]\label{lemma_exp_inf_h_pos}
    Define $h(x)=\beta e^{-\gamma x},x\geq 0,\beta,\gamma>0$. Denote $N_k(t) = \int_t^\infty x^k h(x)\,dx$. Then for any $t\geq 0$, we have 
    \[
    t^4N_0(t)N_2(t)+N_2(t)N_4(t)-2t^2N_0(t)N_4(t)> 0.
    \]
    \end{lemma}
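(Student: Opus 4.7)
The plan is to reduce the inequality to a polynomial inequality in the dimensionless variable $s := \gamma t \geq 0$ by computing $N_k(t)$ in closed form. Because $h(x) = \beta e^{-\gamma x}$ is an (unnormalized) exponential density, a short integration by parts gives the recursion $N_k(t) = (\beta/\gamma) t^k e^{-\gamma t} + (k/\gamma) N_{k-1}(t)$, which, applied twice, yields
\begin{align*}
N_0(t) &= (\beta/\gamma)\, e^{-s}, \\
N_2(t) &= (\beta/\gamma^3)\, e^{-s}\,(s^2 + 2s + 2), \\
N_4(t) &= (\beta/\gamma^5)\, e^{-s}\,(s^4 + 4s^3 + 12 s^2 + 24 s + 24).
\end{align*}

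Next, I would substitute these expressions into the left-hand side $t^4 N_0(t) N_2(t) + N_2(t) N_4(t) - 2 t^2 N_0(t) N_4(t)$, using $t^{2k} = s^{2k}/\gamma^{2k}$. Every term then carries the common strictly positive prefactor $(\beta^2/\gamma^8)\, e^{-2s}$, so the claim reduces to showing strict positivity of the polynomial
\[
P(s) := s^4(s^2+2s+2) + (s^2+2s+2)(s^4+4s^3+12s^2+24s+24) - 2 s^2 (s^4 + 4 s^3 + 12 s^2 + 24 s + 24)
\]
for all $s \geq 0$. This is pure bookkeeping.

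The key observation (and the only place where something could go wrong) is that the three highest-degree monomials cancel exactly: the $s^6$ coefficients are $1+1-2=0$, the $s^5$ coefficients are $2+6-8=0$, and the $s^4$ coefficients are $2+22-24=0$. What remains after these cancellations is a polynomial with only nonnegative coefficients, explicitly
\[
P(s) = 8s^3 + 48 s^2 + 96 s + 48 = 8(s^3 + 6 s^2 + 12 s + 6),
\]
which is strictly positive for every $s \geq 0$ (and in particular $P(0)=48>0$, matching the strict inequality at $t=0$). Multiplying back by $(\beta^2/\gamma^8) e^{-2s}$ yields the desired strict inequality. The main obstacle is purely computational: the leading-order terms in $s$ (which would otherwise determine the sign asymptotically) must cancel, and this cancellation is exactly what the incomplete-gamma structure of $N_k$ delivers; once the cancellation is verified, positivity is immediate.
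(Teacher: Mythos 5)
Your proof is correct and is essentially the paper's own argument: compute $N_0, N_2, N_4$ explicitly via the integration-by-parts recursion, substitute, and verify that after cancellation the remaining polynomial has only nonnegative coefficients. The only cosmetic difference is your change of variable to the dimensionless $s = \gamma t$, which makes the bookkeeping a bit cleaner; the paper instead factors $N_2 - 2t^2 N_0$ before expanding. Both lead to the same final expression $\frac{8\beta^2}{\gamma^8}e^{-2\gamma t}\bigl(s^3 + 6s^2 + 12s + 6\bigr) > 0$.
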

    \begin{proof}
    By calculation, we have
    \begin{align*}
        N_0(t)=& \int_t^\infty \beta e^{-\gamma x}\,dx
        =
        \frac{\beta}{\gamma}e^{-\gamma t}
    \end{align*}
    \begin{align*}
        N_1(t) = \int_t^\infty \beta xe^{-\gamma x}\,dx
        =\frac{\beta}{\gamma}
        te^{-\gamma t} + \frac{1}{\gamma}N_0(t)
        = \left(t + \frac{1}{\gamma}\right)\frac{\beta}{\gamma}e^{-\gamma t}
    \end{align*}
    \begin{align*}
        N_2(t) =&  \int_t^\infty \beta x^2e^{-\gamma x}\,dx
        =\frac{\beta}{\gamma}t^2e^{-\gamma t} + \frac{2}{\gamma}N_1(t)
        =\left(
       t^2 + \frac{2}{\gamma}t + \frac{2}{\gamma^2}
        \right)\frac{\beta}{\gamma}e^{-\gamma t}
    \end{align*}
    \begin{align*}
        N_3(t) = \int_t^\infty \beta x^3e^{-\gamma x}\,dx
        =\frac{\beta}{\gamma}t^3e^{-\gamma t}
        + \frac{3}{\gamma}N_2(t)
        = \left(
        t^3 + \frac{3}{\gamma}t^2
        + \frac{6}{\gamma^2}t + \frac{6}{\gamma^3}
        \right)\frac{\beta}{\gamma}e^{-\gamma t}
    \end{align*}
    \begin{align*}
        N_4(t) = \int_t^\infty \beta x^4 e^{-\gamma x}\,dx
        = \frac{\beta}{\gamma }t^4e^{-\gamma t} + \frac{4}{\gamma}N_3(t)
        = \left(
        t^4 + \frac{4}{\gamma}t^3 + \frac{12}{\gamma^2}t^2 + \frac{24}{\gamma^3}t + \frac{24}{\gamma^4}
        \right)\frac{\beta}{\gamma}e^{-\gamma t}
    \end{align*}
    Then we can plug them and get
    \begin{align*}
        &t^4N_0(t)N_2(t)+N_2(t)N_4(t)-2t^2N_0(t)N_4(t)\\
        = & \frac{\beta^2}{\gamma^2}e^{-2\gamma t}
        \left(
        t^4\left(
       t^2 + \frac{2}{\gamma}t + \frac{2}{\gamma^2}
        \right)
        + \left(
       -t^2 + \frac{2}{\gamma}t + \frac{2}{\gamma^2}
        \right)\cdot
        \left(
        t^4 + \frac{4}{\gamma}t^3 + \frac{12}{\gamma^2}t^2 + \frac{24}{\gamma^3}t + \frac{24}{\gamma^4}
        \right)
        \right)\\
        =&\frac{8\beta^2}{\gamma^2}e^{-2\gamma t}\left(
        \frac{t^3}{\gamma^3}+\frac{6t^2}{\gamma^4} + \frac{12t}{\gamma^5} + \frac{6}{\gamma^6}
        \right)>0 
    \end{align*}
    \end{proof}

\noindent Next, we will prove the monotonicity of moment ratio for logconcave distributions.
\varianceratio*
\begin{proof}
Denote $M_k(t)= \int_t^\infty x^k q(x)\,dx$. By Definition~\ref{def:variance_ratio},
    \[
    \mr_q(t) =
    \frac{\var_{q_t}(X^2)}{(\E_{q_t}X^2)^2}
    =
    \frac{\frac{\int_{t}^\infty x^4q(x)\,dx }{\int_{t}^\infty q(x)\,dx}}
    {\left(\frac{\int_{t}^\infty x^2q(x)\,dx}
    {\int_{t}^\infty q(x)\,dx}
    \right)^2}-1
    = \frac{M_0(t)M_4(t)}{M_2(t)^2} - 1 
    \]
    Next we will show that $\mr'(t) < 0,\forall t\geq 0$. By taking the derivative,
    \[
    \mr'(t) = \frac{-q(t)}{M_2(t)}(t^4M_0(t)M_2(t)+M_4(t)M_2(t)-2t^2M_0(t)M_4(t))
    \]
    Define $H(t) = t^4M_0(t)M_2(t)+M_4(t)M_2(t)-2t^2M_0(t)M_4(t)$. We will show that $H(t)> 0,\forall x\geq 0$. Clearly $H(0)=M_4(0)M_2(0)>0$, So we only consider $t>0$ in the following proof.
    
    Let $h(x)=\beta e^{-\gamma x}$ be an exponential function ($\beta,\gamma>0$) such that 
    \[
    M_0(t)=N_0(t),M_2(t)=N_2(t),\text{ where }N_k(t) = \int_t^\infty x^k h(x)\,dx,k\in\mathbb{N}.
    \]
    Then we have
    \[
    \int_t^\infty (h(x)-q(x))\,dx=0,\int_t^\infty x^2(h(x)-q(x))=0
    \]
    By the logconcavity of $q$, the graph of $h$ intersects with the graph of $q$ at exactly two points $u'<v$, where $v > 0$. 
    Also we have $h(x)\leq q(x)$ at the interval $[u',v]$ and $h(x)>q(x)$ outside the interval. Let $u=\max\{0,u'\}$. So for $x\geq 0$, $(x-u)(x-v)$ has the same sign as $h(x)-q(x)$. Since $t\geq 0$, we have
    \[
    \int_t^\infty(x^2-u^2)(x^2-v^2)(h(x)-q(x))\geq 0
    \]
    Expanding and we get
    \[
    \int_t^\infty x^4 (h(x)-q(x)) \geq (u^2+v^2)\int_t^\infty x^2(h(x)-q(x))\,dx - u^2v^2\int_t^\infty(h(x)-q(x))\,dx=0
    \]
    This shows that $N_4(t)\geq  M_4(t)$. To show that $H(t)>0$, we consider two cases. 
    
    Firstly if $M_2(t)-2t^2\geq 0$, we have
    \[
    H(t)=t^4M_0(t)M_2(t)+M_4(t)(M_2(t)-2t^2)> 0.
    \]
    
    Secondly if $M_2(t)-2t^2< 0$, by calculation of the exponential function's moments (Lemma~\ref{lemma_exp_inf_h_pos}), we have
    \[
    t^4N_0(t)N_2(t)> -N_4(t)(N_2(t)-2t^2)
    \]
    Then we have
    \begin{align*}
        H(t)=&t^4M_0(t)M_2(t)+M_4(t)(M_2(t)-2t^2)\\
        =& t^4 N_0(t)N_2(t) + M_4(t)(M_2(t)-2t^2)\\
        \geq &-N_4(t) (N_2(t)-2t^2) + M_4(t)(M_2(t)-2t^2)\\
        = &(M_2(t)-2t^2)(M_4(t)-N_4(t))\\
        \geq& 0
    \end{align*}
    The equality holds if and only if $M_4(t)=N_4(t)$. This implies that 
    \[
    H(t) = t^4N_0(t)N_2(t)+N_4(t)(N_2(t)-2t^2) > 0.
    \]
    Combining both cases, $\mr'(t)< 0,\forall t\geq 0$, which implies that the moment ratio of $q$ is strictly decreasing with respect to $t$.
    \end{proof}

\paragraph{Contrastive Covariance.}
View the spectral gap of the re-weighted covariance, denoted as $\lambda_1(\tilde{\Sigma})-\lambda_2(\tilde{\Sigma})$, as $S(\alpha)$. By calculation, $S(0)=0$ and $S'(0) $ is proportional to $\mr(b)-\mr(0)$, which is negative by the monotonicity property of moment ratio. Then we can prove Lemma~\ref{lemma_contrastive_covariance_qual} using Taylor expansion.

\contrastivecovariancequal*
\begin{proof}
    Denote $M_k(t)= \int_t^\infty x^k q(x)\,dx$. The variance $\sigma_1^2$ of $q$ restricted to $\R\backslash [-b,b]$ is
    \begin{align*}
    \sigma_1^2
    = \frac{\int_b^\infty x^2p(x)\,dx}{\int_b^\infty p(x)\,dx}
    =  \frac{M_2(b)}{M_0(b)}
    \end{align*}
    Since $\hat{q}$ is isotropic,
    the density on the support $\R\backslash [-b/\sigma_1,b/\sigma_1]$ is
    \[
    \mathbb{P}_{\hat{q}}(x) 
    =\frac{\sigma_1 q(x\sigma_1)}{2\int_b^\infty q(x)\,dx}
    \]
    Let
    \[
    S(\alpha):= \mathop{\E}\limits_{x\sim \hat{q}}e^{\alpha x^2}x^2 \mathop{\E}\limits_{x\sim q}e^{\alpha x^2} - \mathop{\E}\limits_{x\sim q}e^{\alpha x^2}x^2 \mathop{\E}\limits_{x\sim \hat{q}}e^{\alpha x^2}
    \]
    Since $q$ and $\hat{q}$ are both isotropic, $S(0)= \mathop{\E}\limits_{x\sim \hat{q}}x^2 -\mathop{\E}\limits_{x\sim q}x^2   =0$.
    Then,
\begin{align*}
    S'(0)
    =& 
    \frac{M_4(b)M_0(b)}{M_2^2(b)}
    - \frac{M_4(0) M_0(0)}{M_2^2(0)}
\end{align*}
The last step is because the $q$ is isotropic. 
By Lemma~\ref{lemma_variance_ratio}, we know $S'(0) = \mr'(0) < 0$.

On the other hand, $\forall \alpha \leq 0$, $S''(\alpha)$ can be bounded.
\begin{align*}
    S''(\alpha)
    \leq &
    \frac{M_6(b)}{M_0(b)M_2^3(b)} + \frac{M_4(b)}{M_0(b)M_2^2(b)}
    <\text{poly}(1/\eps)
\end{align*}
By Taylor expansion, we know there exists $\alpha<0$ such that
\begin{align*}
    S(\alpha) =& S(0) + \alpha S'(0) + \frac{{\alpha}^2}{2}S''(\alpha') >0 \text{, where }\alpha'\in[\alpha,0]
\end{align*}
Then we have for $2\leq j\leq d$,
\begin{align*}
    \E_{x\sim P} e^{\alpha \|x\|^2}x_1^2 - \E_{x\sim P} e^{\alpha\|x\|^2}x_j^2
    = S(\alpha)
    \left(
    \mathop{\E}\limits_{x\sim q} e^{\alpha x^2}
    \right)^{d-2}>0
\end{align*}
For any $v\in\R^d$, define $\phi(v)$ as
\[
\phi(v):=\mathop{\E}\limits_{x\sim P} \frac{e^{\alpha\|x\|^2}v^\top xx^\top v}{v^\top v}
\]
Substituting $v$ with $e_1$ and $e_j,2\leq j\leq d$, we have
\[
\phi(e_1) = \E_{x\sim P} e^{\alpha \|x\|^2}x_1^2 , \phi(e_j) = \E_{x\sim P} e^{\alpha \|x\|^2}x_j^2 
\]
This implies that for $2\leq j\leq d$,
\[
\phi(e_1)-\phi(e_j)= \E_{x\sim P} e^{\alpha \|x\|^2}x_1^2 - \E_{x\sim P} e^{\alpha \|x\|^2}x_j^2>0 
\]
For any vector $v=\sum_{i=1}^d \gamma_ie_i$, we have
\begin{align*}
    \phi(v) = \frac{1}{\sum_{i=1}^d \gamma_i^2}\E e^{\alpha\|x\|^2}(\sum_{i=1}^d \gamma_i x_i)^2
    =&\frac{\sum_{i=1}^d \gamma_i^2 \phi(e_i)}{\sum_{i=1}^d \gamma_i^2}
    \leq  \phi(e_1)
\end{align*}
This shows that the top eigenvalue of $\tilde{\Sigma}$ is $\lambda_1(\tilde{\Sigma})=\max_v \phi(v)=\phi(e_1)$. Similarly,  $\lambda_2(\tilde{\Sigma}) =\max_{v:v\bot e_1}\phi(v) = \phi(e_j),2\leq j\leq d$. Therefore, $\lambda_1(\tilde{\Sigma}) > \lambda_2(\tilde{\Sigma})$ and the top eigenvector is $e_1$, which is essentially $u$.
\end{proof}

\subsubsection{Quantitative Bounds for Contrastive Mean}
\label{sec:proof_mean_asym}

We will prove Lemma~\ref{lemma_reweighted_mean} in this section. Here we consider the case when $|a+b|\geq  \eps^{5}$.
We compute the contrastive mean of $P$ given $\alpha<0$ as $\E\limits_{x\sim P}e^{\alpha x^2}x$ using two different $\alpha$'s. 

\begin{defn}
    We define $F(\alpha)$ as re-weighted mean for the one-dimensional distribution $\hat{q}$.
    \begin{align}
        F(\alpha)=\E\limits_{x\sim \hat{q}}e^{\alpha x^2 }x = \int_{\R\backslash [a',b']} e^{\alpha x^2}x\hat{q}(x)\,dx
    \end{align}
Since $P$ is isotropic, $F(0) = \E\limits_{x\sim \hat{q}} x=0$.
\end{defn}

To prove Lemma~\ref{lemma_reweighted_mean}, we need to show that for given $\alpha_1,\alpha_2$, the maximum of $|F(\alpha_1)|,|F(\alpha_2)|$ exceeds a certain positive threshold. We follow the same idea of bounding the number of roots of $F(\alpha)$ as in the qualitative lemma (Lemma~\ref{lemma_contrastive_mean_qual}). By taking the derivative of $F(\alpha)$, we can show that either $F'(0)\neq 0$ or $F''(0)\neq 0$. Then by Taylor expansion, we can choose two distinct $\alpha$'s (near zero) so that one of the corresponding contrastive means is bounded away from zero. 

In the process of proving the quantitative bounds, 
similar to our approach with qualitative bounds, we must consider two distinct scenarios based on the sign of $a'$,  as illustrated in Figure~\ref{fig_lemma_first_moment.}. 
\begin{itemize}
    \item In the case where $a'$ is negative, Lemma~\ref{lemma_cm_a_neg} asserts that the first derivative of $F$ at zero, $F'(0)$, is always positive.
    \item Conversely, when $a'$ is nonnegative, Lemma~\ref{lemma_cm_a_pos} reveals an essential characteristic of the function $F(\alpha)$: it's not possible for both $F'(0)$ and $F''(0)$ to be zero at the same time.
    \item Lemma~\ref{lemma_mean_upper_derivatives} provides upper bounds for the derivatives of $F(\alpha)$. These upper bounds are crucial as they help in managing the extra terms that emerge during the Taylor expansion of $F(\alpha)$.
    \item The section concludes with the proof of Lemma~\ref{lemma_reweighted_mean}, which is the quantitative lemma for the contrastive mean. 
\end{itemize}

\noindent We start with Lemma~\ref{lemma_cm_mu1_lowerbound} showing that $|\mu_1|$ is away from zero provided that $|a+b|$ is also different from zero.

\begin{lemma}[Lower Bound of $|\mu_1|$]\label{lemma_cm_mu1_lowerbound}
    If $|a+b|\geq \eps^{s}$ for $s \geq 2$, then $|\mu_1|\geq\eps^{s}/2e$.
\end{lemma}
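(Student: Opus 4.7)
The plan is to combine the symmetry of $q$ with the logconcave mean inequality (Lemma~\ref{lemma_logconcave_mean}). Starting from $\int_\R xq(x)\,dx=0$, one has
\[
\mu_1 \;=\; -\,\frac{\int_a^b xq(x)\,dx}{1-q([a,b])}.
\]
By $q(-x)=q(x)$ I may WLOG assume $a+b\ge\epsilon^s>0$; the convention $|b|\ge|a|$ then forces $b>0$. A short computation---splitting $\int_a^b xq(x)\,dx$ at $0$ in the sub-case $a<0$ and folding the negative half via $q(-x)=q(x)$---collapses both sub-cases into
\[
|\mu_1|\bigl(1-q([a,b])\bigr)\;=\;\int_L^b xq(x)\,dx, \qquad L:=|a|\ge 0.
\]
Since the denominator is at most $1$, it suffices to lower-bound $\int_L^b xq(x)\,dx$ by $\epsilon^s/(2e)$.

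Next, apply Lemma~\ref{lemma_logconcave_mean} to the logconcave conditional distribution $q|_{[L,b]}$, obtaining $\int_\nu^b q\,dx\ge q([L,b])/e$ for its conditional mean $\nu$. Using $\int_L^b xq\,dx=\nu\cdot q([L,b])$ together with the margin constraint $q([L,b])\ge\epsilon$ (which follows from $q([a,b])\ge\epsilon$ after a brief case split on the sign of $a$ using the symmetry of $q$), the target reduces to showing $\nu\ge m:=(a+b)/2\ge\epsilon^s/2$. Geometrically the midpoint of $[L,b]$ equals $m$ when $a\ge 0$ and is $\ge m$ when $a<0$, so the remaining content is to certify that $\nu$ lies no lower than the midpoint of its support---which is the substance of the symmetry--logconcavity comparison.

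The main obstacle is exactly this last step: since $q|_{[L,b]}$ is a decreasing logconcave density on $[L,b]\subset[0,\infty)$, its mean $\nu$ actually sits slightly \emph{below} the midpoint, so one cannot simply read off $\nu\ge (L+b)/2$. I would handle this via a quantitative tail estimate: the margin constraint $q([b,\infty))\ge\epsilon$, combined with the logconcave density bounds (Lemmas~\ref{lemma_logconcave_density_upper_bound} and~\ref{lemma_logconcave_decayfast}) and the tail bound (Corollary~\ref{cor_logconcave_tail}), shows that the density cannot decay too rapidly on $[L,b]$, forcing $\nu$ to remain within a constant factor of $(L+b)/2$. Tracking constants---the $1/e$ from Lemma~\ref{lemma_logconcave_mean} and a factor of $1/2$ absorbed from the midpoint comparison---yields $|\mu_1|\ge\epsilon^s/(2e)$, as claimed.
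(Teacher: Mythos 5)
Your proposal has a genuine gap, and it centers on the step where you drop the denominator and claim $q([L,b])\geq\eps$.

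First, the claim $q([L,b])\geq\eps$ does \emph{not} follow from $q([a,b])\geq\eps$ when $a<0$. By symmetry, $q([a,b]) = q([0,|a|]) + q([0,b]) = q([L,b]) + 2q([0,L])$, so $q([L,b]) = q([a,b]) - 2q([0,L])$, which can be arbitrarily small (e.g.\ when $L=|a|$ is close to $b$). Second, even granting that bound, bounding the denominator by $1-q([a,b])\leq 1$ and then lower-bounding the numerator $\nu\cdot q([L,b])$ separately gives at best $|\mu_1|\geq \nu\cdot q([L,b]) \gtrsim \tfrac{\eps^s}{2}\cdot\eps = \tfrac{\eps^{s+1}}{2}$, which is \emph{weaker} than the claimed $\eps^s/(2e)$ for small $\eps$. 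The constants you hope to ``track'' do not close this factor-of-$\eps$ gap.

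The paper's proof for $a\leq 0$ avoids both problems by \emph{not} discarding the denominator. Writing the denominator as $\int_{-a}^b q + 2\int_b^\infty q$, the argument lower-bounds the \emph{ratio}
\[
\frac{\int_{-a}^b q(x)\,dx}{\int_{-a}^b q(x)\,dx + 2\int_b^\infty q(x)\,dx}
\]
by comparing $\int_{-a}^b q \geq (a+b)q(b)$ (monotonicity on $[0,\infty)$) against a tail estimate $\int_b^\infty q \lesssim q(b)$. Because both quantities are controlled in terms of $q(b)$, the unknown magnitude of $q([L,b])$ cancels, and one gets a bound $\gtrsim \frac{a+b}{a+b+1} \geq \frac{\eps^s}{2}$ for the ratio, independent of how small $q([L,b])$ is. This ratio comparison is the idea your proposal is missing; without it, the case where $[L,b]$ carries very little mass is fatal to the argument.
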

\begin{proof}
    Firstly, let's consider the case when $a \leq 0$.
    By Lemma~\ref{lemma_logconcave_density_upper_bound}, we know $q(x)$ is upper bounded by $1$.
    Since $|a+b| \geq \eps^{s}$ and $q$ is logconcave, by Lemma~\ref{lemma_logconcave_mean}, the mean of the density restricted $q$ in $[-a,b]$ satisfies $\mu_{[-a,b]} \geq 1/e$. Then 
    \[
    |\mu_1| = \frac{\mu_{[-a,b]} \int_{-a}^b q(x)\,dx }
    {1- \int_{a}^b q(x)\,dx}
    \geq \frac{1}{e} \frac{\int_{-a}^b q(x)\,dx}{\int_{-a}^b q(x)\,dx + 2\int_b^\infty q(x)\,dx}
    \]
    By Lemma~\ref{lemma_logconcave_tail}, 
    \[
    q(b) \geq 2\int_{b}^\infty q(x)\,dx
    \]
    Since $|a|<|b|$, we have
    \[
    \int_{-a}^b q(x)\,dx \geq (b+a)q(b) \geq 2(b+a)\int_b^\infty q(x)\,dx
    \]
    So we have
    \[
    |\mu_1| \geq \frac{1}{e} \frac{2(b+a)}{2(b+a)+2} 
    \geq \frac{1}{e} \frac{\eps^{s}}{1+\eps^{s}} > \frac{\eps^{s}}{2e}
    \]
    Secondly, when $a>0$, since $b-a>\eps$,
    \[
    |\mu_1| = \frac{\mu_{[a,b]}\int_a^b q(x)\,dx}{1-\int_a^b q(x)\,dx}
    > (a+\frac{\eps}{e}) \frac{\eps}{1-\eps} > \frac{\eps^2}{e}
    \]
\end{proof}

\begin{lemma}[Derivative of $F(0)$ when $a'<0$] \label{lemma_cm_a_neg}
    If $|a+b|\geq \eps^s$ for $s\geq2$ and $a'<0$, then $F'(0)>\eps^{3s+3.5}/2$.
\end{lemma}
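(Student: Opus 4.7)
The plan is to express $F'(0)=\E_{\hat{q}}[X^3]$ as an integral against the symmetric base density $q$, reflect the left tail so that the two tails combine, and then dominate the resulting negative piece using a first-moment identity that follows from the definition of $\mu_1$.

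With $c:=|\mu_1|>0$ (since $|b|>|a|$ forces $\mu_1<0$) and $Z:=1-\int_a^b q$, the change of variables $X=(Y-\mu_1)/\sigma_1$ gives
\[
F'(0) \;=\; \frac{1}{\sigma_1^3\,Z}\int_{\R\setminus[a,b]}(y+c)^3 q(y)\,dy.
\]
Substituting $y\mapsto -y$ on $(-\infty,a]$, using $q(-y)=q(y)$, and invoking the identity $(y+c)^3-(y-c)^3=6cy^2+2c^3$, the two tails combine into
\[
\int_{\R\setminus[a,b]}(y+c)^3 q \;=\; 6c\!\int_b^\infty\! y^2 q \;+\; 2c^3\!\int_b^\infty\! q \;-\; \int_{|a|}^b (y-c)^3 q(y)\,dy.
\]
The first two terms are manifestly positive; the whole task is to dominate the third.

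The key use of the case hypothesis is that $a'<0$ is equivalent to $|a|>c$, so $y-c>0$ throughout $[|a|,b]$. The defining equation $\mu_1 Z=-\int_a^b y\,q$ together with the symmetry of $q$ on $[a,-|a|]$ gives $\int_{|a|}^b y q = cZ$; combined with the identity $Z-\int_{|a|}^b q = 2\int_b^\infty q$ (a direct consequence of $\int_0^\infty q=\tfrac12$), this yields the first-moment identity
\[
\int_{|a|}^b (y-c)\,q(y)\,dy \;=\; 2c\!\int_b^\infty q.
\]
Using the crude pointwise bound $(y-c)^2\leq(b-c)^2$ on $[|a|,b]$, I get $\int_{|a|}^b(y-c)^3 q\leq 2c(b-c)^2\int_b^\infty q$. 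Plugging this in and using $\int_b^\infty y^2 q\geq b^2\!\int_b^\infty q$, the algebra collapses to
\[
\int_{\R\setminus[a,b]}(y+c)^3 q \;\geq\; 2c\!\int_b^\infty q\,\bigl[3b^2+c^2-(b-c)^2\bigr] \;=\; 4cb(b+c)\!\int_b^\infty q,
\]
which is strictly positive.

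To finish, I would substitute the standing bounds $\int_b^\infty q\geq\eps$, $c\geq\eps^s/(2e)$ from Lemma~\ref{lemma_cm_mu1_lowerbound}, $b>|a|>c$, $\sigma_1\leq(2\eps)^{-1/2}$ (since $\sigma_1^2\leq\E_{\tilde{q}}Y^2\leq 1/Z\leq 1/(2\eps)$), and $Z\leq 1$, to conclude $F'(0)\geq 4c^3\eps/\sigma_1^3=\Omega(\eps^{3s+5/2})$, comfortably exceeding $\eps^{3s+3.5}/2$ for all $\eps$ below an absolute constant. The main obstacle is the bound on the negative piece: a naive estimate like $(y-c)^3\leq y^3$ loses a factor of $b=O(\log(1/\eps))$ that the positive terms cannot absorb. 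The trick of factoring $(y-c)^3=(y-c)^2(y-c)$ and applying the first-moment identity $\int_{|a|}^b (y-c)q=2c\int_b^\infty q$ is what produces a bound of the same $(\cdot)\int_b^\infty q$ shape as the positive terms, after which everything cancels algebraically.
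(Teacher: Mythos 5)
Your proposal is correct in substance and takes a genuinely cleaner route than the paper's own proof. Both arguments start from the tail-reflection idea (the paper via its combined function $r$, you via the substitution $y\mapsto -y$ on the left tail), and both ultimately exploit $F(0)=0$. The key divergence is in how the remaining ``hard'' piece is handled. The paper lower bounds $F'(0)$ by
\[
\frac{(b-a)\cdot 2|\mu_1|}{\sigma_1^3 Z}\int_{-a}^b(x+\mu_1)q(x)\,dx
\]
and then bounds $\int_{-a}^b(x+\mu_1)q\,dx\geq\eps^{2s+2}$ via a median-split argument on $[-a,b]$. You instead observe that this integral is computable in closed form: the defining relation $\mu_1 Z=-\int_a^b yq$ together with symmetry yields exactly $\int_{|a|}^b(y-c)q\,dy=2c\int_b^\infty q\geq\eps^{s+1}/e$, which for $s\geq 2$ is far larger than $\eps^{2s+2}$. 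Your first-moment identity therefore replaces a lossy estimate by an exact one. The cancellation $3b^2+c^2-(b-c)^2=2b(b+c)$ after applying $(y-c)^2\leq(b-c)^2$ is the right factorization, and it makes the whole computation collapse algebraically; the resulting bound is strictly stronger than the lemma's statement.

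The one soft spot is your final bookkeeping. Using $b>c$ to write $b(b+c)>c^2$ gives $F'(0)\geq\frac{\sqrt 2}{e^3}\eps^{3s+2.5}$, which exceeds $\eps^{3s+3.5}/2$ only when $\eps\lesssim 0.14$. Since the model permits $\eps$ as large as $1/3$, you cannot leave this to ``an absolute constant''; the lemma is stated unconditionally. The fix is cheap: $b-a\geq\eps$ (because $q\leq1$), and since $a<0<b$ with $|b|>|a|$ this forces $b>\eps/2$. Using $b(b+c)>b^2>\eps^2/4$ in place of $>c^2$ gives $F'(0)>\tfrac{\sqrt 2}{e}\eps^{s+4.5}$, which dominates $\eps^{3s+3.5}/2$ for all $\eps<1$ once $s\geq 2$, since $\tfrac{2\sqrt 2}{e}>1\geq\eps^{2s-1}$. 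With this one-line fix your argument is complete, and it delivers a provably sharper exponent than the paper's, because your first-moment identity avoids the paper's $\eps^{2s+2}$ median estimate.
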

\begin{proof}
    We rearrange $F(\alpha)$ by combining terms with same $e^{\alpha x^2}$ as in Figure~\ref{fig_lemma_first_moment.}, and get
    \[
    F(\alpha) = -\int_{-a'}^{b'}x\hat{q}(-x) e^{\alpha x^2}\,dx +
    \int_{b'}^\infty x(\hat{q}(x) - \hat{q}(-x))e^{\alpha x^2}\,dx
    \]
    Define $r(x) = 
    \begin{cases}
    -\hat{q}(-x) & x\in[-a',b'] \\
    \hat{q}(x) - \hat{q}(-x) & x\in [b',\infty)
    \end{cases}$.
    Then we have
    \[
    F(\alpha) = \int_{-a'}^\infty xr(x)e^{\alpha x^2}\,dx
    \]

    By calculating the derivative of $F(\alpha)$, we have
    \begin{align*}
        F'(\alpha)
        =& \int_{-a'}^\infty x^3r(x)e^{\alpha x^2}\,dx \\
    =& \int_{-a'}^\infty x (x^2-{b'}^2)r(x)e^{\alpha x^2}\,dx + {b'}^2 \int_{-a'}^\infty xr(x)e^{\alpha x^2}\,dx\\
    =& \int_{-a'}^\infty x (x^2-{b'}^2)r(x)e^{\alpha x^2}\,dx + {b'}^2 F(\alpha)
    \end{align*}
    Since $r(x)$ is nonnegative for $x\geq b'$ and negative otherwise, then for any $x\geq -a'$, we have
    \[
    x(x^2-{b'}^2)r(x)e^{\alpha x^2}\geq 0
    \]
    Since $F(0)=0$, we have
    \begin{align*}
        F'(0) = \int_{-a'}^\infty x (x^2-{b'}^2)r(x)\,dx
        \geq \int_{-a'}^{b'} x ({b'}^2-x^2)\hat{q}(-x)\,dx
    \end{align*}
    By calculation,
    \begin{align*}
        F'(0) 
        \geq& \frac{1}{1-\int_a^b q(x)\,dx}\int_{-\frac{a-\mu_1}{\sigma
        _1}}^{\frac{b-\mu_1}{\sigma_1}}
        x\left(
        \left(\frac{b-\mu_1}{\sigma_1}\right)^2 -x^2
        \right) \sigma_1 q(-x\sigma_1+\mu_1)\,dx\\
        \geq & \frac{1}{\sigma_1^3(1-\int_a^b q(x)\,dx)} \int_{-a}^{b-2\mu_1}(x+\mu_1) 
        \left(
        (b-\mu_1)^2-(x+\mu_1)^2
        \right) q(x)\,dx\\
        \geq &  \frac{1}{\sigma_1^3(1-\int_a^b q(x)\,dx)} \int_{-a}^{b} (x+\mu_1)(x+b)(b-x-2\mu_1)q(x)\,dx \\
        \geq & \frac{1}{\sigma_1^3
        (1-\int_a^b q(x)\,dx)} (b-a)2|\mu_1| \int_{-a}^b (x+\mu_1) q(x)\,dx
    \end{align*}
    Choose $t_0\in[-a,b]$ such that $\int_{-a}^{t_0}q(x)\,dx = \int_{t_0}^b q(x)\,dx$. Since $q(x)$ is bounded by 1 by Lemma~\ref{lemma_logconcave_density_upper_bound}, we have
    \[
    t_0 + a \geq \frac{\int_{-a}^b q(x)\,dx}{2}
    \]
    On the other hand, similar to the proof of Lemma~\ref{lemma_cm_mu1_lowerbound}, we have
    \begin{align*}
        \int_{-a}^b q(x)\,dx
        \geq 2(b+a )\int_b^\infty q(x)\,dx
        \geq 2\eps^{s+1}
    \end{align*}
    So we have
    \begin{align*}
        \int_{-a}^b (x+\mu_1)q(x)\,dx
        \geq &\int_{t_0}^b (x+a)q(x)\,dx
        \geq (t_0+a)\frac{1}{2}\int_{-a}^b q(x)\,dx
        \geq \frac{1}{4} \left(
        \int_{-a}^b q(x)\,dx
        \right)^2
        \geq \eps^{2s+2}
    \end{align*}
    By definition, we have
    \[
    \sigma_1^2 \leq \frac{\int_{\R\backslash[a,b]} x^2q(x)\,dx}{1-\int_{a}^b q(x)\,dx} \leq \frac{\E\limits_{x\sim q} x^2}{1-\int_a^b q(x)\,dx} \leq \frac{1}{1-\int_a^b q(x)\,dx}
    \]
    Applying Lemma~\ref{lemma_cm_mu1_lowerbound}, we know $|\mu_1| >\eps^{s}/2e$.  Using these results to estimate $F'(0)$, we get
    \begin{align*}
        F'(0)
        \geq & (1-\int_a^b q(x)\,dx)^{0.5} \cdot \eps \cdot \frac{\eps^{s}}{e} \eps^{2s+2}
        > \frac{\eps^{3s+3.5}}{2}
    \end{align*}
    
\end{proof}

\begin{lemma}\label{lemma_a-2mu_b-2mu}
    If $a\geq \mu_1,\mu_1\leq0$, then we have $\int_{a-2\mu_1}^{b} xq(x)\,dx\geq |\mu_1|\int_b^\infty q(x)\,dx$.
\end{lemma}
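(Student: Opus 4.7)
The plan is to reduce the desired inequality to a one-sided moment bound and then exploit the symmetry of $q$. First, I will use the fact that $q$ is symmetric isotropic (so $\int_{\R} xq(x)\,dx = 0$) together with the definition of $\mu_1$ to derive the centering identity $\int_a^b xq(x)\,dx = m(1-\int_a^b q(x)\,dx)$ where $m := |\mu_1| = -\mu_1$. Writing $F$ for the CDF of $q$ and noting $1-\int_a^b q = F(a) + (1-F(b))$, subtracting $\int_a^{a+2m}xq(x)\,dx$ from both sides of this identity rearranges the target $\int_{a-2\mu_1}^b xq(x)\,dx \geq m(1-F(b))$ into the equivalent form
\begin{equation*}
\int_a^{a+2m} xq(x)\,dx \;\leq\; m F(a).
\end{equation*}

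The next step is the key trick: extend the lower limit from $a$ to $-a$ using the oddness of $xq(x)$. Since $q$ is symmetric, $\int_{-|a|}^{|a|} xq(x)\,dx = 0$, and because $a \geq \mu_1$ gives $a+m \geq 0$ (equivalently $-a \leq a+2m$), a brief case check on the sign of $a$ yields $\int_a^{a+2m} xq(x)\,dx = \int_{-a}^{a+2m} xq(x)\,dx$ in both cases. The interval $[-a, a+2m]$ is symmetric about the midpoint $m$ with half-width $a+m$, so substituting $x = m+s$ splits the integral as
\begin{equation*}
\int_{-a}^{a+2m} xq(x)\,dx \;=\; m\int_{-a}^{a+2m} q(x)\,dx + \int_0^{a+m} s\bigl(q(m+s) - q(m-s)\bigr)\,ds.
\end{equation*}
Since $q$ is symmetric logconcave, it is nonincreasing in $|x|$; a short case check ($s\leq m$ vs.\ $s > m$) shows $|m+s|\geq|m-s|$ for $s\in[0,a+m]$, hence $q(m+s)\leq q(m-s)$, so the second integral is nonpositive.

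Finally, using $F(-a) = 1-F(a)$ from symmetry of $q$, the first term bounds as $m\int_{-a}^{a+2m}q(x)\,dx = m\bigl(F(a+2m) - 1 + F(a)\bigr) \leq m F(a)$, because $F(a+2m) \leq 1$. Chaining these estimates yields $\int_a^{a+2m} xq(x)\,dx \leq mF(a)$, which by the first paragraph is equivalent to the claim.

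The main obstacle I anticipated is finding a sharp enough upper bound on $\int_a^{a+2m}xq(x)\,dx$: the naive symmetrization bound $(a+m)\int_a^{a+2m}q(x)\,dx$ can exceed $mF(a)$ in borderline regimes (e.g., Gaussian $q$ with $a$ slightly negative and $m$ moderate). The oddness trick of extending the lower limit from $a$ to $-a$—available precisely because $a \geq \mu_1$ forces $-a \leq a+2m$—is what produces the clean midpoint symmetrization around $m \geq 0$ that finally closes the bound.
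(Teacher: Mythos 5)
Your proof is correct, and the underlying mechanism is the same as the paper's (a reflection/symmetrization argument exploiting the unimodality of the symmetric density $q$), but the bookkeeping is genuinely cleaner. The paper works directly with the mean-zero identity $\int_{\R\backslash[a,b]}(x-\mu_1)q(x)\,dx=0$, decomposes $\R\backslash[a,b]$ into five subintervals, and does its symmetrization on $[2\mu_1-a,\,a]$ about the point $\mu_1$; you instead isolate the single clean inequality $\int_a^{a+2m}xq\,dx\le mF(a)$ via the centering identity, then symmetrize $[-a,\,a+2m]$ about the midpoint $m=-\mu_1$. After the reflection $x\mapsto -x$ these two symmetrizations coincide, so the kernel of the argument is identical, but your reduction avoids the paper's five-piece partition and the auxiliary tail computation $\int_{(-\infty,2\mu_1-b]\cup[b-2\mu_1,\infty)}(x-\mu_1)q\,dx=-2\mu_1\int_{b-2\mu_1}^\infty q\,dx$ entirely; the ``extend $a$ to $-a$'' trick (valid because $a\ge\mu_1$ gives $-a\le a+2m$, and $\int_{-|a|}^{|a|}xq\,dx=0$) is what makes your midpoint symmetrization close in one step. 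Both derivations actually produce the slightly stronger bound $\int_{a-2\mu_1}^b xq\,dx\ge |\mu_1|\bigl(\int_b^\infty q+\int_{a-2\mu_1}^\infty q\bigr)$ before discarding the second term, so nothing is lost in your streamlining.
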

\begin{proof}
    Firstly we will show that $\int_{2\mu_1-a}^a (x-\mu_1)q(x)\,dx \geq 0$.
    For $x\in[\mu_1,a]$, we have $(2\mu_1-x)^2-x^2 = 4\mu_1(\mu_1-x)\geq0$. Since $q(x)$ is symmetric and uni-modal, we have $q(2\mu_1-x)\leq q(x),\forall x\in[\mu_1,a]$.
    \begin{align*}
        \int_{2\mu_1-a}^a (x-\mu_1)q(x)\,dx
        =& \int_{2\mu_1-a}^\mu (x-\mu_1)q(x)\,dx + \int_{\mu_1}^{a}(x-\mu_1)q(x)\,dx\\
        =&\int_{\mu_1}^a(\mu_1-x)q(2\mu_1-x)\,dx + \int_{\mu_1}^{a}(x-\mu_1)q(x)\,dx\\
        = &\int_{\mu_1}^a (x-\mu_1)(q(x)-q(2\mu_1-x))\,dx\\
        \geq& 0
    \end{align*}
    And then, we have
    \begin{align*}
        \int_{(-\infty,2\mu_1-b]\cup[b-2\mu_1,\infty)}(x-\mu_1)q(x)\,dx
        = -\mu_1 \int_{(-\infty,2\mu_1-b]\cup[b-2\mu_1,\infty)}q(x)\,dx
        =-2\mu_1 \int_{b-2\mu_1}^\infty q(x)\,dx
    \end{align*}
    Since $\mu_1$ is the mean of the distribution $\tilde{q}$. We have
    \[
    \int_{\R\backslash[a,b]}(x-\mu_1)q(x)\,dx=0
    \]
    Then we know
    \begin{align*}
        \int_{2\mu_1-b}^{2\mu_1-a}(x-\mu_1)q(x)\,dx + \int_{b}^{b-2\mu_1}(x-\mu_1)q(x)\,dx \leq 2\mu_1\int_{b-2\mu_1}^\infty q(x)\,dx
    \end{align*}
    Then we have
    \begin{align*}
    0\geq & \int_{2\mu_1-b}^{2\mu_1-a}(x-\mu_1)q(x)\,dx + \int_{b}^{b-2\mu_1}(x-\mu_1)q(x)\,dx 
    -2\mu_1\int_{b-2\mu_1}^\infty q(x)\,dx\\
        =& -\int_{a-2\mu_1}^{b-2\mu_1}xq(x)\,dx - \mu_1 \int_{a-2\mu_1}^{b-2\mu_1}q(x)\,dx + \int_{b}^{b-2\mu_1}xq(x)\,dx - \mu_1\int_b^{b-2\mu_1}q(x)\,dx-2\mu_1\int_{b-2\mu_1}^\infty q(x)\,dx\\
        =&-\int_{a-2\mu_1}^{b-2\mu_1}xq(x)\,dx 
         - \mu_1 \int_{a-2\mu_1}^{\infty}q(x)\,dx
         +\int_{b}^{b-2\mu_1}xq(x)\,dx
         - \mu_1\int_b^{\infty}q(x)\,dx
    \end{align*}
    This derives that
    \begin{align*}
        \int_{a-2\mu_1}^{b}xq(x)\,dx
        \geq& -\mu_1 \int_{a-2\mu_1}^{\infty}q(x)\,dx
        -\mu_1\int_b^{\infty}q(x)\,dx\\
         \geq & -\mu_1 \int_b^{\infty}q(x)\,dx
    \end{align*}
    
\end{proof}

\begin{lemma}[Second Derivative of $F(0)$ when $a'\geq 0$]\label{lemma_cm_F_2derivative_H}
    If $a' \geq 0$, we define
    the following functions,
    \[
    r(x) = 
    \begin{cases}
    \hat{q}(x)-\hat{q}(-x) & x\in[0,a']\cup[b',\infty) \\
    -\hat{q}(-x) & x\in (a',b')
    \end{cases},
    H(\alpha)=\int_0^\infty x(x^2-{a'}^2)(x^2-{b'}^2)r(x)e^{\alpha x^2}\,dx.
    \]
    Then we have
    \[
    F''(\alpha) =  H(\alpha) + ({a'}^2+{b'}^2) F'(\alpha)+{a'}^2{b'}^2 F(\alpha).
    \]
\end{lemma}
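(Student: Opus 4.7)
The statement is purely an algebraic identity about the integral defining $F$, so my plan is to prove it by direct computation: rewrite $F$ as an integral over $[0,\infty)$ in the form $\int_0^\infty x r(x) e^{\alpha x^2}\,dx$, differentiate twice under the integral sign, and then apply the polynomial identity $x(x^2-a'^2)(x^2-b'^2) = x^5 - (a'^2+b'^2)x^3 + a'^2 b'^2 x$ to collect the pieces.

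The first step is to derive the representation $F(\alpha)=\int_0^\infty x r(x)e^{\alpha x^2}\,dx$. Starting from $F(\alpha)=\int_{\R\setminus[a',b']} x\hat{q}(x)e^{\alpha x^2}\,dx$, I split the domain into $(-\infty,a']$ and $[b',\infty)$, substitute $x\mapsto -x$ in the first piece to obtain $-\int_{-a'}^{\infty} y\hat{q}(-y)e^{\alpha y^2}\,dy$, and then (using $a'\ge 0$) split the resulting integral at $0$, $a'$, and $b'$. Combining the contributions with the $[b',\infty)$ piece, and using the fact that $\hat{q}$ vanishes on $(a',b')$, exactly matches the piecewise definition of $r(x)$ given in the statement; the odd-symmetry-like cancellation on $[0,a']$ and $[b',\infty)$ produces the $\hat{q}(x)-\hat{q}(-x)$ terms, while on $(a',b')$ only the reflected mass $-\hat{q}(-x)$ survives.

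With this representation in hand, differentiation under the integral sign is justified because for $\alpha\le 0$ the integrand is dominated by $|x|^k \hat{q}(x)$ in absolute value, and $\hat{q}$ has sub-exponential tails by logconcavity, so
\[
F'(\alpha)=\int_0^\infty x^3 r(x)e^{\alpha x^2}\,dx, \qquad F''(\alpha)=\int_0^\infty x^5 r(x)e^{\alpha x^2}\,dx.
\]
The final step just expands $x(x^2-a'^2)(x^2-b'^2)$ inside the integral defining $H(\alpha)$ and uses linearity to read off $H(\alpha)$ as a combination of $F(\alpha),F'(\alpha),F''(\alpha)$; solving for $F''(\alpha)$ yields the asserted identity.

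There is essentially no genuine obstacle here; the only place one has to be careful is the bookkeeping in Step 1, making sure that the reflection and the splitting at $0,a',b'$ are handled correctly so that the three pieces of $r(x)$ appear with the right signs. Once that is done, the rest is one polynomial identity and linearity of the integral.
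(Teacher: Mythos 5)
Your proof is correct and follows essentially the same route as the paper: reflect the negative part of the domain to obtain $F(\alpha)=\int_0^\infty x\,r(x)\,e^{\alpha x^2}\,dx$, differentiate twice under the integral sign, and read off the identity from the expansion $x(x^2-{a'}^2)(x^2-{b'}^2)=x^5-({a'}^2+{b'}^2)x^3+{a'}^2{b'}^2x$. The only thing you add is an explicit domination argument justifying the differentiation under the integral, which the paper leaves implicit.
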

\begin{proof}
    We rearrange $F(\alpha)$ by combining terms with same $e^{\alpha x^2}$ as in Figure~\ref{fig_lemma_first_moment.}, and get
    \[
    F(\alpha) = \int_{0}^{a'}x(\hat{q}(x) - \hat{q}(-x))e^{\alpha x^2}\,dx
    -\int_{a'}^{b'}x\hat{q}(-x) e^{\alpha x^2}\,dx
    + \int_{b'}^\infty x (\hat{q}(x) - \hat{q}(-x))e^{\alpha x^2}\,dx
    \]
    By the definition of $r(x)$, we naturally have
    \[
    F(\alpha) = \int_0^\infty xr(x) e^{\alpha x^2}\,dx
    \]
    Then we can calculate its first and second derivative as follows.
    \begin{align*}
        F'(\alpha) = \int_0^\infty x^3 r(x)e^{\alpha x^2}\,dx,\quad
        F''(\alpha) = \int_0^\infty x^5 r(x)e^{\alpha x^2}\,dx
    \end{align*}
    By the definition of $H(\alpha)$, we have
    \begin{align*}
        H(\alpha)
        =& \int_0^\infty x^5 r(x) e^{\alpha x^2}\,dx
        - ({a'}^2+{b'}^2) \int_0^\infty x^3 r(x) e^{\alpha x^2}\,dx
         + {a'}^2{b'}^2 \int_0^\infty xr(x)e^{\alpha x^2}\,dx\\
        =& F''(\alpha) - ({a'}^2+{b'}^2) F'(\alpha) + {a'}^2{b'}^2 F(\alpha)
    \end{align*}
    
\end{proof}

\begin{lemma}[First and Second Derivatives of $F(0)$ when $a'\geq 0$]\label{lemma_cm_a_pos}
    If $|a+b|\geq \eps^s$ for $s\geq2$ and $a'\geq 0$, then we  have either $F'(0) < -\frac{C_2\eps^{6s+6.5}}{\log^4{(1/\eps)}}$ or $F''(0) >\frac{ C_3 \eps^{6s+4.5}}{\log^4(1/\eps)}$ for constants $C_2,C_3>0$.
\end{lemma}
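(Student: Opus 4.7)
Since $F(0)=0$ by isotropy of $\hat q$, the identity in Lemma~\ref{lemma_cm_F_2derivative_H} at $\alpha=0$ collapses to
\[
F''(0)\;=\;H(0)\;+\;(a'^2+b'^2)\,F'(0).
\]
The central observation is that the integrand of $H(0)$ is pointwise nonnegative on $[0,\infty)$: the polynomial $(x^2-a'^2)(x^2-b'^2)$ has signs $+,-,+$ on $[0,a'),(a',b'),(b',\infty)$ respectively, exactly matching the signs of $r(x)$ (positive outside the band since $|\hat q(x)|>|\hat q(-x)|$ for $x>0$ when $\mu_1<0$, and $r=-\hat q(-x)<0$ inside the band). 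Hence $H(0)\ge 0$, and the entire task reduces to producing a quantitative lower bound $H(0)\ge\eta$, after which a dichotomy on $F'(0)$ yields the lemma.

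Once such $\eta$ is in hand, I would bound $a'^2+b'^2\le K$ via the standard logconcave tail estimate: $\int_b^\infty q\ge\eps$ combined with Lemma~\ref{lemma_logconcave_tail} forces $|a|,|b|=O(\log(1/\eps))$, so $K=O(\log^2(1/\eps))$. Then the dichotomy is immediate: either $F'(0)\ge -\eta/(2K)$, in which case $F''(0)\ge H(0)-K\cdot\eta/(2K)\ge\eta/2$; or $F'(0)<-\eta/(2K)$. Choosing $\eta=\Theta(\eps^{6s+4.5}/\log^4(1/\eps))$ calibrates both outputs: $\eta/2$ matches the claimed threshold on $F''(0)$, while $\eta/(2K)=\Omega(\eps^{6s+4.5}/\log^6(1/\eps))$ dominates $\eps^{6s+6.5}/\log^4(1/\eps)$ for small $\eps$, giving the claimed threshold on $-F'(0)$.

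\textbf{The main obstacle} is producing the lower bound on $H(0)$. I would focus on the middle band $(a',b')$, where $r(x)=-\hat q(-x)$ and the contribution is
\[
\int_{a'}^{b'} x\,(x^2-a'^2)(b'^2-x^2)\,\hat q(-x)\,dx,
\]
restricting to the sub-interval $I=[a'+(b'-a')/4,\,a'+3(b'-a')/4]$ of length $\Omega(\eps)$. On $I$, using $b'-a'=(b-a)/\sigma_1\ge\Omega(\eps)$ (from $\int_a^b q\ge\eps$ and the bound $q\le1$ of Lemma~\ref{lemma_logconcave_density_upper_bound}) together with $a'+b'=(a+b+2|\mu_1|)/\sigma_1\ge\Omega(\eps^s)$ (from Lemma~\ref{lemma_cm_mu1_lowerbound}), the polynomial factor is bounded below by a product $\Omega(\eps^s)\cdot\Omega((b'-a')(a'+b'))^2$. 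The density factor $\hat q(-x)$ is handled by the substitution $y=x\sigma_1+\mu_1$: the reflected point $-x\sigma_1+\mu_1=2\mu_1-y$ lies within an $O(\log(1/\eps))$-wide window of $[a,b]$, so invoking logconcavity of $q$ and the existence of some $y\in[a,b]$ with $q(y)\ge\eps/(b-a)$ (mean-value) lets us lower bound $q(2\mu_1-y)$ by $\mathrm{poly}(\eps,1/\log(1/\eps))$. The $\log^4(1/\eps)$ denominator enters through the normalization factor $1/\sigma_1^5$ combined with the $O(\log(1/\eps))$ width of $[a,b]$. Pulling together these pointwise bounds with $|I|=\Omega(\eps)$ yields $H(0)\ge c\,\eps^{6s+4.5}/\log^4(1/\eps)$. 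The technical delicacy is a case split according to the relative sizes of $a'$, $b'-a'$, and $|\mu_1|$: in regimes where the middle-band contribution is weak (e.g.\ $a'$ very small), one falls back on the tail contribution $(b',\infty)$, where Lemma~\ref{lemma_logconcave_decayfast} and Corollary~\ref{cor_logconcave_tail} provide the analogous lower bound without extra logarithmic loss.
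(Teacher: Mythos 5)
Your plan matches the paper's structure exactly: write $F''(0)=H(0)+(a'^2+b'^2)F'(0)$ (Lemma~\ref{lemma_cm_F_2derivative_H} with $F(0)=0$), show $H(0)$ is bounded below, and conclude by a dichotomy on $F'(0)$. The sign analysis of $H(0)$ is correct. However, there are two quantitative gaps, the second of which is the crux of the lemma and is not actually filled.

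First, your bound $a'^2+b'^2=O(\log^2(1/\eps))$ is wrong. Lemma~\ref{lemma_logconcave_tail} does control $|a|,|b|=O(\log(1/\eps))$, but $a'=(a-\mu_1)/\sigma_1$ and $b'=(b-\mu_1)/\sigma_1$ involve the shift $\mu_1$, and $|\mu_1|$ can be as large as $\Theta(1/\eps)$ (a restriction that pushes most mass far to one side). The paper's (correct) bound is $a'^2+b'^2\le 25/(2\eps^2)$, via $|\mu_1|\le 1/\eps$. With the correct $K=O(1/\eps^2)$ the calibration $\eta/(2K)$ still lands exactly at the claimed $\eps^{6s+6.5}/\log^4(1/\eps)$ threshold, so the final exponents are fine — but your stated reasoning for the $F'(0)$ branch (claiming $\eta/(2K)=\Omega(\eps^{6s+4.5}/\log^6(1/\eps))$ ``dominates'' the target) is based on the wrong $K$.

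Second, and more seriously, your lower bound on $H(0)$ is left as a sketch that does not clearly produce $\Omega(\eps^{6s+4.5}/\log^4(1/\eps))$. The paper's proof changes variables back to $q$-coordinates, writing the middle-band contribution as
\[
\frac{1}{\sigma_1^3(1-\int_a^b q)}\int_{a-2\mu_1}^{b-2\mu_1}(x+\mu_1)(x+a)(x-a+2\mu_1)(b+x)(b-x-2\mu_1)\,q(x)\,dx,
\]
introduces $\rho:=\int_{a-2\mu_1}^{b}q\ge \eps^{s+1}/(6\ln(1/\eps))$ (via Lemma~\ref{lemma_a-2mu_b-2mu}), splits the range into three equal-$q$-measure pieces, and on the middle piece lower-bounds three of the five polynomial factors by $\rho/3$ each, two by $|\mu_1|$ each (where $|\mu_1|\ge\eps^s/2e$ from Lemma~\ref{lemma_cm_mu1_lowerbound}), and the measure by $\rho/3$, giving $|\mu_1|^2(\rho/3)^4$. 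Your version stays in $\hat q$-coordinates and cites $b'-a'\ge\Omega(\eps)$ and $a'+b'\ge\Omega(\eps^s)$, but $\sigma_1$ can be as large as $1/\sqrt{2\eps}$, so $b'-a'=(b-a)/\sigma_1$ is only $\Omega(\eps^{3/2})$ and $a'+b'$ loses a $\sqrt{\eps}$ as well; the claimed factorization $\Omega(\eps^s)\cdot\Omega((b'-a')(a'+b'))^2$ together with the density term is not shown to reach $\eps^{6s+4.5}/\log^4(1/\eps)$. The paper's change of variables cleanly cancels the $\sigma_1$ powers (the $\sigma_1^{-3}$ normalization is then bounded by a constant via $\sigma_1\le(1-\int_a^b q)^{-1/2}$) — your sketch does not account for this. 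Finally, the ``fall back on the tail contribution'' case split you mention is not needed: the middle band alone suffices for all $a'\ge0$.
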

\begin{proof}
    We prove the lemma by showing that $H(0) > \frac{C_1 \eps^{6s+4.5}}{\log^5 (1/\eps)} $. We calculate $H(0)$ as follows.
    \begin{align*}
        H(0) =& \int_0^\infty x(x^2-{a'}^2)(x^2-{b'}^2)r(x)\,dx\\
        \geq & \int_{a'}^{b'} x(x^2-{a'}^2)({b'}^2-x^2) \hat{q}(-x)\,dx\\
        =&\int_{a'}^{b'}x(x^2-{a'}^2)({b'}^2-x^2) \frac{\sigma_1 q(x\sigma_1-\mu_1)}{1-\int_a^b q(x)\,dx}\,dx\\
        =& \frac{\int_{a-2\mu_1}^{b-2\mu_1} 
        (x+\mu_1)\left((x+\mu_1)^2-(a-\mu_1)^2 \right) \left( (b-\mu_1)^2-(x+\mu_1)^2 \right) q(x)\,dx}{\sigma_1^3(1-\int_a^b q(x)\,dx)}\\
        = & \frac{\int_{a-2\mu_1}^{b-2\mu_1} (x+\mu_1) (x+a) (x-a+2\mu_1)(b+x)(b-x-2\mu_1) q(x)\,dx }
        {\sigma_1^3(1-\int_a^b q(x)\,dx)}
    \end{align*}
    Denote $\rho:=\int_{a-2\mu_1}^{b} q(x)\,dx$, by Lemma~\ref{lemma_a-2mu_b-2mu} and the bound of $b$, we know 
    \[
    \rho =\int_{a-2\mu_1}^b q(x)\,dx
    \geq \frac{\eps^{s}/2e \cdot \eps}{1+\ln(1/\eps)}
    \geq  \frac{\eps^{s+1}}{6\ln(1/\eps)}
    \]
    Choose $t_1<t_2$ such that $\int_{a-2\mu_1}^{t_1} q(x)\,dx = \int_{t_1}^{t_2}q(x)\,dx = \int_{t_2}^{b}q(x)\,dx$. Since $q(x)$ is upper bounded by $1$ by Lemma~\ref{lemma_logconcave_density_upper_bound}, we have 
    \[
    t_1 - (a-2\mu_1) \geq \frac{\rho}{3}, b - t_2 \geq \frac{\rho}{3}.
    \]
    Using this, we can bound 
    \begin{align*}
        &\int_{t_1}^{t_2}  (x+\mu_1) (x+a) (x-a+2\mu_1)(b+x)(b-x-2\mu_1) q(x)\,dx\\
        \geq & 
        |\mu_1|^2 \frac{\rho}{3}\frac{\rho}{3}\frac{\rho}{3}\frac{\rho}{3}
        \geq \frac{C_1\eps^{6s+4}}{\ln^4(1/\eps)} \text{  for some const }C_1>0
    \end{align*}
    We have shown that $\sigma_1 \leq 1/\sqrt{1-\int_a^b q(x)\,dx}$. Combining all results, we can compute $H(0)$ as
    \begin{align*}
        H(0) > \sqrt{1-\int_a^b q(x)\,dx} \frac{C_1\eps^{6s+4}}{\log^4(1/\eps)} > \frac{C_1\eps^{6s+4.5}}{\log^4(1/\eps)}
    \end{align*}
    Given $F(0)=0$,
    by Lemma~\ref{lemma_cm_F_2derivative_H}, 
    \[
    F''(0) = H(0) + ({a'}^2+{b'}^2)F'(0)
    \]
    Since the distribution is symmetric, we know
    \[
    |\mu_1| = \frac{|\int_{|a|}^b xq(x)\,dx|}{1-\int_a^b q(x)\,dx} 
    \leq \frac{\E\limits_{x\sim q} |x|}{\eps}
    \leq \frac{\sqrt{\E\limits_{x\sim q}x^2}}{\eps} = \frac{1}{\eps}
    \]
    So we know $0\leq a'<b' =b-2\mu_1 < 2.5/\eps$, thus $({a'}^2+{b'}^2)^2 < 25/(2\eps^2)$. Thus we have either $F'(0) < -\frac{C_2\eps^{6s+6.5}}{\log^4{(1/\eps)}}$ or $F''(0) >\frac{ C_3 \eps^{6s+4.5}}{\log^4(1/\eps)}$ for constants $C_2,C_3>0$.
    
\end{proof}

\begin{lemma}[Upper Bound of $F$'s derivatives]\label{lemma_mean_upper_derivatives}
For $\alpha \leq 0$, the derivatives of $F(\alpha)$ are bounded as 
\[
|F'(\alpha)| \leq C_4/\eps^{3}, |F''(\alpha)| \leq C_5/\eps^{5}, |F'''(\alpha)| \leq C_6 /\eps^{7} \quad \text{ for constants }C_4,C_5,C_6>0.
\]   
\end{lemma}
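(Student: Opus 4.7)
The plan is to pass from $F^{(k)}(\alpha)$ to raw moments of $\hat{q}$, and from there to moments of the underlying logconcave $q$. First, I would differentiate $F(\alpha)=\int_{\R\setminus[a',b']} x e^{\alpha x^2}\hat{q}(x)\,dx$ under the integral sign (justified by dominated convergence using the super-exponential decay of $e^{\alpha x^2}$ for $\alpha<0$, uniform in a neighborhood of any fixed $\alpha\le 0$) to obtain
\[
F^{(k)}(\alpha)=\int_{\R\setminus[a',b']} x^{2k+1} e^{\alpha x^2}\hat{q}(x)\,dx, \qquad k=1,2,3.
\]
Since $e^{\alpha x^2}\le 1$ for $\alpha\le 0$, this immediately yields $|F^{(k)}(\alpha)|\le \E_{\hat{q}}|X|^{2k+1}$, reducing the task to bounding the $(2k{+}1)$st absolute moment of $\hat{q}$.

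Next, I would reduce this moment of $\hat{q}$ to a moment of the original $q$ by undoing the shift-scale isotropization. Writing $X=(Y-\mu_1)/\sigma_1$ with $Y\sim \tilde{q}$ and using the $\eps$-margin bound $1-\int_a^b q(y)\,dy\ge 2\eps$,
\[
\E_{\hat{q}}|X|^m
=\frac{1}{\sigma_1^m\,(1-\int_a^b q)}\int_{\R\setminus[a,b]} |y-\mu_1|^m q(y)\,dy
\;\le\; \frac{\E_q|Y-\mu_1|^m}{2\eps\,\sigma_1^m}.
\]
The numerator is bounded by $2^{m-1}(\E_q|Y|^m+|\mu_1|^m)$, in which $\E_q|Y|^m\le (2m)^m$ by Lemma~\ref{lemma_logconcave_moments} and $|\mu_1|\le 1/(2\eps)$ by the elementary estimate used in the proof of Lemma~\ref{lemma_cm_a_pos}; this gives a $\mathrm{poly}(1/\eps)$ bound on the numerator.

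For the denominator I would lower bound $\sigma_1$ by a two-tail decomposition of $\tilde{q}$. Writing $p_L=\P_{\tilde{q}}(Y\le a)$, $p_R=\P_{\tilde{q}}(Y\ge b)$, both in $[\eps,1-\eps]$, with conditional means $\mu_L\le a$, $\mu_R\ge b$, the law of total variance gives
\[
\sigma_1^2 \;\ge\; p_L p_R\,(\mu_R-\mu_L)^2 \;\ge\; \tfrac{\eps}{2}\,(b-a)^2 \;\ge\; \tfrac{\eps^3}{2},
\]
where $b-a\ge \eps$ follows from $q([a,b])\ge\eps$ and the density bound $q(x)\le 1$ (Lemma~\ref{lemma_logconcave_density_upper_bound}). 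Substituting into the previous display shows each $|F^{(k)}(\alpha)|$ is polynomially bounded in $1/\eps$, as claimed. The main obstacle is hitting the precise exponents $\eps^{-(2k+1)}$ stated: the crude argument above gives larger exponents. Tightening likely needs a stronger lower bound on $\sigma_1$ (it is in fact $\Omega(1)$ in generic regimes, because logconcavity of $q$ forces each conditional variance $\var_L,\var_R$ to contribute $\Omega(1)$ to $\sigma_1^2$) together with the sharper estimate $|\mu_1|=O(\log(1/\eps))$ coming from the exponential tail of logconcave distributions; but since only polynomial bounds are needed in the sequel, any such estimate of the form stated suffices.
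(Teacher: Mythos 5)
Your proof follows the paper's route: differentiate under the integral, use $e^{\alpha x^2}\le 1$, change variables $X=(Y-\mu_1)/\sigma_1$ to pass from moments of $\hat q$ to moments of $q$, then bound $\E_q|Y|^m$ via Lemma~\ref{lemma_logconcave_moments} together with a crude estimate on $|\mu_1|$. Where you differ --- and are in fact more careful than the paper --- is that you retain the change-of-variables prefactor $\sigma_1^{-(2k+1)}(1-\int_a^b q)^{-1}$ and then work to lower bound $\sigma_1$. The paper's displayed computation jumps directly to $|F'(\alpha)|\le\int q(x)|x-\mu_1|^3\,dx$, which is a valid upper bound only if $\sigma_1^{3}(1-\int_a^b q)\ge 1$; since $1-\int_a^b q<1$ always, a lower bound on $\sigma_1$ is genuinely required at that step, and the paper supplies none. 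Your law-of-total-variance estimate $\sigma_1^2\ge p_Lp_R(\mu_R-\mu_L)^2\ge\eps^3/2$ is correct (the ingredients $p_L,p_R\in[\eps,1-\eps]$, $\mu_L\le a<b\le\mu_R$, and $b-a\ge\eps$ via $q\le 1$ all check out), but it is loose enough that your final exponents exceed the stated $\eps^{-3},\eps^{-5},\eps^{-7}$, as you acknowledge.

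So strictly speaking you have not reproved the lemma with the stated constants; you have proved a weaker $\mathrm{poly}(1/\eps)$ bound. That is acceptable for the sequel (the weights $\alpha_1,\alpha_2$ in Lemma~\ref{lemma_reweighted_mean} would just be re-tuned, and the paper only targets polynomial guarantees), but it is a real gap relative to the lemma as stated. To close it you would want $\sigma_1=\Omega(1)$, as you suggest; note this requires a case analysis on the sign of $a$, since the naive bound on the conditional tail density is only $O(1/\eps)$ when the corresponding tail mass is only $\Theta(\eps)$, so the $\Omega(1)$ conditional-variance claim is not immediate in all regimes. And even with $\sigma_1=\Omega(1)$ in hand, the $(1-\int_a^b q)^{-1}\le(2\eps)^{-1}$ factor appears to cost one power of $1/\eps$ beyond what the paper's own stated exponents reflect.
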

\begin{proof}
    Define $M_k=\E\limits_{x\sim q}|x|^k$. By Cauchy-Schwarz Inequality, $M_1\leq \sqrt{M_2M_0} = 1$. By Lemma~\ref{lemma_logconcave_moments}, 
    \[
    M_k \leq (2k)^k (M_1)^k \leq (2k)^k
    \]
    Also we have proved that $|\mu_1| \leq  1/\eps$.
    By definition of $F(\alpha)$, we calculate its first derivative as follows.
    \begin{align*}
        |F'(\alpha)| 
        \leq&
        \int_{-\infty}^\infty q(x)|x-\mu_1|^3e^{\alpha (x-\mu)^2\,dx}\\
        \leq & \int_{-\infty}^\infty q(x) |x-\mu_1|^3\,dx\\
        \leq & \int_{-\infty}^\infty q(x)(|x|^3-3\mu_1 x^2 + 3\mu_1^2|x|-\mu_1^3)\,dx\\
        =& M_3 - 3\mu_1 M_2 + 3\mu_1^2 M_1-\mu_1^3\\
        \leq & C_4/\eps^{3} \quad \text{  for some constant }C_4>0
    \end{align*}
    Similarly, we calculate its second and third derivatives as follows.
    \begin{align*}
        |F''(\alpha)|
        \leq & \int_{-\infty}^\infty q(x)|x-\mu_1|^5\,dx\\
        \leq& M_5 - 5\mu_1 M_4 + 10\mu_1^2M_3-10\mu_1^3 M_2+5\mu_1^4 M_1 - \mu^{5}\\
        \leq & C_5/\eps^{5}\quad \text{  for some constant }C_5>0\\
        |F'''(\alpha)|
        \leq & \int_{-\infty}^\infty q(x) |x-\mu_1|^7\,dx\\
        =& M_7-7\mu_1 M_6+21\mu_1^2 M_5-35\mu_1^3 M_4 + 35\mu_1^4 M_3
        -21\mu_1^5M_2 + 7\mu_1^6M_1 - \mu_1^7\\
        \leq & C_6/\eps^{7}\quad \text{  for some constant }C_6>0
    \end{align*}
\end{proof}

Now we are ready to prove Lemma~\ref{lemma_reweighted_mean}.
\reweightedmean*
\begin{proof}[Proof of Lemma~\ref{lemma_reweighted_mean}]

    For any $2\leq k\leq d$, for any $\alpha$, by symmetry of $q$,
    the contrastive mean is
    \[
    \E\limits_{x\sim P}e^{\alpha \|x\|^2}x_k = \E\limits_{x\sim q}e^{\alpha x^2}x \cdot \E_{x\sim \hat{q}}e^{\alpha x^2} \cdot \left(
    \E_{x\sim q}e^{\alpha x^2} 
    \right)^{d-2}
     = 0
    \]
    Next we will consider $\E\limits_{x\sim P}e^{\alpha \|x\|^2}x_1$.
    For any $x\geq 0$, we have 
    \[
    \hat{q}(x)  
    =\frac{\sigma_1 q(x\sigma_1+\mu_1) }{1-\int_a^b q(x)\,dx}
    \geq \frac{\sigma_1 q(-x\sigma_1+\mu_1)}{1-\int_a^b q(x)\,dx}
    =\hat{q}(-x)
    \]
    Since $P$ is a product distribution, we have
    \[
    \E\limits_{x\sim P}e^{\alpha \|x\|^2}x_1 = \E\limits_{x_1\sim \hat{q}}e^{\alpha x_1^2}x_1 \cdot \prod_{i=2}^d \E\limits_{x_i\sim q} e^{\alpha x_i^2}
    = F(\alpha)
    \left(
     \E\limits_{x\sim q}e^{\alpha x^2}
    \right)^{d-1}
    \]

    We will consider two cases depending on whether $a'\geq 0$. See Figure~\ref{fig_lemma_first_moment.}.
    
    Firstly, if $a'\leq 0$. We use $\alpha_2$ in this case.
    By Lemma~\ref{lemma_cm_a_neg}, $F'(0) > \eps^{18.5}/2$. By Taylor expansion, there exists $\alpha_2 < \eta_0<0$ such that
    \[
    F(\alpha_2) = F(0) +\alpha_2 F'(0) + \frac{\alpha_2^2}{2}F''(\eta_0).
    \]
    By Lemma~\ref{lemma_mean_upper_derivatives}, we know $F''(\eta_0)\leq C_5/\eps^{5}$. Since $F(0)=0$, we know for $\alpha_2 = -c_2\eps^{42}/d$,
    \begin{align*}
        F(\alpha_2) =& \alpha_2F'(0) + \frac{\alpha_2^2}{2}F''(\eta_0)\\
        \leq & \frac{c_2 \eps^{42}}{d} \left(- \frac{\eps^{18.5}}{2} 
        + \frac{c_2\eps^{42}}{2d} \frac{C_5}{\eps^{5}}
        \right)\\
        <& - \frac{C_7 \eps ^{61}}{d} \text{ for some constant }C_7>0
    \end{align*}
    Then we consider the case when $a'>0$. By Lemma~\ref{lemma_cm_a_pos}, we have either $F'(0)<-\frac{C_2\eps^{36.5}}{\log^4(1/\eps)}$ or $F''(0) > \frac{C_3 \eps^{34.5}}{\log^4(1/\eps)}$. Here we consider three cases with respect to $F'(0)$.
    
    \textbf{Case 1}: $F'(0)<-\frac{C_2\eps^{36.5}}{\ln^4(1/\eps)}$. We use $\alpha_2$ in this case.
    By Taylor expansion, there exists $\eta_1$ such that $\alpha_2<\eta_1<0$ and
    \[
    F(\alpha_2) = F(0) + \alpha_2F'(0)+\frac{\alpha_2^2}{2}F''(\eta_1)
    \]
    By Lemma~\ref{lemma_mean_upper_derivatives}, we know $|F''(\eta_1)|\leq C_5/\eps^{5}$. By choosing $\alpha_2=-c_2\eps^{42}/d$,
    \begin{align*}
        F(\alpha_1) =& \alpha_2 F'(0) + \frac{\alpha_2^2}{2}F''(\eta_1)\\
        \geq &
        \frac{c_2\eps^{42}}{d}
        \left(
        \frac{C_2\eps^{36.5}}{\ln^4(1/\eps)}- \frac{c_2\eps^{42}}{2d} \frac{C_5}{\eps^{5}}
        \right)
        \\
        >& \frac{C_8 \eps^{79} }{d} \text{  for some constant }C_8>0
    \end{align*}

     \textbf{Case 2}: $- \frac{C_2\eps^{36.5}}{\ln^4(1/\eps)} \leq F'(0) \leq \frac{c_s \eps^{77}}{d}$ with some constant $c_s >0$. We use $\alpha_2$ in this case.
     By Lemma~\ref{lemma_cm_a_pos}, we know $F''(0)> C_3\eps^{34.5}/\ln^4(1/\eps)$. Then there exists $\eta_2$ satisfying $\alpha_2<\eta_2<0$ and
     \[
     F(\alpha_2)=F(0)+\alpha_2F'(0)+\frac{\alpha^2}{2}F''(0)+ \frac{\alpha_2^3}{6}F'''(\eta_2).
     \]
     By Lemma~\ref{lemma_mean_upper_derivatives}, we know $|F'''(\eta_2)|\leq C_6/\eps^{7}$.
     Thus by choosing $\alpha_2 = -c_2\eps^{42}/d$,
     \begin{align*}
         F(\alpha_2) =& \alpha_2F'(0) + \frac{\alpha_2^2}{2}F''(0) + \frac{\alpha_2^3}{6}F'''(\eta_2)\\
         >& \frac{c_2\eps^{42}}{d}\left(-\frac{c_s\eps^{77}}{d} +\frac{c_2 \eps^{42}}{2d} \frac{C_3 \eps^{34.5}}{\ln^4(1/\eps)} - \frac{c_2^2\eps^{84}}{6d^2}\frac{C_6}{\eps^{7}} \right)\\
         \geq& \frac{C_9 \eps^{119}}{d^2}\text{  for some constant }C_9>0
     \end{align*}

    \textbf{Case 3}: $F'(0) > \frac{c_s \eps^{77}}{d}$. We use $\alpha_1$ in this case.
    Then there exists $\eta_3$ satisfying $\alpha_1<\eta_3<0$ and
    \[
    F(\alpha_1)=F(0)+\alpha_1F'(0) + \frac{\alpha_1^2}{2}F''(\eta_3).
    \]
    By Lemma~\ref{lemma_mean_upper_derivatives}, we know $|F''(\eta_3)| \leq C_5/\eps^{5}$. For $\alpha_1=-c_1\eps^{82}/d$,  we have
    \begin{align*}
        F(\alpha_1) =& \alpha_1 F'(0) + \frac{\alpha_1^2}{2}F''(\eta_3)\\
        \leq& \frac{c_1 \eps^{82}}{d} 
        \left(
        - \frac{c_s \eps^{77}}{d} + \frac{c_1\eps^{82}}{2d} \frac{C_5}{\eps^{5}}
        \right)\\
        \leq & -\frac{C_{10} \eps^{159}}{d^2} \text{  for some constant }C_{10}>0
    \end{align*}
    Then we know for all cases, there exists a constant $C' = \min(C_7,C_8,C_9,C_{10})$ such that
    \[
    \max(|F(\alpha_1)|, |F(\alpha_2)|) \geq \frac{C'\eps^{159}}{d^2}
    \]
    Finally we will lower bound $\left(\E\limits_{x\sim q}e^{\alpha x^2}\right)^{d-1}$ as follows.
    \begin{align*}
        \left(\E\limits_{x\sim q}e^{\alpha x^2}\right)^{d-1}
        \geq \left(\E\limits_{x\sim q} (1+\alpha x^2)\right)^{d-1}
        = \left( 1 +\alpha \right)^{d-1}
        \geq \left( 1 -\frac{1}{d} \right)^{d-1} \geq 1/e
    \end{align*}
    Let $C = C'/e$, and we will get
    \[
    \max(|u^\top \mu_{\alpha_1}|,|u^\top \mu_{\alpha_2}|) > \frac{C\eps^{159}}{d^2}.
    \]
\end{proof}

\subsubsection{Quantitative Bounds for Contrastive Covariance: Symmetric Case}
\label{sec:proof_convariance_sym}

Before addressing Lemma~\ref{lemma_reweighted_cov} which is applicable in the scenario where $|a+b|<\eps^{5}$, we first demonstrate that contrastive covariance works for the case where the removed band $[a,b]$ is symmetric around the origin.
That is, $a=-b$. 
In such cases, we aim to establish that there's a noticeable difference between the top two eigenvalues ($\lambda_1$ and $\lambda_2$) of the contrastive covariance matrix $\tilde{\Sigma}$, stated in  Lemma~\ref{lemma_reweighted_cov_sym}. We will then 
extend the lemma to the near-symmetric scenario in Section~\ref{sec:proof_covariance_almost_sym}.

\begin{lemma}[Quantitative Spectral Gap of Contrastive Covariance - Symmetric Case]\label{lemma_reweighted_cov_sym}
    Suppose $a+b=0$. Choose $\alpha_3 = -c_3\eps^2$ for some constant $c_3>0$. 
    Then, for an absolute constant $C$, the top two eigenvalues $\lambda_1 \ge \lambda_2$ of the corresponding re-weighted covariance of $P$ satisfy
    \[
    \lambda_1 - \lambda_2 \geq C\eps^3 \lambda_1.
    \]
\end{lemma}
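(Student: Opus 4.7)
The plan is to work with the scalar auxiliary function
\[
S(\alpha) := \E_{x\sim \hat{q}} e^{\alpha x^2} x^2 \cdot \E_{x\sim q} e^{\alpha x^2} - \E_{x\sim q} e^{\alpha x^2} x^2 \cdot \E_{x\sim \hat{q}} e^{\alpha x^2}
\]
introduced in the proof of Lemma~\ref{lemma_contrastive_covariance_qual}. As shown there, in the symmetric case $a=-b$ the top eigenvector is $u=e_1$, and
\[
\lambda_1 - \lambda_2 = S(\alpha)\bigl(\E_{x\sim q} e^{\alpha x^2}\bigr)^{d-2}, \quad \lambda_1 = \E_{x\sim \hat q} e^{\alpha x^2} x^2 \cdot \bigl(\E_{x\sim q} e^{\alpha x^2}\bigr)^{d-1}.
\]
Therefore $(\lambda_1-\lambda_2)/\lambda_1 = S(\alpha) / (\E_{\hat q}e^{\alpha x^2}x^2 \cdot \E_q e^{\alpha x^2})$. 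Since $e^{\alpha x^2}\le 1$ for $\alpha\le 0$, both factors in the denominator are at most $1$ (bounded by $\E_{\hat q}x^2 = 1$ and $\E_q 1 = 1$), so it suffices to prove $S(\alpha_3)\ge C\eps^3$ at $\alpha_3 = -c_3\eps^2$.

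The main leverage is a quantitative Taylor expansion $S(\alpha_3) = \alpha_3 S'(0) + \tfrac12 \alpha_3^2 S''(\alpha')$ for some $\alpha'\in[\alpha_3,0]$, using $S(0)=0$. I would first establish $-S'(0)\ge c_1\eps$. From the qualitative proof, $S'(0) = \mr(b) - \mr(0)$ where $\mr(t) = M_0(t)M_4(t)/M_2(t)^2 - 1$ and $M_k(t) = \int_t^\infty x^k q(x)\,dx$. Differentiating as in the proof of Lemma~\ref{lemma_variance_ratio} gives
\[
-\mr'(t) = \frac{q(t)\,H(t)}{M_2(t)^3}, \quad H(t) = t^4 M_0(t)M_2(t) + M_2(t)M_4(t) - 2t^2 M_0(t)M_4(t).
\]
At $t=0$, we have $q(0)\ge 1/8$ by Lemma~\ref{lemma_logconcave_density_upper_bound}, $M_2(0) = 1/2$ by isotropy, and $M_4(0)=\tfrac12 \E_q X^4 \ge \tfrac12 (\E_q X^2)^2 = 1/2$ by Jensen, giving $-\mr'(0)\ge 1/4$. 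Logconcavity and continuity of $q, M_2, M_4$ extend this lower bound (up to a constant factor) to an interval $[0,\eta]$ with $\eta$ absolute. The margin assumption $q([b,\infty))\ge\eps$ together with $q\le 1$ forces $b\ge \eps/2$, so integration yields
\[
-S'(0) = \int_0^b -\mr'(t)\,dt \;\ge\; c\cdot \min(\eta,b) \;\ge\; c_1\eps.
\]

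For the second step, I would show $|S''(\alpha)|=O(1)$ on $[\alpha_3,0]$. A direct differentiation expresses $S''(\alpha)$ as a fixed linear combination of the quantities $\E_{\hat q} e^{\alpha x^2} x^{2k}$ and $\E_q e^{\alpha x^2} x^{2k}$ for $k\le 3$. Each is bounded above by the corresponding pure moment since $e^{\alpha x^2}\le 1$. The $q$-moments are $O(1)$ by Lemma~\ref{lemma_logconcave_moments}. For $\hat q$, its restriction to the positive half of its support is logconcave with second moment $1$ (since $\hat q$ is isotropic and symmetric), so Lemma~\ref{lemma_logconcave_moments} together with Cauchy--Schwarz gives $\E_{\hat q}X^{2k} = O(1)$ as well. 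Plugging (i) and (ii) into the Taylor expansion with $\alpha_3 = -c_3\eps^2$ yields
\[
S(\alpha_3) \;\ge\; c_1 c_3 \eps^3 - O(c_3^2 \eps^4) \;\ge\; C\eps^3
\]
for $c_3$ a sufficiently small absolute constant, which combined with the reduction above proves the claim.

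The main obstacle is step (i): Lemma~\ref{lemma_variance_ratio} only shows $H(t)>0$ with no rate, so the work is to produce an honest lower bound on $-\mr'(t)$ over an interval of length at least $\Omega(\eps)$. The strategy above trades generality for cleanliness by pinning down $-\mr'(0)$ using the absolute bounds on $q(0)$, $M_2(0)$, and $M_4(0)$ available for isotropic symmetric logconcave $q$, and then extending by logconcave continuity. Everything else (the $S''$ bound and the algebraic comparison of eigenvalues) is routine once this quantitative monotonicity is in hand.
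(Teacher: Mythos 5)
Your proof is structurally correct and matches the paper's reduction: identify $\lambda_1-\lambda_2 = S(\alpha)(\E_q e^{\alpha x^2})^{d-2}$, observe that it suffices to show $S(\alpha_3)\geq C\eps^3$, and obtain this from $S(0)=0$, a lower bound on $-S'(0)$, and an upper bound on $|S''|$ via Taylor. The genuine divergence from the paper is in how you establish $-S'(0)=\mr(0)-\mr(b)\geq c_1\eps$. The paper proves this via Lemma~\ref{lemma_g_gap_small_eps} and Lemma~\ref{lemma_secondmoment_logconcave_1dim_gap}, whose proofs compare truncated moments of $q$ against those of extremal uniform densities (Lemmas~\ref{lemma_uniform_small_secondmoment}, \ref{lemma_uniform_0t_secondmoment}). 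You instead bound the derivative $-\mr'(t)=q(t)H(t)/M_2(t)^3$ directly: pinning $-\mr'(0)\geq 1/4$ using $q(0)\geq 1/8$, $M_2(0)=1/2$, $M_4(0)\geq 1/2$, extending to an interval $[0,\eta]$, and integrating using $b\geq\eps/2$. This is more elementary and bypasses the uniform-density extremal lemmas entirely; what it buys is directness at the cost of a somewhat less modular structure (the paper's Lemma~\ref{lemma_g_gap_small_eps} gives an explicit constant and could be of independent use). Your $|S''(\alpha)|=O(1)$ bound is also correct and strictly sharper than the paper's $S''(\alpha)\geq -O(1/\eps)$; the key observation that $\hat{q}$ restricted to its positive half is logconcave with second moment $1$, hence all its even moments are $O(1)$ via Cauchy--Schwarz and Lemma~\ref{lemma_logconcave_moments}, is exactly right.

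The one place you should flesh out is the sentence ``Logconcavity and continuity of $q,M_2,M_4$ extend this lower bound ... to an interval $[0,\eta]$ with $\eta$ absolute.'' This needs an honest quantitative bound, and it does go through: by Corollary~\ref{cor_logconcave_tail}, $\int_\eta^\infty q\leq 8q(\eta)$, while $\int_\eta^\infty q \geq \tfrac12-\eta q(0)\geq \tfrac12-\eta$; hence $q(t)\geq q(\eta)\geq(\tfrac12-\eta)/8$ for all $t\in[0,\eta]$, an absolute positive constant once $\eta<\tfrac12$ is fixed. Since $q\leq 1$, $M_0$, $M_2$, $M_4$ change by at most $O(\eta)$ over $[0,\eta]$, so $H(t)\geq M_4(t)(M_2(t)-2t^2M_0(t))$ and the denominator $M_2(t)^3$ stay within an absolute constant of their values at $0$. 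Taking, say, $\eta=1/4$ gives $-\mr'(t)\geq c$ on $[0,\eta]$ for an explicit absolute $c>0$, closing the gap. With that filled in, the argument is complete and correct.
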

We recall the definition of moment ratio as in Definition~\ref{def:variance_ratio}.
\[
\mr_q(t) = \frac{\var_{q_t}(X^2)}{(\E_{q_a}X^2)^2} = \frac{M_0(t)M_4(t)}{M_2^2(t)}-1
,\text{ where }M_k(t) = \int_t^\infty x^kq(x)\,dx.
\]
For simplicity, in the remaining section, we'll use $\mr(t)$ as a shorthand notation for this moment ratio.
Just as in the proof of the qualitative bound in Section~\ref{section:proof_qualitative}, we consider
the difference between the first and second eigenvalues (the spectral gap) of the re-weighted covariance matrix, denoted as $\lambda_1-\lambda_2$, as a function of $\alpha$. Then the function is valued zero when $\alpha=0$, and its derivative at $\alpha=0$ is
is proportional to the difference in the moment ratio of the distribution $q$ at $b$ and $0$, denoted as $\mr(b)-\mr(0)$.
To prove the quantitative result Lemma~\ref{lemma_reweighted_cov_sym}, our proof strategy involves several steps. 
\begin{itemize}
    \item Proving monotonicity of Moment Ratio $\mr(t)$. This property is stated in Lemma~\ref{lemma_variance_ratio} and its proof is presented in Section~\ref{section:proof_qualitative}. The proof involves reducing the case of general logconcave distributions to that of exponential distributions.
    \item Establishing a positive gap $\mr(0)-\mr(b)$ for small $b$. With Lemma~\ref{lemma_g_gap_small_eps}, we focus on illustrating that for values of $b$ which are relatively small (less than a certain constant), there is a guaranteed positive gap $\mr(0)-\mr(b)$. 
    \item Generalizing to any $b$ satisfying $\int_{-b}^b q(x)\,dx \geq \eps$ in Lemma~\ref{lemma_secondmoment_logconcave_1dim_gap}. The lemma will directly imply the quantitative result lemma for the symmetric case (Lemma~\ref{lemma_reweighted_cov_sym}) using Taylor expansion.
    
\end{itemize}

Having demonstrated the monotonicity of the moment ratio in Section~\ref{section:proof_qualitative}, we now begin by illustrating the positive gap, denoted as $\mr(0)-\mr(b)$, for small $b$
in Lemma~\ref{lemma_g_gap_small_eps}.
The proof relies on the properties of moments derived by the unimodality of the distribution (Lemma~\ref{lemma_uniform_small_secondmoment}, Lemma~\ref{lemma_uniform_0t_secondmoment}).

    \begin{lemma}[Gap for small $t$]\label{lemma_g_gap_small_eps} 
        For $t>0$ such that $\int_0^t q(x)\,dx \leq 0.4$, we have the following gap
    \[
    \mr(0)-\mr(t) \geq \frac{\int_0^t q(x)\,dx}{100}.
    \]
    \end{lemma}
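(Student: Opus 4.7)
The plan is to deduce Lemma~\ref{lemma_g_gap_small_eps} by integrating the derivative identity that underlies the monotonicity in Lemma~\ref{lemma_variance_ratio}. From the proof of that lemma,
\[
-\mr'(s) \;=\; \frac{q(s)\,H(s)}{M_2(s)^3}, \qquad H(s) \,=\, s^4 M_0(s) M_2(s) + M_2(s) M_4(s) - 2 s^2 M_0(s) M_4(s) \;>\; 0,
\]
so $\mr(0)-\mr(t) = \int_0^t q(s)\,H(s)/M_2(s)^3\,ds$, and it suffices to establish the pointwise bound $H(s)/M_2(s)^3 \ge 1/100$ for every $s\in[0,t]$.

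First I would record uniform estimates on the moments over $[0,t]$. The hypothesis $\int_0^t q \le 0.4$ is equivalent to $M_0(t)\ge 0.1$, which combined with Chebyshev ($\int_s^\infty q \le 1/(2s^2)$ by isotropicity) forces $t\le \sqrt{5}$. Because $q$ is symmetric logconcave, it is monotonically decreasing on $[0,\infty)$, and the unimodality estimates of Lemmas~\ref{lemma_uniform_small_secondmoment} and~\ref{lemma_uniform_0t_secondmoment} give $\int_0^s x^2 q(x)\,dx \le (s^2/3)\int_0^s q(x)\,dx$ by comparison with a uniform on $[0, s]$. Coupled with Cauchy--Schwarz $M_4(s) \ge M_2(s)^2/M_0(s)$ and $M_2(s) \le 1/2$, this controls every factor appearing in $H(s)/M_2(s)^3$.

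The core step is the pointwise lower bound on $H(s)$, which I would obtain by the same case split as in Lemma~\ref{lemma_variance_ratio}'s proof. When $M_2(s) \ge 2 s^2 M_0(s)$, both summands of $H(s)$ are nonnegative and the bound $H(s) \ge M_4(s)(M_2(s) - 2 s^2 M_0(s))$ combined with the Cauchy--Schwarz lower bound on $M_4(s)$ yields the needed constant. When $M_2(s) < 2 s^2 M_0(s)$, I would execute the exponential-matching argument of that proof with explicit constants: match $(M_0(s), M_2(s))$ with an exponential density $h$, invoke Lemma~\ref{lemma_exp_inf_h_pos} to reduce $H(s)$ to a strictly positive combination of moments of $h$, and quantify the gap $N_4(s) - M_4(s) > 0$ using the two-crossing property of $q$ versus $h$ together with the unimodality estimates above. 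The main obstacle is precisely this second regime, where $H(s)$ appears as a small cancellation between comparable positive quantities, so the purely qualitative proof of positivity must be sharpened to an absolute constant; the looseness of the target $1/100$, together with the bounded range $t \le \sqrt{5}$, should leave ample room. Integrating $H(s)/M_2(s)^3 \ge 1/100$ against $q(s)$ on $[0,t]$ then yields $\mr(0)-\mr(t) \ge \int_0^t q(x)\,dx/100$, as claimed.
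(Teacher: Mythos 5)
Your proposal takes a genuinely different route from the paper's: integrate $-\mr'(s)=q(s)H(s)/M_2(s)^3$ over $[0,t]$ and establish a uniform pointwise bound $H(s)/M_2(s)^3\ge 1/100$. The first regime $M_2(s)\ge 2s^2M_0(s)$ does close cleanly: writing $x=2s^2M_0/M_2\in[0,1]$ and using $M_4\ge M_2^2/M_0$ gives $H(s)/M_2(s)^3\ge \bigl(x^2/4+1-x\bigr)/M_0=(x-2)^2/(4M_0)\ge 1/2$. But the second regime $M_2(s)<2s^2M_0(s)$, which you rightly flag as ``the main obstacle,'' is not closed, and your assertion that ``the looseness of the target $1/100$ \ldots should leave ample room'' is not justified. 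The only quantitative handle available there is the exponential-matching bound: after fitting $h(x)=\beta e^{-\gamma x}$ with $N_0=M_0$, $N_2=M_2$ and invoking Lemma~\ref{lemma_exp_inf_h_pos}, one gets a lower bound on $H(s)$ whose ratio to $M_2(s)^3$ works out, with $u:=\gamma s$, to
\[
\frac{8\,(u^3+6u^2+12u+6)}{N_0\,(u^2+2u+2)^3},
\]
which decays like $8/(N_0 u^3)$. For this to beat $1/100$ one must separately prove that $u$ stays $O(1)$, which amounts to a lower bound on $M_2(s)/M_0(s)-s^2=\E[X^2-s^2\mid X\ge s]$; that requires a quantitative overshoot estimate for logconcave tails that your sketch neither states nor proves, and naive estimates (e.g.\ $\E[X-s\mid X\ge s]\ge M_0(s)/(2q(s))\ge 0.05$) allow $u$ into the range where the displayed bound drops an order of magnitude below $1/100$. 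So the key inequality is not established.

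The paper's proof avoids this pitfall entirely. It never integrates the derivative: instead it uses the isotropy normalizations $M_0(0)=M_2(0)=1/2$ and the identity $\omega+M_2(t)=1/2$ (with $\nu=\int_0^t q$, $\omega=\int_0^t x^2q$) to write the gap in closed form as
\[
\mr(0)-\mr(t)=2\int_0^t x^4q(x)\,dx+\frac{M_4(t)}{2M_2(t)^2}\left(\frac{1}{(1+\omega/M_2(t))^2}-1+2\nu\right),
\]
then bounds $\omega/M_2(t)$ by the two uniform-comparison extremal Lemmas~\ref{lemma_uniform_small_secondmoment} and~\ref{lemma_uniform_0t_secondmoment} to get the parenthetical term $\ge 0.01\nu$ for $\nu\le 0.4$, and finishes with Cauchy--Schwarz. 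The exponential-matching argument is used only for the qualitative sign in Lemma~\ref{lemma_variance_ratio}, where no constant is needed. If you want to make your integral-of-derivative route rigorous, you would need to actually carry out the case-2 estimate and bound $u$, which is substantial additional work not present in your sketch.
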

    \begin{proof}
        Denote $\int_0^t q(x)\,dx = \nu, \int_0^t x^2q(x)\,dx = \omega$. Then we have
    \begin{align*}
        \mr(0)-\mr(t)
        =& \frac{M_0(0)M_4(0)}{M_2^2(0)} - \frac{M_0(t)M_4(t)}{M_2^2(t)}\\
        =& 2\int_0^t x^4q(x)\,dx + \frac{M_4(t)}{2(\omega+M_2(t))^2} - \frac{(\frac{1}{2}-\nu)
        M_4(t)}{M_2^2(t)}\\
        =& 2\int_0^t x^4q(x)\,dx + \frac{M_4(t)}{2M_2^2(t)} \left(\frac{1}{\left(
        1+\frac{\omega}{M_2(t)}
        \right)^2} -1+2\nu\right)
    \end{align*}
    Fix $q(t)$ and $\nu$, we apply Lemma~\ref{lemma_uniform_small_secondmoment} and have
    \[
    M_2(t)\geq \left(\frac{1}{2}-\nu\right)t^2 
    \left(
    1+ \frac{\frac{1}{2}-\nu}{q(t)t}
    \right)
    \]
    Since $q(x)$ is monotonically decreasing, for any $x\leq t,q(x)\geq q(t)$. So we have the constraint that 
    \[
    \nu = \int_0^tq(x)\,dx \geq tq(t)
    \]
    Plug into the previous inequality and we get
    \[
    M_2(t)
    \geq \left(\frac{1}{2}-\nu\right)t^2 
    \left(
    1+ \frac{\frac{1}{2}-\nu}{q(t)t}
    \right)
    \geq \left(\frac{1}{2}-\nu\right)t^2 
    \left(
    1+ \frac{\frac{1}{2}-\nu}{\nu}
    \right)
    = \left(\frac{1}{2}-\nu\right)\frac{t^2}{2\nu}
    \]
    In addition, by fixing $v$ and $t$, we apply
    Lemma~\ref{lemma_uniform_0t_secondmoment} and get 
    \[
    \omega \leq \frac{\nu t^2}{3}
    \]
     So we know
    \[
    \frac{\omega}{M_2(t)} \leq \frac{\frac{\nu t^2}{3}}
    {\left(\frac{1}{2}-\nu\right)\frac{t^2}{2\nu}} = \frac{2\nu^2}{3\left(\frac{1}{2}-\nu\right)}
    \]
    By calculation, we will get
    \begin{align*}
        \frac{1}{\left(
        1+\frac{\omega}{M_2(t)}
        \right)^2} -1+2\nu
        \geq &\frac{9\left(\frac{1}{2}-\nu\right)^2
        -2\left(\frac{1}{2}-\nu\right)\left(
        \frac{3}{2}-3\nu+2\nu^2
        \right)^2
        }
        {\left(
        \frac{3}{2}-3\nu+2\nu^2
        \right)^2}\\
        \geq & 
        \frac{4}{9}\left(\frac{1}{2}-\nu\right)
        \left(
        9\left(\frac{1}{2}-\nu\right) - 2\left(
        \frac{3}{2}-3\nu+2\nu^2
        \right)^2\right)\\
        = & \frac{4}{9}\nu\left(\frac{1}{2}-\nu\right)
        \left(
        -8\nu^3+24\nu^2-30\nu+9
        \right)
    \end{align*}
    Let $T(\nu) = -8\nu^3+24\nu^2-30\nu+9$, then we know its derivative is
    \[
    T'(\nu) = -24\nu^2+48\nu-30 = -24(\nu-1)^2-6 < 0
    \]
    So $T(\nu)$ is monotonically decreasing. Since $\nu < 0.4$, we know
    \[
    T(\nu) \geq T(0.4) > 0.3, \frac{1}{2}-\nu > 0.1
    \]
    Plugging in and we will get
    \[
    \frac{1}{\left(
        1+\frac{\omega}{M_2(t)}
        \right)^2} -1+2\nu
        \geq \frac{4}{9}\nu \cdot 0.1\cdot 0.3 > 0.01\nu
    \]
    Finally by Cauchy-Schwarz Inequality, we have $M_0(t)M_4(t) \geq M_2^2(t)$.
    \begin{align*}
        \mr(0)-\mr(t)
        \geq & \frac{M_4(t)}{2M_2^2(t)}0.01\nu
        \geq \frac{\nu}{200M_0(t)} 
        \geq \frac{\nu}{100}
    \end{align*}
    \end{proof}

    \begin{lemma}\label{lemma_uniform_small_secondmoment}
    Let $t,r\geq 0,0<s\leq 1$. Define $\mathcal{P}=\{p(x):[t,\infty)\rightarrow[0,1) \text{, logconcave },p'(x)\leq 0,\int_t^\infty p(x)\,dx = s, p(t) = r\}$.
    Then we have
    \[
    \min_{p\in\mathcal{P} }\int_t^\infty x^2p(x)\,dx \geq st^2\left(1+\frac{s}{rt}\right).
    \]
    \end{lemma}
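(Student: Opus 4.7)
The plan is to identify the extremal density in $\mathcal{P}$: the ``block'' density
\[
p^*(x) \;:=\; r\cdot \mathds{1}_{[t,\,t+s/r]}(x),
\]
which lies in $\mathcal{P}$ (it is logconcave as the indicator of an interval, nonincreasing with $p^*(t)=r$, and has total mass $s$, with $r<1$). I would first show that $p^*$ achieves the infimum, i.e., $\int_t^\infty x^2 p^*(x)\,dx \le \int_t^\infty x^2 p(x)\,dx$ for every $p\in\mathcal{P}$, and then compute the second moment of $p^*$ directly.

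The rearrangement step works as follows. For any $p\in\mathcal{P}$, the hypothesis $p'\le 0$ combined with $p(t)=r$ gives $p(x)\le r$ for all $x\ge t$. Set $c:=t+s/r$. Then on $[t,c]$ we have $p^*(x)-p(x) = r-p(x)\ge 0$, while on $(c,\infty)$ we have $p^*(x)-p(x) = -p(x)\le 0$. Because $p^*$ and $p$ both integrate to $s$, the mass $m := \int_t^c (r-p)\,dx$ equals $\int_c^\infty p\,dx$. Using $x^2 \le c^2$ on $[t,c]$ and $x^2\ge c^2$ on $[c,\infty)$ (which requires the hypothesis $t\ge 0$ so that $x\mapsto x^2$ is nondecreasing on $[t,\infty)$), I get
\[
\int_t^\infty x^2\bigl(p^*-p\bigr)\,dx \;=\; \int_t^c x^2(r-p)\,dx \;-\; \int_c^\infty x^2 p\,dx \;\le\; c^2 m - c^2 m \;=\; 0,
\]
which is the desired rearrangement bound. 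This is essentially a one-dimensional mass-transport argument: moving any piece of mass of $p$ that sits in the tail $[c,\infty)$ back onto $[t,c]$ can only decrease $\int x^2 p$.

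The second moment of $p^*$ then computes explicitly to
\[
\int_t^{t+s/r} r\, x^2\,dx \;=\; \frac{r}{3}\bigl((t+s/r)^3 - t^3\bigr) \;=\; s t^2 + \frac{s^2 t}{r} + \frac{s^3}{3 r^2} \;\ge\; s t^2\!\left(1+\tfrac{s}{rt}\right),
\]
the last step just dropping the nonnegative $s^3/(3r^2)$ term. Degenerate cases pose no issue: $t=0$ makes the right-hand side $0$, and $r\to 0$ would force $s=0$ by monotonicity of $p$, contradicting $s>0$.

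The only nontrivial step is the rearrangement inequality in the second paragraph, and even that is routine once one observes $p\le r$ pointwise; the logconcavity in the definition of $\mathcal{P}$ is used only to ensure that the candidate $p^*$ itself qualifies as a member, so the same bound would in fact hold under the weaker hypothesis that $p$ is merely nonincreasing.
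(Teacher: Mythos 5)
Your proof is correct and takes essentially the same approach as the paper: both identify the block density $r\cdot\mathds{1}_{[t,\,t+s/r]}$ as the extremal member of $\mathcal{P}$, argue it minimizes the second moment via a one-crossing rearrangement comparison, and then evaluate the integral directly. Your write-up is slightly more explicit in bounding $x^2$ by $c^2$ on either side of the crossover point and in observing that logconcavity is only needed to place the block density in $\mathcal{P}$, but the underlying argument is identical.
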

    \begin{proof}
    For any $p\in\mathcal{P}$, we denote $M_2(p) = \int_t^\infty x^2p(x)\,dx$. Define 
    \[
    u(x) = r\cdot\mathds{1}_{x\in [t,t+s/r]} 
    \]
    Clearly $u\in \mathcal{P}$. We will show that $u(x) = \argmin_{p\in P}M_2(p)$.
    For any $p\in\mathcal{P}$, we have $\int_t^\infty u_(x)\,dx = \int_t^\infty p(x)\,dx$, $u(t)=p(t)$ and $p'(x)\leq 0$. So we know the graph of $u$ and $p$ intersects at points $t$ and $t+s/r$, where $u_(x)\geq p(x)$ in the interval $[t,t+s/r]$ and $u(x)< p(x)$ outside the interval. So we know
    \[
    \int_{t}^{t+s/r}u(x)- p(x) \,dx = \int_{t+s/r}^\infty p(x) - u(x)\,dx
    \]
    Since for any $x\in[t,t+s/r]$, any $y\in[t+s/r,\infty)$, we have $x\leq y$. So we have
    \[
    M_2(u)-M_2(p) = \int_{t}^{t+s/r} (u(x)- p(x)) x^2\,dx - \int_{t+s/r}^\infty (p(x) - u(x))x^2\,dx \leq 0
    \]
    This shows that 
    \[
    \min_{p\in\mathcal{P} }M_2(p) =  M_2(u).
    \]
    By calculating  $M_2(u)$, we have
    \begin{align*}
    \min_{p\in\mathcal{P} }M_2(p)=
        M_2(u) 
        = \int_t^{t+s/r} rx^2\,dx
        = r(t^2\frac{s}{r}+t\frac{s^2}{r^2}+\frac{s^3}{3r^3})
        \geq st^2\left(1+\frac{s}{rt}\right)
    \end{align*}

\end{proof}

    \begin{lemma}\label{lemma_uniform_0t_secondmoment}
    Let $t\geq 0,0<s\leq 1$. Define $\mathcal{P}=\{p(x):[0,t]\rightarrow[0,1)\text{, logconcave, }p'(x)\leq 0,\int_0^tp(x)\,dx=s\}$. Then we have
    \[
    \max_{p\in\mathcal{P}} \int_0^t x^2p(x)\,dx = \frac{st^2}{3}.
    \]
\end{lemma}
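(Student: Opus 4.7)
The plan is to show that the uniform density $u(x) = (s/t)\mathbf{1}_{[0,t]}$ achieves the maximum, mirroring the structure of the preceding lemma (Lemma 3.13 for minimization) but with the opposite inequality. Direct computation gives $\int_0^t x^2 u(x)\,dx = (s/t)\cdot t^3/3 = st^2/3$, so it suffices to prove $\int_0^t x^2 p(x)\,dx \leq st^2/3$ for every $p \in \mathcal{P}$. Observe that the logconcavity assumption is not needed for this bound; only the monotonicity $p'(x)\leq 0$ will be used.

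The core tool will be a standard one-crossing (balayage) argument. Since $p$ is nonincreasing and $\int_0^t p(x)\,dx = s = \int_0^t u(x)\,dx$, we must have $p(0) \geq s/t$ and $p(t) \leq s/t$: if instead $p(0) < s/t$, monotonicity would force $p < s/t$ throughout $[0,t]$, contradicting the equality of integrals, and symmetrically for $p(t)$. Combined with the fact that $u$ is constant and $p$ is nonincreasing, this produces a crossing point $c \in [0,t]$ with $p(x) \geq u(x)$ on $[0,c]$ and $p(x) \leq u(x)$ on $[c,t]$.

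Given this sign pattern, I split the integral of $x^2(p-u)$ at $c$ and exploit the monotonicity of $x^2$: on $[0,c]$ we have $x^2 \leq c^2$ and $p-u \geq 0$, so $x^2(p-u) \leq c^2(p-u)$; on $[c,t]$ we have $x^2 \geq c^2$ and $p-u \leq 0$, so again $x^2(p-u) \leq c^2(p-u)$. Summing yields
\[
\int_0^t x^2\bigl(p(x)-u(x)\bigr)\,dx \;\leq\; c^2 \int_0^t \bigl(p(x)-u(x)\bigr)\,dx \;=\; 0,
\]
which is exactly the claim. The only step requiring care is establishing the single crossing $c$ rigorously (handling the edge cases where $p \equiv u$ or where $p$ equals $u$ on an entire subinterval, which are both easy); the rest is an immediate rearrangement inequality. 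This is the main, and essentially only, technical obstacle.
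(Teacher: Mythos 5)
Your proposal is correct and takes essentially the same approach as the paper: identify the uniform density $u=(s/t)\mathds{1}_{[0,t]}$ as the extremizer, locate the single crossing point with a general nonincreasing $p$, and compare $x^2(p-u)$ against $c^2(p-u)$ on either side of the crossing. You make the comparison step slightly more explicit than the paper does (which just invokes $x\leq y$ across the crossing), and you correctly note that logconcavity is superfluous here, but the argument is the same.
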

\begin{proof}
    For any $p\in\mathcal{P}$, we denote $M_2(p)=\int_0^t x^2p(x)\,dx$. Define $u(x) = \frac{s}{t} \cdot \mathds{1}_{x\in[0,t]}$. Clearly, $u\in\mathcal{P}$. Then for any $p\in\mathcal{P}$, because it is monotonically decreasing and $\int_0^t p(x)\,dx = \int_0^t u(x)\,dx$, the graphs of $p$ and $u$ intersect at point $l\in[0,t]$. Also $p(x)\geq u(x)$ for $x\in[0,l]$ and $p(x)\leq u(x)$ for $x\in[l,t]$. Since for any $x\in[0,l]$ and any $y\in[l,t]$, we have $x\leq y$. So we know
    \[
    M_2(p)-M_2(u) = \int_0^l x^2(p(x)-u(x))\,dx - \int_l^t x^2(u(x)-p(x))\leq 0
    \]
    By calculating $M_2(u)$, we have
    \[
    \max_{p\in \mathcal{P}}M_2(p)
    = M_2(u) = \int_0^t \frac{s}{t}x^2\,dx
    = \frac{s}{t}\frac{1}{3}t^3 = \frac{st^2}{3}
    \]
    
    \end{proof}

    Using Lemma~\ref{lemma_variance_ratio} and Lemma~\ref{lemma_g_gap_small_eps}, we will show the positive gap $\mr(0)-\mr(b)$ for any $b$ satisfying $\int_{-b}^b q(x)\,dx \geq \eps$ as in Lemma~\ref{lemma_secondmoment_logconcave_1dim_gap}.

    \begin{lemma}[Gap for Log-concave Distribution]\label{lemma_secondmoment_logconcave_1dim_gap}
    Let $0<\eps<0.1$, let $b>0$ satisfying $\int_{0}^b q(x)\,dx \geq \eps/2$.
    Then we have for $\mr(0)-\mr(b) \geq \eps/200$.
    \end{lemma}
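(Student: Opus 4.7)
The plan is to reduce to Lemma~\ref{lemma_g_gap_small_eps} via a single intermediate truncation point, and then push the bound outward to $b$ using the monotonicity of the moment ratio from Lemma~\ref{lemma_variance_ratio}. The reason we cannot just apply Lemma~\ref{lemma_g_gap_small_eps} directly at $t=b$ is that $\int_0^b q(x)\,dx$ may exceed the $0.4$ threshold in its hypothesis, so we first locate an earlier cutoff where the hypothesis holds.

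First, since $\eps < 0.1$ gives $\eps/2 < 0.05 \le 0.4$, and since $t \mapsto \int_0^t q(x)\,dx$ is continuous with value $0$ at $t=0$ and value at least $\eps/2$ at $t=b$, the intermediate value theorem produces some $t_0 \in (0,b]$ with $\int_0^{t_0} q(x)\,dx = \eps/2$. Applying Lemma~\ref{lemma_g_gap_small_eps} at $t_0$ (its hypothesis is satisfied since $\eps/2 \le 0.4$) yields
\[
\mr(0) - \mr(t_0) \;\geq\; \frac{\int_0^{t_0} q(x)\,dx}{100} \;=\; \frac{\eps}{200}.
\]
Next, the symmetric logconcave density $q$ restricted to $[0,\infty)$ is a logconcave function on nonnegative support, so Lemma~\ref{lemma_variance_ratio} applies and $\mr(\cdot)$ is strictly decreasing. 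Since $t_0 \le b$, this gives $\mr(t_0) \ge \mr(b)$, and therefore
\[
\mr(0) - \mr(b) \;\geq\; \mr(0) - \mr(t_0) \;\geq\; \frac{\eps}{200},
\]
which is the claim.

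The main obstacle was really in proving the two inputs Lemma~\ref{lemma_g_gap_small_eps} and Lemma~\ref{lemma_variance_ratio}; given those, the present lemma is just a two-step combination. The only minor point to check is that the hypothesis of Lemma~\ref{lemma_variance_ratio} (logconcavity on nonnegative support) is met by $q$ viewed on $[0,\infty)$, which is immediate from symmetry and logconcavity of $q$ on $\R$, together with the fact that $\mr$ as defined through the unnormalized moments $M_k(t)$ is scale-invariant in $q$.
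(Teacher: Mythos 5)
Your proof is correct and is essentially identical to the paper's: both choose the cutoff $t_0$ (the paper calls it $b_0$) where $\int_0^{t_0} q = \eps/2$, apply Lemma~\ref{lemma_g_gap_small_eps} there, and then use the monotonicity of $\mr$ from Lemma~\ref{lemma_variance_ratio} to push the bound out to $b$. Your remarks about the intermediate value theorem and about $q$ restricted to $[0,\infty)$ being logconcave are just slightly more explicit versions of steps the paper leaves implicit.
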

    \begin{proof}
        Let $b_0$ such that $\int_0^{b_0}q(x)\,dx = \eps/2$. By Lemma~\ref{lemma_g_gap_small_eps}, we know
        \[
        \mr(0)-\mr(b_0) \geq \frac{\int_0^{b_0}q(x)\,dx}{100} \geq \frac{\eps}{200}
        \]
         By Lemma~\ref{lemma_variance_ratio}, we know $\mr'(t) \leq 0,\forall t \geq 0$. So for any $b>0$ such that $\int_0^b q(x)\,dx \geq \eps/2$, $\mr(b)\leq \mr(a_0)$. Therefore,
         \[
         \mr(0)-\mr(b) \geq \mr(0)-\mr(b_0) \geq \frac{\eps}{200}
         \]
    \end{proof}

   Before moving on to the proof of the quantitative lemma, we will first present a helper lemma that can be directly applied.
    \begin{lemma}\label{lemma_cov_sym_S_gap}
        Define
        \begin{align}
        \label{eq:S_alpha_def}
            S(\alpha):= \mathop{\E}\limits_{x\sim \hat{q}}e^{\alpha x^2}x^2 \mathop{\E}\limits_{x\sim q}e^{\alpha x^2} - \mathop{\E}\limits_{x\sim q}e^{\alpha x^2}x^2 \mathop{\E}\limits_{x\sim \hat{q}}e^{\alpha x^2}
        \end{align} 
        For a given $\alpha = -c\eps^s$ with $s \geq 2$ and a certain positive constant $c$, it can be established that 
        \[
        S(\alpha) > C\eps^{s+1},\quad \text{ where }C\text{ is a positive constant.}
        \]
    \end{lemma}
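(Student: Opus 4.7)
The plan is a Taylor expansion of $S(\alpha)$ around $\alpha=0$, driven by the quantitative moment-ratio monotonicity in Lemma~\ref{lemma_secondmoment_logconcave_1dim_gap}.

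First I would compute $S(0)$ and $S'(0)$ in closed form. Since $q$ and $\hat q$ are both isotropic, $S(0)=1\cdot 1-1\cdot 1=0$. Differentiating under the integral via $\tfrac{d}{d\alpha}e^{\alpha x^2}=x^2 e^{\alpha x^2}$ and applying the product rule gives $S'(0)=\E_{\hat q}x^4-\E_q x^4$. The change of variables $y=x\sigma_1$ (exactly as in the qualitative proof of Lemma~\ref{lemma_contrastive_covariance_qual}) turns $\E_{\hat q}x^4$ into $M_0(b)M_4(b)/M_2^2(b)=\mr(b)+1$, and $\E_q x^4=\mr(0)+1$ by the isotropy of $q$. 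Therefore $S'(0)=\mr(b)-\mr(0)$. The symmetric margin condition $q([-b,b])\ge\eps$ gives $\int_0^b q(x)\,dx\ge\eps/2$ by symmetry of $q$, so Lemma~\ref{lemma_secondmoment_logconcave_1dim_gap} yields $S'(0)\le-\eps/200$.

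Next I would prove the uniform bound $|S''(\alpha)|=O(1)$ for all $\alpha\le 0$. The product rule expresses $S''$ as a linear combination of six terms, each a product of two factors of the form $\E_{\hat q}e^{\alpha x^2}x^k$ or $\E_q e^{\alpha x^2}x^k$ for $k\in\{0,2,4,6\}$. For $\alpha\le 0$ we have $0<e^{\alpha x^2}\le 1$, so each such factor is dominated by the corresponding untruncated moment. The $q$-moments are $O(1)$ by Lemma~\ref{lemma_logconcave_moments}. The key observation for $\hat q$ is that doubling the density of $\hat q$ restricted to $[b/\sigma_1,\infty)$ produces a one-dimensional \emph{logconcave} probability measure whose $x^k$ moments equal those of $\hat q$ for even $k$, and whose second moment is $1$ by the symmetry and isotropy of $\hat q$; applying Lemma~\ref{lemma_logconcave_moments} to this truncated density bounds $\E_{\hat q}x^k$ by an absolute constant for each fixed $k\le 6$. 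Consequently $|S''(\alpha)|=O(1)$, independent of $\eps$.

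Finally I would invoke Taylor's theorem: $S(\alpha)=\alpha S'(0)+\tfrac{\alpha^2}{2}S''(\alpha')$ for some $\alpha'\in[\alpha,0]$. Setting $\alpha=-c\eps^s$ with $s\ge 2$, the linear contribution is at least $c\eps^{s+1}/200$ while the quadratic remainder has magnitude $O(c^2\eps^{2s})$. The ratio (quadratic)/(linear) is $O(c\eps^{s-1})=O(c\eps)$, which is at most $1/2$ once $c$ is chosen small enough (as an $\eps$-independent constant). Thus $S(\alpha)\ge c\eps^{s+1}/400$, giving the claim with $C=c/400$.

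The main obstacle is the uniform $O(1)$ bound on $|S''|$, i.e.\ ensuring that moments of $\hat q$ up to order six do not blow up as $\eps\to 0$; the logconcavity of the one-sided piece of $\hat q$ combined with its unit second moment does the job. Once that bound is in place, the assumption $s\ge 2$ makes the quadratic Taylor error strictly subleading in $\eps$, and the proof reduces to a clean first-order expansion pinned down by the monotonicity of $\mr$.
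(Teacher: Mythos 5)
Your proposal is correct and follows essentially the same route as the paper: compute $S(0)=0$, identify $S'(0)=\mr(b)-\mr(0)$ and bound it via Lemma~\ref{lemma_secondmoment_logconcave_1dim_gap}, control $S''$ on $(-\infty,0]$, and Taylor-expand. The one genuine (and welcome) refinement is your uniform bound $|S''(\alpha)|=O(1)$ via the observation that the one-sided half of $\hat q$ is a logconcave density with unit second moment; the paper settles for the cruder $S''(\alpha)\ge -O(1/\eps)$, which still suffices because $s\ge 2$ makes the quadratic term subleading.
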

    \begin{proof}
    We show the lower bound of $S(\alpha)$ using Taylor expansion.
    
        Firstly, since $q$ and $\hat{q}$ are both isotropic, 
        \begin{align*}
            S(0)= \mathop{\E}\limits_{x\sim \hat{q}}x^2 -\mathop{\E}\limits_{x\sim q}x^2   =0
        \end{align*}

        Secondly, we will lower bound $|S'(0)|$ using the monotonicity of moment ratio.
        The variance $\sigma_1^2$ of $q$ restricted to $\R\backslash [-b,b]$ is
    \begin{align*}
    \sigma_1^2
    = \frac{\int_b^\infty x^2p(x)\,dx}{\int_b^\infty p(x)\,dx}
    =  \frac{M_2(b)}{M_0(b)}
    \end{align*}
    Since $\hat{q}$ is isotropic,
    the density on the support $\R\backslash [-b/\sigma_1,b/\sigma_1]$ is
    \[
    \mathbb{P}_{\hat{q}}(x) 
    =\frac{\sigma_1 q(x\sigma_1)}{2\int_b^\infty q(x)\,dx}
    \]
         By calculation, we have
    \begin{align*}
        S(\alpha)
        =& \frac{2\int_{b/\sigma_1}^\infty e^{\alpha x^2}x^2 \sigma_1 p(x\sigma_1)\,dx}{2\int_b^\infty q(x)\,dx}
        \cdot 2\int_0^\infty e^{\alpha x^2}q(x)\,dx
        - \frac{2\int_{b/\sigma_1}^\infty e^{\alpha x^2}\sigma_1 p(x\sigma_1)\,dx}{2\int_b^\infty q(x)\,dx}
        \cdot 2\int_0^\infty e^{\alpha x^2}x^2q(x)\,dx\\
        =& \frac{2\int_b^\infty e^{\alpha y^2/\sigma_1^2 } y^2/\sigma_1^2 q(y)\,dy}{\int_b^\infty q(x)\,dx}\cdot \int_0^\infty e^{\alpha x^2}q(x)\,dx
        - \frac{2\int_b^\infty e^{\alpha y^2/\sigma_1^2}q(y)\,dy}{\int_b^\infty q(x)\,dx}\cdot \int_0^\infty e^{\alpha x^2}x^2q(x)\,dx\\
        =& \frac{\frac{2}{\sigma_1^2} \int_b^\infty e^{\alpha y^2/\sigma_1^2}y^2q(y)\,dy \cdot \int_0^\infty e^{\alpha x^2}q(x)\,dx - 2\int_b^\infty e^{\alpha y^2/\sigma_1^2}q(y)\,dy\cdot \int_0^\infty e^{\alpha x^2}x^2q(x)\,dx}
        {\int_b^\infty q(x)\,dx}
        \end{align*}
        Then we can compute $S'(0)$ as
        \begin{align*}
        S'(0)
            =& \frac{2}
        {\sigma_1^2\int_b^\infty q(x)\,dx}
        \left(
        \frac{1}{\sigma_1^2} \int_b^\infty y^4q(y)\,dy\cdot \int_0^\infty q(x)\,dx+\int_b^\infty y^2q(y)\,dy \cdot\int_0^\infty x^2q(x)\,dx
        \right)\\
        &-\frac{2}{\int_b^\infty q(x)\,dx}
        \left(
        \frac{1}{\sigma_1^2}\int_b^\infty y^2q(y)\,dy \cdot \int_0^\infty x^2q(x)\,dx
        + \int_b^\infty q(y)\,dy \cdot \int_0^\infty x^4q(x)\,dx
        \right)\\
        =& 
        \frac{2M_0(b)M_4(b)M_0(0)}{M_2^2(b)}
        -2M_4(0)
        \\
        =& 
    \frac{M_4(b)M_0(b)}{M_2^2(b)}
    - \frac{M_4(0) M_0(0)}{M_2^2(0)}\\
    = & \mr(b)-\mr(0)
    \end{align*}
    The last step is because the $q$ is isotropic. By Lemma~\ref{lemma_secondmoment_logconcave_1dim_gap}, $\mr(0)-\mr(b) \geq \eps/200$. This indicates that 
    \begin{align*}
        S'(0) \leq -\eps/200
    \end{align*}

    Next, we can upper bound $S''(\alpha)$ for any $\alpha\leq 0$ as 
    \begin{align*}
    S''(\alpha)
    =&\frac{2}{\int_b^\infty p(x)\,dx}
    \Bigg(
    \frac{1}{\sigma_1^6}\int_b^\infty e^{\alpha y^2/\sigma_1^2}y^6q(y)\,dy \cdot \int_0^\infty e^{\alpha x^2}q(x)\,dx\\
    &+ \frac{1}{\sigma_1^4} \int_b^\infty e^{\alpha y^2/\sigma_1^2}y^4q(y)\,dy
    \cdot \int_0^\infty e^{\alpha x^2}x^2q(x)\,dx\\
    &- \frac{1}{\sigma_1^2}\int_b^\infty e^{\alpha y^2/\sigma_1^2}y^2q(y)\,dy
    \cdot \int_0^\infty e^{\alpha x^2} x^4q(x)\,dx\\
    &-\int_b^\infty e^{\alpha y^2/\sigma_1^2}q(y)\,dy\cdot \int_0^\infty e^{\alpha x^2}x^6q(x)\,dx
    \Bigg)\\
    \geq & -\frac{2}{M_0(b)}\left( 
    \frac{M_2(b)M_4(0)}{M_2(0)}
    + M_0(b)M_6(0)
    \right)\\
    \geq & - \frac{2}{M_0(b)}\left(M_4(0)+M_0(0)M_6(0)\right)
    \end{align*}
    By Cauchy-Schwarz Inequality, $M_1(0)\leq \sqrt{M_2(0)M_0(0)} = 1/2$. By Lemma~\ref{lemma_logconcave_moments}, 
    \[
    M_k(0) \leq (2k)^k (2M_1(0))^k /2\leq (2k)^k/2
    \]
    Since $M_0(b) \geq \eps$, for some positive constant $c_{sec}$,
    \[
    S''(\alpha) \geq - \frac{c_{sec}}{\eps}
    \]
    By Taylor expansion, we know for $\alpha_3 = -c\eps^s,s\geq 2,c=1/(101c_{sec})$, there exists $\alpha'\in[\alpha,0]$ such that for some constant $C>0$,
    \begin{align*}
        S(\alpha) =& S(0) +\alpha S'(0) + \frac{\alpha^2}{2}S''(\alpha')
        \geq  0 + c\eps^s
        \frac{\eps}{200} - \frac{c^2\eps^{2s}}{2} \frac{c_{sec}}{\eps }
        >C \eps^{s+1}
    \end{align*}
    \end{proof}

    Now we are ready to prove the contrastive covariance lemma (Lemma~\ref{lemma_reweighted_cov_sym}).
    \begin{proof}[Proof of Lemma~\ref{lemma_reweighted_cov_sym}]
    Define
    \begin{align*}
        S(\alpha):= \mathop{\E}\limits_{x\sim \hat{q}}e^{\alpha x^2}x^2 \mathop{\E}\limits_{x\sim q}e^{\alpha x^2} - \mathop{\E}\limits_{x\sim q}e^{\alpha x^2}x^2 \mathop{\E}\limits_{x\sim \hat{q}}e^{\alpha x^2}
    \end{align*}
    Then for $2\leq j\leq d$,
    \begin{align*}
        &\E\limits_{x\sim {P}}e^{\alpha \|x\|^2}x_1^2 -  \E\limits_{x\sim {P}}e^{\alpha \|x\|^2}x_j^2\\
        =& \left(
        \E\limits_{x_1\sim \hat{q}}e^{\alpha x_1^2}x_1^2 \E\limits_{x_2\sim q}e^{\alpha x_2^2}
        - \E\limits_{x_2\sim \hat{q}}e^{\alpha x_2^2}x_1^2 \E\limits_{x_1\sim q}e^{\alpha x_1^2}
        \right)
        \left(\E\limits_{x\sim q} e^{\alpha x^2}\right)^{d-2}\\
        =& S(\alpha ) \frac{\E\limits_{x\sim {P}}e^{\alpha \|x\|^2} x_1^2}
        {\E\limits_{x_1\sim \hat{q}}e^{\alpha x_1^2}x_1^2 \E\limits_{x_2\sim q}e^{\alpha x_2^2}}
    \end{align*}
    Since $\alpha_3<0$, we have
    \[
    \E\limits_{x_1\sim \hat{q}}e^{\alpha_3 x_1^2}x_1^2\leq \E\limits_{x_1\sim \hat{q}} x_1^2=1,\quad
    \E\limits_{x_2\sim q}e^{\alpha_3 x_2^2} \leq 1
    \]
    By Lemma~\ref{lemma_cov_sym_S_gap}, $\alpha_3 = c_3 \eps^2$ implies that
    $S(\alpha_3)>C\eps^3$. So we have
    \[
    \E\limits_{x\sim {P}}e^{\alpha \|x\|^2}x_1^2 -  \E\limits_{x\sim {P}}e^{\alpha \|x\|^2}x_j^2
    \geq C\eps^3 \E\limits_{x\sim {P}}e^{\alpha \|x\|^2}x_1^2
    \]
    Finally we will show that the first eigenvector corresponds to $e_1$.
    For any $v\in\R$, define $\phi(v)$ as
    \[
    \phi(v):=\mathop{\E}\limits_{x\sim {P}} \frac{e^{\alpha_3 \|x\|^2}v^\top xx^\top v}{v^\top v}
    \]
    Then we know for $2\leq j\leq d$, 
    \[
    \phi(e_1)-\phi(e_j) > C\eps^3\phi(e_1)
    \]
    For any vector $v=\sum_{i=1}^d \gamma_ie_i$, we have
    \begin{align*}
    \phi(v) =& \frac{1}{\sum_{i=1}^d \gamma_i^2}\E e^{\alpha_3 \|x\|^2}(\sum_{i=1}^d \gamma_i x_i)^2\\
    =& \frac{1}{\sum_{i=1}^d \gamma_i^2}
    \left(
    \sum_{i=1}^d \gamma_i^2 \phi(e_i) + 2\E e^{\alpha\|x\|^2}\sum_{i\neq j}\gamma_i\gamma_j x_i x_j
    \right)\\
    =& \frac{1}{\sum_{i=1}^d \gamma_i^2}
    \left(
    \sum_{i=1}^d \gamma_i^2 \phi(e_i) 
    + 2\sum_{i\neq j}\gamma_i\gamma_j \E e^{\alpha_3 \sum_{k\neq i,j} x_k^2}
    \E e^{\alpha_3 \left<x,e_i\right>^2}x_1
    \E e^{\alpha \left<x,e_j\right>^2}x_j
    \right)\\
    =& \frac{\sum_{i=1}^d \gamma_i^2 \phi(e_i)}{\sum_{i=1}^d \gamma_i^2}\\
    \leq & \phi(e_1)
    \end{align*}
        This shows that the top eigenvalue of $\tilde{\Sigma}$ is $\lambda_1=\max_v \phi(v)=\phi(e_1)$. In other word, the top eigenvector is $e_1$, which is essentially $u$.
        Similarly the second eigenvalue of $\tilde{\Sigma}$ is $\lambda_2 =\max_{v:v\bot e_1}\phi(v) = \phi(e_j),2\leq j\leq d$. So we get
    \[
    \lambda_1-\lambda_2 > C\eps^3\lambda_1
    \]
    
    \end{proof}

\subsubsection{Quantitative Bounds for Contrastive Covariance: Near-Symmetric Case}
\label{sec:proof_covariance_almost_sym}
We have shown the result for symmetric case $a+b=0$ in Section~\ref{sec:proof_convariance_sym}. Here we will show that we can extend the contrastive covariance lemma (Lemma~\ref{lemma_reweighted_cov_sym}) to the near-symmetric case, where $|a+b|<\eps^{5}$. In this section, we will present the proof of Lemma~\ref{lemma_reweighted_cov}, which addresses the nearly symmetric case quantitatively.
The proof idea is to approximate the re-weighted covariance of the distribution with margin $[a,b]$, by comparing it to the same distribution truncated with the symmetric interval $[-b,b]$. This enables us to generalize the result from the symmetric scenario to the near-symmetric scenario. 

Recall that $\tilde{q}$ is the distribution obtained by restricting $q$ to the set $\R\backslash [a,b]$. We denote $\tilde{r}$ as the distribution that is obtained by restricting $q$ to the set $\R\backslash [-b,b]$, and $\hat{r}$ as the isotropized distribution of $\tilde{r}$. Let $\sigma_2^2$ be the variance of $\tilde{r}$. 


To approximate the characteristics of $\hat{q}$ using those of $\hat{r}$, we undertake the subsequent steps.
\begin{itemize}
    \item Assessing the mean. We illustrate that the mean of $\tilde{q}$ is adequately small in Lemma~\ref{lemma_almost_sym_mu_range}.
    \item Variance approximation. We approximate the variance of $\hat{q}$ by using the variance of $\hat{r}$, as elaborated in Lemma~\ref{lemma_almost_sym_var_bound}.
    \item Re-weighted second moment. We use the re-weighted second moment of $\hat{r}$ to approximate the corresponding moment in $\hat{q}$. The details are provided in Lemma~\ref{lemma_almost_sym_approx_reweighted_second_moment}.
    \item Re-weighted zeroth moment. We use the re-weighted zero moment of $\hat{r}$ to approximate the corresponding moment in $\hat{q}$, which is shown in Lemma~\ref{lemma_almost_sym_approximation_reweighted_Zero_moment}.
\end{itemize}

\begin{lemma}
\label{lemma_almost_sym_mu_range}
    For an integer $s\geq 1$, if $|a+b|<\eps^{s+1}$, then
    \begin{enumerate}
        \item[(1)]the mean of $\tilde{q}$, $|\mu_1|<\eps^{s} \ln(1/\eps)$;
        \item[(2)] a<0.
    \end{enumerate}
\end{lemma}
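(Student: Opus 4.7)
The plan is to exploit the symmetry of $q$ to express the numerator of $\mu_1$ as an integral of size $O(|a+b|)$, and to use the wlog assumption $|b|>|a|$ together with the margin condition to rule out $a\geq 0$.

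For part (2), suppose for contradiction that $a\geq 0$. The wlog assumption $|b|>|a|$ forces $b>a\geq 0$, and the hypothesis $q([a,b])\geq \eps$ combined with $q\leq 1$ (Lemma~\ref{lemma_logconcave_density_upper_bound}) gives $b-a\geq \eps$. Adding these yields $a+b \geq 2a + \eps \geq \eps$, which contradicts $|a+b| < \eps^{s+1} \leq \eps^2 < \eps$ for $s\geq 1$ and $\eps<1$. Hence $a<0$.

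For part (1), write $\mu_1$ as the ratio $\int_{\R\setminus[a,b]} x\,q(x)\,dx \big/ \int_{\R\setminus[a,b]} q(x)\,dx$. By symmetry of $q$, $\int_\R x\,q(x)\,dx=0$, so the numerator equals $-\int_a^b x\,q(x)\,dx$. Setting $\delta := a+b$, the interval $[a,b]$ differs from the symmetric interval $[-b,b]$ only on the sliver between $-b$ and $a=-b+\delta$, and $\int_{-b}^b x\,q(x)\,dx=0$ by symmetry; therefore $\left|\int_a^b x\,q(x)\,dx\right| \leq |\delta|(b+|\delta|)$ using $|x|\leq b+|\delta|$ on the sliver and $q\leq 1$. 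The denominator is at least $2\eps$ from the conditions in Definition~\ref{def_1}, while Lemma~\ref{lemma_logconcave_tail} applied to the isotropic $q$ (together with $q([b,\infty))\geq \eps$) gives $b\leq 1+\ln(1/\eps)$ (trivially if $b<1$). Combining these bounds, $|\mu_1|\leq \eps^{s+1}(b+1)/(2\eps) \leq \eps^s(2+\ln(1/\eps))/2 < \eps^s\ln(1/\eps)$ for $\eps$ sufficiently small (e.g.\ $\eps<e^{-2}$).

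I anticipate no substantive obstacle. The only cosmetic care needed is absorbing the additive constant in $b+1 \leq 2+\ln(1/\eps)$ into the logarithmic factor, which is automatic once $\eps$ is small enough; this is the regime of interest since the subsequent quantitative analysis of the contrastive covariance already assumes the small-$\eps$ regime.
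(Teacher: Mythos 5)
Your proposal is correct and takes essentially the same approach as the paper: for part (1) you use the symmetry of $q$ to collapse the numerator to an integral over an interval of length $|a+b|$, lower bound the denominator by $2\eps$ from the margin conditions, and control $b$ via the logconcave tail bound, while for part (2) you contradict the margin mass condition. The only cosmetic difference is that you cancel against the outer symmetric interval $[-b,b]$ whereas the paper cancels the inner interval $[a,-a]$, leaving $\int_{|a|}^b xq(x)\,dx$; both give $\lvert\int_a^b xq(x)\,dx\rvert \lesssim b(a+b)$.
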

\begin{proof}
    We first consider the case when $a<0$.
    By Lemma~\ref{lemma_logconcave_mean}, we have
    \[
    |\mu_1|  = \frac{\int_{|a|}^b xq(x)\,dx}{1-\int_a^b q(x)\,dx} 
    \leq \frac{b \int_{|a|}^b q(x)\,dx}{\int_{-\infty}^a q(x)\,dx + \int_b^\infty q(x)\,dx}
    \leq \frac{b(b+a)}{2\eps}
    \]
    By the tail bound of logconcave distributions (Lemma~\ref{lemma_logconcave_tail}), 
    \[
    |b| < 1+\ln \frac{1}{\eps}
    \]
    Then we have
    \[
    |\mu_1| < \frac{(1+\ln \frac{1}{\eps})\eps^{s+1}}{2\eps} < \eps^{s}\ln(\frac{1}{\eps})
    \]
    Next for $a\geq 0$, we can bound $b$ by $\eps^{s+1}/2$ because $|b|\geq |a|$. This implies that $b-a \leq \eps^{s+1}/2$, which leads to a contradiction that $\int_a^b q(x)\,dx \geq \eps$. Therefore, $a$ can only be negative in this scenario.
    
\end{proof}


\begin{lemma}
\label{lemma_almost_sym_var_bound}
    For $|\mu_1|\leq r$ with $r<1/6$, we can bound the variance as follows.
    \[
    \frac{\sigma_2^2}{1+2er}
    \leq \sigma_1^2
    \leq \sigma_2^2
    \]
\end{lemma}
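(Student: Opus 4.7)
The plan is to exploit a mixture decomposition of $\tilde q$. Since $q$ is symmetric and (by Lemma~\ref{lemma_almost_sym_mu_range}) $a<0<b$ with $|a|<b$, the band $[a,b]$ is contained in $[-b,b]$, and conditional on $|X|\ge b$ both $\tilde q$ and $\tilde r$ reduce to the same distribution, namely $q$ restricted to $|X|\ge b$. Writing $P_b=\int_b^\infty q$, $\Delta=\int_{-b}^a q$, and $\lambda=\Delta/(2P_b+\Delta)$, this yields $\tilde q = (1-\lambda)\,\tilde r + \lambda\, q|_{[-b,a]}$. Let $m^\star\in[|a|,b]$ and $S^\star\in[|a|^2,b^2]$ denote the mean of $|Z|$ and the second moment of $Z$ for $Z\sim q|_{[-b,a]}$. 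Using that $\tilde r$ has mean zero (by symmetry of $q$), the decomposition gives $\mu_1=-\lambda m^\star$ and $\sigma_1^2=(1-\lambda)\sigma_2^2+\lambda S^\star-\lambda^2 (m^\star)^2$.

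For the upper bound $\sigma_1^2\le \sigma_2^2$, I will use that $\tilde r$ is supported on $|x|\ge b$, so $\sigma_2^2\ge b^2\ge S^\star$; hence $(1-\lambda)\sigma_2^2+\lambda S^\star\le \sigma_2^2$, and subtracting $\mu_1^2\ge 0$ gives $\sigma_1^2\le \sigma_2^2-\mu_1^2\le \sigma_2^2$.

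For the lower bound, I drop the nonnegative defect $\lambda S^\star-\lambda^2(m^\star)^2\ge \lambda(1-\lambda)(m^\star)^2\ge 0$ (using $S^\star\ge (m^\star)^2$), so $\sigma_1^2\ge(1-\lambda)\sigma_2^2$, and the goal reduces to the single condition $1-\lambda\ge 1/(1+2er)$, equivalently $\Delta/(2P_b)=\lambda/(1-\lambda)\le 2er$. Substituting $|\mu_1|=\lambda m^\star$ rewrites this as $|\mu_1|/(m^\star-|\mu_1|)\le 2er$, i.e., $m^\star\ge |\mu_1|(1+1/(2er))$. Since $|\mu_1|\le r<1/6$, we have $|\mu_1|/(2er)\le 1/(2e)$, so it suffices to show $m^\star\ge 1/(2e)+|\mu_1|$.

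The main obstacle is establishing this last inequality $m^\star\ge 1/(2e)+|\mu_1|$. Since $m^\star\ge |a|$, I close it by leveraging the near-symmetric regime ($|a+b|$ is tiny compared to $b$) together with isotropy of $q$: the isotropic normalization ($q(0)\le 1$ together with unit second moment) and the mass-$\eps$ constraints on $[a,b]$ and on each tail force $b$ to exceed an absolute constant, and then $|a|\ge b-|a+b|$ transfers this lower bound to $|a|$ and hence to $m^\star$. The threshold $r<1/6$ in the hypothesis is calibrated precisely so that $1/(2e)+r$ fits under this absolute lower bound on $|a|$; this balancing of constants is the delicate part of the argument.
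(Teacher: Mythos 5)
Your mixture decomposition $\tilde q = (1-\lambda)\tilde r + \lambda\, q|_{[-b,a]}$ is correct and gives a cleaner derivation of the exact variance identity $\sigma_1^2=(1-\lambda)\sigma_2^2+\lambda S^\star-\lambda^2(m^\star)^2$ than the paper's direct manipulation of integral quotients; your upper bound $\sigma_1^2\le\sigma_2^2$ via $S^\star\le b^2\le\sigma_2^2$ is also correct. For the lower bound you correctly reduce the target to $\lambda/(1-\lambda)=\Delta/(2P_b)\le 2er$, equivalently $m^\star\ge|\mu_1|\bigl(1+1/(2er)\bigr)$, and observe that $m^\star\ge \tfrac{1}{2e}+|\mu_1|$ would suffice. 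This is exactly the same crux the paper faces: in the paper's notation the needed fact is that the conditional mean $\nu=\E[X\mid X\in[|a|,b]]$ (which equals your $m^\star$) is at least an absolute constant.

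The gap is in your claimed proof that $m^\star\ge \tfrac{1}{2e}+|\mu_1|$. You assert that isotropy plus the mass-$\eps$ constraints force $b$ (hence $|a|=b-|a+b|$, hence $m^\star$) to exceed an \emph{absolute} constant. This is false. Take $q$ uniform on $[-\sqrt 3,\sqrt 3]$, which is isotropic symmetric logconcave, and a symmetric band of mass $\eps$: then $b=\sqrt 3\,\eps$, and $m^\star$ lies in $[|a|,b]$, so $m^\star=\Theta(\eps)$, which is far below $1/(2e)\approx 0.18$ for small $\eps$. There is no $\eps$-free lower bound on $b$ — the density bound $q\le 1$ only gives $b-a\ge\eps$. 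Consequently the step "$m^\star\ge 1/(2e)+|\mu_1|$" cannot be established by your route, and your claim that the threshold $r<1/6$ is "calibrated" against an absolute lower bound on $|a|$ rests on the same false premise. (For what it is worth, the paper's own proof invokes Lemma~\ref{lemma_logconcave_mean} to assert $\int_{-a}^b xq>\tfrac{1}{e}\int_{-a}^b q$, i.e.\ $\nu\ge 1/e$, which likewise does not follow from that lemma when the interval $[|a|,b]$ sits close to the origin; what actually saves the downstream application in Section~\ref{sec:proof_covariance_almost_sym} is that there $|\mu_1|$ is much smaller than $r=\eps^3$, making $\lambda$ polynomially tiny, rather than any absolute lower bound on $m^\star$.) So the decomposition and reduction are sound, but the final closing step is a genuine gap.
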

\begin{proof}
    By Lemma~\ref{lemma_almost_sym_mu_range}, we know $a<0$. We can calculate the variance as
    \[
    \sigma_2^2
     = \frac{\int_b^\infty x^2q(x)\,dx}{\int_b^\infty q(x)\,dx},\quad
    \sigma_1^2
    = \frac{\int_b^\infty x^2q(x)\,dx
    + \frac{1}{2}\int_{-a}^b x^2 q(x)\,dx}
    {\int_b^\infty q(x)\,dx
    + \frac{1}{2}\int_{-a}^b q(x)\,dx
    }
    \]
    On one hand,
    \begin{align*}
        \frac{\int_{-a}^b x^2q(x)\,dx}{\int_b^\infty x^2q(x)\,dx}
        \leq 
        \frac{b^2\int_{-a}^b q(x)\,dx}
        {b^2 \int_b^\infty q(x)\,dx}
        = \frac{\int_{-a}^b q(x)\,dx}
        {\int_b^\infty q(x)\,dx}
    \end{align*}
    So we have
    \begin{align*}
        \frac{\sigma_1^2}{\sigma_2^2}
        =& 
        \frac{
        1 + \frac{\int_{-a}^b x^2q(x)\,dx}{2\int_b^\infty x^2q(x)\,dx}
        }
        {1 +  \frac{\int_{-a}^b q(x)\,dx}{2\int_b^\infty q(x)
        \,dx}
        }
        \leq 1
    \end{align*}
    On the other hand, since $|\mu_1| \leq r$, by  Lemma~\ref{lemma_logconcave_mean},
    \[
    r\geq
    |\mu_1| = 
    \frac{\int_{-a}^b xq(x)\,dx}{
    \int_{-a}^b q(x)\,dx + 2
    \int_b^\infty q(x)\,dx }
    > 
    \frac{
    \frac{1}{e}\int_{-a}^b q(x)\,dx
    }{
    \int_{-a}^b q(x)\,dx + 
    2\int_b^\infty q(x)\,dx }
    \]
    So we have
    \[
    \frac{\sigma_2^2}{\sigma_1^2}
    = \frac{
    1 +  \frac{\int_{-a}^b q(x)\,dx}{2\int_b^\infty q(x)
        \,dx}
    }
    {1 + \frac{\int_{-a}^b x^2q(x)\,dx}{2\int_b^\infty x^2q(x)\,dx}}
    \leq 1 + \frac{\int_{-a}^b q(x)\,dx}{2\int_b^\infty q(x)
        \,dx}
    \leq 1 + \frac{er}{1-e r}
    < 1+2er
    \]
    
\end{proof}

\begin{lemma}
    \label{lemma_almost_sym_mono_var}
    The variance $\sigma_2^2$ is monotonically increasing with respect to $b$. Furthermore, $\sigma_2^2\geq 1$.
\end{lemma}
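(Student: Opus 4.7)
The plan is to write $\sigma_2^2$ explicitly and differentiate. Since $q$ is symmetric, the restriction $\tilde r$ of $q$ to $\R\setminus[-b,b]$ is also symmetric about $0$, so its mean vanishes and
\[
\sigma_2^2 \;=\; \E_{\tilde r}[X^2] \;=\; \frac{2\int_b^\infty x^2 q(x)\,dx}{2\int_b^\infty q(x)\,dx} \;=\; \frac{M_2(b)}{M_0(b)},
\]
using the same notation $M_k(b)=\int_b^\infty x^k q(x)\,dx$ as elsewhere in this section.

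For monotonicity, I would differentiate this ratio in $b$. Noting $M_0'(b)=-q(b)$ and $M_2'(b)=-b^2 q(b)$, the quotient rule gives
\[
\frac{d}{db}\sigma_2^2 \;=\; \frac{-b^2 q(b)\, M_0(b) + q(b)\, M_2(b)}{M_0(b)^2} \;=\; \frac{q(b)}{M_0(b)^2}\bigl(M_2(b)-b^2 M_0(b)\bigr).
\]
The bracketed term rewrites as $\int_b^\infty (x^2-b^2) q(x)\,dx$, which is nonnegative since $x\ge b\ge 0$ on the domain, and strictly positive whenever $q$ puts mass beyond $b$. Hence $d\sigma_2^2/db\ge 0$, establishing monotonicity.

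For the lower bound, I would simply evaluate at $b=0$: since $q$ is isotropic and symmetric,
\[
\sigma_2^2\big|_{b=0} \;=\; \frac{M_2(0)}{M_0(0)} \;=\; \frac{1/2}{1/2} \;=\; 1.
\]
Combined with monotonicity in $b$, this yields $\sigma_2^2\ge 1$ for all $b\ge 0$.

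There is no real obstacle here; the only thing to be mildly careful about is noting that $q$'s symmetry and isotropy collapse $\sigma_2^2$ to the clean ratio $M_2(b)/M_0(b)$ (so that no mean-subtraction term appears), and that the inequality $x^2\ge b^2$ on the integration domain is what drives the monotonicity. The computation is elementary once this observation is in place.
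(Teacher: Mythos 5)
Your proof is correct and follows essentially the same approach as the paper: compute $\sigma_2^2 = M_2(b)/M_0(b)$, differentiate, observe the derivative is nonnegative because $\int_b^\infty (x^2-b^2)q(x)\,dx \ge 0$, and then use monotonicity together with the value $1$ at $b=0$ to conclude $\sigma_2^2 \ge 1$.
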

\begin{proof}
    By taking the derivative, 
    \[
    \frac{\partial \sigma_2^2}{\partial b}
    = -\frac{q(b)}{(\int_b^\infty q(x)\,dx)^2}(b^2\int_b^\infty q(x)\,dx - \int_b^\infty x^2q(x)\,dx)>0
    \]
    So for $b>0$,
    \[
    \sigma_2^2 \geq \frac{\int_0^\infty x^2q(x)\,dx}{\int_0^\infty q(x)\,dx}=1
    \]
\end{proof}

\begin{lemma}[Approximation for Re-weighted Second Moment]
\label{lemma_almost_sym_approx_reweighted_second_moment}
    For $|a+b|\leq \eps^{5}$, by choosing $\alpha=-c\eps^2$, then for some constant $C'>0$, we have the following inequalities. 
    \begin{align}
    \label{eq:almost_sym_approx_second_moment_b}
        \int_b^\infty e^{\alpha y^2/\sigma_2^2}y^2q(y)\,dy - C'\eps^{5}
    \leq
    \int_b^\infty e^{\alpha (y-\mu_1)^2/\sigma_1^2}y^2 q(y)\,dy
    \leq 
     \int_b^\infty e^{\alpha y^2/\sigma_2^2}y^2q(y)\,dy
    \end{align}
    \begin{align}
        \label{eq:almost_sym_approx_second_moment_a}
        \int_b^\infty e^{\alpha y^2/\sigma_2^2}y^2q(y)\,dy - C'\eps^5
    \leq
    \int_{-a}^\infty e^{\alpha (y+\mu_1)^2/\sigma_1^2}y^2 q(y)\,dy
    \leq 
    \int_b^\infty e^{\alpha y^2/\sigma_2^2}y^2q(y)\,dy+ C'\eps^5
    \end{align}
\end{lemma}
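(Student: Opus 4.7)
The plan is to pointwise compare the integrands on matching intervals, controlling the exponents via the two preparatory facts $|\mu_1|\le \eps^4\ln(1/\eps)$ (Lemma~\ref{lemma_almost_sym_mu_range} with $s=4$) and $\sigma_2^2-\sigma_1^2\le 2e|\mu_1|\sigma_1^2$ with $\sigma_2^2\ge 1$ (Lemma~\ref{lemma_almost_sym_var_bound} and Lemma~\ref{lemma_almost_sym_mono_var}). Since $\alpha=-c\eps^2$, we have $|\alpha||\mu_1|=O(\eps^6\ln(1/\eps))$, which is within the $\eps^5$ budget the lemma provides.

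For the first inequality~\eqref{eq:almost_sym_approx_second_moment_b}, the upper bound is pointwise. Since $\mu_1<0$ and $y\ge b>0$, we have $(y-\mu_1)^2\ge y^2$, and combined with $\sigma_1^2\le\sigma_2^2$ this yields $(y-\mu_1)^2/\sigma_1^2 \ge y^2/\sigma_2^2$; multiplying by $\alpha<0$ flips the inequality, so $e^{\alpha(y-\mu_1)^2/\sigma_1^2}\le e^{\alpha y^2/\sigma_2^2}$ holds pointwise, giving the upper bound after integrating against $y^2 q(y)$. For the lower bound, I would apply $|e^u-e^v|\le |u-v|$ (valid since $u,v\le 0$), then expand
\[
\left|\frac{(y-\mu_1)^2}{\sigma_1^2}-\frac{y^2}{\sigma_2^2}\right|
\le \frac{(\sigma_2^2-\sigma_1^2) y^2}{\sigma_1^2\sigma_2^2}+\frac{2y|\mu_1|}{\sigma_1^2}+\frac{\mu_1^2}{\sigma_1^2},
\]
multiply by $|\alpha|$, integrate against $y^2 q(y)$ on $[b,\infty)$, and use the logconcave moment bound (Lemma~\ref{lemma_logconcave_moments}) to cap each $\E|X|^k$ by an absolute constant. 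The result is $O(|\alpha||\mu_1|)\le C'\eps^5$.

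For the second inequality~\eqref{eq:almost_sym_approx_second_moment_a} I split the task in two. First, because $|-a-b|=|a+b|<\eps^5$, swapping the integration lower bound from $-a$ to $b$ changes the integral by at most $|a+b|\cdot \max_{y\in[\min(-a,b),\max(-a,b)]} y^2 q(y)$, which is $O(\eps^5\ln^2(1/\eps))$ using the logconcave tail bound on $b$ and $q\le 1$ (Lemma~\ref{lemma_logconcave_density_upper_bound}). Second, once both integrals sit on $[b,\infty)$, the exponent difference $(y+\mu_1)^2/\sigma_1^2 - y^2/\sigma_2^2$ is no longer sign-definite: the variance-mismatch term and the shift term carry opposite signs. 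I will therefore use the \emph{two-sided} $|e^u-e^v|$ bound with the same expansion as above, yielding both an upper and a lower perturbation of size $O(\eps^5)$, which matches the $\pm C'\eps^5$ in the lemma.

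The main obstacle is the careful bookkeeping in the second inequality, where one must simultaneously account for the domain mismatch, the variance mismatch (of size $O(|\mu_1|)$), and the shift-induced cross term, all fitting into a single $C'\eps^5$ budget; log factors from the tail of $q$ and from $|\mu_1|$ must be absorbed into the constant using that $\eps^6\ln^c(1/\eps)\le \eps^5$ for small $\eps$.
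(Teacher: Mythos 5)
Your strategy via the pointwise mean-value bound $\lvert e^{u}-e^{v}\rvert\le\lvert u-v\rvert$ (valid since both exponents are $\le 0$ because $\alpha<0$), followed by expanding the difference of the exponents, is genuinely different from the paper's argument. The paper instead writes the ratio of exponentials as $e^{\alpha y^2/\sigma_1^2}\bigl(e^{\frac{\alpha\mu_1}{\sigma_1^2}(2y+\mu_1)}-1\bigr)$ and, for the shifted integral on $[-a,\infty)$, truncates the domain at $y=6\ln(1/\eps)$ and again at $y=-\mu_1/2$ to control the sign of the inner exponent, then treats the tail separately via exponential decay of $q$. Your route handles all signs uniformly, collapses the paper's \eqref{eq:almost_sym_sigma1_sigma_2}, \eqref{eq:almost_sym_sigma1_sigma_2_apart}, \eqref{eq:almost_sym_sigma1_with_mu1} and the split-integral estimate for \eqref{eq:almost_sym_sigma1_with_mu1_apart} into a single algebraic expansion, and is therefore cleaner; the arithmetic with $\lvert\alpha\rvert\lvert\mu_1\rvert=O(\eps^6\ln(1/\eps))$, $\sigma_2^2-\sigma_1^2\le 2e\lvert\mu_1\rvert\sigma_1^2$, $\sigma_1^2,\sigma_2^2\ge 1$, and the logconcave moment bound gives exactly the $O(\eps^5)$ budget you need for those terms.

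There is, however, a genuine accounting error in your treatment of the domain mismatch for \eqref{eq:almost_sym_approx_second_moment_a}. You bound the contribution from shifting the lower limit between $-a$ and $b$ by $\lvert a+b\rvert\cdot\max_{y} y^2 q(y)$, and estimate the $\max$ via $y^2\le b^2\le(1+\ln(1/\eps))^2$ and $q\le 1$, obtaining $O(\eps^5\ln^2(1/\eps))$. But that quantity is \emph{not} $O(\eps^5)$, and your closing remark ``$\eps^6\ln^c(1/\eps)\le\eps^5$'' does not apply here --- there is no spare factor of $\eps$ to absorb the log, since the domain mismatch is already at the critical scale $\eps^5$. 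The fix is to use a sharper pointwise bound on $y^2 q(y)$: by Lemma~\ref{lemma_logconcave_decayfast}, $q(y)\le q(0)\cdot 2^{-y/3}$ for $y\ge 3$, and $q(0)\le 1$, so $y^2 q(y)\le\max\{9,\max_{y\ge 3}y^2 2^{-y/3}\}$ is bounded by an \emph{absolute} constant, independent of $\eps$. With that, the domain shift contributes $O(\eps^5)$ with a universal constant, and the rest of your proof goes through. (Incidentally, the paper's own chain through \eqref{eq:almost_sym_sigma1_sigma_2_apart} and \eqref{eq:almost_sym_sigma1_with_mu1_apart} lands on $\int_{-a}^\infty e^{\alpha y^2/\sigma_2^2}y^2q(y)\,dy$ and likewise needs, but does not explicitly record, this domain-shift step to get to $\int_b^\infty$; the same uniform bound on $y^2 q(y)$ is what justifies it.)
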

\begin{proof}
    By Lemma~\ref{lemma_almost_sym_mu_range}, we can bound $|\mu_1|$ as
    \[
    |\mu_1| < \eps^{4}\ln\frac{1}{\eps} < \eps^3
    \]
    We begin with showing that $\int_b^\infty e^{\alpha y^2/\sigma_2^2}y^2 q(y)\,dy$ is close to $\int_b^\infty e^{\alpha y^2/\sigma_1^2}y^2 q(y)\,dy$.
    By Lemma~\ref{lemma_almost_sym_var_bound}, 
    \[
    \frac{\sigma_2^2}{1+2e\eps^3} \leq \sigma_1^2 \leq \sigma_2^2
    \]
    Since $\alpha<0$, $\alpha y^2/\sigma_1^2 < \alpha y^2/\sigma_2^2 $ for $y>0$. This implies 
    \[
    \int_b^\infty e^{\alpha y^2/\sigma_1^2}y^2 q(y)\,dy
    \leq \int_b^\infty e^{\alpha y^2/\sigma_2^2}y^2 q(y)\,dy.
    \]
    On the other hand,
    \begin{align*}
        &\int_b^\infty e^{\alpha y^2/\sigma_2^2}y^2 q(y)\,dy
        -\int_b^\infty e^{\alpha y^2/\sigma_1^2}y^2 q(y)\,dy\\
        \leq & 
        \int_b^\infty e^{\alpha y^2/\sigma_2^2}y^2 q(y)\,dy
        -\int_b^\infty e^{\alpha y^2(1+2e\eps^3)/\sigma_2^2}y^2 q(y)\,dy\\
        =& \int_b^\infty
        \left(
        1- e^{2e\alpha y^2\eps^3/\sigma_2^2}
        \right)
        e^{\alpha y^2/\sigma_2^2}y^2 q(y)\,dy\\
        \leq &
        \int_b^\infty 
        \frac{2e|\alpha| y^2\eps^3}{\sigma_2^2} 
        e^{\alpha y^2/\sigma_2^2}y^2 q(y)\,dy\\
        =& \frac{2e|\alpha| \eps^3}{\sigma_2^2}
        \int_b^\infty e^{\alpha y^2/\sigma_2^2}y^4 q(y)\,dy\\
        \leq & 
        \frac{2e|\alpha| \eps^3}{\sigma_2^2}\int_0^\infty y^4 q(y)\,dy\\
        \leq & 
        \frac{2e|\alpha| \eps^3}{\sigma_2^2} * 8^4/2\\
        =& \frac{8^4e c\eps^5}{\sigma_2^2}
    \end{align*}
    The last inequality is implied by Lemma~\ref{lemma_logconcave_moments}. Furthermore, by Lemma~\ref{lemma_almost_sym_mono_var}, $\sigma_2^2\geq 1$. So there exists a constant $c_1>0$ such that
    \begin{align}
    \label{eq:almost_sym_sigma1_sigma_2}
        \int_b^\infty e^{\alpha y^2/\sigma_2^2}y^2 q(y)\,dy
    - c_1\eps^5
    \leq
    \int_b^\infty e^{\alpha y^2/\sigma_1^2}y^2 q(y)\,dy
    \leq \int_b^\infty e^{\alpha y^2/\sigma_2^2}y^2 q(y)\,dy.
    \end{align}
    This also applies for the integral from $-a$ to $\infty$. To be specific,
    \begin{align}
        \label{eq:almost_sym_sigma1_sigma_2_apart}
        \int_{-a}^\infty e^{\alpha y^2/\sigma_2^2}y^2 q(y)\,dy
    - c_1\eps^5
    \leq
    \int_{-a}^\infty e^{\alpha y^2/\sigma_1^2}y^2 q(y)\,dy
    \leq \int_{-a}^\infty e^{\alpha y^2/\sigma_2^2}y^2 q(y)\,dy.
    \end{align}
    Next we will show that $\int_b^\infty e^{\alpha y^2/\sigma_1^2}y^2q(y)\,dy$ and $\int_b^\infty e^{\alpha (y-\mu_1)^2/\sigma_1^2}y^2 q(y)\,dy$ are close to each other.
    Since $\mu_1<0,\alpha<0$, we derive that $\alpha y^2>\alpha(y-\mu_1)^2$. This implies
    \[
    \int_b^\infty e^{\alpha y^2/\sigma_1^2}y^2q(y)\,dy
    >
    \int_b^\infty e^{\alpha (y-\mu_1)^2/\sigma_1^2}y^2 q(y)\,dy
    \]
    On the other hand,
    \begin{align*}
        &
         \int_b^\infty e^{\alpha y^2/\sigma_1^2}y^2q(y)\,dy
        -
        \int_b^\infty e^{\alpha (y-\mu_1)^2/\sigma_1^2}y^2 q(y)\,dy
        \\
        \leq &
        \int_b^\infty \left(1-
        e^{-(|\alpha |\mu_1^2+|\alpha \mu_1|y)/\sigma_1^2} 
        \right)  e^{\alpha y^2/\sigma_1^2}y^2q(y)\,dy\\
        \leq &
        \frac{|\alpha \mu_1|}{\sigma_1^2}
        \int_b^\infty 
         e^{\alpha y^2/\sigma_1^2} (y + |\mu_1|)y^2q(y)\,dy\\
         \leq &
         \frac{|\alpha \mu_1|}{\sigma_1^2}\int_0^\infty (y+|\mu_1|)y^2 q(y)\,dy\\
         \leq &
         c_2\eps^5 \quad \text{ for some constant }c_2>0
    \end{align*}
    The last inequality is implied by Lemma~\ref{lemma_logconcave_moments} and Lemma~\ref{lemma_almost_sym_mono_var} . Combining two inequalities, we get
    \begin{align}
    \label{eq:almost_sym_sigma1_with_mu1}
        \int_b^\infty e^{\alpha y^2/\sigma_1^2}y^2q(y)\,dy
        -c_2\eps^5
    \leq 
    \int_b^\infty e^{\alpha (y-\mu_1)^2/\sigma_1^2}y^2 q(y)\,dy
    \leq 
     \int_b^\infty e^{\alpha y^2/\sigma_1^2}y^2q(y)\,dy
    \end{align}
    Similarly, we will show the approximation inequality for $\int_{-a}^\infty e^{\alpha (y+\mu_1)^2/\sigma_1^2}y^2q(y)\,dy$. We will decompose the integral by the summation of the integral on $[-a,6\ln(1/\eps)]$ and $(6\ln(1/\eps),\infty)$ respectively.
    For the first part of the integral,
    \begin{align*}
         &\left\lvert \int_{-a}^{6\ln(1/\eps)} e^{\alpha (y+\mu_1)^2/\sigma_1^2} y^2q(y)\,dy
        - \int_{-a}^{6\ln(1/\eps)} e^{\alpha y^2/\sigma_1^2} y^2q(y)\,dy\right\rvert \\
        =& \left\lvert
        \int_{-a}^{6\ln(1/\eps)}e^{\alpha y^2/\sigma_1^2}
        \left(
        e^{\frac{\alpha\mu_1}{\sigma_1^2}(2y+\mu_1)}-1
        \right) y^2q(y)\,dy
        \right\rvert\\
        \leq & 
        \int_{-a}^{-\mu_1/2} e^{\alpha y^2/\sigma_1^2}
        \left(1-
        e^{\frac{\alpha\mu_1}{\sigma_1^2}(2y+\mu_1)}
        \right) y^2q(y)\,dy
        +
        \int_{-\mu_1/2}^{6\ln(1/\eps)} e^{\alpha y^2/\sigma_1^2}
        \left(
        e^{\frac{\alpha\mu_1}{\sigma_1^2}(2y+\mu_1)}-1
        \right) y^2q(y)\,dy
    \end{align*}
    The first term can be bounded using $e^{-t}\geq 1-t$. 
    \begin{align*}
        \int_{-a}^{-\mu_1/2} e^{\alpha y^2/\sigma_1^2}
        \left(1-
        e^{\frac{\alpha\mu_1}{\sigma_1^2}(2y+\mu_1)}
        \right) y^2q(y)\,dy
        \leq& \int_{-a}^{-\mu_1/2} \frac{\alpha \mu_1}{\sigma_1^2}(-2y-\mu_1)y^2q(y)\,dy\\
        \leq & \int_{-a}^{-\mu_1/2} |\alpha||\mu_1|^2 y^2q(y)\,dy
        \leq \eps^8
    \end{align*}
    The second term can be bounded using the upper limit of the integral. For $y \leq 6\ln(1/\eps)$, 
    \[
    e^{\frac{\alpha\mu_1}{\sigma_1^2}(2y+\mu_1)}-1
    \leq  e^{24\eps^6\ln^2(1/\eps)}-1 \leq 40\eps^5
    \]
    Substituting into the second term, we get
    \[
    \int_{-\mu_1/2}^{6\ln(1/\eps)} e^{\alpha y^2/\sigma_1^2}
        \left(
        e^{\frac{\alpha\mu_1}{\sigma_1^2}(2y+\mu_1)}-1
        \right) y^2q(y)\,dy
        \leq 40\eps^5 \int_0^\infty y^2q(y)\,dy = 20\eps^5
    \]
    Combining both terms, 
    \begin{align}
    \label{eq:almost_sym_first}
        \left\lvert \int_{-a}^{6\ln(1/\eps)} e^{\alpha (y+\mu_1)^2/\sigma_1^2} y^2q(y)\,dy
        - \int_{-a}^{6\ln(1/\eps)} e^{\alpha y^2/\sigma_1^2} y^2q(y)\,dy\right\rvert
        \leq 21\eps^5
    \end{align}
    For the remaining part of the integral, we can bound using logconcave distribution's upper bound as in Lemma~\ref{lemma_logconcave_decayfast}. For any $t \geq 3$, 
    \[
    q(t) \leq q(0) \cdot 2^{-t/3} < e^{-t/5}
    \]
    Then the following holds with some constant $c_3'>0$.
    \begin{align}
    \label{eq:almost_sym_exp_second1}
        \int_{6\ln(1/\eps)}^{\infty} e^{\alpha y^2/\sigma_1^2} y^2q(y)\,dy
        \leq \int_{6\ln(1/\eps)}^\infty  y^2 e^{-y/5}\,dy 
        \leq (1+6\ln (\frac{1}{\eps}))e^{-6\ln(\frac{1}{\eps} )}\leq c_3' \eps^{5}
    \end{align}
    Similarly, we get
    \begin{align}
    \label{eq:almost_sym_exp_second2}
        \int_{6\ln(1/\eps)}^{\infty} e^{\alpha (y+\mu_2)^2/\sigma_1^2} y^2q(y)\,dy
        \leq c_3'\eps^{5}
    \end{align}
    With Equations~\eqref{eq:almost_sym_first}, \eqref{eq:almost_sym_exp_second1}, \eqref{eq:almost_sym_exp_second2}, we get
    \begin{align}
    \label{eq:almost_sym_sigma1_with_mu1_apart}
        \left\lvert \int_{-a}^{\infty} e^{\alpha (y+\mu_1)^2/\sigma_1^2} y^2q(y)\,dy
        - \int_{-a}^{\infty} e^{\alpha y^2/\sigma_1^2} y^2q(y)\,dy\right\rvert \leq c_3\eps^5 \text{ for constant }c_3>0
    \end{align}
    Combining Equations \eqref{eq:almost_sym_sigma1_sigma_2}, 
    \eqref{eq:almost_sym_sigma1_sigma_2_apart},
    \eqref{eq:almost_sym_sigma1_with_mu1}, \eqref{eq:almost_sym_sigma1_with_mu1_apart} 
    , we have
    \begin{align*}
        \int_b^\infty e^{\alpha y^2/\sigma_2^2}y^2q(y)\,dy - (c_1+c_2)\eps^{5}
    \leq
    \int_b^\infty e^{\alpha (y-\mu_1)^2/\sigma_1^2}y^2 q(y)\,dy
    \leq 
     \int_b^\infty e^{\alpha y^2/\sigma_2^2}y^2q(y)\,dy
    \end{align*}
    \begin{align*}
        \int_b^\infty e^{\alpha y^2/\sigma_2^2}y^2q(y)\,dy - (c_1+c_3)\eps^{5}
    \leq
    \int_{-a}^\infty e^{\alpha (y+\mu_1)^2/\sigma_1^2}y^2 q(y)\,dy
    \leq 
    \int_b^\infty e^{\alpha y^2/\sigma_2^2}y^2q(y)\,dy+ c_3\eps^5
    \end{align*}
    By choosing $C'=c_1+c_2+c_3$, we prove the lemma.
   
\end{proof}

\begin{lemma}[Approximated for Re-weighted Zeroth Moment] 
\label{lemma_almost_sym_approximation_reweighted_Zero_moment}
By choosing $\alpha=-c\eps^2$, for some constant $C'>0$, we have the following inequalities.
    \begin{align}
    \label{eq:almost_sym_approx_zero_moment_b}
        \int_b^\infty e^{\alpha y^2/\sigma_2^2} q(y)\,dy - C'\eps^{5}
    \leq
    \int_b^\infty e^{\alpha (y-\mu_1)^2/\sigma_1^2} q(y)\,dy
    \leq 
     \int_b^\infty e^{\alpha y^2/\sigma_2^2} q(y)\,dy
    \end{align}
    \begin{align}
        \label{eq:almost_sym_approx_zero_moment_a}
        \int_b^\infty e^{\alpha y^2/\sigma_2^2} q(y)\,dy - C'\eps^5
    \leq
    \int_{-a}^\infty e^{\alpha (y+\mu_1)^2/\sigma_1^2} q(y)\,dy
    \leq 
    \int_b^\infty e^{\alpha y^2/\sigma_2^2} q(y)\,dy+ C'\eps^5
    \end{align}
\end{lemma}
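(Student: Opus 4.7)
The plan is to adapt the proof of Lemma~\ref{lemma_almost_sym_approx_reweighted_second_moment} by dropping the $y^2$ weight throughout. Every step in that proof used either the logconcave moment bound $\E_q |X|^k \leq (2k)^k$ from Lemma~\ref{lemma_logconcave_moments} or the fast tail decay $q(t) \leq e^{-t/5}$ from Lemma~\ref{lemma_logconcave_decayfast}, both of which apply uniformly. Removing the $y^2$ factor only lowers the moment orders invoked, so the final rate is still $O(\eps^5)$ as claimed.

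For inequality~\eqref{eq:almost_sym_approx_zero_moment_b}, first replace $\sigma_2^2$ by $\sigma_1^2$: by Lemma~\ref{lemma_almost_sym_var_bound}, $\sigma_1^2 \geq \sigma_2^2/(1+2e\eps^3)$, so with $\alpha<0$ the $\sigma_2^2$-integrand dominates pointwise, and the difference is bounded using $1-e^{-t}\leq t$ with $t = 2e|\alpha|\eps^3 y^2/\sigma_2^2$, giving $\frac{2ec\eps^5}{\sigma_2^2}\int_0^\infty y^2 q(y)\,dy = O(\eps^5)$ (using $\sigma_2^2 \geq 1$ from Lemma~\ref{lemma_almost_sym_mono_var}). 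Next, introduce the shift by $\mu_1$: since $\mu_1<0$ and $\alpha<0$, $e^{\alpha(y-\mu_1)^2/\sigma_1^2}\leq e^{\alpha y^2/\sigma_1^2}$ for $y\geq b>0$, which gives one direction immediately. For the other, estimate $1 - e^{-|\alpha|(\mu_1^2+2|\mu_1|y)/\sigma_1^2} \leq |\alpha\mu_1|(|\mu_1|+2y)/\sigma_1^2$ and integrate against $q(y)$; by Lemma~\ref{lemma_almost_sym_mu_range} we have $|\mu_1| = O(\eps^4\ln(1/\eps))$ and $|\alpha|=c\eps^2$, yielding $O(\eps^5)$.

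For inequality~\eqref{eq:almost_sym_approx_zero_moment_a}, the integration limit changes from $b$ to $-a$ and the shift is $+\mu_1$. After applying the $\sigma_1 \leftrightarrow \sigma_2$ comparison as above, split the remaining shifted integral at $6\ln(1/\eps)$. On $[-a,6\ln(1/\eps)]$, we have $|\alpha\mu_1|(2y+|\mu_1|)/\sigma_1^2 = O(\eps^6\ln^2(1/\eps))$, so $|e^{\alpha\mu_1(2y+\mu_1)/\sigma_1^2}-1| = O(\eps^5)$ uniformly; multiplying by $\int q\leq 1$ gives an $O(\eps^5)$ bound. On $(6\ln(1/\eps),\infty)$, Lemma~\ref{lemma_logconcave_decayfast} gives $q(y)\leq e^{-y/5}$, so both $\int_{6\ln(1/\eps)}^\infty e^{\alpha y^2/\sigma_2^2}q(y)\,dy$ and its $\mu_1$-shifted counterpart are $O(\eps^{6/5}) = O(\eps^5)$. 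Finally, $|-a - b|<\eps^5$, so swapping the lower limit between $-a$ and $b$ costs $\int_{\min(-a,b)}^{\max(-a,b)} q(y)\,dy \leq \eps^5 \max q = O(\eps^5)$ since $q\leq 1$ by Lemma~\ref{lemma_logconcave_density_upper_bound}. Summing the three $O(\eps^5)$ errors gives the claimed two-sided bound with some constant $C'>0$.

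The main obstacle, as in Lemma~\ref{lemma_almost_sym_approx_reweighted_second_moment}, is bookkeeping three independent $O(\eps^5)$ error sources (variance mismatch, mean mismatch, lower-limit mismatch) while keeping each inequality one-sided in the correct direction. The mean-mismatch step in \eqref{eq:almost_sym_approx_zero_moment_a} is the delicate one: $e^{\alpha\mu_1(2y+\mu_1)/\sigma_1^2}-1$ can have either sign as $y$ varies through $-\mu_1/2$, so one must split at $-\mu_1/2$ (for the lower bound via $e^{-t}\geq 1-t$) and at $6\ln(1/\eps)$ (to keep the Taylor remainder uniformly small before the tail bound takes over), exactly paralleling the structure already established for the second-moment case.
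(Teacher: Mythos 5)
Your overall plan --- replay the proof of Lemma~\ref{lemma_almost_sym_approx_reweighted_second_moment} with $y^2$ replaced by $1$ --- is precisely what the paper does: its proof of the stated lemma is one sentence deferring to that substitution. Your breakdown into three error sources (variance mismatch $\sigma_1^2$ vs.\ $\sigma_2^2$, mean shift by $\mu_1$, and lower-limit swap) matches that structure, and you go a bit further than the paper's write-up: the paper combines bounds whose lower limit is $-a$ (its~\eqref{eq:almost_sym_sigma1_sigma_2_apart} and~\eqref{eq:almost_sym_sigma1_with_mu1_apart}) into a conclusion stated with lower limit $b$ without comment, whereas you explicitly cost the swap at $\le |a+b|\cdot\max q \le \eps^5$, which closes a small gap in the original.

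That said, one numerical claim you make would not hold as written. In the tail estimate you compute $\int_{6\ln(1/\eps)}^\infty e^{-y/5}\,dy = 5\eps^{6/5}$ and then write ``$O(\eps^{6/5}) = O(\eps^5)$.'' This is false: $\eps^{6/5} \gg \eps^5$ as $\eps\to 0$, so an $O(\eps^{6/5})$ tail does not give the claimed $O(\eps^5)$ bound. The culprit is the split threshold $6\ln(1/\eps)$, which you inherited from the paper's second-moment proof: its Eq.~\eqref{eq:almost_sym_exp_second1} asserts $\int_{6\ln(1/\eps)}^\infty y^2 e^{-y/5}\,dy \le (1+6\ln(1/\eps))e^{-6\ln(1/\eps)}$, effectively treating the decay as $e^{-t}$ when Lemma~\ref{lemma_logconcave_decayfast} only gives $e^{-t/5}$, so this same slip is already in the paper. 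The fix is easy and disturbs nothing else: with decay rate $e^{-y/5}$ you need the split at $T\ge 25\ln(1/\eps)$; taking $T = 30\ln(1/\eps)$ the tail is $O(\eps^6\ln^2(1/\eps)) = o(\eps^5)$, while the on-interval exponent mismatch $|\alpha\mu_1|(2T+|\mu_1|)/\sigma_1^2$ remains $O(\eps^6\ln^2(1/\eps)) = o(\eps^5)$. The $\eps^5$ rate genuinely matters downstream --- in the proof of Lemma~\ref{lemma_reweighted_cov} these errors are divided by $\int_b^\infty q \ge \eps$ to reach the needed $O(\eps^4)$ --- so this should be corrected rather than waved into the constant $C'$.
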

The proof follows exactly from the proof of Lemma~\ref{lemma_almost_sym_approx_reweighted_second_moment} by replacing $y^2$ with $1$.

Now we are ready to prove Lemma~\ref{lemma_reweighted_cov}.
\reweightedcov*

\begin{proof}
    By calculation, we have
    \begin{align*}
        \E\limits_{x\sim \hat{r}} e^{\alpha x^2}x^2
        = \frac{
        \int_{b/\sigma_2}^\infty
        e^{\alpha x^2}x^2\sigma_2 q(x\sigma_2)\,dx
        }{
        \int_b^\infty q(x)\,dx
        }
        = \frac{
        \int_b^\infty e^{\alpha y^2/\sigma_1^2}y^2 q(y)\,dy
        }{
        \sigma_2^2\int_b^\infty q(x)\,dx
        }
    \end{align*}
    \begin{align*}
        &\E\limits_{x\sim \hat{q}} e^{\alpha x^2}x^2\\
        =&\frac{
        \int_{-\infty}^{(a-\mu_1)/\sigma_1} e^{\alpha x^2}x^2 \sigma_1 q(x\sigma_1+\mu_1)\,dx
        + \int_{(b-\mu_1)/\sigma_1} ^\infty e^{\alpha x^2}x^2 \sigma_1 q(x\sigma_1+\mu_1)\,dx
        }{
        \int_{-a}^\infty q(x)\,dx + \int_b^\infty q(x)\,dx
        }
        \\
        =&
        \frac{
        \int_{-\infty}^a e^{\alpha (y-\mu_1)^2/\sigma_1^2} (y-\mu_1)^2q(y)\,dy
        + \int_b^\infty e^{\alpha (y-\mu_1)^2/\sigma_1^2} (y-\mu_1)^2q(y)\,dy
        }{
        \sigma_1^2\left(
        \int_{-a}^\infty q(x)\,dx + \int_b^\infty q(x)\,dx
        \right)
        }\\
        =&
         \frac{
        \int_{-a}^\infty e^{\alpha (y+\mu_1)^2/\sigma_1^2} (y+\mu_1)^2q(y)\,dy
        + \int_b^\infty e^{\alpha (y-\mu_1)^2/\sigma_1^2} (y-\mu_1)^2q(y)\,dy
        }{\sigma_1^2\left(
        \int_{-a}^\infty q(x)\,dx + \int_b^\infty q(x)\,dx
        \right)}\\
        =& \frac{
        \int_{-a}^\infty e^{\alpha (y+\mu_1)^2/\sigma_1^2} y^2q(y)\,dy
        + \int_b^\infty e^{\alpha (y-\mu_1)^2/\sigma_1^2} y^2q(y)\,dy
        }{\sigma_1^2\left(
        \int_{-a}^\infty q(x)\,dx + \int_b^\infty q(x)\,dx
        \right)}\\
        & + \frac{2\mu_1}{\sigma_1^2}
        \frac{
        \int_{-a}^\infty e^{\alpha (y+\mu_1)^2/\sigma_1^2} yq(y)\,dy
        - \int_b^\infty e^{\alpha (y-\mu_1)^2/\sigma_1^2} yq(y)\,dy
        }{
        \int_{-a}^\infty q(x)\,dx + \int_b^\infty q(x)\,dx
        }\\
        &+ \frac{\mu_1^2}{\sigma_1^2}
        \frac{
        \int_{-a}^\infty e^{\alpha (y+\mu_1)^2/\sigma_1^2} q(y)\,dy
        +\int_b^\infty e^{\alpha (y-\mu_1)^2/\sigma_1^2} q(y)\,dy
        }{
         \int_{-a}^\infty q(x)\,dx + \int_b^\infty q(x)\,dx
        }
    \end{align*}
    The first term is close to $ \E\limits_{x\sim \hat{r}} e^{\alpha x^2}x^2$ while the second and third terms are close to zero. We first give the bound on the absolute values of last two terms.
    Since $\alpha<0$,
    \begin{align*}
        &\left\lvert\int_{-a}^\infty e^{\alpha (y+\mu_1)^2/\sigma_1^2} yq(y)\,dy
        - \int_b^\infty e^{\alpha (y-\mu_1)^2/\sigma_1^2} yq(y)\,dy
        \right\rvert\\
        \leq & 
        \int_{-a}^\infty e^{\alpha (y+\mu_1)^2/\sigma_1^2} yq(y)\,dy
        + 
        \int_{b}^\infty e^{\alpha (y-\mu_1)^2/\sigma_1^2} yq(y)\,dy \\
        \leq &\int_{-a}^\infty yq(y)\,dy + \int_b^\infty yq(y)\,dy
        <1
    \end{align*}
    Similarly,
    \[
    \int_{-a}^\infty e^{\alpha (y+\mu_1)^2/\sigma_1^2} q(y)\,dy
        + \int_b^\infty e^{\alpha (y-\mu_1)^2/\sigma_1^2} q(y)\,dy < 1
    \]
    By Lemma~\ref{lemma_almost_sym_var_bound}, Lemma~\ref{lemma_almost_sym_approx_reweighted_second_moment} and Lemma~\ref{lemma_almost_sym_approximation_reweighted_Zero_moment}, we have
    \begin{align*}
        &\frac{
        \int_{-a}^\infty e^{\alpha (y+\mu_1)^2/\sigma_1^2} y^2q(y)\,dy
        + \int_b^\infty e^{\alpha (y-\mu_1)^2/\sigma_1^2} y^2q(y)\,dy
        }{\sigma_1^2\left(
        \int_{-a}^\infty q(x)\,dx + \int_b^\infty q(x)\,dx
        \right)}\\
        \leq&  \frac{
        2\int_b^{\infty} e^{\alpha y^2/\sigma_2^2} y^2q(y)\,dy + C'\eps^5
        }{
        2\int_b^\infty q(y)\,dy
        }
        \cdot 
        \frac{1+2e\eps^3}{\sigma_2^2}\\
        =& \frac{\int_b^\infty e^{\alpha y^2/\sigma_2^2 } y^2 q(y)\,dy }{\sigma_2^2\int_b^\infty q(y)\,dy}
         + \frac{C'\eps^5}{\sigma_2^2 2\int_b^\infty q(y)\,dy}
         + \frac{2e\eps^3}{\sigma_2^2} \frac{
        2\int_b^{\infty} e^{\alpha y^2/\sigma_2^2} y^2q(y)\,dy + C'\eps^5
        }{
        2\int_b^\infty q(y)\,dy
        }\\
        <& \frac{\int_b^\infty e^{\alpha y^2/\sigma_2^2 } y^2 q(y)\,dy }{\sigma_2^2\int_b^\infty q(y)\,dy}
        +c_1\eps^{4} \quad\text{ for constant }c_1>0
    \end{align*}
    By combining with the second and third terms, we conclude that for constants $c_3,c_4>0$,
    \begin{align}
    \label{eq:almost_sym_exp_final_second}
        \E\limits_{x\sim \hat{r}} e^{\alpha x^2}x^2 - c_3\eps^4 
    < \E\limits_{x\sim \hat{q}} e^{\alpha x^2}x^2
    <\E\limits_{x\sim \hat{r}} e^{\alpha x^2}x^2 + c_4 \eps^4
    \end{align}
    Similarly, we have 
     \begin{align}
     \label{eq:almost_sym_exp_final_zero}
        \E\limits_{x\sim \hat{r}} e^{\alpha x^2} - c_3\eps^4 
    < \E\limits_{x\sim \hat{q}} e^{\alpha x^2}
    <\E\limits_{x\sim \hat{r}} e^{\alpha x^2} + c_4 \eps^4
    \end{align}
    Then, we would like to compute the gap between first and second eigenvalues of the re-weighted second moment of $\hat{q}$. We denote $Q$ as the product of $\hat{q}$ and $d-1$ fold of $q$. For $2\leq j\leq d$,
    \begin{align*}
        &\E\limits_{x\sim Q} e^{\alpha \|x\|^2}x_1^2 - \E\limits_{x\sim Q} e^{\alpha \|x\|^2}x_j^2\\
        =&\left(
        \E\limits_{x_1\sim \hat{q}}e^{\alpha x_1^2}x_1^2 
        \E\limits_{x_2\sim q}e^{\alpha x_2^2}
        -
        \E\limits_{x_2\sim q}e^{\alpha x_2^2}x_2^2
        \E\limits_{x_1\sim \hat{q}}e^{\alpha x_1^2}
        \right)
        \left(
        \E\limits_{x\sim q}e^{\alpha x^2}
        \right)^{d-2}
    \end{align*}
    We define $T(\alpha)$ as follows.
    \[
    T(\alpha)
    =\E\limits_{x_1\sim \hat{q}}e^{\alpha x_1^2}x_1^2 
        \E\limits_{x_2\sim q}e^{\alpha x_2^2}
        -
        \E\limits_{x_2\sim q}e^{\alpha x_2^2}x_2^2
        \E\limits_{x_1\sim \hat{q}}e^{\alpha x_1^2}
    \]
    Recall that $S(\alpha)$ is defined in Lemma~\ref{lemma_cov_sym_S_gap}.
    \[
    S(\alpha)
    =\E\limits_{x_1\sim \hat{r}}e^{\alpha x_1^2}x_1^2 
        \E\limits_{x_2\sim q}e^{\alpha x_2^2}
        -
        \E\limits_{x_2\sim q}e^{\alpha x_2^2}x_2^2
        \E\limits_{x_1\sim \hat{r}}e^{\alpha x_1^2}
    \]
    We calculate the difference between $T(\alpha)$ and $S(\alpha)$ using Equation~\eqref{eq:almost_sym_exp_final_second} and Equation~\eqref{eq:almost_sym_exp_final_zero}.
    \begin{align*}
        &S(\alpha)-T(\alpha)\\
        =& \left(\E\limits_{x_1\sim \hat{r}}e^{\alpha x_1^2}x_1^2 
        -\E\limits_{x_1\sim \hat{q}}e^{\alpha x_1^2}x_1^2 
        \right)\E\limits_{x_2\sim q}e^{\alpha x_2^2}
        -\left(
        \E\limits_{x_1\sim \hat{r}}e^{\alpha x_1^2}
        -\E\limits_{x_1\sim \hat{q}}e^{\alpha x_1^2}
        \right)
         \E\limits_{x_2\sim q}e^{\alpha x_2^2}x_2^2\\
         \leq & c_3\eps^4 + c_4 \eps^4
    \end{align*}
    By Lemma~\ref{lemma_cov_sym_S_gap}, $S(\alpha)>C'\eps^3$ for constant $C'>0$. So we know $T(\alpha )> C\eps^3$ for $C>0$. Then all proof follows as same as the case when $a+b=0$. We write out the proof for completeness.
    
    For $2\leq j\leq d$,
    \begin{align*}
        &\E\limits_{x\sim {P}}e^{\alpha \|x\|^2}x_1^2 -  \E\limits_{x\sim {P}}e^{\alpha \|x\|^2}x_j^2\\
        =& \left(
        \E\limits_{x_1\sim \hat{q}}e^{\alpha x_1^2}x_1^2 \E\limits_{x_2\sim q}e^{\alpha x_2^2}
        - \E\limits_{x_2\sim \hat{q}}e^{\alpha x_2^2}x_1^2 \E\limits_{x_1\sim q}e^{\alpha x_1^2}
        \right)
        \left(\E\limits_{x\sim q} e^{\alpha x^2}\right)^{d-2}\\
        =& T(\alpha ) \frac{\E\limits_{x\sim {P}}e^{\alpha \|x\|^2} x_1^2}
        {\E\limits_{x_1\sim \hat{q}}e^{\alpha x_1^2}x_1^2 \E\limits_{x_2\sim q}e^{\alpha x_2^2}}
    \end{align*}
    Since $\alpha_3<0$, we have
    \[
    \E\limits_{x_1\sim \hat{q}}e^{\alpha_3 x_1^2}x_1^2\leq \E\limits_{x_1\sim \hat{q}} x_1^2=1, \E\limits_{x_2\sim q}e^{\alpha_3 x_2^2}\leq 1.
    \]
    Also we have shown that $T(\alpha_3)>C\eps^3$. So we have
    \[
    \E\limits_{x\sim {P}}e^{\alpha \|x\|^2}x_1^2 -  \E\limits_{x\sim {P}}e^{\alpha \|x\|^2}x_j^2
    \geq C\eps^3 \E\limits_{x\sim {P}}e^{\alpha \|x\|^2}x_1^2
    \]
    Finally we will show that the first eigenvector corresponds to $e_1$.
    For any $v\in\R$, define $\phi(v)$ as
    \[
    \phi(v):=\mathop{\E}\limits_{x\sim {P}} \frac{e^{\alpha_3 \|x\|^2v^\top xx^\top v}}{v^\top v}
    \]
    Then we know for $2\leq j\leq d$, 
    \[
    \phi(e_1)-\phi(e_j) > C\eps^3\phi(e_1)
    \]
    For any vector $v=\sum_{i=1}^d \gamma_ie_i$, we have
    \begin{align*}
    \phi(v) =& \frac{1}{\sum_{i=1}^d \gamma_i^2}\E e^{\alpha_3 \|x\|^2}(\sum_{i=1}^d \gamma_i x_i)^2\\
    =& \frac{1}{\sum_{i=1}^d \gamma_i^2}
    \left(
    \sum_{i=1}^d \gamma_i^2 \phi(e_i) + 2\E e^{\alpha\|x\|^2}\sum_{i\neq j}\gamma_i\gamma_j x_i x_j
    \right)\\
    =& \frac{1}{\sum_{i=1}^d \gamma_i^2}
    \left(
    \sum_{i=1}^d \gamma_i^2 \phi(e_i) 
    + 2\sum_{i\neq j}\gamma_i\gamma_j \E e^{\alpha_3 \sum_{k\neq i,j} x_k^2}
    \E e^{\alpha_3 \left<x,e_i\right>^2}x_1
    \E e^{\alpha \left<x,e_j\right>^2}x_j
    \right)\\
    =& \frac{\sum_{i=1}^d \gamma_i^2 \phi(e_i)}{\sum_{i=1}^d \gamma_i^2}\\
    \leq & \phi(e_1)
    \end{align*}
        This shows that the top eigenvalue of $\tilde{\Sigma}$ is $\lambda_1=\max_v g(v)=g(e_1)$. In other word, the top eigenvector is $e_1$.
        Similarly the second eigenvalue of $\tilde{\Sigma}$ is $\lambda_2 =\max_{v:v\bot e_1}\phi(v) = \phi(e_j),2\leq j\leq d$. So we get
    \[
    \lambda_1-\lambda_2 > C\eps^3\lambda_1
    \]

\end{proof}

\section{Experiments}\label{section:experiments}

While our primary goal is to establish polynomial bounds on the sample and time complexity, our algorithms are natural and easy to implement. 
We study the efficiency and performance of Algorithm~\ref{algo_general} on data drawn from affine product distributions with margin. Here we consider three special cases of logconcave distribution: Gaussian, uniform in an interval and exponential. We include four experiments. In all results, we measure the performance of the algorithm using the $\sin$ of the angle between the true normal vector $u$ and the predicted vector $\hat{u}$, i.e.,
$\sin \theta(u,\hat{u})$, which bounds the $TV$ distance between the underlying distribution and the predicted one after isotropic transformation.
Experimental results strongly suggest that the sample complexity is a small polynomial, perhaps even just nearly linear in both the dimension and the separation parameter $\eps$.

\paragraph{Overall Performance.}
Here we conduct the experiments based on a grid search of $(a,b)$ pairs on three special cases of logconcave distribution: Gaussian, uniform in an interval and exponential. We measure the performance of $(a,b)$ pairs, where for each pair of $(a,b)$, we conduct five independent trials.  For Gaussian and Exponential distribution, we choose $-3\leq a< b\leq 3$ and for Uniform distribution, we choose $-1.5\leq a<b\leq 1.5$. Here we set the dimension $d=10$ and sample size $N=1000000$.
For the parameters, we choose $\alpha_1=\alpha_3=-0.1,\alpha_2=-0.2$. See Figure~\ref{fig:experiment4} as the heatmap of $\sin \theta(u,\hat{u})$ given different pairs of $(a,b)$.

Although in Algorithm~\ref{algo_general}, we use extremely small values of the weight parameter $\alpha$, our experiments show that larger constant values also work empirically, leading to much smaller sample complexity. This coincides with our qualitative lemmas (Lemma~\ref{lemma_contrastive_mean_qual}, Lemma~\ref{lemma_contrastive_covariance_qual}).

The algorithm performs well as seen in the results, except when $a$ and $b$ are both close to the edge, and thus there is almost no mass on one side of the band. Also, the uniform distribution is the easiest to learn, while the exponential is the hardest  among these three distributions.

As shown in all three plots, the algorithm performs the best when $a$ and $b$ are near symmetric with origin. In other words, contrastive covariance has better sample complexity than contrastive mean when we fix other hyperparameters. This coincides with our sample complexity bounds as in the proof of Theorem~\ref{thm:main}.

\begin{figure}[htp]
\centering
\captionsetup[subfigure]{justification=Centering}
\begin{subfigure}[t]{0.3\textwidth}\label{fig:exp4_gaussian}
    \includegraphics[width=\linewidth]{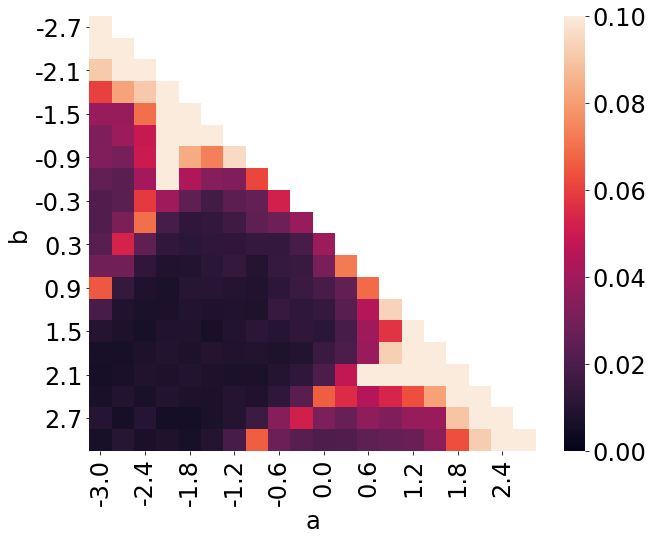}
    \caption{Gaussian}
\end{subfigure}
\begin{subfigure}[t]{0.3\textwidth}\label{fig:exp4_uniform}
    \includegraphics[width=\linewidth]{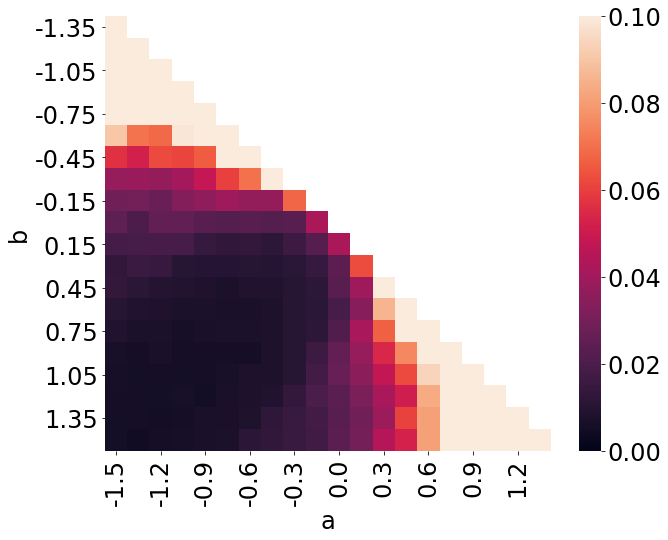}
    \caption{Uniform}
\end{subfigure}
\begin{subfigure}[t]{0.3\textwidth}\label{fig:exp4_expo}
    \includegraphics[width=\linewidth]{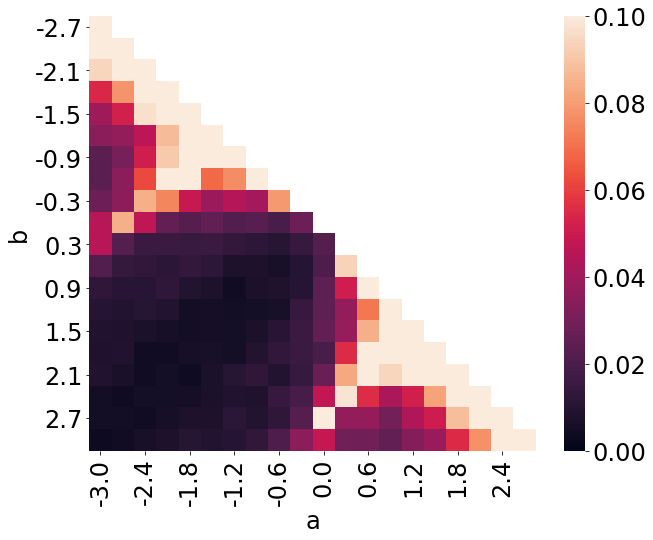}
    \caption{Exponential}
\end{subfigure}
\caption{We test the performance of
Algorithm~\ref{algo_general} based on a grid search of $(a,b)$.}
\label{fig:experiment4}
\end{figure}

\paragraph{Performance of Contrastive Mean and Covariance.}
In this experiment, we fix a negative $a$ as the left endpoint of the removed band, and measure the performance of both contrastive mean and contrastive covariance with respect to different margin right endpoint $b$. As shown in Figure~\ref{fig:experiment3}, contrastive mean performs well except when $a+b$ is close to zero, while contrastive covariance performs well only when $a+b$ is close to zero. This coincides with our algorithm and analysis for the two cases. In addition, our algorithm chooses the best normal vector among candidates from both contrastive mean and covariance. So our algorithm achieves good performance (minimum of contrastive mean and covariance curves).
        
Specifically, we choose $a=-2, b\in[-1.9, 4]$ for Gaussian and Exponential case, and $a=-0.5, b\in[-0.4, 0.9]$ for Uniform case. We choose the dimension $d=10$, the sample size $N=2000000$. We choose $\alpha_1=\alpha_3=-0.1,\alpha_2 = -0.2$. We average the result with $50$ independent trials.

\begin{figure}[htp]
\centering
\captionsetup[subfigure]{justification=Centering}

\begin{subfigure}[t]{0.32\textwidth}
    \includegraphics[width=\textwidth]{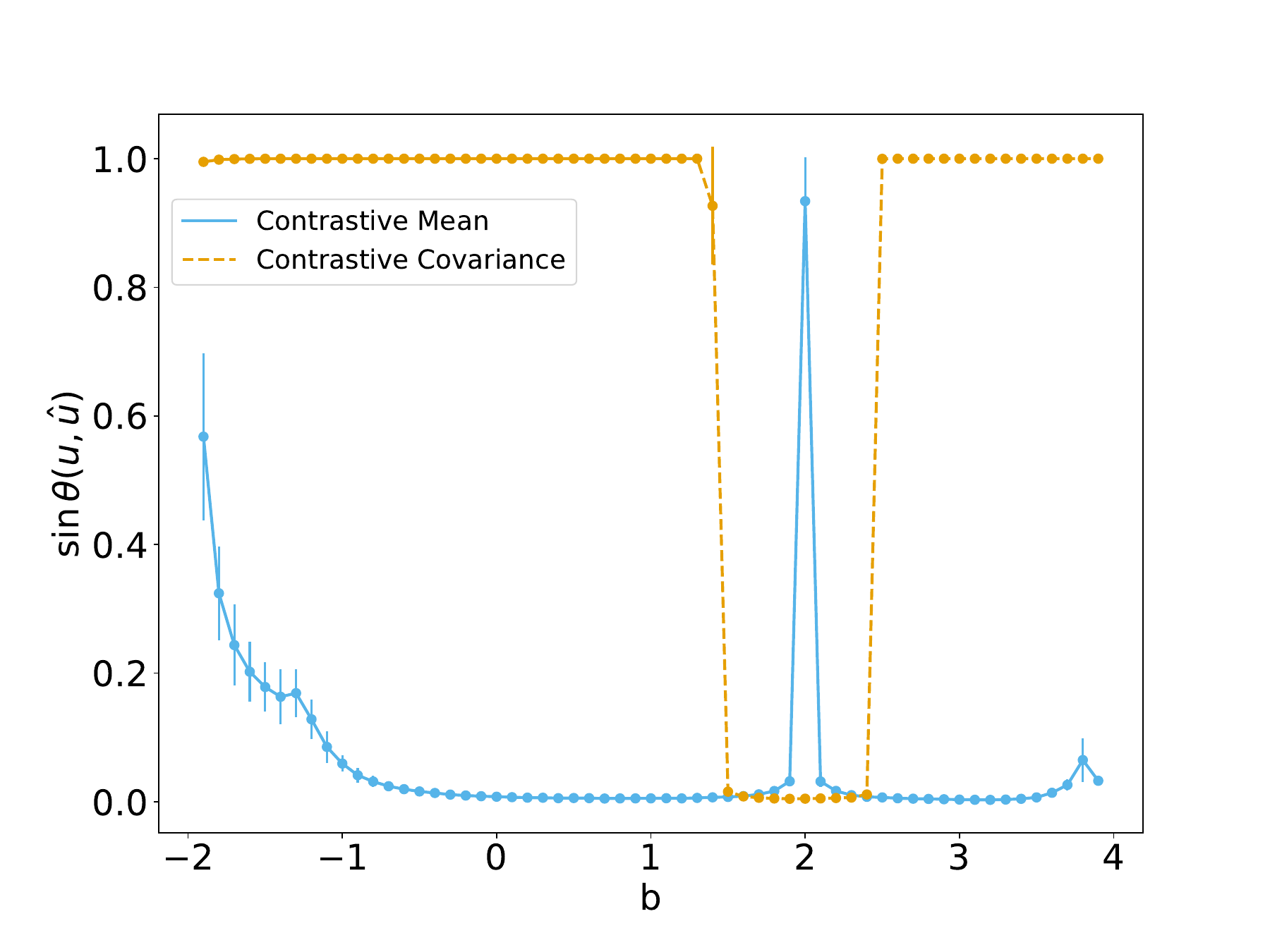}
    \caption{\small Gaussian, $a=-2$.}
    \label{fig:exp2_gaussian}
\end{subfigure} 
\begin{subfigure}[t]{0.32\textwidth}
    \includegraphics[width=\linewidth]{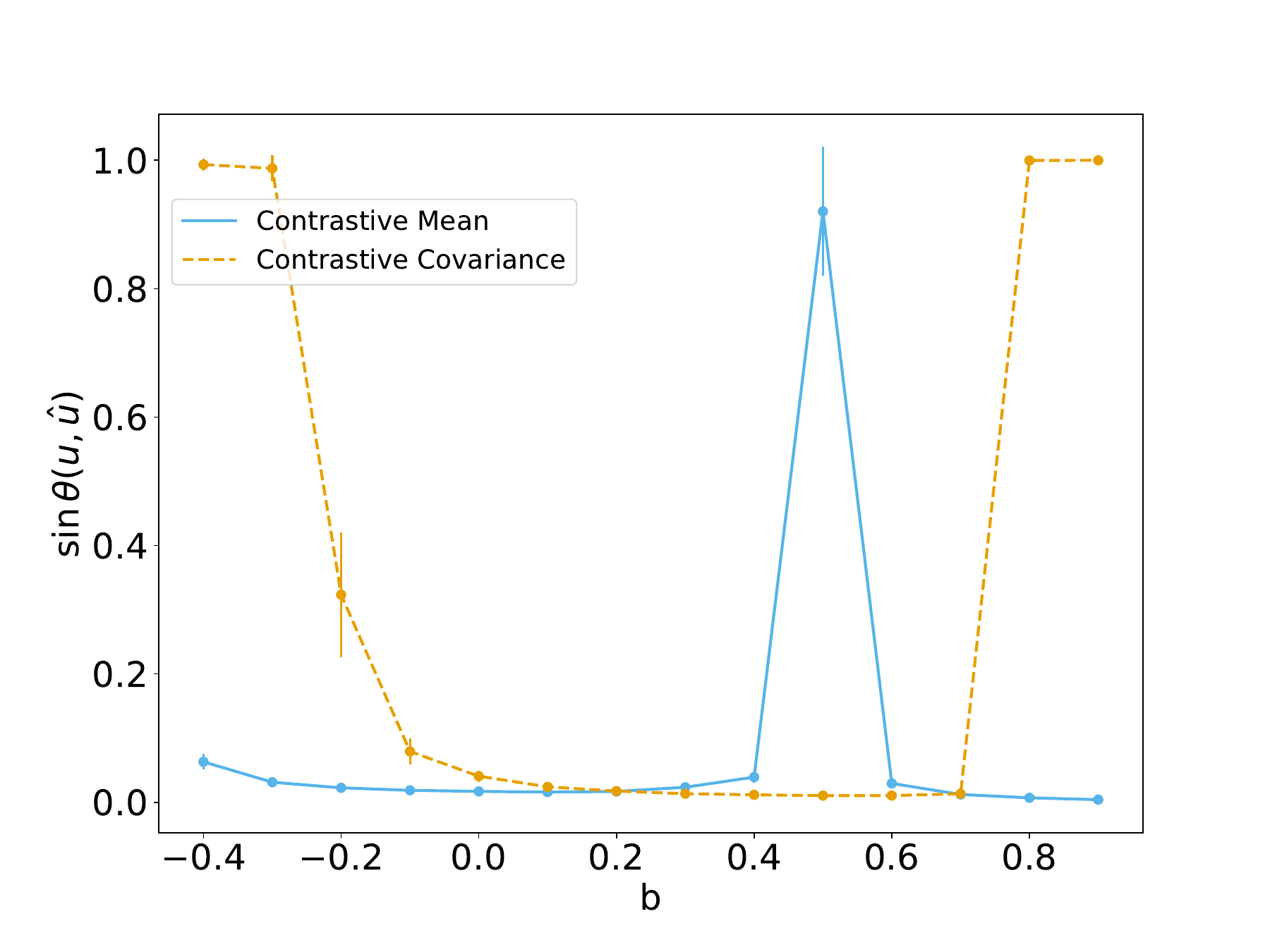}
    \caption{Uniform, $a=-0.5$.}
    \label{fig:exp2_uniform}
\end{subfigure}
\begin{subfigure}[t]{0.32\textwidth}
    \includegraphics[width=\linewidth]{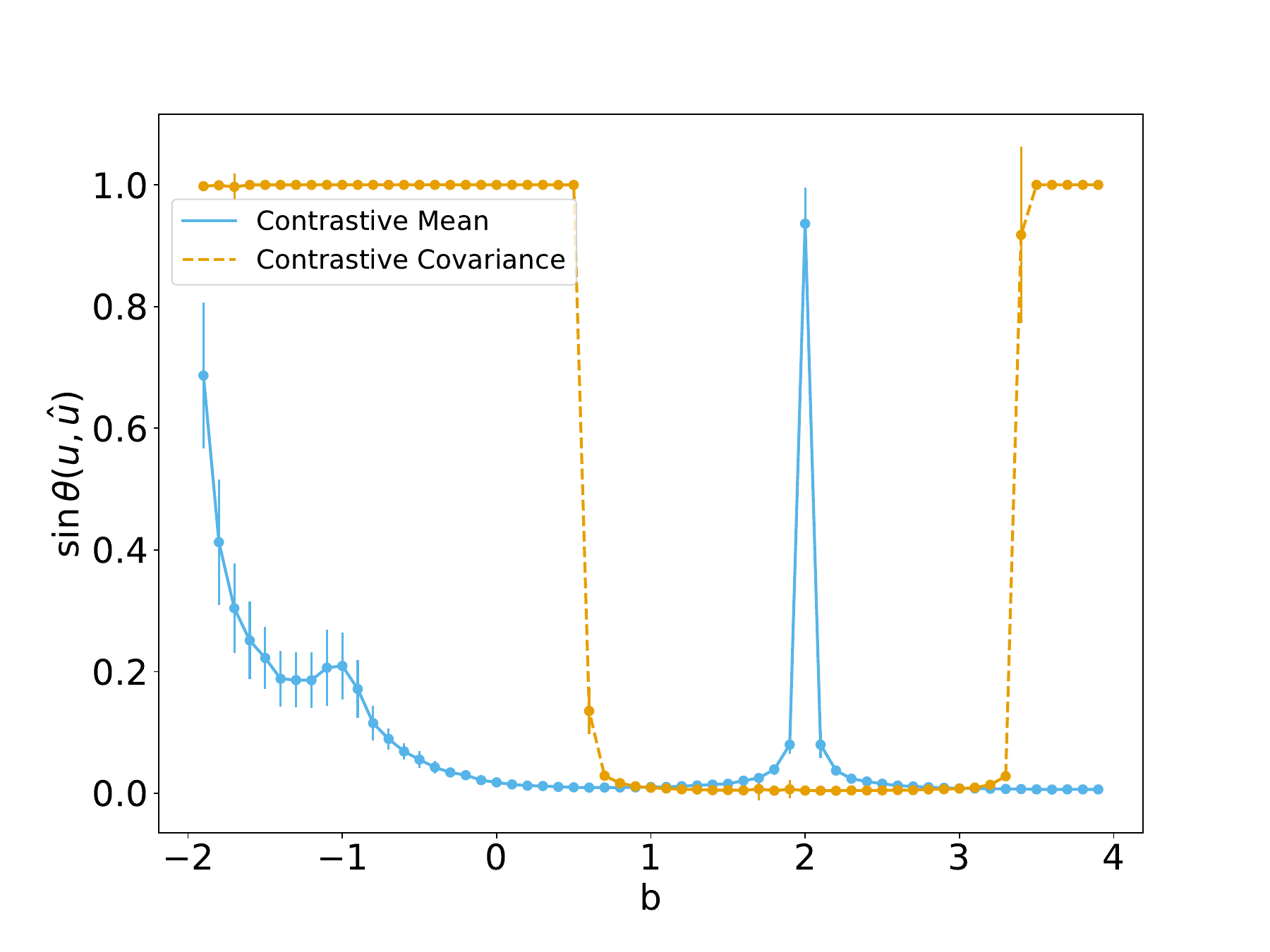}
    \caption{Exponential, $a=-2$.}
    \label{fig:exp2_exp}
\end{subfigure}
\caption{For a fixed $a$, we test the performance of
Algorithm~\ref{algo_general} while changing $b$. The yellow lines show the result computed using the top eigenvector of the contrastive covariance. The blue dotted lines show the better of the two contrastive means.}
\label{fig:experiment3}
\end{figure}

\paragraph{Dimension Dependence.}
In this experiment, we show the relationship between the input dimension $d$ and the sample complexity. For fixed number size $N=1000000$, we measure the performance of our algorithm with different $d$. The result is averaged based on a grid search of $(a,b)$ pairs, where for each pair of $(a,b)$, we conduct five independent trials. For Gaussian and Exponential distribution, we choose $-3\leq a< b\leq 3$ and for Uniform distribution, we choose $-0.8\leq a<b\leq 0.8$.

As shown Figure~\ref{fig:experiment2}, the performance scales linearly with growing dimension $d$, suggesting a linear relationship between the sample complexity and the input dimension.  
\begin{figure}[htp]
\centering
\captionsetup[subfigure]{justification=Centering}
\begin{subfigure}[t]{0.32\textwidth}\label{fig:exp3_gaussian}
    \includegraphics[width=\linewidth]{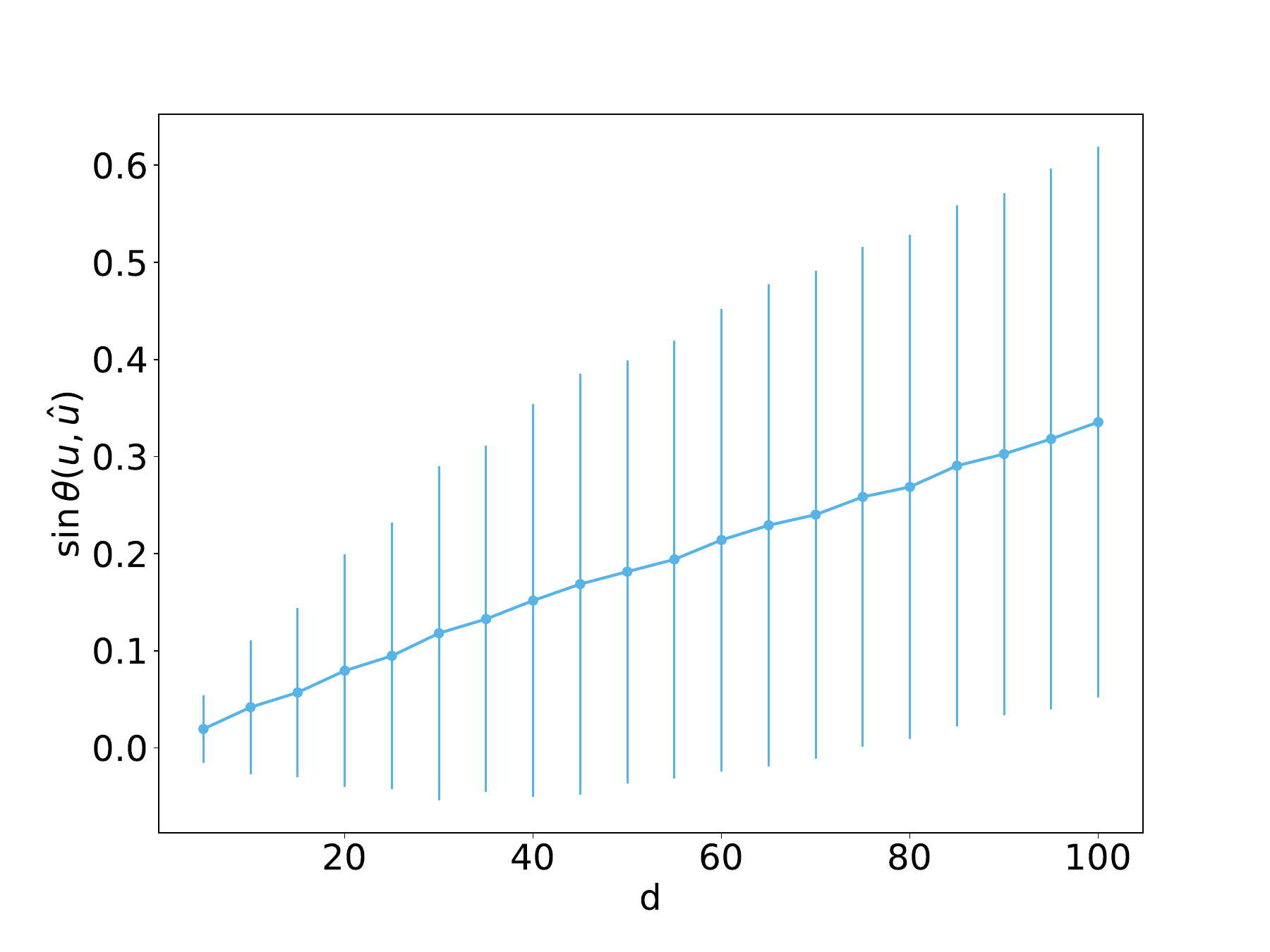}
    \caption{Gaussian}
\end{subfigure}
\begin{subfigure}[t]{0.32\textwidth}\label{fig:exp3_uniform}
    \includegraphics[width=\linewidth]{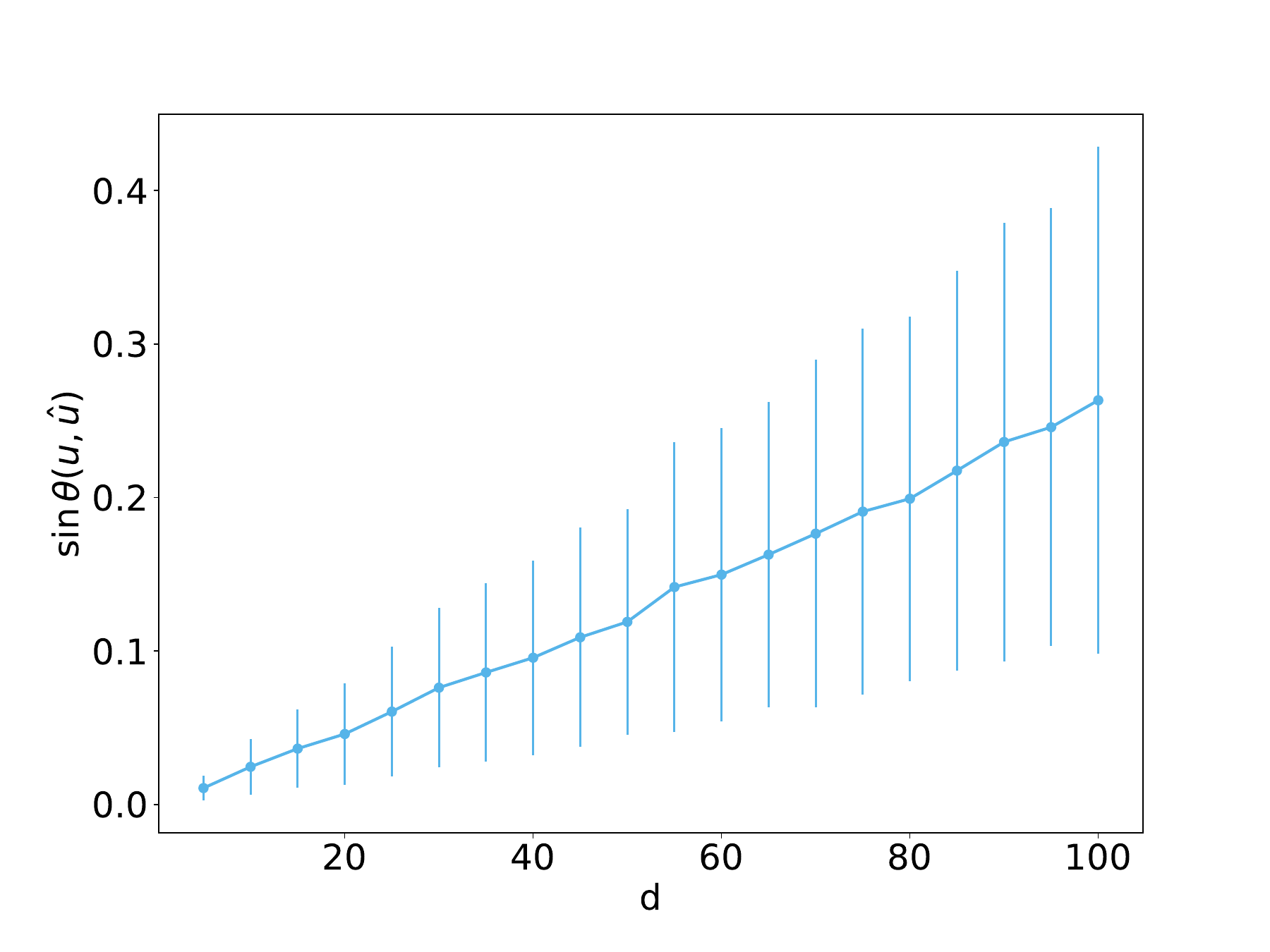}
    \caption{Uniform}
\end{subfigure}
\begin{subfigure}[t]{0.32\textwidth}\label{fig:exp3_expo}
    \includegraphics[width=\linewidth]{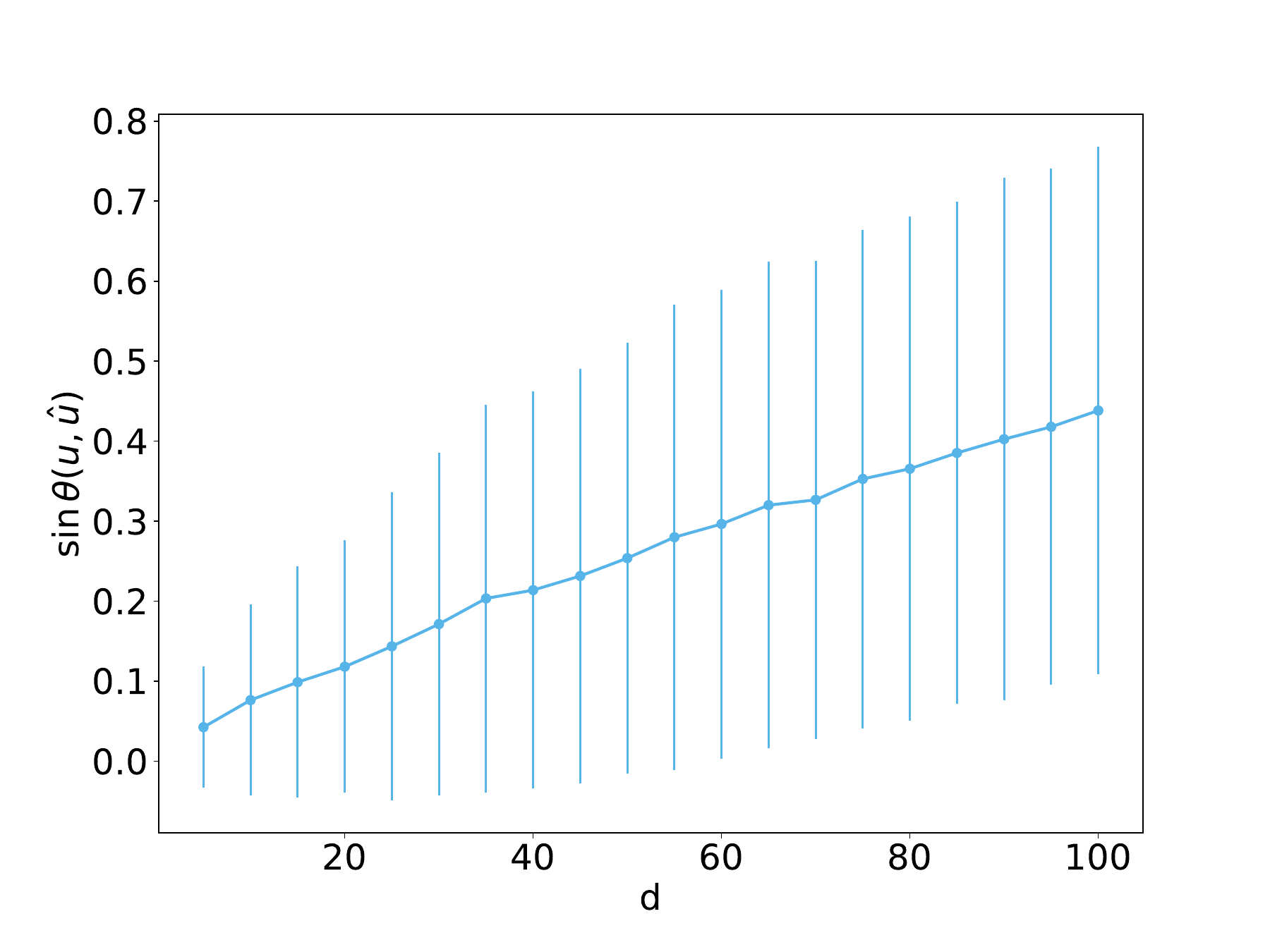}
    \caption{Exponential}
\end{subfigure}
\caption{For a fixed sample size $N$, we test the performance of
Algorithm~\ref{algo_general} by varying the dimension $d$.}
\label{fig:experiment2}
\end{figure}

\paragraph{$\epsilon$-Dependence.}
To further understand the dependence on the separation parameter $\eps$, we plot the performance versus $1/\eps$ in
Figure~\ref{fig:experiment1}. Here we calculate $1/\eps$ as
$1/q([a,b])$, and the performance as the median $\sin \theta(u,\hat{u})$ for specific mass $q([a,b])$. As we can see the performance drops near linearly with respect to $1/\eps$, which indicates that the sample complexity is possibly linear in $1/\eps$ as well.

\begin{figure}[htp]
\centering
\captionsetup[subfigure]{justification=Centering}
\begin{subfigure}[t]{0.3\textwidth}\label{fig:exp1_gaussian}
    \includegraphics[width=\linewidth]{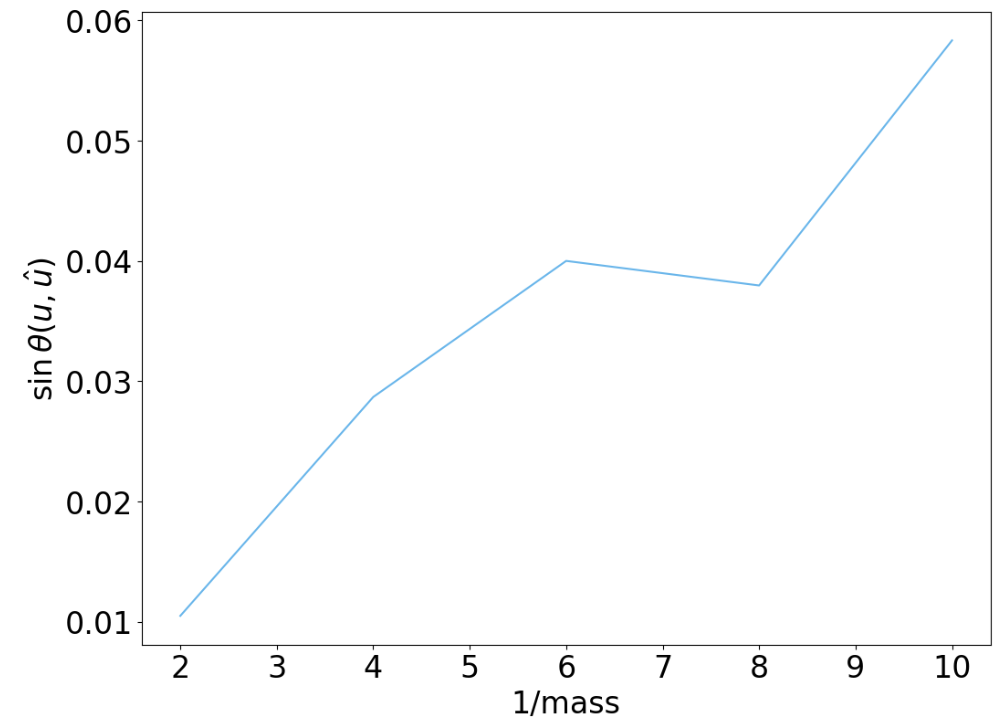}
    \caption{Gaussian}
\end{subfigure}
\begin{subfigure}[t]{0.3\textwidth}\label{fig:exp1_uniform}
    \includegraphics[width=\linewidth]{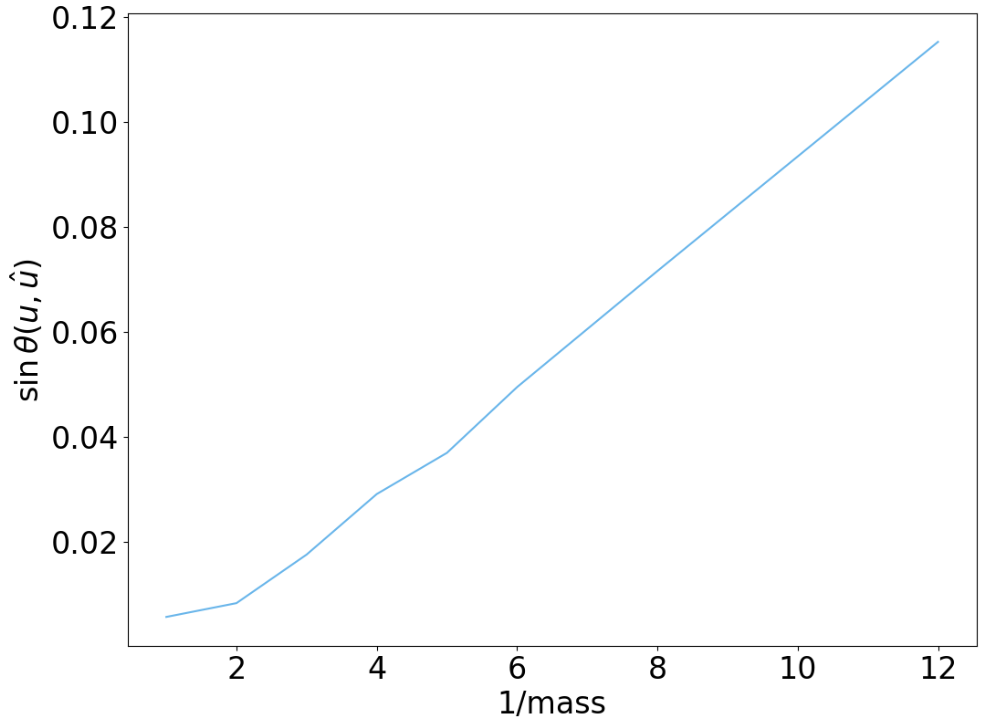}
    \caption{Uniform}
\end{subfigure}
\begin{subfigure}[t]{0.3\textwidth}\label{fig:exp1_expo}
    \includegraphics[width=\linewidth]{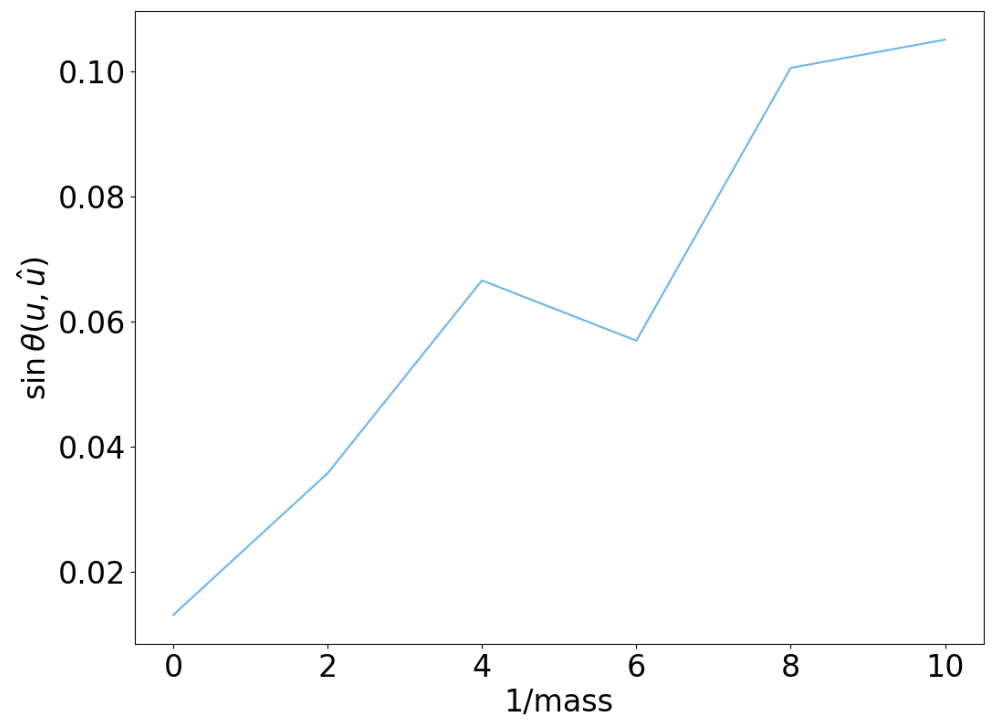}
    \caption{Exponential}
\end{subfigure}
\caption{The performance with respect to $1/\eps$.}
\label{fig:experiment1}
\end{figure}

\vspace{-0.1in}
\section{Discussion and Future Directions}
We proposed and analyzed an efficient algorithm for unsupervised learning of symmetric product logconcave distributions with margin. Our algorithm only uses re-weighted first and second moments of samples and has the flavor of self-supervised learning. Specifically, contrastive covariance can be viewed as the simplest realization of contrastive learning without any data augmentation~\cite{tian2022deep}. 

We mention several open questions for future exploration:
\vspace{-0.02in}
\begin{itemize}
    \item \textbf{Analysis Refinement}. 
    While we prove a poly($d,1/\eps$) bound using specific values of the re-weighting parameter $\alpha$, as demonstrated by the qualitative lemmas (Lemma~\ref{lemma_contrastive_mean_qual} and Lemma~\ref{lemma_contrastive_covariance_qual}), any distinct pair of nonzero $\alpha$ values should work for the contrastive mean, and any bounded small $\alpha$ should work for the contrastive covariance. 

    Our experimental results align with this claim. In fact, they suggest a linear relationship between the sample complexity and the input dimension $d$ and inverse linear with the measure of the margin $\epsilon$, raising the possibility that the sample complexity is linear in $d$ and $1/\eps$. 

    \item \textbf{Distribution Generalization}.
    Can the algorithm's guarantees be extended to more general distributions? The effectiveness of the current algorithm relies on the symmetry of the one-dimensional distribution. Using higher but constant order re-weighting moments could be a way to handle asymmetric distributions.

    \item \textbf{Robust Learning Halfspaces}. 
    An important question to consider is whether the algorithm remains effective when a small fraction of the data falls within the margin (rather than zero). it is crucial that this data be sparser, with density significantly lower than that of the band being removed, to maintain the uniqueness of the halfspace.

    \item \textbf{Intersection of Halfspaces}.
    Another intriguing possibility is the generalization of the problem to include learning the intersection of multiple halfspaces. 

    \item \textbf{Contrastive learning with Data Augmentation}.
    A nice, broader goal for learning theory might be to develop a model where data augmentation is provably useful to solve a classification task by using a suitable contrast function.
    
\end{itemize}

\paragraph{Acknowledgements.} This work was supported in part by NSF awards CCF-2007443 and CCF-2134105 and an ARC fellowship.

\newpage
\bibliography{bibfile}



\end{document}